\title{Generalizing Behavior via Inverse Reinforcement\\Learning with Closed-Form Reward Centroids}
\author{%
Filippo Lazzati\\
Politecnico di Milano\\
Milan, Italy\\
\texttt{filippo.lazzati@polimi.it} \\
\And
Alberto Maria Metelli \\
Politecnico di Milano\\
Milan, Italy\\
}
\let\oldnl\nl
\newcommand{\nonl}{\renewcommand{\nl}{\let\nl\oldnl}}
\begin{document}
\setlength{\abovedisplayskip}{2.5pt}
\setlength{\belowdisplayskip}{2.5pt}
\setlength{\abovedisplayshortskip}{1.5pt}
\setlength{\belowdisplayshortskip}{1.5pt}
\setlength{\belowcaptionskip}{-1pt}  
\setlength{\textfloatsep}{8pt}
\thinmuskip=3.5mu
\medmuskip=3.5mu
\thickmuskip=3.5mu

\renewcommand\thmcontinues[1]{Continued}

\maketitle

\begin{abstract}
We study the problem of generalizing an expert agent's behavior, provided
through demonstrations, to new environments and/or additional constraints.
Inverse Reinforcement Learning (IRL) offers a promising solution by seeking to
recover the expert's underlying \emph{reward function}, which, if used for
planning in the new settings, would reproduce the desired behavior. However, IRL
is inherently \emph{ill-posed}: multiple reward functions, forming the so-called
\emph{feasible set}, can explain the same observed behavior. Since these rewards
may induce different policies in the new setting, in the absence of additional
information, a decision criterion is needed to select which policy to deploy.
In this paper, we propose a novel, principled criterion that selects the
``average'' policy among those induced by the rewards in a certain
\emph{bounded} subset of the feasible set. Remarkably, we show that this policy
can be obtained by planning with the reward \emph{centroid} of that subset, for
which we derive a \emph{closed-form expression}. We then present a provably
efficient algorithm for estimating this centroid using an \emph{offline} dataset
of expert demonstrations only. Finally, we conduct \emph{numerical simulations}
that illustrate the relationship between the expert's behavior and the behavior
produced by our method.
\end{abstract}

\section{Introduction}\label{sec: introduction}

Imitation Learning (IL) \cite{pomerlau1988alvinn,abbeel2004apprenticeship}
addresses the problem of \emph{efficiently learning a desired behavior by
imitating an expert's behavior} \cite{osa2018IL}. By avoiding the difficult task
of specifying a reward function, IL offers a direct and effective way to
construct artificial agents with human-like capabilities using demonstrations of
behavior alone
\cite{ho2016generativeadversarialimitationlearning,Fu2017LearningRR,reddy2019sqil,garg2021IQlearn}.
Consequently, it has been successfully applied to a variety of real-world
applications and control problems, including autonomous driving
\cite{kuderer2015driving}, healthcare \cite{yu2019healthcare}, motion planning
\cite{finn2016guided}, and path planning
\cite{ratliff2006maximum,silver2010terrain}.

However, in many situations, the environment where the learner must operate has
different dynamics or requires additional constraints compared to the
environment where the expert's demonstrations were collected
\cite{Fu2017LearningRR,arora2020survey,schlaginhaufen2023identifiability}. A
common example are ``sim-to-real'' scenarios, where demonstrations are easily
obtained in a simulator while the target environment is the (possibly more
constrained) real world
\cite{openai2019solvingrubikscuberobot,matas2018simtoreal,desai2020IfoTX}. In
such cases, the expert's demonstrations provide valuable information about the
desired behavior, but this information must be extrapolated, as merely
transferring the expert's observed policy can lead to failure
\cite{Fu2017LearningRR,ni2021firl,arora2020survey}.

Inverse Reinforcement Learning (IRL) \cite{russell1998learning,ng2000algorithms}
addresses this \emph{generalization} problem by assuming the existence of an
\emph{expert's reward function} that, if used for planning in the new
environment, would induce the desired behavior
\cite{amin2016resolving,Fu2017LearningRR,arora2020survey,rolland2022identifiability,schlaginhaufen2024transferability}.
The task is thus reduced to inferring the expert's reward function from
demonstrations of behavior. This inference requires knowledge of the true
relationship between the expert's policy and reward, referred to as the
\emph{model of behavior}
\cite{skalse2023misspecificationinversereinforcementlearning}.
Common models of behavior studied in the literature include assuming that the
expert's policy is optimal for the expert's reward \cite{ng2000algorithms},
soft-optimal \cite{Ziebart2010ModelingPA,Fu2017LearningRR}, $\epsilon$-optimal
\cite{poiani2024inversereinforcementlearningsuboptimal}, the best policy
satisfying certain constraints \cite{schlaginhaufen2023identifiability}, or
derived from a softmax over the optimal $Q$-function
\cite{Ramachandran2007birl}.

Unfortunately, regardless of the chosen model of behavior, even if the expert's
policy is known exactly, the expert's reward cannot be \emph{uniquely}
identified \cite{ng2000algorithms,cao2021identifiability}. Instead, there exists
a set of equally plausible reward functions, known as the \emph{feasible set},
that are consistent with the expert's behavior
\cite{metelli2021provably,metelli2023towards}. Crucially, these rewards
\emph{generalize differently}, meaning they induce different optimal policies in
the target environment \cite{skalse2023invariancepolicyoptimisationpartial},
leading to ambiguity about which behavior should be deployed. To address this,
many works in the literature aim to reduce the size of the feasible set using
additional data or structural assumptions
\cite{amin2016resolving,Fu2017LearningRR,cao2021identifiability,kim2021rewardidentification,rolland2022identifiability,schlaginhaufen2023identifiability,schlaginhaufen2024transferability}.

In this work, we consider a more challenging and realistic setting where no
additional data or structure is available, and we are requested to make the best
possible decision given only the demonstrations. In this context, existing IRL
approaches can be broadly grouped into methods that work well in the best-case,
worst-case, or average-case. ``Best-case'' methods
\cite{ziebart2008maxent,choi2011mapbirl,wulfmeier2016maximumentropydeepirl,finn2016guided}
arbitrarily select a reward from the feasible set and deploy its corresponding
policy. ``Worst-case'' methods \cite{lazzati2025rel} deploy the policy that
minimizes the worst-case suboptimality over all rewards in the feasible set.
``On-average'' methods
\cite{Ramachandran2007birl,michini2012improving,bajgar2024walking}
minimize expected suboptimality under a prior reward distribution.

Unsurprisingly, ``best-case'' methods tend to be \emph{overly optimistic} and
often fail to generalize, as observed in empirical studies such as
\cite{finn2016guided,Fu2017LearningRR,viano2021robust}. Conversely,
``worst-case'' methods can be \emph{overly conservative}, and the resulting
policy can be inefficient to compute \cite{lazzati2025rel}. Positioned between
these extremes, ``on-average'' methods offer a more reasonable compromise.
However, the common choices of priors considered in the literature either rely
on \emph{domain knowledge} or turn out to be \emph{biased} in the set of
policies that the expert can demonstrate (see Section \ref{sec: existing
methods}), making them intuitively unsuitable for our specific setting, where no
additional information is available.
%

In this paper, we introduce novel priors over rewards that are \emph{unbiased}
with respect to the policy that the expert will demonstrate.
Building on them, we develop \emph{provably efficient} ``on-average'' algorithms
for generalizing the observed expert's behavior to new environments and
constraints.
%

\textbf{Contributions.}~~%
We summarize the main contributions of this work below:
\begin{itemize}
  [noitemsep, leftmargin=*, topsep=-2pt]
    \item We categorize existing IRL-based approaches for generalizing behavior
    into three groups, and we demonstrate that the most popular reward prior
    considered in the literature is \emph{biased} (Section \ref{sec:
    identifiability}).
    \item We propose novel \emph{unbiased} reward priors for three common models
    of behavior (Section \ref{sec: new prior}), and we    
    derive \emph{closed-form} expressions for their centroids restricted to the
    feasible sets (Section \ref{sec: centroids}).
    \item We develop provably efficient algorithms to estimate these centroids
    from \emph{offline} data (Section \ref{sec: sample complexity analysis}).
    \item Finally, we conduct \emph{numerical simulations} to illustrate the
    relationship between the expert's behavior and the behavior produced by our
    method (Section \ref{sec: experiments}).
\end{itemize}
%
Proofs are provided in Appendices \ref{apx: proofs existing methods}-\ref{apx:
sample complexity}, and additional related work is discussed in
Appendix \ref{apx: additional related work}.

\section{Preliminaries}\label{sec: preliminaries}

\textbf{Notation.}~~%
Given an integer $N\in\Nat$, we let $\dsb{N}\coloneqq\{1,2,\dotsc,N\}$. Given
two finite sets $\cX,\cY$, we denote by $\Delta^\cX$ and $\Delta_\cY^\cX$,
respectively, the set of probability measures on $\cX$ and the set of functions
from $\cY$ to $\Delta^\cX$.
For vectors $x\in\RR^n$, we define $\|x\|_\infty\coloneqq\max_i |x_i|$.
Given a matrix $W$, we let $\text{det}(W)$ be its determinant.
We use notation $a\,\propto\, b$ to denote proportionality between two
quantities $a,b$.
Given a Riemannian manifold $\cX\subseteq\RR^n$ with dimensionality $k\le n$
\cite{lee2019introduction}, we denote the integral of a function $f:\cX\to\RR$
over $\cX$ as $\int_\cX f(x)dx$, where $dx$ represents the $k$-dimensional
volume element of the manifold. We denote by $\text{vol}(\cX)\coloneqq \int_\cX
dx$ the $k$-dimensional volume of $\cX$.
The \emph{centroid} $C(\cX)$ is a generalization of the arithmetic mean:
$C(\cX)\coloneqq \int_\cX x dx/\text{vol}(\cX) \propto \int_\cX x dx$.
Appendix \ref{apx: additional notation} provides additional notation for the
supplementary material.

\textbf{Markov Decision Processes (MDPs).}~~%
An infinite-horizon discounted Markov Decision Process (MDP)
\cite{puterman1994markov} is defined as a tuple
$\cM_r\coloneqq\tuple{\cS,\cA,s_0,p,\gamma,r}$, where $\cS$ is the finite state
space ($S\coloneqq |\cS|$), $\cA$ is the finite action space ($A\coloneqq
|\cA|$), $s_0\in\cS$ is the initial state, $p\in\Delta_{\SA}^\cS$ is the
transition model, $\gamma\in[0,1)$ is the discount factor, and $r\in\RR^{SA}$ is
the reward function.
%
%
We will use symbol $\cM\coloneqq\tuple{\cS,\cA,s_0,p,\gamma}$ to denote an MDP
without reward (MDP$\setminus$R), i.e., an MDP-like environment where there is
no notion of reward \cite{abbeel2004apprenticeship}.
A policy $\pi\in\Pi\coloneqq\Delta_\cS^\cA$ is a strategy that prescribes
actions in states. We let $\P_{\cM,\pi}$ represent the probability distribution
induced by playing $\pi$ in $\cM_r$ or $\cM$.
We let $d_{\cM,\pi}(s,a)\coloneqq (1-\gamma)\sum_{t=0}^{+\infty} \gamma^t
\P_{\cM,\pi}(s_t=s,a_t=a)$ be the state-action occupancy measure of $\pi$ in
$\cM$, and $d_{\cM,\pi}(s)\coloneqq\sum_{a\in\cA}d_{\cM,\pi}(s,a)$ be the
state-only occupancy measure.
We denote by $\cS_{\cM,\pi}$ the set of states reachable with non-zero
probability from $s_0$ in $\cM$ when playing policy $\pi$. Formally:
$\cS_{\cM,\pi}\coloneqq\{s\in\cS\,|\,d_{\cM,\pi}(s)>0\}$.
%
We define $W_{p,\pi}\in\RR^{\cS\times \cS}$ as the linear map that, to every
pair of states $s,s'\in\cS$, associates $W_{p,\pi}(s,s')=\indic{s'=s}-\gamma
\sum_{a\in\cA}\pi(a|s) p(s'|s,a)$.

\textbf{Value and advantage functions.}~~%
We denote the value and $Q$-functions induced by a policy $\pi$ in $\cM_r$ as
$V^\pi(s;p,r)\coloneqq \E_{\cM,\pi}\bigs{\sum_{t=0}^{+\infty}\gamma^t
r(s_t,a_t)|s_0=s}$ and $Q^\pi(s,a;p,r)\coloneqq
\E_{\cM,\pi}\bigs{\sum_{t=0}^{+\infty}\gamma^t r(s_t,a_t)|s_0=s,a_0=a}$, where
$\E_{\cM,\pi}$ denotes the expectation w.r.t. $\P_{\cM,\pi}$.
%
Moreover, we let $V^\pi_{\lambda}(s;p,r)\coloneqq
\E_{\cM,\pi}\bigs{\sum_{t=0}^{+\infty}\gamma^t \bigr{r(s_t,a_t)
-\lambda\log\pi(a_t|s_t)}|s_0=s}$ and $Q^\pi_{\lambda}(s,a;p,r)\coloneqq
\E_{\cM,\pi}\bigs{\sum_{t=0}^{+\infty}\gamma^t \bigr{r(s_t,a_t)
-\lambda\log\pi(a_t|s_t)}|s_0=s,a_0=a}$ denote the \emph{soft} value and
$Q$-functions associated to a policy $\pi$ where $\lambda\ge 0$ is the
regularization coefficient
\cite{Ziebart2010ModelingPA,haarnoja2017rldeepenergypolicies}.
It is useful to introduce notation for the advantage and soft-advantage
functions as $A^\pi(s,a;p,r)\coloneqq Q^\pi(s,a;p,r)-V^\pi(s;p,r)$ and
$A^\pi_\lambda(s,a;p,r)\coloneqq Q^\pi_\lambda(s,a;p,r)-V^\pi_\lambda(s;p,r)$.
The optimal counterparts of all these quantities will be denoted using a star
$*$; namely, we let $V^*(s;p,r)\coloneqq \max_\pi V^\pi(s;p,r)$,
$Q^*(s,a;p,r)\coloneqq \max_\pi Q^\pi(s,a;p,r)$, and $A^*(s,a;p,r)\coloneqq
Q^*(s,a;p,r)-V^*(s;p,r)$ be the optimal value, $Q$ and advantage functions, and
$V^*_\lambda(s;p,r)\coloneqq \max_\pi V^\pi_\lambda(s;p,r)$,
$Q^*_\lambda(s,a;p,r)\coloneqq \max_\pi Q^\pi_\lambda(s,a;p,r)$ and
$A^*_\lambda(s,a;p,r)\coloneqq Q^*_\lambda(s,a;p,r)-V^*_\lambda(s;p,r)$ the
soft-optimal value, $Q$ and advantage functions.
Finally, we define
$\Pi^*(p,r)\coloneqq \{\pi\in\Pi\,|\, \forall
s\in\cS:V^\pi(s;p,r)=V^*(s;p,r)\}$.

\textbf{Sets of rewards.}~~%
Given an \MDPr $\cM$, a set of states $\overline{\cS}\subseteq
\cS$ and a policy $\pi$ deterministic on $\overline{\cS}$, we define the set of
rewards that make $\pi$ optimal in $\overline{\cS}$ as:
\begin{align}
  \cR^{\text{OPT},\overline{\cS}}_{\cM,\pi}\coloneqq\bigc{r\in\RR^{SA}\,\big|\,
  \forall s\in\overline{\cS}:\;\pi(s)\in\argmax\nolimits_{a\in\cA} Q^*(s,a;p,r)}
  \label{eq: fs OPT}.
\end{align}
Similarly, given a policy $\pi$ stochastic on $\overline{\cS}$, we define the
set of rewards that make $\pi$ respectively soft-optimal (for some $\lambda>0$) and Boltzmann w.r.t.
$Q^*$ (for some $\beta>0$) in $\overline{\cS}$ as:
\begin{align}  
  &\cR^{\text{MCE},\overline{\cS}}_{\cM,\pi}\coloneqq\bigc{r\in\RR^{SA}\,
  \big|\, 
  \forall s\in\overline{\cS},\forall a\in\cA:\;\pi(a|s) \propto e^{\frac{1}{\lambda}
  Q^*_\lambda(s,a;p,r)}
  },
  \label{eq: fs MCE}\\
  &\cR^{\text{BIRL},\overline{\cS}}_{\cM,\pi}\coloneqq\bigc{r\in\RR^{SA}\,
  \big|\, 
  \forall s\in\overline{\cS},\forall a\in\cA:\;
  \pi(a|s) \propto e^{\frac{1}{\beta}
  Q^*(s,a;p,r)}}.\label{eq: fs BIRL}
\end{align}
The meaning of the acronyms OPT, MCE, BIRL will be clear in the next paragraph.
%

\textbf{Inverse Reinforcement Learning (IRL).}~~%
In IRL \cite{russell1998learning,ng2000algorithms}, we are given demonstrations
from an expert's policy $\pi_E$ in an \MDPr $\cM\coloneqq\tuple{\cS,\cA,
s_0,p,\gamma}$, and we aim to recover the expert's reward $r_E\in\RR^{SA}$ from
$\pi_E$.
To complete the formulation of an IRL problem, we need to specify a \emph{model
of behavior} \cite{skalse2023misspecificationinversereinforcementlearning},
namely, the relationship between $\pi_E$ and $r_E$ that an IRL algorithm must
invert.
In this paper, we focus on the popular models of behavior introduced by
\cite{ng2000algorithms}, \cite{Ziebart2010ModelingPA} and
\cite{Ramachandran2007birl}, that we abbreviate respectively with OPT, MCE and
BIRL.

In OPT (OPTimal) \cite{ng2000algorithms}, the expert's policy $\pi_E$
is assumed to be optimal in $\cM_{r_E}$:
\begin{align}\label{eq: OPT}
    \pi_E\in\argmax_{\pi\in\Pi} V^\pi(s_0;p,r_E).
\end{align}
In MCE (Maximum Causal Entropy) \cite{Ziebart2010ModelingPA}, $\pi_E$
is assumed to be soft-optimal in $\cM_{r_E}$ for some $\lambda> 0$:
\begin{align}\label{eq: MCE}
    \pi_E=\argmax_{\pi\in\Pi} V^\pi_\lambda(s_0;p,r_E).
\end{align}
In BIRL (Bayesian IRL) \cite{Ramachandran2007birl}, $\pi_E$ is assumed to be
softmax w.r.t. $Q^*$ in $\cM_{r_E}$ for some $\beta> 0$:
\begin{align}\label{eq: BIRL}
    \pi_E(a|s) \,\propto\, e^{\frac{1}{\beta}
    Q^*(s,a;p,r_E)}
    \qquad \forall s\in\cS_{\cM,\pi_E},\forall a\in\cA.
\end{align}
It is well-known that the IRL problem is \emph{ill-posed}
\cite{ng2000algorithms,cao2021identifiability,skalse2023invariancepolicyoptimisationpartial},
%
i.e., given any expert's policy $\pi_E$ and a model of behavior $m\in\{$OPT,MCE,BIRL$\}$, we are not able to uniquely retrieve the expert's reward $r_E$,
because there is a set of rewards $\cR^{m}_{\cM,\pi_E}$, known as the
\emph{feasible set} \cite{metelli2021provably,metelli2023towards}, that satisfy
the constraint defined by $\pi_E$ and $m$ (i.e., Eqs. \ref{eq: OPT}, \ref{eq:
MCE} or \ref{eq: BIRL}).
It can be shown that the feasible sets $\cR^{\text{OPT}}_{\cM,\pi_E}$,
$\cR^{\text{MCE}}_{\cM,\pi_E}$, $\cR^{\text{BIRL}}_{\cM,\pi_E}$ coincide,
respectively, with $\cR^{\text{OPT},\cS_{\cM,\pi_E}}_{\cM,\pi_E}$,
$\cR^{\text{MCE},\cS_{\cM,\pi_E}}_{\cM,\pi_E}$,
$\cR^{\text{BIRL},\cS_{\cM,\pi_E}}_{\cM,\pi_E}$ (see Appendix \ref{apx:
rewriting fs}).
%
%
%
Finally, we remark that these sets are \emph{unbounded} subsets of $\RR^{SA}$
\cite{ng2000algorithms,cao2021identifiability,skalse2023invariancepolicyoptimisationpartial}.

\section{Problem Setting and Existing Methods}\label{sec:
identifiability}

In this section, we formalize the problem setting (Section~\ref{sec: problem
setting}) and review existing IRL-based approaches that address it
(Section~\ref{sec: existing methods}). Later, in Section~\ref{sec: our
approach}, we introduce our proposed approach.

\subsection{Problem Setting}\label{sec: problem setting}

We consider the setting where we are given a dataset $\cD_E \sim \pi_E$ of
trajectories collected by an expert's policy $\pi_E$ in an \MDPr
$\cM=\tuple{\cS,\cA,s_0,p,\gamma}$, and we \emph{know} the model of behavior
$m\in\{$OPT,MCE,BIRL$\}$ that relates $\pi_E$ to the \emph{unknown} expert's
reward $r_E$.
%
%
Informally, our objective is to ``generalize the observed expert's behavior''
$\cD_E$ to a new \MDPr $\cM'=\tuple{\cS,\cA,s_0',p',\gamma'}$ under a cost
constraint given by $c\in\RR^{SA}$ and a budget $k\ge0$.
Formally, we aim to find a policy $\widehat{\pi}^*$ that satisfies the cost
constraint $V^{\pi}(s_0';p',c)\le k$ and whose suboptimality in $\cM_{r_E}'$ is
as small as possible:\footnote{Note that Eq. \eqref{eq: objective} can be
equivalently rewritten as $\widehat{\pi}^*\in\argmax_{\pi'\in\Pi_{c,k}}
V^{\pi'}(s_0';p',r_E)$, but the form in Eq. \eqref{eq: objective}
is more convenient for the presentation of Section \ref{sec: existing methods}. }
\begin{align}\label{eq: objective}
 \widehat{\pi}^*\in \argmin\limits_{\pi'\in\Pi_{c,k}} \Bigr{\max\limits_{\pi\in\Pi_{c,k}} V^{\pi}(s_0';p',\popred{r_E})
  - V^{\pi'}(s_0';p',\popred{r_E})},
\end{align}
where $\Pi_{c,k}\coloneqq \{\pi\in\Pi\,|\, V^{\pi}(s_0';p',c)\le k\}$ is the set
of feasible policies.
For simplicity, we assume here that $\pi_E$ is \emph{known} on its support
$\cS_{\cM,\pi_E}$, corresponding to an infinite dataset $\cD_E$ of expert's
demonstrations. We will remove this assumption in Section \ref{sec: sample
complexity analysis}.
Finally, for convenience, we define $\pi_{\min}\coloneqq
\min_{(s,a)\in\SA}\pi_E(a|s)>0$ for MCE and BIRL.

\subsection{Existing Methods}\label{sec: existing methods}

As explained in Section \ref{sec: preliminaries}, no behavioral model
$m\in\{$OPT,MCE,BIRL$\}$ allows to uniquely identify $r_E$ from $\pi_E$ alone.
Therefore, we cannot directly optimize Eq. \eqref{eq: objective} as
$\popred{r_E}$ is \popred{unknown}.
\emph{In the absence of additional information}, the best we can do is replace
the objective in Eq. \eqref{eq: objective} with a \emph{surrogate} that does not
depend on the unknown $r_E$, but instead leverages the fact that $r_E$ belongs
to the \popb{known} feasible set $\popb{\cR_{\cM,\pi_E}^m}$.
Below, we review common surrogate objectives considered in the literature.

\textbf{``Best-case'' methods
\cite{ziebart2008maxent,choi2011mapbirl,wulfmeier2016maximumentropydeepirl,finn2016guided}.}~~%
These approaches adopt an \emph{optimistic} surrogate objective:
\begin{align}\label{eq: best case}
  \widehat{\pi}_{\text{B}}\in\argmin\limits_{\pi'\in\Pi_{c,k}}
  \popb{\min\limits_{r\in\cR_{\cM,\pi_E}^m}}\Bigr{\max\limits_{\pi\in\Pi_{c,k}} V^{\pi}(s_0';p',\popb{r})
  - V^{\pi'}(s_0';p',\popb{r})},
\end{align}
which can be optimized by first computing \emph{any} reward $r$ in the feasible
set, and then deriving the policy $\widehat{\pi}_{\text{B}}$ it induces.
In the \emph{best-case} scenario where $r=r_E$, the resulting policy
$\widehat{\pi}_{\text{B}}$ successfully generalizes the expert's behavior.
However, as noted in empirical studies such as
\cite{finn2016guided,Fu2017LearningRR,viano2021robust}, this approach is often
\emph{overly optimistic}, and the resulting policy $\widehat{\pi}_{\text{B}}$
may fail to accurately reflect the expert's true preferences.


\textbf{``Worst-case'' methods \cite{lazzati2025rel}.}~~%
Recently, \cite{lazzati2025rel} proposed a \emph{conservative} surrogate
objective:
\begin{align}\label{eq: worst case}
  \widehat{\pi}_{\text{W}}\in\argmin\limits_{\pi'\in\Pi_{c,k}} 
  \popb{\max\limits_{r\in\cR_{\cM,\pi_E}^m\cap\fR}}
  \Bigr{\max\limits_{\pi\in\Pi_{c,k}} V^{\pi}(s_0';p',\popb{r})
  - V^{\pi'}(s_0';p',\popb{r})},
\end{align}
where $\fR\coloneqq[-1,+1]^{SA}$ is the unit hypercube.
Intuitively, $\widehat{\pi}_{\text{W}}$ minimizes the suboptimality under the
most adversarial value that $r_E$ can take on inside the feasible set
$\cR_{\cM,\pi_E}^m$, thereby representing the best policy in the
\emph{worst-case}.
Note that restricting the otherwise \emph{unbounded} set $\cR_{\cM,\pi_E}^m$ to
a bounded subset like $\fR$ is necessary to prevent the objective from becoming
infinite for any feasible policy (see Appendix \ref{apx: unbounded worst case}).
However, beyond potentially being \emph{too conservative}, this approach has the
problem that it is not clear what \emph{bias} is introduced by using $\fR$
instead of another bounded subset of $\RR^{SA}$, and also that Eq. \eqref{eq:
worst case} is generally \emph{computationally hard}, as it involves maximizing
a convex function, i.e., the pointwise maximum of linear
functions \cite{boyd2004convex}.


\textbf{``On-average'' methods
\cite{Ramachandran2007birl,michini2012improving,bajgar2024walking}.}~~%
These approaches select a function $w:\RR^{SA}\to[0,1]$ that assigns a weight to
every reward, and then compute a policy $\widehat{\pi}_w$ that minimizes the
\emph{average} suboptimality with respect to the rewards in the feasible set,
\emph{weighted by} $w$:\footnote{Note that $dr$ represents the volume element of
the manifold $\cR_{\cM,\pi_E}^m$ (see Section \ref{sec: preliminaries}).}
\begin{align}\label{eq: avg case}
  \widehat{\pi}_w&\in \argmin\limits_{\pi'\in\Pi_{c,k}} 
  \popb{\int_{\cR_{\cM,\pi_E}^m} w(r)}\Bigr{\max\limits_{\pi\in\Pi_{c,k}} V^{\pi}(s_0';p',\popb{r})
  - V^{\pi'}(s_0';p',\popb{r})}\popb{dr},
\end{align}
%
%
This ``on-average'' formulation offers a reasonable compromise between the
optimism of ``best-case'' methods and the conservatism of ``worst-case''
ones.
Note that this approach is adopted by
\cite{Ramachandran2007birl,michini2012improving,bajgar2024walking} in a
\emph{Bayesian} fashion, by interpreting $w$ as a \emph{prior} distribution for
$r_E$.
Since our problem setting assumes no domain knowledge, $w$ shall
be as \emph{uninformative} as possible. To this end, \cite{Ramachandran2007birl}
proposed the function $\overline{w}$ that assigns \emph{equal weight} to all the
rewards in the unit hypercube $\fR\coloneqq[-1,+1]^{SA}$:%
\footnote{Although \cite{Ramachandran2007birl} focus on state-only
rewards, our considerations still apply (see Appendix \ref{apx: uniform prior
state-only rewards}).}
\begin{align}\label{eq: uniform prior}
  \overline{w}(r)\coloneqq \begin{cases}
    1&\text{if }r\in \fR\\
    0&\text{otherwise}
  \end{cases},
\end{align}
as the constant prior $w(r)=1$ $\forall r\in\RR^{SA}$ is improper and cannot be
used, since the feasible set $\cR_{\cM,\pi_E}^m$ is unbounded and the integral in
Eq. \eqref{eq: avg case} would diverge.
At first glance, $\overline{w}$ appears natural and uninformative. Interpreted
in a Bayesian fashion, it models an expert whose reward $r_E$ is uniformly
distributed over $\fR$. Surprisingly, however, $\overline{w}$ turns out to be
\emph{biased} in the space of policies.
To see this, let $m=$ OPT and define, for every deterministic policy
$\pi$:
%
%
\begin{align}
  \overline{w}_\pi\coloneqq \int_{\cR_{\cM,\pi}^{\text{OPT},\cS}} \overline{w}(r)dr
  = \text{vol}\bigr{\fR\cap\cR^{\text{OPT},\cS}_{\cM,\pi}}.
\end{align}
$\overline{w}_\pi$ can be Bayesianly interpreted as the (unnormalized)
probability that a reward drawn from $\overline{w}$ makes $\pi$ optimal, i.e.,
it corresponds to \emph{the probability that an OPT expert demonstrates policy
$\pi$ in $\cM$} regardless of the initial state.
As such, to be truly uninformative, we desire that all deterministic policies
should be equally ``likely'' under $\overline{w}$.
However, it turns out that this is not the case:
%
%
\begin{restatable}{prop}{counterexampleOPT}\label{prop: counterexample OPT}
  %
  There exists an \MDPr $\cM$ and two deterministic policies $\pi_1,\pi_2\in\Pi$ such
  that:
  %
  \begin{align*}
      \text{vol}\bigr{\fR\cap\cR^{\text{OPT},\cS}_{\cM,\pi_1}} \neq
      \text{vol}\bigr{\fR\cap\cR^{\text{OPT},\cS}_{\cM,\pi_2}}.
  \end{align*}
\end{restatable}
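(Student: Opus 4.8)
The plan is to exhibit an explicit two-state, two-action \MDPr with an absorbing state, together with two deterministic policies that differ in a single state, and to show by a short computation that the two clipped volumes disagree. Concretely, I would take $\cS=\{1,2\}$, $\cA=\{1,2\}$, any $\gamma\in(0,1)$, and transitions making state $2$ absorbing: $p(\cdot\mid 1,1)=\delta_1$, $p(\cdot\mid 1,2)=\delta_2$, and $p(\cdot\mid 2,a)=\delta_2$ for both $a$ (here $\delta_s$ is the point mass at state $s$). Writing $r=(a,b,c,d)\coloneqq(r(1,1),r(1,2),r(2,1),r(2,2))$, I compare $\pi_1\coloneqq(1,1)$ and $\pi_2\coloneqq(2,1)$, which both play action $1$ in the absorbing state and differ only at state $1$.

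Next I would characterize each feasible set through the standard fact that a deterministic $\pi$ is optimal (i.e. $\pi(s)\in\argmax_a Q^*(s,a;p,r)$ for all $s$) iff its own advantages are nonpositive, $A^\pi(s,a;p,r)\le 0$ for all $s,a$. Solving the scalar Bellman equations for $V^{\pi_i}$ in this MDP and substituting, the only nontrivial constraints become $d\le c$ at the shared absorbing state, and at state $1$: for $\pi_1$, $g(a,b,c)\coloneqq a-(1-\gamma)b-\gamma c\ge 0$, whereas for $\pi_2$, $g(a,b,c)\le 0$. The crucial structural observation is that the two state-$1$ constraints are exactly the complementary half-spaces of the single hyperplane $\{g=0\}$, while $d\le c$ is common to both. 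Hence $\text{vol}\bigr{\fR\cap\cR^{\text{OPT},\cS}_{\cM,\pi_1}}+\text{vol}\bigr{\fR\cap\cR^{\text{OPT},\cS}_{\cM,\pi_2}}$ equals $\text{vol}\bigr{\{d\le c\}\cap\fR}$, so the proposition reduces to showing that $\{g=0\}$ does \emph{not} bisect this region into equal halves.

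To finish, I would integrate out $d\in[-1,c]$, which contributes the asymmetric weight $c+1$, and then apply the measure-preserving sign flip $(a,b,c)\mapsto(-a,-b,-c)$, under which $g\mapsto -g$ and $c+1\mapsto 1-c$; this collapses the difference of the two four-dimensional volumes to $-2J$, where $J\coloneqq\int_{[-1,1]^3}\indic{g(a,b,c)\le 0}\,c\,da\,db\,dc$. The evaluation of $J$ is made painless by noting that $(1-\gamma)b+\gamma c$ is a convex combination of points in $[-1,1]$ and so never leaves $[-1,1]$, whence the innermost indicator integral over $a$ never clips and equals $(1-\gamma)b+\gamma c+1$; the remaining elementary integral gives $J=4\gamma/3$, so the two volumes differ by $8\gamma/3\neq 0$. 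I expect the only real obstacle to be bookkeeping: writing the feasibility inequalities correctly, in particular recognizing that the two state-$1$ constraints are genuinely complementary and verifying the no-clipping property, after which the symmetry argument reduces the whole comparison to a one-line integral.
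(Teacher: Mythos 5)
Your proof is correct, and while it uses the same \MDPr as the paper (two states, action $a_1$ self-looping at $s_1$, action $a_2$ jumping to the absorbing state $s_2$) and the same advantage-based characterization of $\cR^{\text{OPT},\cS}_{\cM,\pi}$ via the policy improvement theorem, the volume comparison itself is carried out quite differently. The paper computes $\text{vol}\bigr{\fR\cap\cR^{\text{OPT},\cS}_{\cM,\pi_1}}$ for a single policy in the limit $\gamma\to1$ (where the constraints collapse to the chain $r_{11}\ge r_{21}\ge r_{22}$, giving $8/3$) and argues indirectly that, since this differs from the uniform share $16/4=4$ over the four deterministic policies, \emph{some} other policy must have a different volume. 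You instead compare two explicitly named policies differing only at $s_1$, observe that their state-$1$ constraints are complementary half-spaces of $\{g=0\}$ while the absorbing-state constraint $d\le c$ is shared, and use the sign-flip symmetry of $[-1,1]^3$ to collapse the difference of volumes to $-2\int\indic{g\le0}\,c = -8\gamma/3$. Your route buys two things: it works for every $\gamma\in(0,1)$ rather than relying on an implicit continuity argument at the boundary value $\gamma=1$ (which is not itself an admissible discount factor), and it identifies the two witnessing policies explicitly together with the exact gap between their volumes; the paper's route is shorter on computation but less informative. I verified your constraint derivations ($g\ge0$ for $\pi_1$, $g\le0$ for $\pi_2$, $d\le c$ for both), the no-clipping observation for the inner integral, and the value $J=4\gamma/3$; everything checks out, and your answer is consistent with the paper's $8/3$ in the limit $\gamma\to1$.
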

\begin{figure}[t!]
  \centering
  \scalebox{0.83}{\begin{minipage}[t!]{0.44\textwidth}
    \centering
    \begin{tikzpicture}
      \coordinate (A) at (-1,-1);
      \coordinate (B) at (1,-1);
      \coordinate (C) at (1,1);
      \coordinate (D) at (-1,1);
  
      \fill[blue!20] (A) -- (B) -- (C) -- (0.5,1) -- (-1,-0.5) -- cycle;
      \fill[red!20] (-1,-0.5) -- (0.5,1) -- (D) -- cycle;      
  
      \draw[dashed, thick] (-1.5, -1) -- (-1.83, -1.33);
      \draw[thick] (-1.5, -1) -- (1, 1.5);
      \draw[dashed, thick] (1, 1.5) -- (1.33, 1.83);
  
      \draw[thick] (A) rectangle (C);
  
      \node at (-1.4,0.8) {$\fR$};
      \node at (0.2,1.5) {$\cR_{\cM,\pi_1}^{m,\cS}$};
      \node at (1.7,1.3) {$\cR_{\cM,\pi_2}^{m,\cS}$};
  \end{tikzpicture}
  \end{minipage}}
  \hspace{3pt}
  \scalebox{0.83}{\begin{minipage}[t!]{0.44\textwidth}
    \centering
    \begin{tikzpicture}
  
    \def\coords{(1.4,0) (1.1,1.1) (0.8,1.3) (0,1.9) (-1.7,1.7)
                (-1.7,0) (-1.3,-0.8) (-1.1,-1.1) (1.3,-1.2)}
  
    \begin{scope}
      \clip plot[smooth cycle, tension=0.5] coordinates \coords;
      \clip (-2,-2) -- (-1.5,-1) -- (1,1.5) -- (2,2) -- (2,-2) -- cycle;
      \fill[blue!20] (-2,-2) rectangle (2,2); 
    \end{scope}
  
    \begin{scope}
      \clip plot[smooth cycle, tension=0.5] coordinates \coords;
      \fill[red!20] (-2, -1.5) -- (-4, +2) -- (0,+7) -- (2,2.5) -- cycle; 
    \end{scope}
  
  \draw[thick] plot[smooth cycle, tension=0.5] coordinates \coords;
  
      \coordinate (A) at (-1,-1);
      \coordinate (B) at (1,-1);
      \coordinate (C) at (1,1);
      \coordinate (D) at (-1,1);

      \draw[thick] (A) rectangle (C);
  
    \draw[dashed, thick] (-1.83, -1.33) -- (-1.5, -1);
    \draw[thick] (-1.5, -1) -- (1, 1.5);
    \draw[dashed, thick] (1, 1.5) -- (1.33, 1.83);
  
      \node at (0.8,2.1) {$\cR_{\cM,\pi_1}^{m,\cS}$};
      \node at (1.9,1.5) {$\cR_{\cM,\pi_2}^{m,\cS}$};
      \node at (-2.3,1.4) {$\sR_\cM^m$};

  \end{tikzpicture}
    \end{minipage}}
  \caption{(Left)
  In general, the volumes of the red ($\fR\cap\cR^{m,\cS}_{\cM,\pi_1}$) and blue
  ($\fR\cap\cR^{m,\cS}_{\cM,\pi_2}$) regions are different. (Right)
  The set $\sR_\cM^m$ contains $\fR$ and expands
  $\fR\cap\cR^{m,\cS}_{\cM,\pi_1}$ and $\fR\cap\cR^{m,\cS}_{\cM,\pi_2}$
  differently, so that the new red ($\sR_\cM^m\cap\cR^{m,\cS}_{\cM,\pi_1}$) and
  blue ($\sR_\cM^m\cap\cR^{m,\cS}_{\cM,\pi_2}$) regions have the same volumes.}
  \label{fig: rel fR sR}
\end{figure}
\begin{restatable}{prop}{counterexampleMCE}\label{prop: counterexample MCE}
  %
  There exists an \MDPr $\cM$ and two stochastic policies $\pi_1,\pi_2\in\Pi$ such
  that:
  \begin{align*}
      \text{vol}\bigr{\fR\cap\cR^{\text{MCE},\cS}_{\cM,\pi_1}} \neq
      \text{vol}\bigr{\fR\cap\cR^{\text{MCE},\cS}_{\cM,\pi_2}}
      \quad\text{and}\quad 
      \text{vol}\bigr{\fR\cap\cR^{\text{BIRL},\cS}_{\cM,\pi_1}} \neq
      \text{vol}\bigr{\fR\cap\cR^{\text{BIRL},\cS}_{\cM,\pi_2}}.
  \end{align*}
\end{restatable}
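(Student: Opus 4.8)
The plan is to exhibit an explicit single-state counterexample and reduce both inequalities to a one-dimensional computation. Concretely, I would take $\cM$ with $\cS=\{s_0\}$, two actions $\cA=\{a_1,a_2\}$, a deterministic self-loop $p(s_0\mid s_0,a)=1$ for each $a$, and any $\gamma\in[0,1)$; here $r\in\RR^{A}$ and $\cS_{\cM,\pi}=\cS$ for every full-support $\pi$, so $\cR^{\text{MCE},\cS}_{\cM,\pi}$ and $\cR^{\text{BIRL},\cS}_{\cM,\pi}$ coincide with the corresponding feasible sets. The key observation I will establish is that, for this $\cM$, both feasible sets are \emph{lines} in $\RR^{A}$ in the fixed direction $\mathbf 1=(1,\dots,1)$, differing only by a policy-dependent offset, so that $\text{vol}(\fR\cap\cR^{m,\cS}_{\cM,\pi})$ is just the length of a chord of the cube $\fR$ and is governed by the single scalar $\rho(\pi)\coloneqq\log\frac{\max_a\pi(a\mid s_0)}{\min_a\pi(a\mid s_0)}$. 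Choosing $\pi_1,\pi_2$ with $\rho(\pi_1)\neq\rho(\pi_2)$ then yields both claimed inequalities at once.

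For the MCE set, I would invert the soft-Bellman relations. Writing $A^*_\lambda(s_0,a;p,r)=\lambda\log\pi(a\mid s_0)$ for the soft-optimality constraint and using $Q^*_\lambda(s_0,a;p,r)=r(s_0,a)+\gamma V^*_\lambda(s_0;p,r)$ together with the self-loop, one obtains $r(s_0,\cdot)=\lambda\log\pi(\cdot\mid s_0)+t\,\mathbf 1$ parametrized by $t\in\RR$ (with $t=(1-\gamma)V^*_\lambda(s_0)$). The step needing care is the \emph{self-consistency} check: substituting this $r$ back into the soft value recursion $V^*_\lambda(s_0)=\lambda\log\sum_a e^{Q^*_\lambda(s_0,a)/\lambda}$ must reproduce the same $t$; this holds because $\sum_a\pi(a\mid s_0)=1$ makes the log-sum-exp collapse, confirming the line is exactly the feasible set. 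Intersecting the line $\{\lambda\log\pi+t\,\mathbf 1\}$ with $\fR=[-1,1]^A$ gives a segment whose $t$-range has length $2-\lambda\,\rho(\pi)$ (when positive), hence $\text{vol}(\fR\cap\cR^{\text{MCE},\cS}_{\cM,\pi})=\sqrt{A}\,\bigl(2-\lambda\,\rho(\pi)\bigr)^+$.

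For the BIRL set I would run the analogous inversion of the Boltzmann-$Q^*$ relation: $Q^*(s_0,a;p,r)=\beta\log\pi(a\mid s_0)+Z$ for a state constant $Z$, together with $V^*(s_0)=\max_a Q^*(s_0,a)$ and $Q^*(s_0,a)=r(s_0,a)+\gamma V^*(s_0)$, yields $r(s_0,\cdot)=\beta\bigl(\log\pi(\cdot\mid s_0)-\log\max_{a}\pi(a\mid s_0)\bigr)+t\,\mathbf 1$, again a line in direction $\mathbf 1$. The consistency check here is that the optimal value of this $r$ equals the assumed $V^*(s_0)$, which follows since $(1-\gamma)V^*(s_0)=\max_a r(s_0,a)=t$. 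Because the offset's coordinate spread is once more exactly $\beta\,\rho(\pi)$, the chord length is $\sqrt A\,(2-\beta\,\rho(\pi))^+$. Finally I would instantiate $\pi_1=(\tfrac12,\tfrac12)$ (so $\rho(\pi_1)=0$) and $\pi_2=(\tfrac34,\tfrac14)$ (so $\rho(\pi_2)=\log 3$), choosing $\lambda,\beta$ small enough (e.g.\ $\lambda=\beta=1$, since $\log 3<2$) that both chord lengths stay strictly positive; then $\text{vol}(\fR\cap\cR^{\text{MCE},\cS}_{\cM,\pi_1})=2\sqrt A\neq \sqrt A(2-\log 3)=\text{vol}(\fR\cap\cR^{\text{MCE},\cS}_{\cM,\pi_2})$, and identically for BIRL, so the same single pair of policies settles both inequalities.

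The main obstacle I anticipate is not the hypercube geometry but correctly characterizing the feasible sets, i.e.\ the two self-consistency verifications that the value/soft-value I use to parametrize $r$ is genuinely the (soft-)optimal value of that $r$; the self-loop is chosen precisely to make these fixed-point checks collapse to one-line identities. A secondary point to keep honest is degeneracy: the chord formula $2-\lambda\rho$ (resp.\ $2-\beta\rho$) is only the true length while it is positive, so the parameters must be chosen in that regime, which is why I fix small $\lambda,\beta$. Were $\lambda,\beta$ instead quantified existentially, the single-state feasible set would become the two-dimensional wedge $\{\lambda\log\pi+t\,\mathbf 1:\lambda>0,\,t\in\RR\}$, whose intersection with $\fR$ still has policy-dependent area through the direction $\log\pi$, so the same pair of policies works with an area computation in place of a length.
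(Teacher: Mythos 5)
Your proposal is correct and follows essentially the same route as the paper: the paper also uses the single-state two-action self-loop \MDPr, characterizes $\cR^{\text{MCE},\cS}_{\cM,\pi}$ and $\cR^{\text{BIRL},\cS}_{\cM,\pi}$ as lines in the direction $\mathbf 1$ offset by $\lambda\log\pi$ (resp.\ $\beta\log(\pi/\max_a\pi)$), observes the two regions coincide for this instance, and compares chord lengths for a uniform versus a non-uniform policy (it picks $\pi_2(a_1|s)=1/3$ where you pick $(3/4,1/4)$, and measures the segment by its coordinate range rather than arc length, which differs only by the constant $\sqrt{2}$). Your explicit self-consistency checks replace the paper's appeal to its Propositions on the explicit form of the feasible sets, but the substance is identical.
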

This situation is exemplified in Fig. \ref{fig: rel fR sR} (left).\footnote{In
Appendix \ref{apx: uniform transition models}, we provide related results
considering also a uniform prior over transition models.}
Roughly speaking, the issue is that, being biased, $\overline{\pi}$ effectively
encodes some sort of prior knowledge, which we do \emph{not} possess.

\section{Our Approach}\label{sec: our approach}

In the previous section, we saw that ``on-average'' methods, by balancing
optimism and conservatism through a prior over rewards $w$, should be the preferred
approach for generalizing expert behavior.
However, we also observed that the commonly used prior $\overline{w}$ introduces
a bias in the space of policies, by assigning different weights to different
policies.
In this section, we propose novel \emph{unbiased} priors over rewards $w_\cM^m$
(Section \ref{sec: new prior}) and leverage them to design efficient algorithms
for generalizing expert behavior (Section \ref{sec: centroids}).

\subsection{Novel Priors over Rewards}\label{sec: new prior}

Although the prior $\overline{w}$ is biased in the policy space, it has
appealing properties: it is simple to represent, centered around the zero
reward, and assigns equal weight to all rewards in the considered set ($\fR$).
To retain these advantages while removing the bias, we propose replacing
$\overline{w}$, the uniform prior over the hypercube $\fR$, with $w_\cM^m$
(defined in Eq. \ref{eq: our prior} below), the uniform prior over the set
$\sR_\cM^m$ (see Eq. \ref{eq: our sets of rewards}), which resembles $\fR$
(Proposition \ref{prop: rel hypercube}) but avoids its bias (Propositions
\ref{prop: same vol OPT}-\ref{prop: same vol BIRL}).

Specifically, for any $m\in\{$OPT,MCE,BIRL$\}$ and \MDPr $\cM$, we define
$\sR_\cM^m$ as the set of rewards whose (soft-)optimal value and advantage
functions are bounded by positive constants $C_1^m,C_2^m>0$:
\begin{align}
  \mathscr{R}^{\text{OPT}}_{\cM}&\coloneqq
  \Bigc{r\in\RR^{SA}\;\Big|\;\forall \pi\in\Pi^*(p,r),\forall s,a:\;
  |V^\pi(s;p,r)|\le C_1^{\text{OPT}} k_\pi\wedge 
  |A^\pi(s,a;p,r)|\le C_2^{\text{OPT}}
  },\nonumber\\
  \mathscr{R}^{\text{MCE}}_{\cM}&\coloneqq
  \Bigc{r\in\RR^{SA}\;\Big|\;
  \forall s,a:\, |V^*_\lambda (s;p,r)|\le C_1^{\text{MCE}}\wedge 
  |A^*_\lambda (s,a;p,r)|\le C_2^{\text{MCE}}
  },\label{eq: our sets of rewards}\\
  \mathscr{R}^{\text{BIRL}}_{\cM}&\coloneqq
  \Bigc{r\in\RR^{SA}\;\Big|\;
  \forall s,a:\, |V^* (s;p,r)|\le C_1^{\text{BIRL}}\wedge 
  |A^* (s,a;p,r)|\le C_2^{\text{BIRL}}
  },\nonumber
\end{align}
where $k_\pi\coloneqq |\text{det}(W_{p,\pi})|^{-1/S}$ for all $\pi$.
Note that these sets replace the max-norm bound on the reward (as in $\fR$) with
bounds on the max norm of the induced value and advantage functions.
With appropriate choices of $C_1^m,C_2^m$, the sets $\sR^m_\cM$ remain close in
size to the hypercube $\fR$, as desired:
\begin{restatable}{prop}{relhypercube}\label{prop: rel hypercube}
  {\thinmuskip=2mu
\medmuskip=2mu
\thickmuskip=2mu Let $\cM$ be any \MDPr. If we choose $C_1^{\text{OPT}},C_2^{\text{OPT}}\ge
(1+\gamma)/(1-\gamma)$, $C_1^{\text{MCE}},C_2^{\text{MCE}}\ge (2+\lambda\log
A)/(1-\gamma)$, and $C_1^{\text{BIRL}},C_2^{\text{BIRL}}\ge 1/(1-\gamma)$, then,
for any $m\in\{$OPT,MCE,BIRL$\}$:}
\begin{align*}
  \fR\subseteq \mathscr{R}^m_{\cM}\subseteq
  [-(1+\gamma)/(1-\gamma)C_1^m-C_2^m,+(1+\gamma)/(1-\gamma)C_1^m]^{SA}.
\end{align*}
\end{restatable}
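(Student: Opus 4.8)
The plan is to establish the two inclusions separately, reducing each to elementary bounds on the (soft-)optimal value and advantage functions induced by the rewards in $\fR$ and $\mathscr{R}^m_\cM$. For the left inclusion $\fR\subseteq\mathscr{R}^m_\cM$, I would fix any $r$ with $\|r\|_\infty\le 1$ and verify the defining constraints of $\mathscr{R}^m_\cM$. Since $|r(s,a)|\le 1$, the geometric-series (Bellman) estimate gives $|V^*(s;p,r)|\le 1/(1-\gamma)$ and $|Q^*(s,a;p,r)|\le 1/(1-\gamma)$ in the hard case, while for MCE the per-step causal entropy is at most $\log A$, yielding $|V^*_\lambda(s;p,r)|\le (1+\lambda\log A)/(1-\gamma)$. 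For the advantage I would use that the optimal (soft-)advantage is non-positive---$A^*=Q^*-V^*\le 0$, and $A^*_\lambda=Q^*_\lambda-V^*_\lambda\le 0$ because the soft value is a log-sum-exp dominating the maximum---so that $|A^*(s,a)|=V^*(s)-Q^*(s,a)$ is controlled by pairing the value upper bound with the $Q$-lower bound; for MCE this gives exactly $|A^*_\lambda|\le(2+\lambda\log A)/(1-\gamma)$, and the OPT/BIRL advantage bounds follow as the $\lambda\to 0$ specialization. Matching these estimates against the stated thresholds then gives $r\in\mathscr{R}^m_\cM$.

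The only non-routine ingredient is the factor $k_\pi\coloneqq|\det(W_{p,\pi})|^{-1/S}$ rescaling the value bound in $\mathscr{R}^{\text{OPT}}_\cM$. Writing $W_{p,\pi}=I-\gamma P_\pi$, where $P_\pi(s,s')\coloneqq\sum_{a}\pi(a|s)p(s'|s,a)$ is the row-stochastic state-transition matrix, its eigenvalues are $1-\gamma\mu_i$ with $|\mu_i|\le 1$, hence each has modulus in $[1-\gamma,1+\gamma]$. Taking the product over the $S$ eigenvalues gives $(1-\gamma)^S\le|\det(W_{p,\pi})|\le(1+\gamma)^S$, i.e. $k_\pi\in[1/(1+\gamma),1/(1-\gamma)]$. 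For the left inclusion I would use $k_\pi\ge 1/(1+\gamma)$: it implies $C_1^{\text{OPT}}k_\pi\ge C_1^{\text{OPT}}/(1+\gamma)\ge 1/(1-\gamma)\ge|V^*(s;p,r)|$ as soon as $C_1^{\text{OPT}}\ge(1+\gamma)/(1-\gamma)$, closing the OPT value constraint.

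For the right inclusion $\mathscr{R}^m_\cM\subseteq[\,\cdot\,]^{SA}$ I would invert the reward--value relation. From the Bellman identities, $r(s,a)=A^*(s,a)+V^*(s)-\gamma\sum_{s'}p(s'|s,a)V^*(s')$ (and its soft analogue), so each reward entry is controlled by the value and advantage bounds defining $\mathscr{R}^m_\cM$. The non-positivity $A^*\le 0$ produces the asymmetry of the target interval: the upper endpoint drops the advantage contribution while the lower endpoint retains $-C_2^m$. For OPT I would turn the $k_\pi$-weighted value bound into $|V^*(s)|\le C_1^{\text{OPT}}/(1-\gamma)$ via the upper estimate $k_\pi\le 1/(1-\gamma)$, which yields precisely the factor $(1+\gamma)/(1-\gamma)\,C_1^m$ at both endpoints; for MCE/BIRL the same computation without the $k_\pi$ rescaling gives a strictly smaller box, contained in the stated one.

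The main obstacle is the spectral/determinant estimate for $k_\pi$: it is what links the OPT value normalization to the hypercube and what forces the $(1+\gamma)/(1-\gamma)$ endpoints, so I would isolate $k_\pi\in[1/(1+\gamma),1/(1-\gamma)]$ as a standalone lemma---proving both inequalities through the eigenvalues of $I-\gamma P_\pi$---before assembling the two inclusions. The secondary delicate point is the soft (MCE) case: establishing $A^*_\lambda\le 0$ from the log-sum-exp form of $V^*_\lambda$ and tracking the $\log A$ entropy correction, so that the thresholds land on $(2+\lambda\log A)/(1-\gamma)$; the hard OPT and BIRL cases are then recovered in the $\lambda\to 0$ limit.
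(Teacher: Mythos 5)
Your overall route is the same as the paper's: the spectral lemma $k_\pi\in[1/(1+\gamma),1/(1-\gamma)]$ via the eigenvalues of $I-\gamma P_\pi$, the left inclusion by bounding the (soft-)optimal value and advantage of hypercube rewards, and the right inclusion by inverting $r(s,a)=A^*(s,a)+V^*(s)-\gamma\sum_{s'}p(s'|s,a)V^*(s')$ with the asymmetry of the box coming from $A^*\le 0$. The MCE computation (value bounded by $(1+\lambda\log A)/(1-\gamma)$, advantage by $(2+\lambda\log A)/(1-\gamma)$) and the right inclusion are fine and match the paper's argument.

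There is, however, a concrete gap in your left inclusion for OPT and BIRL. Your advantage estimate pairs the value upper bound with the $Q$ lower bound, giving $|A^*(s,a)|\le V^*(s)-Q^*(s,a)\le 2/(1-\gamma)$ in the hard case (the $\lambda\to 0$ limit of your MCE bound). But the thresholds you must match are $C_2^{\text{OPT}}\ge(1+\gamma)/(1-\gamma)$ and $C_2^{\text{BIRL}}\ge 1/(1-\gamma)$, both strictly smaller than $2/(1-\gamma)$ for every $\gamma<1$. So the final step ``matching these estimates against the stated thresholds'' does not close for $m\in\{\text{OPT},\text{BIRL}\}$: your own bounds only certify $\fR\subseteq\mathscr{R}^m_\cM$ under the stronger condition $C_2^m\ge 2/(1-\gamma)$. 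The paper's proof instead invokes the bound $|A^\pi(s,a;p,r)|\le r_{\max}/(1-\gamma)$ for $r\in[-r_{\max},r_{\max}]^{SA}$, which is exactly what the stated thresholds are calibrated to; but note that this bound is itself not valid for the advantage in general (one state, two self-loop actions, $r(s,a_1)=1$, $r(s,a_2)=-1$, $\gamma=0$ gives $|A^*(s,a_2)|=2>r_{\max}/(1-\gamma)=1$), so your more careful estimate has in fact surfaced an issue with the stated constants rather than introduced an error of your own. To make your write-up self-consistent you should either prove a genuinely tighter advantage bound (which the counterexample shows is impossible at the stated level of generality) or state the left inclusion under $C_2^{\text{OPT}},C_2^{\text{BIRL}}\ge 2/(1-\gamma)$.
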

In addition,
the sets $\sR^m_\cM$ are not biased in the space of policies:\footnote{In
Appendix \ref{apx: no same vol new envs}, we show that these properties migh not
hold in $\cM'\neq\cM$.}
%
%
\begin{restatable}{prop}{allsamevolOPT}\label{prop: same vol OPT}
  For any \MDPr $\cM$ and any pair of deterministic policies $\pi_1,\pi_2\in\Pi$, it
  holds that:
  \begin{align*}
      \text{vol}(\sR^{\text{OPT}}_\cM\cap \cR^{\text{OPT},\cS}_{\cM,\pi_1})=
      \text{vol}(\sR^{\text{OPT}}_\cM\cap \cR^{\text{OPT},\cS}_{\cM,\pi_2}).
  \end{align*}
\end{restatable}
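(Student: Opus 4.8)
The plan is to compute each of the two volumes in closed form via one linear change of variables that turns the intersection into an axis-aligned box, and to observe that the factor $k_\pi$ in the definition of $\sR^{\text{OPT}}_\cM$ is tuned exactly to cancel the $\pi$-dependent Jacobian of that change of variables. Fix a deterministic policy $\pi$. I would reparametrize $\RR^{SA}$ by the value--advantage pair $(V^\pi,A^\pi)$ of $\pi$ under $r$, using the linear relation
\[
  r(s,a) = A^\pi(s,a;p,r) + V^\pi(s;p,r) - \gamma\sum_{s'\in\cS} p(s'|s,a)\,V^\pi(s';p,r),
\]
which follows from the Bellman equations together with $V^\pi(s;p,r)=Q^\pi(s,\pi(s);p,r)$. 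A direct check shows that $r\mapsto(V^\pi,A^\pi)$ is a linear bijection from $\RR^{SA}$ onto the subspace $\{(V,A):A(s,\pi(s))=0\}$ (of dimension $SA$), with inverse given by the displayed formula; its invertibility rests on that of $W_{p,\pi}$.

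Next I would rewrite the two defining constraints in these coordinates. A reward lies in $\cR^{\text{OPT},\cS}_{\cM,\pi}$ iff $\pi$ is optimal, i.e.\ $A^\pi(s,a;p,r)\le 0$ for all $s,a$. On the interior of this set, where $A^\pi(s,a;p,r)<0$ for all $a\ne\pi(s)$, the policy $\pi$ is the \emph{unique} optimal one, so $\Pi^*(p,r)=\{\pi\}$ and the universal quantifier over $\Pi^*(p,r)$ in the definition of $\sR^{\text{OPT}}_\cM$ collapses to $|V^\pi(s;p,r)|\le C_1^{\text{OPT}}k_\pi$ and $|A^\pi(s,a;p,r)|\le C_2^{\text{OPT}}$ (the boundary is Lebesgue-null and irrelevant to the volume). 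In the new coordinates the intersection is therefore, up to a null set, the product box $\{|V^\pi(s)|\le C_1^{\text{OPT}}k_\pi\;\forall s\}\times\{A^\pi(s,\pi(s))=0,\;-C_2^{\text{OPT}}\le A^\pi(s,a)\le 0\}$, whose $(V^\pi,A^\pi)$-volume is $(2C_1^{\text{OPT}}k_\pi)^S(C_2^{\text{OPT}})^{S(A-1)}$: each of the $S$ value coordinates spans a length-$2C_1^{\text{OPT}}k_\pi$ interval, and each of the $S(A-1)$ free advantage coordinates a length-$C_2^{\text{OPT}}$ interval.

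It remains to account for the Jacobian. Ordering the reward outputs as $(r(s,\pi(s)))_s$ followed by $(r(s,a))_{a\ne\pi(s)}$ and the inputs as $V^\pi$ followed by $A^\pi$, the map is block triangular: $r(s,\pi(s))$ does not depend on $A^\pi$, the $V^\pi$-block of $(r(s,\pi(s)))_s$ is precisely $W_{p,\pi}$ (its definition for deterministic $\pi$), and the $A^\pi$-block of $(r(s,a))_{a\ne\pi(s)}$ is the identity. Hence the Jacobian determinant has absolute value $|\det W_{p,\pi}|$, and the change-of-variables formula yields
\[
  \text{vol}(\sR^{\text{OPT}}_\cM\cap\cR^{\text{OPT},\cS}_{\cM,\pi})
  = |\det W_{p,\pi}|\,(2C_1^{\text{OPT}})^S\,k_\pi^S\,(C_2^{\text{OPT}})^{S(A-1)}.
\]
Since $k_\pi^S=|\det W_{p,\pi}|^{-1}$ by definition, the determinant cancels and the volume equals $(2C_1^{\text{OPT}})^S(C_2^{\text{OPT}})^{S(A-1)}$, independent of $\pi$; taking $\pi=\pi_1$ and $\pi=\pi_2$ gives the claim.

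I expect the main obstacle to be the rigorous justification of the change of variables: confirming that $r\mapsto(V^\pi,A^\pi)$ is a linear bijection (via the Bellman equations and invertibility of $W_{p,\pi}$) and that its Jacobian is exactly block triangular with diagonal blocks $W_{p,\pi}$ and the identity, so that $|\det J_\pi|=|\det W_{p,\pi}|$ holds precisely --- this is the step where the specific form $k_\pi=|\det W_{p,\pi}|^{-1/S}$ earns its keep. The only other delicate point is the reduction to the interior where $\pi$ is uniquely optimal, which is what lets the ``$\forall\pi'\in\Pi^*(p,r)$'' quantifier reduce to a single constraint involving $\pi$ and $k_\pi$; here one invokes that the boundary is measure-zero.
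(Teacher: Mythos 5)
Your proposal is correct and follows essentially the same route as the paper: the same reparametrization of the feasible set by the value--advantage pair, the same reduction to the set where $\pi$ is uniquely optimal (the paper phrases this as discarding the null set $\sR^{>1}_p$ rather than passing to the interior), the same box volume $(2C_1^{\text{OPT}}k_\pi)^S(C_2^{\text{OPT}})^{S(A-1)}$, and the same cancellation of $k_\pi^S$ against $|\det W_{p,\pi}|$. The only cosmetic difference is that you obtain $|\det T_{p,\pi}|=|\det W_{p,\pi}|$ by block triangularity, whereas the paper proves it separately via iterated cofactor expansion.
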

\begin{restatable}{prop}{allsamevolMCE}\label{prop: same vol MCE}
  For any \MDPr $\cM$ and any pair of stochastic policies $\pi_1,\pi_2\in\Pi$
  such that $\pi_1(a|s)\ge\pi_{\min}$ and $\pi_2(a|s)\ge\pi_{\min}$ in every
  $s,a$, if $C_2^{\text{MCE}} \ge \lambda \log\frac{1}{\pi_{\min}}$, then:
  %
  \begin{align*}
      \text{vol}(\sR^{\text{MCE}}_\cM\cap \cR^{\text{MCE},\cS}_{\cM,\pi_1})=
      \text{vol}(\sR^{\text{MCE}}_\cM\cap \cR^{\text{MCE},\cS}_{\cM,\pi_2}).
  \end{align*}
\end{restatable}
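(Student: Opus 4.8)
The plan is to reparametrize each feasible set by its soft value function and exploit that the linear part of this reparametrization is independent of the policy. First I would recall the soft Bellman equations characterizing the MCE model: a reward $r$ has soft-optimal policy $\pi$ (i.e.\ $r\in\cR^{\text{MCE},\cS}_{\cM,\pi}$ as in \eqref{eq: fs MCE}) if and only if $A^*_\lambda(s,a;p,r)=\lambda\log\pi(a|s)$ for all $s,a$, since the soft-optimal policy is $e^{A^*_\lambda(s,a;p,r)/\lambda}$. Inverting the soft Bellman recursion $Q^*_\lambda(s,a;p,r)=r(s,a)+\gamma\sum_{s'}p(s'|s,a)V^*_\lambda(s';p,r)$ together with $A^*_\lambda=Q^*_\lambda-V^*_\lambda$, I would write, for a fixed $\pi$, the reward as $r=LV+b_\pi$, where $(LV)(s,a)\coloneqq V(s)-\gamma\sum_{s'}p(s'|s,a)V(s')$ with $V\in\RR^S$ playing the role of the soft value, and $b_\pi(s,a)\coloneqq\lambda\log\pi(a|s)$. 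A direct check that $V$ is a fixed point of the soft Bellman operator of the reconstructed $r$ (using $\lambda\log\sum_a e^{(V(s)+b_\pi(s,a))/\lambda}=V(s)$, which follows from $\sum_a\pi(a|s)=1$) confirms that $V\mapsto LV+b_\pi$ is an affine bijection between $\RR^S$ and $\cR^{\text{MCE},\cS}_{\cM,\pi}$; injectivity of $L$ follows from a standard $\gamma$-contraction argument. The crucial structural fact is that $L$ depends only on $p,\gamma$ and \emph{not} on $\pi$: the policy enters solely through the offset $b_\pi$.

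Consequently, $\cR^{\text{MCE},\cS}_{\cM,\pi}$ is an $S$-dimensional affine submanifold of $\RR^{SA}$ whose direction (the column space of $L$) is common to all policies, so its $S$-dimensional volume element is the constant $\sqrt{\det(L^\top L)}\,dV$, independent of both $V$ and $\pi$. By the change-of-variables formula this gives $\text{vol}(\sR^{\text{MCE}}_\cM\cap\cR^{\text{MCE},\cS}_{\cM,\pi})=\sqrt{\det(L^\top L)}\cdot\mathrm{Leb}_S(\mathcal{V}_\pi)$, where $\mathcal{V}_\pi\coloneqq\{V\in\RR^S:LV+b_\pi\in\sR^{\text{MCE}}_\cM\}$ and $\mathrm{Leb}_S$ is $S$-dimensional Lebesgue measure. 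Since the prefactor is already policy-independent, it suffices to show $\mathcal{V}_{\pi_1}=\mathcal{V}_{\pi_2}$ as subsets of $\RR^S$.

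To identify $\mathcal{V}_\pi$ I would substitute the parametrization into the definition \eqref{eq: our sets of rewards} of $\sR^{\text{MCE}}_\cM$. As computed above, for $r=LV+b_\pi$ one has $V^*_\lambda(s;p,r)=V(s)$ and $A^*_\lambda(s,a;p,r)=\lambda\log\pi(a|s)$. Hence the value bound $|V^*_\lambda(s;p,r)|\le C_1^{\text{MCE}}$ becomes $\|V\|_\infty\le C_1^{\text{MCE}}$, i.e.\ $V\in[-C_1^{\text{MCE}},C_1^{\text{MCE}}]^S$, which is manifestly independent of $\pi$, whereas the advantage bound $|A^*_\lambda(s,a;p,r)|\le C_2^{\text{MCE}}$ becomes $\lambda\log\tfrac{1}{\pi(a|s)}\le C_2^{\text{MCE}}$ (using $\pi(a|s)\le 1$), a condition that does not involve $V$ at all. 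Under the hypotheses $\pi_i(a|s)\ge\pi_{\min}$ and $C_2^{\text{MCE}}\ge\lambda\log\tfrac{1}{\pi_{\min}}$, this advantage condition holds at every $(s,a)$ for both $i\in\{1,2\}$, so it is non-binding. Therefore $\mathcal{V}_{\pi_1}=\mathcal{V}_{\pi_2}=[-C_1^{\text{MCE}},C_1^{\text{MCE}}]^S$, and both target volumes equal $\sqrt{\det(L^\top L)}\,(2C_1^{\text{MCE}})^S$, proving the claim.

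The step I expect to be the main obstacle is the change-of-variables argument: I must establish rigorously that $V\mapsto LV+b_\pi$ is a bijection onto the feasible set and that the $S$-dimensional Hausdorff measure on this manifold pulls back to $\sqrt{\det(L^\top L)}$ times Lebesgue measure in the $V$-coordinates. The conceptual heart, by contrast, is the observation that the advantage coordinate $A^*_\lambda(\cdot;p,r)=\lambda\log\pi(\cdot)$ is \emph{frozen} along each feasible set (independent of $V$), so the advantage bound either admits the entire set or none of it; combined with the policy-independence of $L$, this completely decouples the $\pi$-dependence and forces the two volumes to coincide. The role of the hypothesis $C_2^{\text{MCE}}\ge\lambda\log(1/\pi_{\min})$ is precisely to guarantee that the frozen advantage lies within the bound for both policies, keeping it non-binding.
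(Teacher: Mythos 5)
Your proposal is correct and follows essentially the same route as the paper's proof: both reparametrize $\cR^{\text{MCE},\cS}_{\cM,\pi}$ via the affine map $V\mapsto LV+b_\pi$ (the paper's operator $U_{p,\eta}$ with $\eta(s,a)=\lambda\log\pi(a|s)$, justified by its Proposition on the explicit form of the MCE feasible set), observe that the constant Jacobian is policy-independent, identify the preimage of $\sR^{\text{MCE}}_\cM$ as the hypercube $[-C_1^{\text{MCE}},+C_1^{\text{MCE}}]^S$ using the non-binding advantage constraint under $C_2^{\text{MCE}}\ge\lambda\log(1/\pi_{\min})$, and conclude via the change-of-variables formula for embedded $S$-dimensional submanifolds. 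No substantive differences.
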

\begin{restatable}{prop}{allsamevolBIRL}\label{prop: same vol BIRL}
   For any \MDPr $\cM$ and any
pair of stochastic policies $\pi_1,\pi_2\in\Pi$ such that
$\pi_1(a|s)\ge \pi_{\min}$ and
$\pi_2(a|s)\ge \pi_{\min}$ in every $s,a$, if
$C_2^{\text{BIRL}} \ge \beta \log\frac{1}{A\pi_{\min}}$, then:
  %
  \begin{align*}
      \text{vol}(\sR^{\text{BIRL}}_\cM\cap \cR^{\text{BIRL},\cS}_{\cM,\pi_1})=
      \text{vol}(\sR^{\text{BIRL}}_\cM\cap \cR^{\text{BIRL},\cS}_{\cM,\pi_2}).
  \end{align*}
\end{restatable}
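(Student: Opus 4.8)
The plan is to reparametrize the rewards in $\cR^{\text{BIRL},\cS}_{\cM,\pi}$ by their optimal value function, showing that the BIRL feasibility constraint completely pins down the optimal \emph{advantage} function as a function of $\pi$ alone, while leaving the optimal \emph{value} free. Concretely, for any reward $r$ I would start from the Bellman optimality identity $r(s,a)=Q^*(s,a;p,r)-\gamma\sum_{s'}p(s'|s,a)V^*(s';p,r)$ and substitute $Q^*=A^*+V^*$ to obtain
\[
r(s,a)=A^*(s,a;p,r)+V^*(s;p,r)-\gamma\textstyle\sum_{s'}p(s'|s,a)V^*(s';p,r).
\]
The softmax constraint $\pi(a|s)\propto e^{Q^*(s,a;p,r)/\beta}$ forces, on every $s\in\cS$, the closed form $A^*(s,a;p,r)=\beta\log\big(\pi(a|s)/\max_{a'}\pi(a'|s)\big)=:A_\pi(s,a)$, which depends only on $\pi$. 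Hence I would identify $\cR^{\text{BIRL},\cS}_{\cM,\pi}$ with the image of the affine map $V\mapsto r$, where $r(s,a)=A_\pi(s,a)+V(s)-\gamma\sum_{s'}p(s'|s,a)V(s')$, as $V$ ranges over $\RR^S$.

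The first thing to verify is that this identification is a genuine bijection between $\RR^S$ and $\cR^{\text{BIRL},\cS}_{\cM,\pi}$, with $V=V^*(\cdot;p,r)$ along the set. For the nontrivial (backward) direction, given $V\in\RR^S$ define $Q(s,a):=A_\pi(s,a)+V(s)$ and the reward above; since $\max_a A_\pi(s,a)=0$ by construction, we get $\max_a Q(s,a)=V(s)$, so $Q$ solves the Bellman optimality equation $Q(s,a)=r(s,a)+\gamma\sum_{s'}p(s'|s,a)\max_{a'}Q(s',a')$. By uniqueness of its fixed point, $Q=Q^*(\cdot,\cdot;p,r)$, $V=V^*(\cdot;p,r)$ and $A^*(\cdot,\cdot;p,r)=A_\pi$, which makes $r$ feasible (its softmax policy is $\pi$, as $e^{A^*(s,a)/\beta}\propto\pi(a|s)$) and shows $V^*$ equals the free parameter. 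The forward direction is immediate, since feasibility pins $A^*=A_\pi$ and then $V:=V^*(\cdot;p,r)$ inverts the formula.

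With this coordinate change in hand, $\sR^{\text{BIRL}}_\cM$ becomes the box $\{|V(s)|\le C_1^{\text{BIRL}}\ \forall s\}\cap\{|A_\pi(s,a)|\le C_2^{\text{BIRL}}\ \forall s,a\}$; crucially the advantage part no longer involves $V$, so it holds either for the whole slice or for none of it. One checks from the closed form of $A_\pi$ that the stated lower bound on $C_2^{\text{BIRL}}$ makes the advantage constraint non-binding for every $\pi$ with $\pi(a|s)\ge\pi_{\min}$, so that
\[
\sR^{\text{BIRL}}_\cM\cap\cR^{\text{BIRL},\cS}_{\cM,\pi}=\{\,r(s,a)=A_\pi(s,a)+V(s)-\gamma\textstyle\sum_{s'}p(s'|s,a)V(s'):\ V\in[-C_1^{\text{BIRL}},C_1^{\text{BIRL}}]^S\,\}.
\]
I would then compute the $S$-dimensional volume of this set. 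The map $V\mapsto r$ is affine, and its linear part $\Phi\colon V\mapsto\big((s,a)\mapsto V(s)-\gamma\sum_{s'}p(s'|s,a)V(s')\big)$ does \emph{not} depend on $\pi$ (only on $p,\gamma$), while the $\pi$-dependent term $A_\pi$ enters as a pure translation. Hence the induced $S$-volume element is $\sqrt{\det(\Phi^\top\Phi)}\,dV$ with $\sqrt{\det(\Phi^\top\Phi)}$ constant in $\pi$, and the volume equals $\sqrt{\det(\Phi^\top\Phi)}\,(2C_1^{\text{BIRL}})^S$ for both $\pi_1$ and $\pi_2$, proving the claim.

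The main obstacle I expect is twofold: carefully justifying the backward direction of the bijection (that the reward built from a free $V$ really has $V^*=V$ and reproduces $\pi$ as its softmax policy, which hinges on $\max_a A_\pi(s,a)=0$ and Bellman-optimality uniqueness), and verifying the precise constant in the $C_2^{\text{BIRL}}$ threshold so that the advantage constraint is exactly non-binding for all admissible policies. The remaining volume computation is routine and parallels the MCE case (Proposition \ref{prop: same vol MCE}), the only change being the closed form of the advantage.
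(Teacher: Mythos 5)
Your proposal is correct and follows essentially the same route as the paper: the reparametrization of $\cR^{\text{BIRL},\cS}_{\cM,\pi}$ by the free value function with the advantage pinned to $\beta\log(\pi(a|s)/\max_{a'}\pi(a'|s))$ is exactly the paper's Proposition \ref{prop: fs BIRL explicit}, the condition on $C_2^{\text{BIRL}}$ is used identically to make the advantage constraint non-binding, and your $\sqrt{\det(\Phi^\top\Phi)}\,(2C_1^{\text{BIRL}})^S$ computation is the paper's change-of-variables argument with the constant, policy-independent Jacobian $J_{p,\eta}$ carried over from the MCE case.
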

Simply put, the sets $\sR^m_\cM$ expand the hypercube $\fR$ in directions that
ensure all policies of interest for $m$ are equally-represented in $\cM$, as
illustrated on the right of Fig. \ref{fig: rel fR sR}.
Note that the additional in MCE and BIRL is mild, as $C_2^m$ can be chosen
arbitrarily.
%
%
We are now ready to define our new priors $w_\cM^m$ as uniform distributions
over $\sR_\cM^m$. Formally, for any $m\in\{$OPT,MCE,BIRL$\}$ and \MDPr $\cM$, we
define $w_\cM^m$ as:
\begin{align}\label{eq: our prior}
  w_\cM^m(r)\coloneqq \begin{cases}
    1&\text{if }r\in \sR_\cM^m\\
    0&\text{otherwise}
  \end{cases}.
\end{align}
At first glance, it might seem odd to use different priors for different $\cM$
and $m$. However, as discussed in Section \ref{sec: existing methods}, we do
\emph{not}
assume that $r_E\sim w_\cM^m$. Instead, we use $w_\cM^m$ as a mathematical tool
that allows us to replace Eq. \eqref{eq: objective} with a surrogate objective
that is less optimistic than Eq. \eqref{eq: best case}, less conservative than
Eq. \eqref{eq: worst case}, and still assigns equal weight to all relevant
policies.

\subsection{Algorithms for Generalizing Behavior via Closed-Form Reward Centroids}
\label{sec: centroids}

To generalize expert behavior using $w_\cM^m$, we need to optimize Eq.
\eqref{eq: avg case} with $w=w_\cM^m$. To this end:
\begin{restatable}{prop}{avgcaserewritten}\label{prop: avg case rewritten}
  For every $w:\RR^{SA}\to[0,1]$, Eq. \eqref{eq: avg case} can be rewritten as:
\begin{align}\label{eq: avg case rewritten}
  \widehat{\pi}_w\in \argmax\limits_{\pi'\in\Pi_{c,k}} 
 V^{\pi'}(s_0';p',r_{w,\pi_E}),
 \qquad \text{where }\quad
 r_{w,\pi_E}=\int_{\cR_{\cM,\pi_E}^m}w(r)dr.
\end{align}
\end{restatable}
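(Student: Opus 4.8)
The plan is to exploit two structural facts about the objective in Eq.~\eqref{eq: avg case}: the inner maximum $\max_{\pi\in\Pi_{c,k}}V^{\pi}(s_0';p',r)$ does not depend on the minimizing variable $\pi'$, and the value $V^{\pi'}(s_0';p',r)$ is \emph{linear} in the reward $r$. Together these let the integration over $r$ act only on the reward argument, collapsing the averaged objective into a single planning problem. Concretely, I would first note that for each fixed $r$ the term $\max_{\pi\in\Pi_{c,k}}V^{\pi}(s_0';p',r)$ is constant in $\pi'$; multiplying by the nonnegative weight $w(r)$ and integrating, it contributes an additive constant (independent of $\pi'$) to the objective. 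Dropping this constant and flipping the sign turns the minimization into
\[
\widehat{\pi}_w\in\argmax_{\pi'\in\Pi_{c,k}}\int_{\cR_{\cM,\pi_E}^m}w(r)\,V^{\pi'}(s_0';p',r)\,dr.
\]

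Next I would invoke linearity. Expanding the value through the occupancy measure of $\pi'$ in $\cM'$, namely $V^{\pi'}(s_0';p',r)=\frac{1}{1-\gamma'}\sum_{s,a}d_{\cM',\pi'}(s,a)\,r(s,a)$, the coefficients $d_{\cM',\pi'}(s,a)$ are independent of $r$, so I can interchange the $r$-integral with the finite sum over $(s,a)$:
\[
\int_{\cR_{\cM,\pi_E}^m}\!w(r)\,V^{\pi'}(s_0';p',r)\,dr=\frac{1}{1-\gamma'}\sum_{s,a}d_{\cM',\pi'}(s,a)\!\int_{\cR_{\cM,\pi_E}^m}\!w(r)\,r(s,a)\,dr=V^{\pi'}(s_0';p',r_{w,\pi_E}),
\]
where $r_{w,\pi_E}$ is the $w$-weighted reward centroid with components $r_{w,\pi_E}(s,a)=\int_{\cR_{\cM,\pi_E}^m}w(r)\,r(s,a)\,dr$. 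Substituting this identity into the displayed argmax yields exactly Eq.~\eqref{eq: avg case rewritten}.

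The only point needing care is the interchange of integral and finite sum, which is licensed by absolute convergence of each scalar integral $\int_{\cR_{\cM,\pi_E}^m}w(r)\,|r(s,a)|\,dr$. This holds under the very conditions that make Eq.~\eqref{eq: avg case} well-defined in the first place, and in particular for the priors $w_\cM^m$ of interest, since $w\in[0,1]$ is supported on the bounded set $\sR_\cM^m$ (Proposition~\ref{prop: rel hypercube}) while the sum over $(s,a)$ is finite. I therefore expect no genuine obstacle: the substance of the argument is the single observation that linearity of $V^{\pi'}$ in $r$ confines the averaging to the reward, reducing the on-average objective to planning with the centroid reward $r_{w,\pi_E}$.
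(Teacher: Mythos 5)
Your proof is correct and follows essentially the same route as the paper's: drop the $\pi'$-independent averaged max term, flip the sign, and use linearity of $V^{\pi'}$ in $r$ (via the occupancy-measure dot product) to pull the integration inside onto the reward, yielding the centroid $r_{w,\pi_E}$. Your explicit remark on the interchange of the integral with the finite sum over $(s,a)$ is a minor extra precaution the paper leaves implicit; otherwise the arguments coincide.
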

Let $\widehat{\pi}_\cM^m\coloneqq \widehat{\pi}_{w_\cM^m}$ and
$r^m_{\cM,\pi_E}\coloneqq r_{w_\cM^m,\pi_E}$.
Proposition \ref{prop: avg case rewritten} tells us that to compute the policy
$\widehat{\pi}_\cM^m$ that generalizes expert behavior using our
new prior $w_\cM^m$, we can first compute the (unnormalized) \emph{centroid}
$r^m_{\cM,\pi_E}$ of the feasible set:
\begin{align*}
  r^m_{\cM,\pi_E}=\int_{\sR^m_\cM\cap \cR_{\cM,\pi_E}^m} rdr,
\end{align*}
and then perform (constrained) planning on it.
Since there exists a rich body of literature on (constrained) planning
algorithms (see \cite{puterman1994markov,sutton2018RL}), the problem reduces to
computing $r^m_{\cM,\pi_E}$.
For doing so, one possibility is to use the Markov Chain
Monte Carlo (MCMC)-based algorithms introduced in
\cite{Ramachandran2007birl,michini2012improving,bajgar2024walking} for
approximating the mean reward $r_{w,\pi_E}$ for almost any $w$.
However, we can do significantly better. Interestingly, it turns out that, for
any model of behavior $m$, \MDPr $\cM$ and expert's policy $\pi_E$, the reward
centroid $r^m_{\cM,\pi_E}$ admits a \emph{closed-form expression}:
%
%
\begin{restatable}[OPT]{thr}{centroidOPT}
  \label{thr: centroid OPT}
    For any \MDPr $\cM$ and deterministic policy $\pi_E$, the reward centroid
    $r^{\text{OPT}}_{\cM,\pi_E}$
    after rescaling\footnote{By ``rescaling'' a vector $x\in\RR^n$ we mean
    applying a transformation $\alpha x + \beta$ with $\alpha>0,\beta\in\RR$. It
    is well-known that rescaling a reward function does not affect the rank of
    policies \cite{ng1999shaping,sutton2018RL}.} is:
    \begin{align}\label{eq: centroid OPT}
      r^{\text{OPT}}_{\cM,\pi_E}(s,a)=\begin{cases}
        1&\text{if }s\in\cS_{\cM,\pi_E}\,\wedge\,\pi_E(s)=a\\
        0&\text{if }s\in\cS_{\cM,\pi_E}\,\wedge\,\pi_E(s)\neq a\\
        \frac{1}{A}&\text{if }s\notin\cS_{\cM,\pi_E}
      \end{cases}
      \qquad \forall (s,a)\in\SA.
    \end{align}
\end{restatable}
\begin{restatable}[MCE]{thr}{centroidMCE}
  \label{thr: centroid MCE}
  Let $C_2^{\text{MCE}} \ge \lambda \log\frac{1}{\pi_{\min}}$. Then, for any
  \MDPr $\cM$ in which $\cS_{\cM,\pi_E}=\cS$, the reward centroid
  $r^{\text{MCE}}_{\cM,\pi_E}$ after rescaling is:
    \begin{align}\label{eq: centroid MCE}
      r^{\text{MCE}}_{\cM,\pi_E}(s,a)=\log\pi_E(a|s)
      \qquad \forall (s,a)\in\SA.
\end{align}
\end{restatable}
\begin{restatable}[BIRL]{thr}{centroidBIRL}
  \label{thr: centroid BIRL}
  Let $C_2^{\text{BIRL}} \ge \beta \log\frac{1}{A\pi_{\min}}$. Then, for any
  \MDPr $\cM$ in which $\cS_{\cM,\pi_E}=\cS$, the reward centroid
  $r^{\text{BIRL}}_{\cM,\pi_E}$ after rescaling is:
    \begin{align}\label{eq: centroid BIRL}
      r^{\text{BIRL}}_{\cM,\pi_E}(s,a)=
        \log \frac{\pi_E(a|s)}{\max_{a'\in\cA}\pi_E(a'|s)}
        \qquad \forall (s,a)\in\SA.
    \end{align}
\end{restatable}
Some observations are in order.
First, observe that the centroids depend only on $\pi_E$ and the set of
visited states $\cS_{\cM,\pi_E}$. As a result, we will be able to efficiently
estimate them from offline data $\cD_E$ alone (see Section \ref{sec: sample
complexity analysis}).
Next, note that $r^{\text{OPT}}_{\cM,\pi_E}$, $r^{\text{MCE}}_{\cM,\pi_E}$ and
$r^{\text{BIRL}}_{\cM,\pi_E}$ all assign higher rewards to actions that the
expert executes with larger probability in the visited states $\cS_{\cM,\pi_E}$.
Intuitively, this means that they favor behaviors that aim to ``replay'' the
actions demonstrated by the expert as much possible, as we will see in Section
\ref{sec: experiments}.
Observe also that we rescale the centroids just to obtain simpler equivalent forms.
Finally, we require $\cS_{\cM,\pi_E} = \cS$ for MCE and BIRL to exclude
unnecessarily complex cases where some states are unreachable by any policy.
We conclude this section by showing that the centroids $r^m_{\cM,\pi_E}$ can be
effectively used also for IL:\footnote{See Appendix \ref{apx: proofs IL} for a
result on $r^{\text{OPT}}_{\cM,\pi_E}$ and $r^{\text{BIRL}}_{\cM,\pi_E}$ in
$\cM'\neq\cM$ in absence of constraints.}
\begin{restatable}{prop}{intIL}\label{prop: int IL}
  Let $m\in\{$OPT,MCE,BIRL$\}$ be the model of behavior between $\pi_E$ and
  $r_E$ in an \MDPr $\cM=\tuple{\cS,\cA,s_0,p,\gamma}$.
  Let $\pi^*\in\argmax_\pi V^\pi(s_0;p,r^m_{\cM,\pi_E})$ be an optimal policy in
  $\cM$ under $r^m_{\cM,\pi_E}$. Then, $\pi^*$ is optimal also under $r_E$:
  $V^{\pi^*}(s_0;p,r_E)= V^*(s_0;p,r_E)$.
\end{restatable}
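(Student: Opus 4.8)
The plan is to handle the three models separately, using the explicit (rescaled) centroids of Theorems \ref{thr: centroid OPT}--\ref{thr: centroid BIRL} together with two standard invariances: a positive affine rescaling $\alpha r+\beta$ ($\alpha>0$) and a potential-based shaping $r(s,a)\mapsto r(s,a)+\gamma\sum_{s'}p(s'\mid s,a)\Phi(s')-\Phi(s)$ each transform $V^\pi(s_0;p,\cdot)$ of \emph{every} policy $\pi$ by the same $\pi$-independent, strictly increasing map, and hence leave $\argmax_{\pi}V^\pi(s_0;p,\cdot)$ unchanged \cite{ng1999shaping}. Because rescaling is harmless, I may work directly with the rescaled centroids, and throughout I use that $r_E\in\cR^m_{\cM,\pi_E}$ by definition of the model of behavior.

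For OPT, I argue directly with $\bar r\coloneqq r^{\text{OPT}}_{\cM,\pi_E}$. Every entry satisfies $\bar r(s,a)\le 1$, with equality exactly when $s\in\cS_{\cM,\pi_E}$ and $a=\pi_E(s)$ (for $A\ge 2$; the case $A=1$ is trivial since only one policy exists). Following $\pi_E$ from $s_0$ never leaves $\cS_{\cM,\pi_E}$ and earns reward $1$ at every step, so $V^{\pi_E}(s_0;p,\bar r)=1/(1-\gamma)=V^*(s_0;p,\bar r)$. Any optimal $\pi^*$ must therefore earn reward $1$ almost surely at every timestep, which forces $\pi^*(\pi_E(s)\mid s)=1$ at each state reachable under $\pi^*$ from $s_0$; by induction these reachable states coincide with $\cS_{\cM,\pi_E}$. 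Hence $\pi^*$ and $\pi_E$ induce the same trajectory law from $s_0$, so $V^{\pi^*}(s_0;p,r)=V^{\pi_E}(s_0;p,r)$ for every $r$. Since $r_E\in\cR^{\text{OPT}}_{\cM,\pi_E}$ and $s_0\in\cS_{\cM,\pi_E}$ make $\pi_E$ optimal for $r_E$ from $s_0$, taking $r=r_E$ yields $V^{\pi^*}(s_0;p,r_E)=V^*(s_0;p,r_E)$.

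For MCE and BIRL I use a single shaping argument, whose crux is to identify the feasible set as a potential-shaping orbit of the centroid. Inverting the soft-Boltzmann condition $\pi_E(a\mid s)\propto e^{Q^*_\lambda(s,a;p,r)/\lambda}$ of Eq.~\eqref{eq: fs MCE} (write it as $A^*_\lambda(s,a;p,r)=\lambda\log\pi_E(a\mid s)$ and substitute the soft-Bellman equation) shows that $r$ is MCE-feasible iff, with $V\coloneqq V^*_\lambda(\cdot;p,r)$,
\[ r(s,a)=\lambda\log\pi_E(a\mid s)+V(s)-\gamma\textstyle\sum_{s'}p(s'\mid s,a)V(s'). \]
The analogous inversion for Eq.~\eqref{eq: fs BIRL}, using $V(s)=\max_a Q^*(s,a;p,r)$ and $\max_{a'}\pi_E(a'\mid s)$, gives $r(s,a)=\beta\log\frac{\pi_E(a\mid s)}{\max_{a'}\pi_E(a'\mid s)}+V(s)-\gamma\sum_{s'}p(s'\mid s,a)V(s')$. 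In either case the $V$-dependent part is a potential-based shaping (potential $-V$), so by the invariance above every feasible reward has the same $s_0$-optimal policies as its $V\equiv 0$ member, which is precisely the rescaled centroid of Theorems \ref{thr: centroid MCE}/\ref{thr: centroid BIRL} up to the positive factor $\lambda$, respectively $\beta$. As $r_E$ is feasible, $\argmax_\pi V^\pi(s_0;p,r^m_{\cM,\pi_E})=\argmax_\pi V^\pi(s_0;p,r_E)$, so $\pi^*$ is optimal for $r_E$.

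I expect the main work to be the exact feasible-set characterizations for MCE and BIRL: the forward inversion of the (soft-)Boltzmann conditions and, importantly, the converse check that every reward of the displayed form is genuinely feasible with $V$ equal to its own (soft-)optimal value (a short fixed-point verification of the soft-Bellman, respectively Bellman, equations). Once this potential-shaping structure is exposed, the invariance is immediate. For OPT the only delicate point is justifying that optimality for the centroid forces almost-sure agreement with $\pi_E$ on the reachable set, which rests on the sharp per-step bound $\bar r\le 1$.
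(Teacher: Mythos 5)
Your proposal is correct and follows essentially the same route as the paper: for OPT, the observation that the centroid's per-step reward is capped at $1$ and attained only on $(s,\pi_E(s))$ for $s\in\cS_{\cM,\pi_E}$, forcing any optimal policy to reproduce $\pi_E$'s trajectory law from $s_0$; for MCE and BIRL, the explicit feasible-set characterizations (the paper's Propositions \ref{prop: fs MCE explicit} and \ref{prop: fs BIRL explicit}) showing that $r_E$ and the centroid differ by a potential-based shaping (plus a positive scaling), which preserves the optimal policies. Your OPT argument is in fact slightly more careful than the paper's, which asserts $\pi^*=\pi_E$ and uniqueness outright rather than agreement on the reachable set only.
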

Simply put, Proposition \ref{prop: int IL} shows that if the relation between
$\pi_E$ and $r_E$ is accurately modeled by $m$, then planning with
$r^m_{\cM,\pi_E}$ yields the same optimal policy as planning with $r_E$, a
result that is not immediate especially for MCE and BIRL.

\section{Sample Complexity Analysis}\label{sec: sample complexity analysis}

In this section, we drop the assumption that $\pi_E$ is \emph{known} in its
support $\cS_{\cM,\pi_E}$, and focus on estimating the reward centroids
\scalebox{0.9}{$r^{\text{OPT}}_{\cM,\pi_E}$},
\scalebox{0.9}{$r^{\text{MCE}}_{\cM,\pi_E}$} and
\scalebox{0.9}{$r^{\text{BIRL}}_{\cM,\pi_E}$}, from a batch dataset
\scalebox{0.95}{$\cD_E=\{\tau^i\}_{i\in\dsb{N}}=
\{\tuple{s_1^i,a_1^i,s_2^i,a_2^i,\dotsc,s_H^i,a_H^i}\}_{i\in\dsb{N}}$} of $N$
trajectories, each of length $H$, collected by $\pi_E$ in
$\cM=\tuple{\cS,\cA,s_0,p,\gamma}$.
For the analysis, it is convenient to define \scalebox{0.9}{$p^{\min,H}_{\cM,\pi_E}$} as the
minimum probability with which $\pi_E$ visits any state in its support in the
first $H$ steps:
\begin{align}\label{eq: def p min}
  p^{\min,H}_{\cM,\pi_E}\coloneqq \min\limits_{s\in\cS_{\cM,\pi_E}} \P_{\cM,\pi_E}\bigr{
    \exists t\in\dsb{H}:\, s_t=s}.
\end{align}
\begin{figure}[!t]
  \centering
  \scalebox{0.86}{\begin{minipage}[t!]{0.33\textwidth}
    \centering
    \input{opt.tex}
\end{minipage}}
\hfill
\scalebox{0.86}{\begin{minipage}[t!]{0.36\textwidth}
    \centering
    \input{mce.tex}
\end{minipage}}
\hfill
\scalebox{0.86}{\begin{minipage}[t!]{0.45\textwidth}
    \centering
    \input{birl.tex}
\end{minipage}}
 \end{figure}
Our algorithm for estimating \scalebox{0.9}{$r^{\text{OPT}}_{\cM,\pi_E}$} is
reported in Algorithm \ref{alg: opt}. In short, we construct sets
\scalebox{0.9}{$\widehat{\cS}$} and \scalebox{0.9}{$\widehat{\cZ}$} containing,
respectively, all states and state-action pairs that have been visited by the
expert in $\cD_E$, and then we use them in place of $\cS_{\cM,\pi_E}$ and
$\pi_E$.
This algorithm enjoys the following guarantees:
\begin{restatable}{thr}{sampleOPT}\label{thr: sample OPT}
    Let $\delta\in(0,1)$ and $H\ge S$. If $\pi_E$ is deterministic, then, with
    probability at least $1-\delta$, it holds that $\widehat{r}^{\text{OPT}}=
    r^{\text{OPT}}_{\cM,\pi_E}$ provided that:
    \begin{align*}
      NH\ge 
      \frac{H\log\frac{|\cS_{\cM,\pi_E}|}{\delta}}{p^{\min,H}_{\cM,\pi_E}}.
    \end{align*}
  \end{restatable}
  %
  %
  We remark that Theorem \ref{thr: sample OPT} provides the number of
  samples necessary to estimate the reward $r^{\text{OPT}}_{\cM,\pi_E}$
  \emph{exactly}.
  Note that, since the available trajectories are $H$ stages long, we need to
  assume that $H\ge S$ to ensure that \scalebox{0.9}{$p^{\min,H}_{\cM,\pi_E}>0$}
  (see Lemma \ref{lemma: H at least S}).
  Moreover, there is a trade-off in choosing $H$. Increasing it improves the
  probability \scalebox{0.9}{$p^{\min,H}_{\cM,\pi_E}$}, but also increases the
  numerator $H$, and viceversa.
Regarding MCE and BIRL, we present our estimators in Algorithms \ref{alg: mce}
and \ref{alg: birl}. For every $s,a$, they both compute $N(s,a)$ as the number
of trajectories $\tau^i\in\cD_E$ in which state $s$ is visited \emph{at least
once} and, at the \emph{first visit}, the expert played action
$a$.\footnote{This estimator simplifies the theoretical analysis. In practice,
we count all occurrences of $s,a$.}
Formally, let $t(s,i)$ denote the first stage $h\in\dsb{H}$ at which $s_h^i=s$,
or $-1$ otherwise. Then:
\begin{align}\label{eq: def N sa}
  N(s,a)\coloneqq\sum\limits_{i\in\dsb{N}}\indic{t(s,i)\neq -1\wedge a_{t(s,i)}^i=a}.
\end{align}
Next, both Algorithms \ref{alg: mce} and \ref{alg: birl} estimate the expert's
policy $\pi_E$ by clipping the empirical frequency $N(s,a)/N(s)$ using a
threshold parameter $\pi_{\min}'>0$ to avoid evaluating $\log 0$, and finally they
estimate the rewards $r^{\text{MCE}}_{\cM,\pi_E}$ and
$r^{\text{BIRL}}_{\cM,\pi_E}$ by taking the logarithm.
We can show the following:
  \begin{restatable}{thr}{sampleMCE}\label{thr: sample MCE}
     Let
$\epsilon,\delta\in(0,1)$ and $H\ge S$. Under the conditions of Theorem
\ref{thr: centroid MCE}, if $\pi_{\min}\ge \pi_{\min}'$, then, with probability
$1-\delta$, we have
$\|\widehat{r}^{\text{MCE}}-r^{\text{MCE}}_{\cM,\pi_E}\|_\infty\le\epsilon$
provided that:
    \begin{align*}
      NH\ge
      \frac{H
      \log^2\frac{4SA}{\delta}}{p^{\min,H}_{\cM,\pi_E}}
      \cdot \frac{16}{\epsilon^2\pi_{\min}'} .
    \end{align*}
  \end{restatable}
  \begin{restatable}{thr}{sampleBIRL}\label{thr: sample BIRL}
    Let $\epsilon,\delta\in(0,1)$ and $H\ge S$. Under the conditions of Theorem
    \ref{thr: centroid BIRL}, if $\pi_{\min}\ge \pi_{\min}'$, then, with
    probability $1-\delta$, we have
    $\|\widehat{r}^{\text{BIRL}}-r^{\text{BIRL}}_{\cM,\pi_E}\|_\infty\le\epsilon$
    provided that:
    \begin{align*}
      NH\ge
      \frac{H \log^2\frac{8SA}{\delta}
      }{p^{\min,H}_{\cM,\pi_E}}
      \cdot \frac{33}{\epsilon^2\pi_{\min}'} .
    \end{align*}
  \end{restatable}
  Theorems \ref{thr: sample MCE} and \ref{thr: sample BIRL} establish sample
  complexity rates comparable to that of OPT in terms of $H$,
  $p^{\min,H}_{\cM,\pi_E}$ and $\log\frac{1}{\delta}$. In addition, they depend
  linearly on $\frac{1}{\pi_{\min}}$ and $\frac{1}{\epsilon^2}$ since they need to
  estimate $\pi_E$ and not only $\cS_{\cM,\pi_E}$.
  %
  %
  The next proposition bounds the error incurred when optimizing the estimate
  $\widehat{r}^m$ instead of $r_{\cM,\pi_E}^m$:
  \begin{restatable}{prop}{bounddiffr}\label{prop: bound diff r}
    For any \MDPr $\cM'=\tuple{\cS,\cA,s_0',p',\gamma'}$, cost $c\in\RR^{SA}$
    and budget $k\ge0$, it holds that:
      $|\max_{\pi\in\Pi_{c,k}}V^\pi(s_0';p',\widehat{r}^m)
      -\max_{\pi\in\Pi_{c,k}}V^\pi(s_0';p',r_{\cM,\pi_E}^m)|\le\|\widehat{r}^m-
      r_{\cM,\pi_E}^m\|_\infty/(1-\gamma)$.
    %
  \end{restatable}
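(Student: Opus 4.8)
The plan is to combine two elementary facts: (i) the value $V^\pi(s_0';p',\cdot)$ is \emph{linear} in the reward argument, and (ii) taking a maximum over a fixed set is non-expansive in the sup-norm. Neither step presents a real obstacle, so the argument is short.

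First I would fix an arbitrary $\pi\in\Pi_{c,k}$ and control the gap between its two values. By linearity of the value in the reward (equivalently, by unrolling the definition of $V^\pi$ in $\cM'$),
\begin{align*}
  V^\pi(s_0';p',\widehat{r}^m) - V^\pi(s_0';p',r_{\cM,\pi_E}^m)
  = \E_{\cM',\pi}\Bigl[\sum_{t=0}^{+\infty} (\gamma')^t \bigl(\widehat{r}^m - r_{\cM,\pi_E}^m\bigr)(s_t,a_t)\Bigr].
\end{align*}
Bounding each per-step difference by $\|\widehat{r}^m - r_{\cM,\pi_E}^m\|_\infty$ and summing the geometric series $\sum_{t} (\gamma')^t = 1/(1-\gamma')$ yields, \emph{uniformly} over $\pi\in\Pi_{c,k}$, the inequality $|V^\pi(s_0';p',\widehat{r}^m) - V^\pi(s_0';p',r_{\cM,\pi_E}^m)| \le \|\widehat{r}^m - r_{\cM,\pi_E}^m\|_\infty/(1-\gamma')$.

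Second, I would push this uniform bound through the maximum. Setting $f(\pi)\coloneqq V^\pi(s_0';p',\widehat{r}^m)$, $g(\pi)\coloneqq V^\pi(s_0';p',r_{\cM,\pi_E}^m)$, and letting $\epsilon$ denote the right-hand side above, the first step gives $|f(\pi)-g(\pi)|\le\epsilon$ pointwise on $\Pi_{c,k}$. Taking $\pi_f\in\argmax_{\pi\in\Pi_{c,k}} f(\pi)$ gives $\max f = f(\pi_f)\le g(\pi_f)+\epsilon\le\max g+\epsilon$; the symmetric inequality follows identically, so $|\max_{\pi\in\Pi_{c,k}} f(\pi) - \max_{\pi\in\Pi_{c,k}} g(\pi)|\le\epsilon$, which is exactly the claim.

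The only subtlety worth flagging is notational rather than mathematical: the geometric sum runs with the discount $\gamma'$ of the target environment $\cM'$, so the denominator is $(1-\gamma')$, and the $\gamma$ in the statement should be read as this $\gamma'$. The constraint set $\Pi_{c,k}$ creates no difficulty, since it is the \emph{same} set in both maxima, so the non-expansiveness step applies verbatim; everything else reduces to linearity of $V^\pi$ in $r$ together with the triangle inequality.
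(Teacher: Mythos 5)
Your proof is correct and follows essentially the same route as the paper's: a pointwise bound $|V^\pi(s_0';p',\widehat{r}^m)-V^\pi(s_0';p',r_{\cM,\pi_E}^m)|\le\|\widehat{r}^m-r_{\cM,\pi_E}^m\|_\infty/(1-\gamma')$ (the paper phrases this as H\"older's inequality against the occupancy measure of $\pi$ in $\cM'$, which is exactly your unrolled geometric series) combined with the non-expansiveness of the max over the common set $\Pi_{c,k}$. Your remark that the denominator should be read as $1-\gamma'$ is a fair catch, as the paper itself writes $1-\gamma$ while invoking the occupancy measure of $\cM'$.
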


\section{Numerical Simulations}\label{sec: experiments}

This section provides high-level insights into how the proposed IRL algorithms
generalize expert behavior in practice.%
\footnote{ A comparison of Algorithms \ref{alg: opt}, \ref{alg: mce} and
\ref{alg: birl} with those in
\cite{Ramachandran2007birl,michini2012improving,bajgar2024walking} using
$r_E\sim w_\cM^m$ is unfair, as the latter algorithms are necessarily
slower and less accurate because they require an additional approximation step.}
To this end, we present results from illustrative simulations. For
clarity, we use small-scale \MDPrs with deterministic transitions and assume
full knowledge of $\pi_E$ over its support, allowing us to analyze the
\emph{exact} reward centroids.
%
More details in Appendix \ref{apx: experimental details}.
%

\textbf{Experimental setting.}~~%
We consider two grid-world environments: $\cM = \tuple{\cS, \cA, s_0, p,
\gamma}$ and $\cM' = \tuple{\cS, \cA, s_0', p', \gamma'}$, with a 2D state space
$\cS = \dsb{10} \times \dsb{10}$ and five actions $\cA = \{\leftarrow,
\rightarrow, \uparrow, \downarrow, \cdot\}$. In $\cM$, each directional action
moves the agent deterministically to the adjacent cell in the corresponding
direction, while $\cdot$ keeps the agent in the current cell. In $\cM'$, the
transition model $p'$ reverses the direction of the arrow actions (e.g.,
$\leftarrow$ moves right), while $\cdot$ behaves as in $p$.
Initial states $s_0$, $s_0'$ and discount factors $\gamma$, $\gamma'$ vary
across simulations. In some simulations, $\cM'$ includes ``hard'' constraints
with cost $c\in\RR^{SA}$ and budget $k\ge0$ that restrict access to certain
states.
We visualize the state space $\cS$ as a $10\times10$ grid, marking $s_0$ or
$s_0'$ in red and constrained states (due to $c, k$) in brown (see Fig.
\ref{fig: experiments OPT}-\ref{fig: experiments BIRL}). For deterministic
policies, we display the action taken in each state (see Fig. \ref{fig:
experiments OPT} (c)); for stochastic ones, each action is shown with size
proportional to its selection probability (see Fig. \ref{fig: experiments OPT}
(b)). Policies defined only on subsets of $\cS$ (e.g., $\pi_E$ on $\cS_{\cM,
\pi_E}$) leave the remaining cells blank (see Fig. \ref{fig: experiments OPT}
(a)).
To indicate that a policy $\pi$ is executed in $\cM$ or $\cM'$, we color each
cell by the state-only occupancy measure $d_{\cM,\pi}$ or $d_{\cM',\pi}$ (darker
blue means higher occupancy).

\textbf{Simulations and results.}~~%
\begin{figure}[!t]
  \centering
  \begin{minipage}[t!]{0.195\textwidth}
    \centering
    \includegraphics[width=0.98\linewidth]{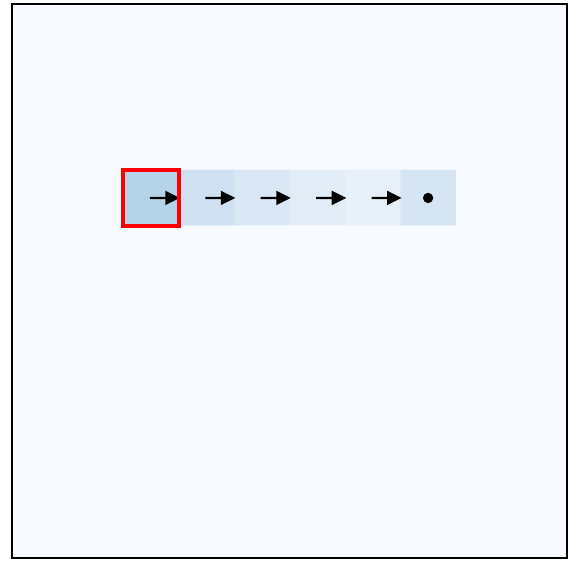}
\end{minipage}
\begin{minipage}[t!]{0.195\textwidth}
    \centering
    \includegraphics[width=0.98\linewidth]{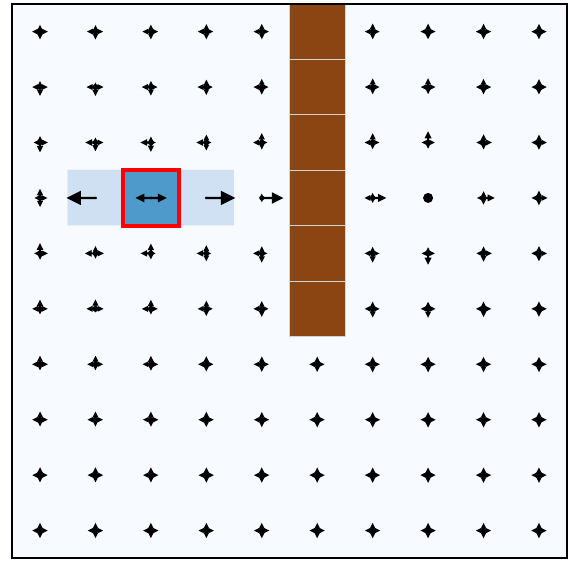}
\end{minipage}
\begin{minipage}[t!]{0.195\textwidth}
    \centering
    \includegraphics[width=0.98\linewidth]{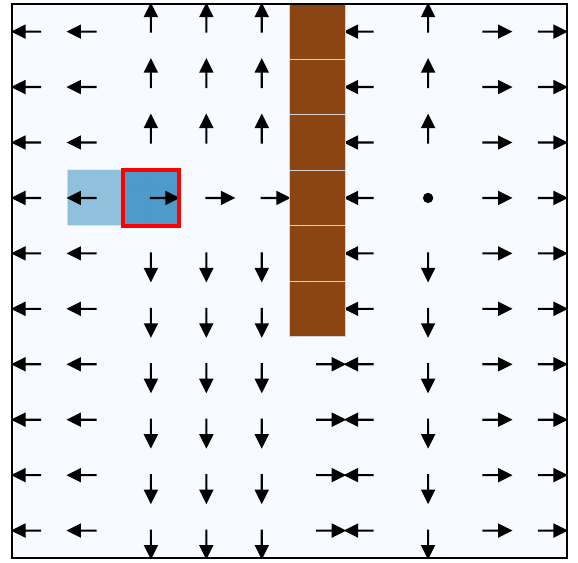}
\end{minipage}
\begin{minipage}[t!]{0.195\textwidth}
    \centering
    \includegraphics[width=0.98\linewidth]{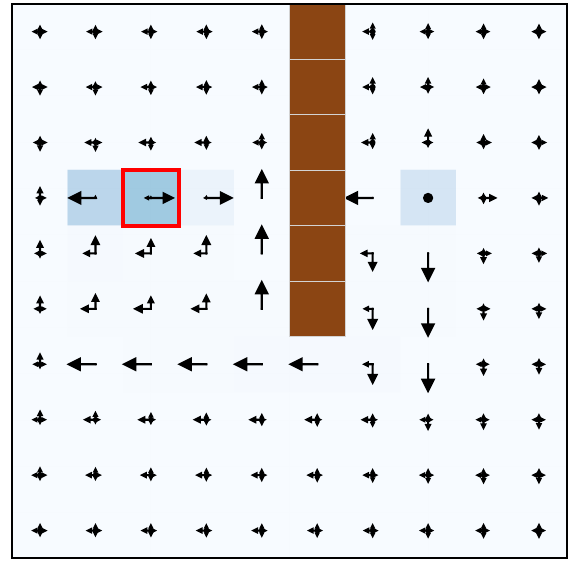}
\end{minipage}
\begin{minipage}[t!]{0.195\textwidth}
    \centering
    \includegraphics[width=0.98\linewidth]{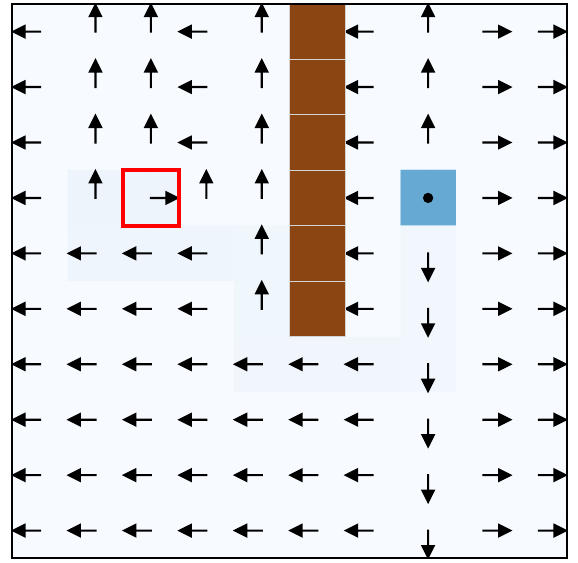}
\end{minipage}
\caption{From left to right: (a) deterministic $\pi_E$ in $\cM$ with $\gamma=0.7$, (b)-(c)
$\pi^{\text{MIMIC}}_{\cM,\pi_E,\cM'}$ and $\pi^{\text{OPT}}_\cM$ in
$\cM'$ with $\gamma'=0.7$, (d)-(e) $\pi^{\text{MIMIC}}_{\cM,\pi_E,\cM'}$ and
$\pi^{\text{OPT}}_\cM$ in $\cM'$ with $\gamma'=0.95$.
}
\label{fig: experiments OPT}
 \end{figure}
 \begin{figure}[!t]
  \centering
  \begin{minipage}[t!]{0.195\textwidth}
    \centering
    \includegraphics[width=0.98\linewidth]{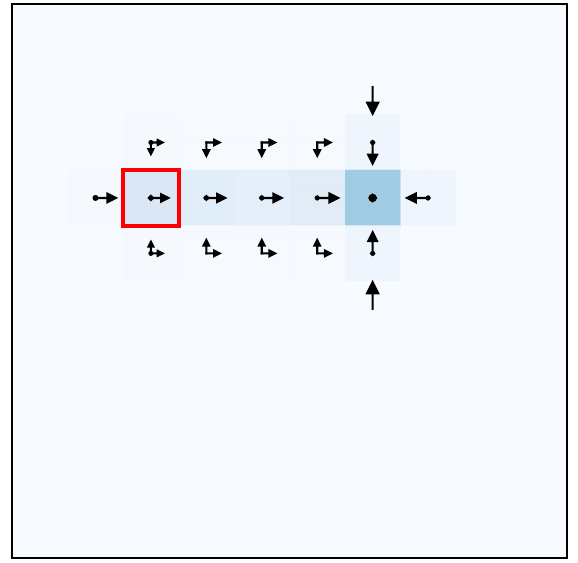}
\end{minipage}
\begin{minipage}[t!]{0.195\textwidth}
  \centering
  \includegraphics[width=0.98\linewidth]{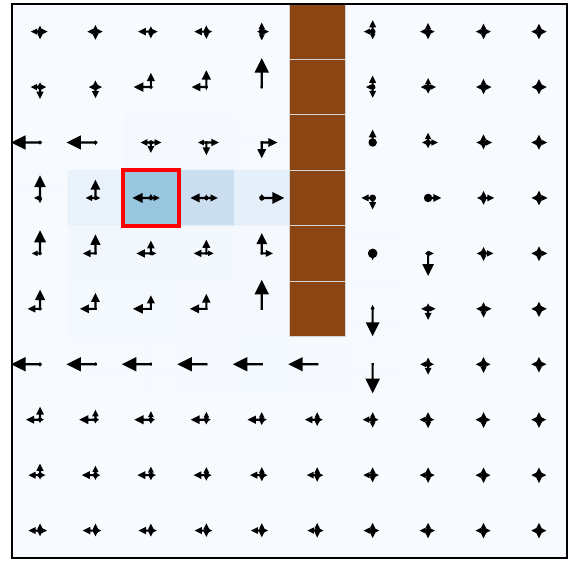}
\end{minipage}
\begin{minipage}[t!]{0.195\textwidth}
    \centering
    \includegraphics[width=0.98\linewidth]{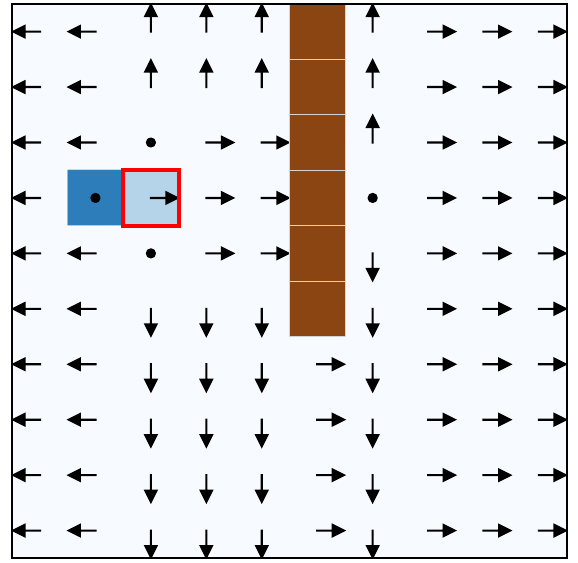}
\end{minipage}
\begin{minipage}[t!]{0.195\textwidth}
  \centering
  \includegraphics[width=0.98\linewidth]{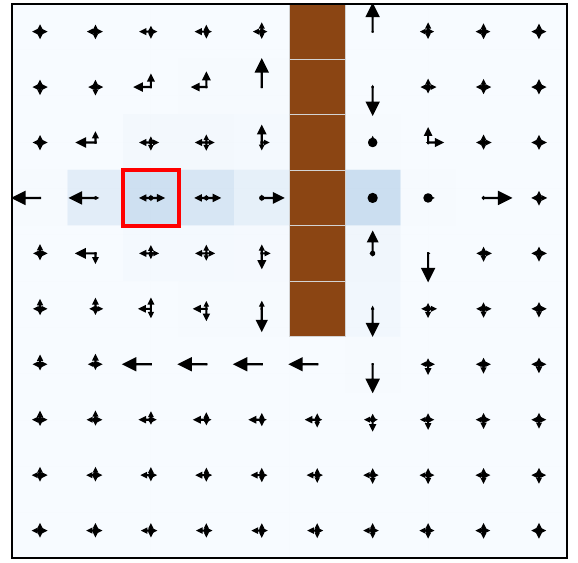}
\end{minipage}
\begin{minipage}[t!]{0.195\textwidth}
    \centering
    \includegraphics[width=0.98\linewidth]{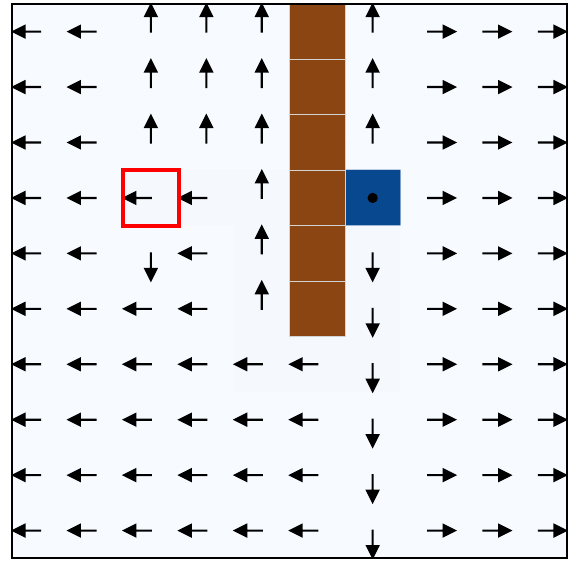}
\end{minipage}
\caption{From left to right: (a) stochastic $\pi_E$ in $\cM$ with $\gamma=0.9$,
(b)-(c) $\pi^{\text{MIMIC}}_{\cM,\pi_E,\cM'}$ and
$\pi^{\text{BIRL}}_{\cM}$ in $\cM'$ with $\gamma'=0.7$, (d)-(e)
$\pi^{\text{MIMIC}}_{\cM,\pi_E,\cM'}$ and $\pi^{\text{BIRL}}_{\cM}$
in $\cM'$ with $\gamma'=0.99$. }
\label{fig: experiments BIRL}
 \end{figure}
For OPT, we used the deterministic policy $\pi_E$ shown in Fig. \ref{fig:
experiments OPT} (a), which moves right for a few steps before stopping.
We found that the policy $\pi^{\text{OPT}}_\cM$ obtained through our method
(i.e., by planning in $\cM'$ with $r^{\text{OPT}}_{\cM,\pi_E}$, see Eq. \ref{eq:
avg case rewritten}), is quite similar to the policy
$\pi^{\text{MIMIC}}_{\cM,\pi_E,\cM'}$, defined as the policy whose state-action
occupancy measure in $\cM'$ is closest to that of the expert $d_{\cM,\pi_E}$ in
$\cM$ (details in Appendix \ref{apx: match occupancy measure}).
Intuitively, both $\pi^{\text{OPT}}_\cM$ and
$\pi^{\text{MIMIC}}_{\cM,\pi_E,\cM'}$ aim to ``re-play'' in $\cM'$ the actions
demonstrated by the expert in $\cM$ (as shown by Fig. \ref{fig: experiments OPT}
(b)-(c) and (d)-(e)), with the difference that
$\pi^{\text{MIMIC}}_{\cM,\pi_E,\cM'}$ weights states in $\cS_{\cM,\pi_E}$ by the
expert's discounted visitation frequency $d_{\cM,\pi_E}$, while
$\pi^{\text{OPT}}_\cM$ treats all visited states equally (see Appendix \ref{apx:
exps more difference}).
For MCE and BIRL, the results are quite similar. Here, we considered as $\pi_E$
the stochastic policy represented in Fig. \ref{fig: experiments BIRL} (a).
As shown by Fig. \ref{fig: experiments BIRL} (b)-(c) and (d)-(e) (and by Fig.
\ref{fig: experiments MCE} (b)-(c) and (d)-(e) in Appendix \ref{apx: compare
with MCE}), both $\pi^{\text{MCE}}_{\cM}$ and $\pi^{\text{BIRL}}_{\cM}$ resemble
$\pi^{\text{MIMIC}}_{\cM,\pi_E,\cM'}$ in trying to re-play the actions
\emph{mostly} played by the expert in its support, with the difference that, as
for OPT, they treat the states in $\cS_{\cM,\pi_E}$ uniformly.

\section{Conclusion}\label{sec: conclusion}

In this paper, we proposed a novel ``on-average'' approach for generalizing
expert behavior and derived three efficient IRL algorithms for estimating the
corresponding reward centroids from offline data. We also provided numerical
simulations to illustrate the resulting policies' behavior.

\textbf{Limitations and future directions.}~~Our approach is currently limited
to tabular settings. A key direction for future work is to identify structural
assumptions that enable scaling to large or continuous spaces. Broader empirical
studies could also clarify in which application domains our method generalizes
well and when additional structure is needed.
%

\bibliographystyle{plain}
\bibliography{refs.bib}

\begin{thebibliography}{10}

\bibitem{abbeel2004apprenticeship}
Pieter Abbeel and Andrew~Y. Ng.
\newblock Apprenticeship learning via inverse reinforcement learning.
\newblock In {\em International Conference on Machine Learning 21 (ICML)}, 2004.

\bibitem{amin2016resolving}
Kareem Amin and Satinder Singh.
\newblock Towards resolving unidentifiability in inverse reinforcement learning, 2016.

\bibitem{arora2020survey}
Saurabh Arora and Prashant Doshi.
\newblock A survey of inverse reinforcement learning: Challenges, methods and progress.
\newblock {\em Artificial Intelligence}, 297:103500, 2021.

\bibitem{bajgar2024walking}
Ondrej Bajgar, Alessandro Abate, Konstantinos Gatsis, and Michael~A. Osborne.
\newblock Walking the values in bayesian inverse reinforcement learning.
\newblock In {\em Conference on Uncertainty in Artificial Intelligence 40 (UAI)}, 2024.

\bibitem{bernstein2011matrixmathematics}
Dennis~S. Bernstein.
\newblock {\em Matrix Mathematics: Theory, Facts, and Formulas}.
\newblock Princeton University Press, 2011.

\bibitem{boothby1986manifolds}
William~Munger Boothby.
\newblock {\em {An introduction to differentiable manifolds and Riemannian geometry}}.
\newblock Academic Press, Orlando, FL, 1986.

\bibitem{boyd2004convex}
Stephen Boyd and Lieven Vandenberghe.
\newblock {\em Convex optimization}.
\newblock Cambridge university press, 2004.

\bibitem{Brantley2020DisagreementRegularizedIL}
Kiant{\'e} Brantley, Wen Sun, and Mikael Henaff.
\newblock Disagreement-regularized imitation learning.
\newblock In {\em International Conference on Learning Representations}, 2020.

\bibitem{cao2021identifiability}
Haoyang Cao, Samuel Cohen, and Lukasz Szpruch.
\newblock Identifiability in inverse reinforcement learning.
\newblock In {\em Advances in Neural Information Processing Systems 34 (NeurIPS)}, pages 12362--12373, 2021.

\bibitem{choi2011mapbirl}
Jaedeug Choi and Kee-eung Kim.
\newblock Map inference for bayesian inverse reinforcement learning.
\newblock In {\em Advances in Neural Information Processing Systems 24 (NeurIPS)}, 2011.

\bibitem{cohn2013measure}
D.L. Cohn.
\newblock {\em Measure Theory: Second Edition}.
\newblock Springer New York, 2013.

\bibitem{desai2020IfoTX}
Siddharth Desai, Ishan Durugkar, Haresh Karnan, Garrett Warnell, Josiah Hanna, and Peter Stone.
\newblock An imitation from observation approach to transfer learning with dynamics mismatch.
\newblock In {\em Advances in Neural Information Processing Systems 33 (NeurIPS)}, pages 3917--3929, 2020.

\bibitem{finn2016guided}
Chelsea Finn, Sergey Levine, and Pieter Abbeel.
\newblock Guided cost learning: Deep inverse optimal control via policy optimization.
\newblock In {\em International Conference on Machine Learning 33 (ICML)}, pages 49--58, 2016.

\bibitem{Fu2017LearningRR}
Justin Fu, Katie Luo, and Sergey Levine.
\newblock Learning robust rewards with adversarial inverse reinforcement learning.
\newblock In {\em International Conference on Learning Representations 5 (ICLR)}, 2017.

\bibitem{garg2021IQlearn}
Divyansh Garg, Shuvam Chakraborty, Chris Cundy, Jiaming Song, and Stefano Ermon.
\newblock Iq-learn: Inverse soft-q learning for imitation.
\newblock In {\em Advances in Neural Information Processing Systems 34 (NeurIPS)}, pages 4028--4039, 2021.

\bibitem{haarnoja2017rldeepenergypolicies}
Tuomas Haarnoja, Haoran Tang, Pieter Abbeel, and Sergey Levine.
\newblock Reinforcement learning with deep energy-based policies.
\newblock In {\em International Conference on Machine Learning 34 (ICML)}, volume~70, pages 1352--1361, 2017.

\bibitem{hefferon2009linearalgebra}
Jim Hefferon.
\newblock {\em Linear Algebra}.
\newblock Department of Mathematics \& Applied Mathematics/Virginia Commonwealth University, 2009.

\bibitem{ho2016generativeadversarialimitationlearning}
Jonathan Ho and Stefano Ermon.
\newblock Generative adversarial imitation learning.
\newblock In {\em Advances in Neural Information Processing Systems 29 (NeurIPS)}, 2016.

\bibitem{kim2021rewardidentification}
Kuno Kim, Shivam Garg, Kirankumar Shiragur, and Stefano Ermon.
\newblock Reward identification in inverse reinforcement learning.
\newblock In {\em International Conference on Machine Learning 38 (ICML)}, pages 5496--5505, 2021.

\bibitem{kuderer2015driving}
Markus Kuderer, Shilpa Gulati, and Wolfram Burgard.
\newblock Learning driving styles for autonomous vehicles from demonstration.
\newblock In {\em IEEE International Conference on Robotics and Automation 32 (ICRA)}, pages 2641--2646, 2015.

\bibitem{lazzati2025rel}
Filippo Lazzati and Alberto~Maria Metelli.
\newblock On the partial identifiability in reward learning: Choosing the best reward, 2025.

\bibitem{lazzati2024offline}
Filippo Lazzati, Mirco Mutti, and Alberto~Maria Metelli.
\newblock Offline inverse rl: New solution concepts and provably efficient algorithms.
\newblock In {\em International Conference on Machine Learning 41 (ICML)}, 2024.

\bibitem{lee2019introduction}
J.M. Lee.
\newblock {\em Introduction to Riemannian Manifolds}.
\newblock Springer International Publishing, 2019.

\bibitem{matas2018simtoreal}
Jan Matas, Stephen James, and Andrew~J. Davison.
\newblock Sim-to-real reinforcement learning for deformable object manipulation.
\newblock In {\em Conference on Robot Learning 2 (CoRL)}, volume~87, pages 734--743, 2018.

\bibitem{metelli2023towards}
Alberto~Maria Metelli, Filippo Lazzati, and Marcello Restelli.
\newblock Towards theoretical understanding of inverse reinforcement learning.
\newblock In {\em International Conference on Machine Learning 40 (ICML)}, pages 24555--24591, 2023.

\bibitem{metelli2021provably}
Alberto~Maria Metelli, Giorgia Ramponi, Alessandro Concetti, and Marcello Restelli.
\newblock Provably efficient learning of transferable rewards.
\newblock In {\em International Conference on Machine Learning 38 (ICML)}, volume 139, pages 7665--7676, 2021.

\bibitem{michini2012improving}
Bernard Michini and Jonathan~P. How.
\newblock Improving the efficiency of bayesian inverse reinforcement learning.
\newblock In {\em IEEE International Conference on Robotics and Automation 29 (ICRA)}, pages 3651--3656, 2012.

\bibitem{ng1999shaping}
Andrew~Y. Ng, Daishi Harada, and Stuart~J. Russell.
\newblock Policy invariance under reward transformations: Theory and application to reward shaping.
\newblock In {\em International Conference on Machine Learning 16 (ICML)}, pages 278--287, 1999.

\bibitem{ng2000algorithms}
Andrew~Y. Ng and Stuart~J. Russell.
\newblock Algorithms for inverse reinforcement learning.
\newblock In {\em International Conference on Machine Learning 17 (ICML)}, pages 663--670, 2000.

\bibitem{ni2021firl}
Tianwei Ni, Harshit Sikchi, Yufei Wang, Tejus Gupta, Lisa Lee, and Ben Eysenbach.
\newblock f-irl: Inverse reinforcement learning via state marginal matching.
\newblock In {\em Conference on Robot Learning (CoRL)}, volume 155, pages 529--551, 2021.

\bibitem{openai2019solvingrubikscuberobot}
OpenAI, Ilge Akkaya, Marcin Andrychowicz, Maciek Chociej, Mateusz Litwin, Bob McGrew, Arthur Petron, Alex Paino, Matthias Plappert, Glenn Powell, Raphael Ribas, Jonas Schneider, Nikolas Tezak, Jerry Tworek, Peter Welinder, Lilian Weng, Qiming Yuan, Wojciech Zaremba, and Lei Zhang.
\newblock Solving rubik's cube with a robot hand, 2019.

\bibitem{osa2018IL}
Takayuki Osa, Joni Pajarinen, Gerhard Neumann, J.~Andrew Bagnell, Pieter Abbeel, and Jan Peters.
\newblock An algorithmic perspective on imitation learning.
\newblock {\em Foundations and Trends® in Robotics}, 7(1-2):1--179, 2018.

\bibitem{poiani2024inversereinforcementlearningsuboptimal}
Riccardo Poiani, Gabriele Curti, Alberto~Maria Metelli, and Marcello Restelli.
\newblock Inverse reinforcement learning with sub-optimal experts, 2024.

\bibitem{pomerlau1988alvinn}
Dean~A. Pomerleau.
\newblock Alvinn: An autonomous land vehicle in a neural network.
\newblock In {\em Advances in Neural Information Processing Systems 1 (NeurIPS)}, 1988.

\bibitem{puterman1994markov}
Martin~Lee Puterman.
\newblock {\em {M}arkov Decision Processes: Discrete Stochastic Dynamic Programming}.
\newblock John Wiley \& Sons, Inc., 1994.

\bibitem{Ramachandran2007birl}
Deepak Ramachandran and Eyal Amir.
\newblock Bayesian inverse reinforcement learning.
\newblock In {\em International Joint Conference on Artifical Intelligence 20 (IJCAI)}, pages 2586--2591, 2007.

\bibitem{ratliff2006maximum}
Nathan~D. Ratliff, J.~Andrew Bagnell, and Martin~A. Zinkevich.
\newblock Maximum margin planning.
\newblock In {\em International Conference on Machine Learning 23 (ICML)}, pages 729--736, 2006.

\bibitem{reddy2019sqil}
Siddharth Reddy, Anca~D. Dragan, and Sergey Levine.
\newblock Sqil: Imitation learning via reinforcement learning with sparse rewards, 2019.

\bibitem{rolland2022identifiability}
Paul Rolland, Luca Viano, Norman Sch\"{u}rhoff, Boris Nikolov, and Volkan Cevher.
\newblock Identifiability and generalizability from multiple experts in inverse reinforcement learning.
\newblock In {\em Advances in Neural Information Processing Systems 35 (NeurIPS)}, pages 550--564, 2022.

\bibitem{russell1998learning}
Stuart Russell.
\newblock Learning agents for uncertain environments (extended abstract).
\newblock In {\em Proceedings of the Eleventh Annual Conference on Computational Learning Theory 11 (COLT)}, pages 101--103, 1998.

\bibitem{sasaki2018sample}
Fumihiro Sasaki, Tetsuya Yohira, and Atsuo Kawaguchi.
\newblock Sample efficient imitation learning for continuous control.
\newblock In {\em International Conference on Learning Representations 7 (ICLR)}, 2019.

\bibitem{schlaginhaufen2023identifiability}
Andreas Schlaginhaufen and Maryam Kamgarpour.
\newblock Identifiability and generalizability in constrained inverse reinforcement learning.
\newblock In {\em International Conference on Machine Learning 40 (ICML)}, volume 202, pages 30224--30251, 2023.

\bibitem{schlaginhaufen2024transferability}
Andreas Schlaginhaufen and Maryam Kamgarpour.
\newblock Towards the transferability of rewards recovered via regularized inverse reinforcement learning, 2024.

\bibitem{silver2010terrain}
David Silver, J.~Andrew Bagnell, and Anthony Stentz.
\newblock Learning from demonstration for autonomous navigation in complex unstructured terrain.
\newblock {\em The International Journal of Robotics Research}, 29(12):1565--1592, 2010.

\bibitem{skalse2023misspecificationinversereinforcementlearning}
Joar Skalse and Alessandro Abate.
\newblock Misspecification in inverse reinforcement learning.
\newblock In {\em AAAI Conference on Artificial Intelligence 37 (AAAI)}, 2023.

\bibitem{skalse2023invariancepolicyoptimisationpartial}
Joar Skalse, Matthew Farrugia-Roberts, Stuart Russell, Alessandro Abate, and Adam Gleave.
\newblock Invariance in policy optimisation and partial identifiability in reward learning.
\newblock In {\em International Conference on Machine Learning 40 (ICML)}, volume 202, pages 32033--32058, 2023.

\bibitem{stoll2007integration}
Michael Stoll.
\newblock Integration and manifolds, Fall 2007.

\bibitem{sutton2018RL}
Richard~S. Sutton and Andrew~G. Barto.
\newblock {\em Reinforcement Learning: An Introduction}.
\newblock A Bradford Book, 2018.

\bibitem{viano2021robust}
Luca Viano, Yu-Ting Huang, Parameswaran Kamalaruban, Adrian Weller, and Volkan Cevher.
\newblock Robust inverse reinforcement learning under transition dynamics mismatch.
\newblock In {\em Advances in Neural Information Processing Systems 34 (NeurIPS)}, pages 25917--25931, 2021.

\bibitem{wang2019ILviaexpertsupportestimation}
Ruohan Wang, Carlo Ciliberto, Pierluigi~Vito Amadori, and Yiannis Demiris.
\newblock Random expert distillation: Imitation learning via expert policy support estimation.
\newblock In {\em International Conference on Machine Learning 36 (ICML)}, volume~97, pages 6536--6544, 2019.

\bibitem{wulfmeier2016maximumentropydeepirl}
Markus Wulfmeier, Peter Ondruska, and Ingmar Posner.
\newblock Maximum entropy deep inverse reinforcement learning, 2016.

\bibitem{xie2021policyfinetuning}
Tengyang Xie, Nan Jiang, Huan Wang, Caiming Xiong, and Yu~Bai.
\newblock Policy finetuning: Bridging sample-efficient offline and online reinforcement learning.
\newblock In {\em Advances in Neural Information Processing Systems 34 (NeurIPS)}, pages 27395--27407, 2021.

\bibitem{yu2019healthcare}
Chao Yu, Jiming Liu, and Hongyi Zhao.
\newblock Inverse reinforcement learning for intelligent mechanical ventilation and sedative dosing in intensive care units.
\newblock {\em BMC Medical Informatics and Decision Making}, 19, 2019.

\bibitem{Ziebart2010ModelingPA}
Brian~D. Ziebart.
\newblock Modeling purposeful adaptive behavior with the principle of maximum causal entropy, 2010.

\bibitem{ziebart2008maxent}
Brian~D. Ziebart, Andrew Maas, J.~Andrew Bagnell, and Anind~K. Dey.
\newblock Maximum entropy inverse reinforcement learning.
\newblock In {\em AAAI Conference on Artificial Intelligence 23 (AAAI)}, volume~3, pages 1433--1438, 2008.

\end{thebibliography}

\newpage

\setlength{\abovedisplayskip}{6pt}
\setlength{\belowdisplayskip}{6pt}
\setlength{\abovedisplayshortskip}{6pt}
\setlength{\belowdisplayshortskip}{6pt}
\setlength{\textfloatsep}{15pt}
\thinmuskip=4.0mu
\medmuskip=4.0mu
\thickmuskip=4.0mu

\appendix

\section{Additional Related Work}\label{apx: additional related work}

In this appendix, we discuss additional related work that was omitted from the
main paper due to space constraints.

In Section~\ref{sec: existing methods}, we categorized existing IRL methods into
three groups, based on the strategy used to address the partial identifiability
of the expert's reward function. Here, we introduce a fourth category of
methods, which we refer to as “heuristic”
methods~\cite{ng2000algorithms,ratliff2006maximum}. These are closely related to
“best-case” methods. Specifically, heuristic methods select a reward function
from the feasible set using an application-specific heuristic, which is
generally not justified beyond the given context. For example,
\cite{ng2000algorithms,ratliff2006maximum} employ the heuristic of \emph{margin
maximization} to select a reward function from the feasible set, a choice mainly
justified within the IL setting.

The reward function obtained through OPT (see Eq. \ref{eq: centroid OPT}) can be
computed using knowledge of the expert's support $\cS_{\cM,\pi_E}$, making it
similar to expert support estimation methods in IL found in the
literature \cite{wang2019ILviaexpertsupportestimation,Brantley2020DisagreementRegularizedIL}.
These works present IL algorithms based on reward functions derived directly
from $\cS_{\cM,\pi_E}$. However, we emphasize that these approaches are
specifically tailored to IL and typically involve implicit structural
assumptions to scale to large or continuous state-action spaces (see also
\cite{reddy2019sqil,sasaki2018sample}).

Finally, we note that the reward derived for MCE in Eq.\eqref{eq: centroid MCE}
was previously discussed in Section 4 of \cite{Fu2017LearningRR} as a promising
approach for IL. However, it was shown to potentially yield errors when applied
to novel environments. This observation led~\cite{Fu2017LearningRR} to introduce
additional assumptions aimed at reducing the size of the feasible set to obtain
a more transferable reward. In contrast, our work provides a principled
justification for this reward based on the minimization of average error in the
absence of additional assumptions.

\section{Additional Notation}\label{apx: additional notation}

In this section, we provide additional notation that will be useful for the
presentation of the contents in the appendix. In addition, in Appendix \ref{apx:
rewriting fs}, we show the equivalence between $\cR^{\text{OPT}}_{\cM,\pi_E}$,
$\cR^{\text{MCE}}_{\cM,\pi_E}$, $\cR^{\text{BIRL}}_{\cM,\pi_E}$ and
$\cR^{\text{OPT},\cS_{\cM,\pi_E}}_{\cM,\pi_E}$,
$\cR^{\text{MCE},\cS_{\cM,\pi_E}}_{\cM,\pi_E}$,
$\cR^{\text{BIRL},\cS_{\cM,\pi_E}}_{\cM,\pi_E}$, while in Appendix \ref{apx:
explicit fs} we provide useful results that permit to rewrite the feasible sets
$\cR^{\text{OPT}}_{\cM,\pi_E}$, $\cR^{\text{MCE}}_{\cM,\pi_E}$,
$\cR^{\text{BIRL}}_{\cM,\pi_E}$ more explicitly.

Given a policy $\pi$ and a subset of states
$\overline{\cS}\subseteq\cS$, we define as $[\pi]_{\overline{\cS}}\coloneqq \{
\pi'\in\Delta_\cS^\cA\,|\, \forall s\in\overline{\cS}:\, \pi'(\cdot|s) =
\pi(\cdot|s) \wedge \forall s\notin\overline{\cS},\exists a\in\cA:\, \pi'(a|s)
=1\}$ the set of policies that in $\overline{\cS}$ coincide with $\pi$, and
outside $\overline{\cS}$ are deterministic.

Let $\cM=\tuple{\cS,\cA,s_0,p,\gamma}$ be an \MDPr, and let $V\in\RR^S$ be an
arbitrary vector and $A\in\RR^{S\times (A-1)}$ an arbitrary matrix. Then, for
any deterministic policy $\pi$, we define the linear operator
$T_{p,\pi}:\RR^S\times \RR^{S\times (A-1)}\to \RR^{S\times A}$ as the matrix
that takes in input a pair $(V,A)$ and return a reward $r=T_{p,\pi}(V,A)$ in the
following way. For all $(s,a)\in\SA$:
\begin{align}\label{eq: def T p pi}
  r(s,a)=V(s)-\gamma\sum\limits_{s'\in\cS}p(s'|s,a)V(s')+
  \begin{cases}
    0&\text{if }a=\pi(s),\\
    A(s,a)&\text{otherwise}.
  \end{cases}
\end{align}
A property that will be useful in various proofs is that:
$|\text{det}(T_{p,\pi})|=|\text{det}(W_{p,\pi})|$ (see Proposition \ref{prop:
rel T W}).

Similarly to $T_{p,\pi}$, for any $SA$-dimensional vector $\eta\in\RR^{SA}$, we
introduce the linear map $U_{p,\eta}:\RR^S\to\RR^{SA}$ that takes in input a
vector $V\in\RR^S$ and returns a reward $r=U_{p,\eta}(V)$ such that, for all
$(s,a)\in\SA$:
\begin{align}\label{eq: def U eta}
  r(s,a) = V(s)-\gamma\sum_{s'\in\cS}p(s'|s,a)V(s')
  +\eta(s,a).
\end{align}

For any \MDPr $\cM=\tuple{\cS,\cA,s_0,p,\gamma}$, it is useful to define a
partition of the set of all possible rewards $\RR^{SA}=\sR^{1}_p\cup\sR^{>1}_p$
into the set of rewards that induce in $\cM$ a unique optimal policy from every
state $\sR^{1}_p$, and those that do not $\sR^{>1}_p$. Formally:
\begin{align}
  \mathscr{R}^{1}_p&\coloneqq
  \Bigc{r\in\RR^{SA}\;\Big|\;|\Pi^*(r,p)|=1
  },\label{eq: def R1}\\
  \mathscr{R}^{>1}_p&\coloneqq
  \Bigc{r\in\RR^{SA}\;\Big|\;|\Pi^*(r,p)|>1
  }\label{eq: def R greater 1}.
\end{align}

\subsection{Equivalence among Sets of Rewards}
\label{apx: rewriting fs}

We now show that the feasible sets
$\cR^{\text{OPT}}_{\cM,\pi_E}$, $\cR^{\text{MCE}}_{\cM,\pi_E}$,
$\cR^{\text{BIRL}}_{\cM,\pi_E}$ coincide, respectively, with the sets
$\cR^{\text{OPT},\cS_{\cM,\pi_E}}_{\cM,\pi_E}$,
$\cR^{\text{MCE},\cS_{\cM,\pi_E}}_{\cM,\pi_E}$,
$\cR^{\text{BIRL},\cS_{\cM,\pi_E}}_{\cM,\pi_E}$.
First, we write $\cR^{\text{OPT}}_{\cM,\pi_E}$, $\cR^{\text{MCE}}_{\cM,\pi_E}$,
$\cR^{\text{BIRL}}_{\cM,\pi_E}$ formally:
\begin{align}
    &\cR^{\text{OPT}}_{\cM,\pi_E}\coloneqq\bigc{r\in\RR^{SA}\,\big|\, \pi_E\in\argmax_\pi V^\pi(s_0;p,r)},
    \\
    &\cR^{\text{MCE}}_{\cM,\pi_E}\coloneqq\bigc{r\in\RR^{SA}\,\big|\, \pi_E=\argmax_\pi V^\pi_\lambda (s_0;p,r)},
    \\
    &\cR^{\text{BIRL}}_{\cM,\pi_E}\coloneqq\bigc{r\in\RR^{SA}\,\big|\, \pi_E(a|s) \propto e^{\frac{1}{\beta}
    Q^*(s,a;p,r)}
    \; \forall s\in\cS^{\cM,\pi_E},\forall a\in\cA},
\end{align}
from which the equality $\cR^{\text{BIRL}}_{\cM,\pi_E}=
\cR^{\text{BIRL},\cS_{\cM,\pi_E}}_{\cM,\pi_E}$ is immediate.
Regarding OPT, the equality $\cR^{\text{OPT}}_{\cM,\pi_E}=
\cR^{\text{OPT},\cS_{\cM,\pi_E}}_{\cM,\pi_E}$ follows directly from Lemma E.1 of
\cite{lazzati2024offline}.
Finally, w.r.t. MCE, simply observe that the constraint $\pi_E=\argmax_\pi
V^\pi_\lambda (s_0;p,r)$ holds only for the states $s\in\cS_{\cM,\pi_E}$, i.e.,
those reachable from $s_0$ playing $\pi_E$ in $\cM$. Thus, the reward functions
$r\in \cR^{\text{MCE}}_{\cM,\pi_E}$ can take on arbitrary values in $s'\notin
\cS_{\cM,\pi_E}$. Then, the equality $\cR^{\text{MCE}}_{\cM,\pi_E}=
\cR^{\text{MCE},\cS_{\cM,\pi_E}}_{\cM,\pi_E}$ follows by using Eq. (5) of
\cite{cao2021identifiability} (or, equivalently, Eq. (6) of
\cite{haarnoja2017rldeepenergypolicies}).

\subsection{Rewriting the Feasible Sets Explicitly}
\label{apx: explicit fs}

The propositions below permit to rewrite the feasible sets
$\cR^{\text{OPT}}_{\cM,\pi_E}$, $\cR^{\text{MCE}}_{\cM,\pi_E}$,
$\cR^{\text{BIRL}}_{\cM,\pi_E}$ more explicitly when $\cS=\cS_{\cM,\pi_E}$.
Observe that, by making explicit the number of degrees of freedom in these sets,
they also allow us to realize that
$\cR^{\text{OPT},\cS}_{\cM,\pi_E}$ is $SA$-dimensional, while both
$\cR^{\text{MCE},\cS}_{\cM,\pi_E}$ and
$\cR^{\text{BIRL},\cS}_{\cM,\pi_E}$ are $S$-dimensional.

\begin{prop}\label{prop: fs OPT explicit}
  Let $\cM=\tuple{\cS,\cA,s_0,p,\gamma}$ be an \MDPr, and let $\pi$ be any
  policy. Then, for all and only the rewards $r$ in the set
  $\cR^{\text{OPT},\cS}_{\cM,\pi}$ there exist two vectors $V\in\RR^S$ and
  $A\in(-\infty,0]^{S\times A}$ such that, for any $(s,a)\in\SA$:
  \begin{align*}
    r(s,a)=V(s)-\gamma\sum_{s'}p(s'|s,a)V(s')+\begin{cases}
      0&\text{ if }\pi(a|s)>0,\\
      A(s,a)&\text{ otherwise}.
    \end{cases}
  \end{align*}
  Moreover, $V$ and $A$ coincide with, respectively, $V^*(\cdot;p,r)$ and
  $A^*(\cdot,\cdot;p,r)$.
\end{prop}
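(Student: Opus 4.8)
The plan is to prove the biconditional by identifying the two free objects $V$ and $A$ with the optimal value function $V^*(\cdot;p,r)$ and the optimal advantage $A^*(\cdot,\cdot;p,r)$, and then to read off the decomposition from the Bellman optimality equations in both directions. The common starting point is an identity that holds for \emph{every} reward $r$: since $Q^*(s,a;p,r)=r(s,a)+\gamma\sum_{s'}p(s'|s,a)V^*(s';p,r)$ and $A^*(s,a;p,r)=Q^*(s,a;p,r)-V^*(s;p,r)$, rearranging gives
\begin{align*}
r(s,a)=V^*(s;p,r)-\gamma\sum_{s'}p(s'|s,a)V^*(s';p,r)+A^*(s,a;p,r).
\end{align*}
Because $V^*(s;p,r)=\max_a Q^*(s,a;p,r)$, the optimal advantage satisfies $A^*(s,a;p,r)\le 0$ everywhere, with equality exactly at optimal actions. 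Finally, since $\cS=\cS_{\cM,\pi}$, membership $r\in\cR^{\text{OPT},\cS}_{\cM,\pi}$ is equivalent to $\pi\in\Pi^*(p,r)$, which means every action in the support of $\pi(\cdot|s)$ is optimal, i.e.\ $A^*(s,a;p,r)=0$ whenever $\pi(a|s)>0$.

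For the necessity direction (``only''), I would take any $r\in\cR^{\text{OPT},\cS}_{\cM,\pi}$ and set $V\coloneqq V^*(\cdot;p,r)$ and $A\coloneqq A^*(\cdot,\cdot;p,r)$. Non-positivity $A\le 0$ is immediate, and the support characterization forces $A(s,a)=0$ on the actions with $\pi(a|s)>0$; substituting into the displayed identity yields exactly the claimed piecewise formula, the ``$0$'' branch covering the optimal actions played by $\pi$ and the ``$A(s,a)$'' branch covering the rest. This simultaneously establishes the decomposition and the ``moreover'' identification for this direction. Note that the entries $A(s,a)$ with $\pi(a|s)>0$ never appear in the formula and equal $0$ here, consistent with $A^*$ vanishing on $\pi$'s support.

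For the sufficiency direction (``all''), I would fix arbitrary $V\in\RR^S$ and $A\in(-\infty,0]^{S\times A}$, define $r$ by the formula, and verify that $V$ solves the Bellman optimality equation for $r$. Plugging the formula into $\max_a\{r(s,a)+\gamma\sum_{s'}p(s'|s,a)V(s')\}$, the discounted terms cancel and each bracket reduces to $V(s)$ for actions with $\pi(a|s)>0$ and to $V(s)+A(s,a)$ otherwise; since $A\le 0$, the maximum over $a$ equals $V(s)$, so $V$ is a fixed point of the optimal Bellman operator. The main obstacle is precisely this step: one must conclude that $V$ is the \emph{optimal} value and not merely some value, which follows from the uniqueness of the fixed point guaranteed by the contraction property of the operator, giving $V=V^*(\cdot;p,r)$. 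Once this holds, back-substitution yields $Q^*(s,a;p,r)=V(s)$ on $\pi$'s support and $V(s)+A(s,a)$ elsewhere, so $A^*(s,a;p,r)=0$ on the support and $=A(s,a)$ off it, proving the ``moreover'' claim (up to the unused, hence zero, support entries) and, since the optimal advantage vanishes on the support of $\pi$ at every state, showing $\pi\in\Pi^*(p,r)$, i.e.\ $r\in\cR^{\text{OPT},\cS}_{\cM,\pi}$. The only care needed beyond the uniqueness argument is handling a possibly stochastic $\pi$, where \emph{every} supported action must be optimal, and using $\cS=\cS_{\cM,\pi}$ to impose the optimality constraint at all states.
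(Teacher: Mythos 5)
Your proof is correct. The paper does not give an inline argument for this proposition; it simply defers to Lemma 3.2 of Metelli et al. (2021), and your write-up is a faithful self-contained reconstruction of that standard argument: the Bellman identity $r(s,a)=V^*(s;p,r)-\gamma\sum_{s'}p(s'|s,a)V^*(s';p,r)+A^*(s,a;p,r)$ together with $A^*\le 0$ (vanishing exactly on optimal actions) gives necessity, and the fixed-point uniqueness of the Bellman optimality operator gives sufficiency. You also correctly handle the two points that need care beyond the deterministic case of the cited lemma, namely that for a stochastic $\pi$ membership in $\cR^{\text{OPT},\cS}_{\cM,\pi}$ means every \emph{supported} action is optimal, and that the entries $A(s,a)$ on the support of $\pi$ are unused in the formula, so the ``moreover'' identification of $A$ with $A^*$ only pins down the off-support entries. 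One minor remark: since the set is defined with $\overline{\cS}=\cS$, the optimality constraint already holds at every state by definition, so the appeal to $\cS=\cS_{\cM,\pi}$ in your last sentence is unnecessary.
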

\begin{proof}
  See Lemma 3.2 of \cite{metelli2021provably}.
\end{proof}

\begin{prop}\label{prop: fs MCE explicit}
  Let $\cM=\tuple{\cS,\cA,s_0,p,\gamma}$ be an \MDPr, and let $\pi$ be any
  stochastic policy. Then, for all and only the rewards $r$ in the set
  $\cR^{\text{MCE},\cS}_{\cM,\pi}$ there exists a vector $V\in\RR^S$ such that,
  for any $(s,a)\in\SA$:
  \begin{align*}
    r(s,a)=V(s)-\gamma\sum_{s'}p(s'|s,a)V(s')+\lambda \log \pi(a|s).
  \end{align*}
  Moreover, $V(s)$ coincides with $V^*_\lambda(s;p,r)$, and $\lambda \log
  \pi(a|s)$ coincides with $A^*_\lambda(s,a;p,r)$.
\end{prop}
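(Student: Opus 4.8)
The plan is to prove the two inclusions separately and read off the identification of $V$ and $\lambda\log\pi(a|s)$ with $V^*_\lambda$ and $A^*_\lambda$ as a byproduct. Throughout I would rely on the soft Bellman optimality equations that characterize the soft value functions of any reward $r$: writing $Q^*_\lambda(s,a;p,r)=r(s,a)+\gamma\sum_{s'}p(s'|s,a)V^*_\lambda(s';p,r)$ and $V^*_\lambda(s;p,r)=\lambda\log\sum_{a'}e^{\frac{1}{\lambda}Q^*_\lambda(s,a';p,r)}$, the induced soft-optimal policy is the Boltzmann distribution $\pi^*_\lambda(a|s)=e^{\frac{1}{\lambda}(Q^*_\lambda(s,a;p,r)-V^*_\lambda(s;p,r))}=e^{\frac{1}{\lambda}A^*_\lambda(s,a;p,r)}$. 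These follow from the definitions of the soft value and $Q$-functions given in the preliminaries; note that the membership condition $\pi(a|s)\propto e^{\frac{1}{\lambda}Q^*_\lambda(s,a;p,r)}$ forces $\pi(a|s)>0$ everywhere, so $\log\pi(a|s)$ is well-defined for any $\pi$ capable of belonging to the set.

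For the \emph{necessity} direction, I would take $r\in\cR^{\text{MCE},\cS}_{\cM,\pi}$, so $\pi(a|s)\propto e^{\frac{1}{\lambda}Q^*_\lambda(s,a;p,r)}$ for all $(s,a)$. Normalizing and using the soft Bellman equation, in which $V^*_\lambda$ plays the role of the log-partition function, gives $\lambda\log\pi(a|s)=Q^*_\lambda(s,a;p,r)-V^*_\lambda(s;p,r)=A^*_\lambda(s,a;p,r)$. Substituting $Q^*_\lambda(s,a;p,r)=r(s,a)+\gamma\sum_{s'}p(s'|s,a)V^*_\lambda(s';p,r)$ and rearranging yields $r(s,a)=V^*_\lambda(s;p,r)-\gamma\sum_{s'}p(s'|s,a)V^*_\lambda(s';p,r)+\lambda\log\pi(a|s)$, i.e.\ the claimed form with $V=V^*_\lambda(\cdot;p,r)$. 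This simultaneously establishes both identifications in the ``moreover'' part.

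For \emph{sufficiency}, I would start from an arbitrary $V\in\RR^S$ and a reward $r$ of the stated form, and verify that $V$ is in fact the soft-optimal value of $r$. Defining $\widetilde{Q}(s,a)\coloneqq r(s,a)+\gamma\sum_{s'}p(s'|s,a)V(s')=V(s)+\lambda\log\pi(a|s)$, a one-line log-sum-exp computation gives $\lambda\log\sum_a e^{\frac{1}{\lambda}\widetilde{Q}(s,a)}=V(s)+\lambda\log\sum_a\pi(a|s)=V(s)$, so the pair $(V,\widetilde{Q})$ solves the soft Bellman optimality system. Since the soft Bellman optimality operator is a $\gamma$-contraction in the sup-norm, its fixed point is unique, which forces $V=V^*_\lambda(\cdot;p,r)$ and $\widetilde{Q}=Q^*_\lambda(\cdot,\cdot;p,r)$. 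The induced soft-optimal policy is then $e^{\frac{1}{\lambda}(\widetilde{Q}(s,a)-V(s))}=e^{\log\pi(a|s)}=\pi(a|s)$, so $\pi(a|s)\propto e^{\frac{1}{\lambda}Q^*_\lambda(s,a;p,r)}$ and $r\in\cR^{\text{MCE},\cS}_{\cM,\pi}$.

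I expect the only real obstacle to be this uniqueness step in the sufficiency direction: one must invoke the contraction property of the soft Bellman operator to conclude that the $V$ we plugged in is genuinely $V^*_\lambda$ rather than merely some solution of a local consistency relation; everything else reduces to algebraic manipulation of the log-sum-exp identity and exactly mirrors the structure of Proposition~\ref{prop: fs OPT explicit}. Alternatively, the whole argument could be short-circuited by citing Eq.~(6) of \cite{haarnoja2017rldeepenergypolicies} or Eq.~(5) of \cite{cao2021identifiability}, precisely as the equivalence of feasible sets was handled in Appendix~\ref{apx: rewriting fs}.
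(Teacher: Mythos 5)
Your argument is correct, but it is not the route the paper takes: the paper's entire proof of this proposition is a citation to Theorem 1 of \cite{cao2021identifiability}, exactly the ``short-circuit'' you mention at the end. What you have written is instead a self-contained reconstruction, structurally parallel to the paper's own detailed proof of the analogous BIRL statement (Proposition \ref{prop: fs BIRL explicit}): necessity by normalizing the Boltzmann condition so that $V^*_\lambda$ appears as the log-partition function and $\lambda\log\pi(a|s)=A^*_\lambda(s,a;p,r)$, then rearranging the soft Bellman equation; sufficiency by verifying that the pair $(V,\widetilde{Q})$ solves the soft Bellman optimality system and invoking uniqueness of its fixed point. You correctly isolate the one step that is not pure algebra — the $\gamma$-contraction of the soft Bellman operator in sup-norm, which is what upgrades ``$V$ solves a consistency relation'' to ``$V=V^*_\lambda$'' — and this is precisely the step the paper's BIRL proof handles by appeal to ``a reasoning analogous to Lemma 3.2 of \cite{metelli2021provably}.'' The trade-off is the usual one: the citation is shorter and defers correctness to prior work, while your version makes the argument verifiable in place and makes explicit why an arbitrary $V\in\RR^S$ in the parametrization is forced to coincide with the soft-optimal value function. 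Your observation that membership forces $\pi(a|s)>0$, so that $\log\pi(a|s)$ is finite, is also a detail worth recording that the paper leaves implicit.
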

\begin{proof}
  See Theorem 1 of \cite{cao2021identifiability}. 
\end{proof}

\begin{prop}\label{prop: fs BIRL explicit}
  Let $\cM=\tuple{\cS,\cA,s_0,p,\gamma}$ be an \MDPr, and let $\pi$ be any
  stochastic policy. Then, for all and only the rewards $r$ in the set
  $\cR^{\text{BIRL},\cS}_{\cM,\pi}$ there exists a vector $V\in\RR^S$ such that,
  for any $(s,a)\in\SA$:
  \begin{align*}
    r(s,a)=V(s)-\gamma\sum_{s'}p(s'|s,a)V(s')+\beta 
    \log \frac{\pi(a|s)}{\max_{a'\in\cA}\pi(a'|s)}.
  \end{align*}
  Moreover, $V(s)$ coincides with $V^*(s;p,r)$, and $\beta 
    \log (\pi(a|s)/\max_{a'\in\cA}\pi(a'|s))$ coincides with $A^*(s,a;p,r)$.
\end{prop}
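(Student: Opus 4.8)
The plan is to establish both inclusions of the claimed equality directly from the defining softmax constraint of $\cR^{\text{BIRL},\cS}_{\cM,\pi}$, using the Bellman optimality identities $Q^*(s,a;p,r)=r(s,a)+\gamma\sum_{s'}p(s'|s,a)V^*(s';p,r)$ and $V^*(s;p,r)=\max_{a}Q^*(s,a;p,r)$ throughout. The argument parallels the MCE case (Proposition~\ref{prop: fs MCE explicit}), the only new ingredient being the handling of the $\max_{a'}\pi(a'|s)$ normalization that appears because BIRL uses a softmax over the \emph{hard}-optimal $Q^*$ rather than the soft one.

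For the forward inclusion, suppose $r\in\cR^{\text{BIRL},\cS}_{\cM,\pi}$. The proportionality $\pi(a|s)\propto e^{Q^*(s,a;p,r)/\beta}$ means that for each $s$ there is a constant $c(s)$ with $Q^*(s,a;p,r)=\beta\log\pi(a|s)+c(s)$ for all $a$. Taking the maximum over $a$ and invoking $V^*(s;p,r)=\max_a Q^*(s,a;p,r)$ together with monotonicity of $\log$ pins down $c(s)=V^*(s;p,r)-\beta\log\max_{a'}\pi(a'|s)$. Substituting back gives $A^*(s,a;p,r)=Q^*(s,a;p,r)-V^*(s;p,r)=\beta\log\frac{\pi(a|s)}{\max_{a'}\pi(a'|s)}$, which is exactly the claimed advantage identification. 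Finally, solving the Bellman optimality equation for $r(s,a)=Q^*(s,a;p,r)-\gamma\sum_{s'}p(s'|s,a)V^*(s';p,r)$ and writing $V:=V^*(\cdot;p,r)$ yields the stated representation.

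For the converse, fix an arbitrary $V\in\RR^S$, define $r$ by the displayed formula, and set $\tilde Q(s,a):=r(s,a)+\gamma\sum_{s'}p(s'|s,a)V(s')=V(s)+\beta\log\frac{\pi(a|s)}{\max_{a'}\pi(a'|s)}$. The key observation is that the log-ratio term is nonpositive and vanishes precisely at the actions maximizing $\pi(\cdot|s)$, so $\max_a\tilde Q(s,a)=V(s)$; hence $V$ satisfies the Bellman optimality equation for $r$. Because this operator is a $\gamma$-contraction with unique fixed point $V^*(\cdot;p,r)$, we get $V=V^*(\cdot;p,r)$ and therefore $\tilde Q=Q^*(\cdot,\cdot;p,r)$. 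Reversing the algebra, $e^{Q^*(s,a;p,r)/\beta}=e^{V(s)/\beta}\,\pi(a|s)/\max_{a'}\pi(a'|s)\propto\pi(a|s)$ in $a$, so the softmax constraint holds and $r\in\cR^{\text{BIRL},\cS}_{\cM,\pi}$.

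I expect the converse to be the only delicate step: one must certify that the $V$ used to build $r$ coincides with the \emph{optimal} value $V^*(\cdot;p,r)$, not merely with some candidate value. This rests entirely on the sign structure $\beta\log\frac{\pi(a|s)}{\max_{a'}\pi(a'|s)}\le 0$, with equality at the maximizer, which forces $\max_a\tilde Q(s,a)=V(s)$; uniqueness of the contraction's fixed point then closes the gap. The forward direction, together with the observation that $V$ ranges freely over $\RR^S$ (giving the $S$ degrees of freedom noted before the statement), is routine once $c(s)$ is eliminated via the max over actions.
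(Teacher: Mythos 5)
Your proof is correct and follows essentially the same route as the paper's: extract the advantage identity $A^*(s,a;p,r)=\beta\log\bigl(\pi(a|s)/\max_{a'}\pi(a'|s)\bigr)$ from the softmax constraint, then invert via the Bellman optimality equation, and for the converse use the nonpositivity of the log-ratio (with equality at the maximizer) to certify $V=V^*(\cdot;p,r)$. The only differences are presentational: you eliminate the normalization constant $c(s)$ directly by maximizing over actions instead of invoking the log-sum-exp identity from the cited lemma, and you spell out the contraction fixed-point argument that the paper delegates to ``reasoning analogous to Lemma 3.2 of \cite{metelli2021provably}.''
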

\begin{proof}
  We begin by showing that every reward in the set can be written in this way.
  Starting from the definition of $\cR^{\text{BIRL},\cS}_{\cM,\pi}$, for any
  $r\in \cR^{\text{BIRL},\cS}_{\cM,\pi}$, for every $(s,a)\in\SA$, it holds
  that:
  \begin{align*}
    \pi(a|s)&=\frac{e^{\frac{1}{\beta} Q^*(s,a;p,r)}}{
      \sum_{a'}e^{\frac{1}{\beta} Q^*(s,a';p,r)}
    }\\
    &=\frac{e^{\frac{1}{\beta}(Q^*(s,a;p,r)-V^*(s;p,r))}}{
      \sum_{a'}e^{\frac{1}{\beta}(Q^*(s,a';p,r)-V^*(s;p,r))}
    }\\
    &=
    \frac{e^{\frac{1}{\beta} A^*(s,a;p,r)}
    }{\sum_{a'}e^{\frac{1}{\beta} A^*(s,a';p,r)}}.
  \end{align*}
  Then, by rearranging and taking the logarithm on both parts, we get:
  \begin{align*}
    \frac{1}{\beta} A^*(s,a;p,r)=
    \log \pi(a|s)+
    \log \sum_{a'\in\cA}
    e^{\frac{1}{\beta} A^*(s,a';p,r)}.
  \end{align*}
  Now, by applying the following formula:
  \begin{align*}
    \beta \log \sum_{a'\in\cA}e^{\frac{1}{\beta} A^*(s,a';p,r)} =
    - \beta\log\max_{a'\in\cA}\pi(a'|s),
  \end{align*}
  which has been derived in the proof of Lemma C.2 of
  \cite{skalse2023invariancepolicyoptimisationpartial}, we obtain:
  \begin{align*}
    A^*(s,a;p,r)=
    \beta\log \frac{\pi(a|s)}{\max_{a'}\pi(a'|s)}.
  \end{align*}
  By using the definition of optimal advantage function:
  $A^*(s,a;p,r)=Q^*(s,a;p,r)-V^*(s;p,r)$, and also the Bellman optimality
  equation \cite{puterman1994markov}: $Q^*(s,a;p,r) = r(s,a)+
  \gamma\sum_{s'}p(s'|s,a)V^*(s';p,r)$, rearranging, and using symbol $V$ for
  $V^*(\cdot;p,r)$, we get the result:
  \begin{align*}
    r(s,a) = V(s) - \gamma\sum_{s'\in\cS}p(s'|s,a)V(s') +
    \beta\log \frac{\pi(a|s)}{\max_{a'}\pi(a'|s)}.
  \end{align*}
  To show that if a reward function $r$ can be written in the form described in
  the statement of the proposition (for some $V\in\RR^S$) then it belongs to
  $\cR^{\text{BIRL},\cS}_{\cM,\pi}$, observe that, in every $s\in\cS$, there is
  at least one action $a$ for which (observe that $\log
  (\pi(a|s)/\max_{a'}\pi(a'|s))=0$):
  \begin{align*}
    r(s,a) = V(s) - \gamma\sum_{s'\in\cS}p(s'|s,a)V(s').
  \end{align*}
  Therefore, through a reasoning analogous to that of Lemma 3.2 of
  \cite{metelli2021provably}, we realize that $V$ coincides with
  $V^*(\cdot;p,r)$, and, in addition:
  \begin{align*}
    A^*(s,a;p,r)=\beta\log \frac{\pi(a|s)}{\max_{a'}\pi(a'|s)}.
  \end{align*}
  Thus, the proof can be concluded by reversing the passages in the first part
  of this proof, to show that this implies that:
  \begin{align*}
    \pi(a|s)=\frac{e^{\frac{1}{\beta} Q^*(s,a;p,r)}}{
      \sum_{a'}e^{\frac{1}{\beta} Q^*(s,a';p,r)}
    },
  \end{align*}
  and so $r\in \cR^{\text{BIRL},\cS}_{\cM,\pi}$.
\end{proof}

\section{Additional Results and Proofs for Section \ref{sec: existing methods}}
\label{apx: proofs existing methods}

In this appendix, we collect various results and proofs for Section \ref{sec:
existing methods}. Specifically, in Appendix \ref{apx: unbounded worst case}, we
show the importance of restricting the feasible set to a bounded set in the
``worst-case'' approaches.
Next, we present the proofs of Propositions \ref{prop: centroid prior
OPT}-\ref{prop: counterexample MCE} (Appendix \ref{apx: no same vol}), and we
explain how to extend the result for OPT to state-only rewards (Appendix
\ref{apx: uniform prior state-only rewards}). Finally, in Appendix \ref{apx:
uniform transition models}, we extend these results to the setting where we are
given a uniform distribution over transition models.

\subsection{On the Importance of $\fR$ in the ``Worst-case'' Approach}
\label{apx: unbounded worst case}

In this appendix, we show that, if we do not restrict the feasible set
$\cR^m_{\cM,\pi_E}$ in Eq. \eqref{eq: worst case} to a bounded set, e.g.,
$\cR^m_{\cM,\pi_E}\cap\fR$, then there are problem instances where any feasible
policy $\pi'\in\Pi_{c,k}$ achieves the objective value
$\max_{r\in\cR^m_{\cM,\pi_E}\cap\fR} \bigr{\max_{\pi\in\Pi_{c,k}}
V^{\pi}(s_0';p',r) - V^{\pi'}(s_0';p',r)} =+\infty$.
As a consequence, any policy $\pi'$ solves the optimization problem in
Eq. \eqref{eq: worst case}.

\begin{figure}[t!]
  \centering
  \begin{minipage}[t!]{0.48\textwidth}
    \centering
  \begin{tikzpicture}
    \node[state] at (0,0) (s) {$s_0$};
    \node[state] at (-2,+2) (s1) {$s_1$};
    \node[state] at (0,+2) (s2) {$s_2$};
    \node[state] at (+2,+2) (s3) {$s_3$};
    \draw (s) edge[->, solid, left] node{$a_1$} (s1);
    \draw (s) edge[->, solid, left] node{$a_2$} (s2);
    \draw (s) edge[->, solid, right] node{$a_3$} (s3);
    \draw (s1) edge[->, solid, loop above] node{$a$} (s1);
    \draw (s2) edge[->, solid, loop above] node{$a$} (s2);
    \draw (s3) edge[->, solid, loop above] node{$a$} (s3);
  \end{tikzpicture}
\end{minipage}
\hfill
\begin{minipage}[t!]{0.48\textwidth}
  \centering
  \begin{tikzpicture}
    \node[state] at (0,0) (s) {$s_0$};
    \node[state] at (-2,+2) (s1) {$s_1$};
    \node[state] at (0,+2) (s2) {$s_2$};
    \node[state] at (+2,+2) (s3) {$s_3$};
    \draw (s) edge[->, solid, loop left, left] node{$a_1$} (s);
    \draw (s) edge[->, solid, left] node{$a_2$} (s2);
    \draw (s) edge[->, solid, right] node{$a_3$} (s3);
    \draw (s1) edge[->, solid, loop above] node{$a$} (s1);
    \draw (s2) edge[->, solid, loop above] node{$a$} (s2);
    \draw (s3) edge[->, solid, loop above] node{$a$} (s3);
  \end{tikzpicture}
\end{minipage}
  \caption{ Problem instance considered in Appendix \ref{apx: unbounded worst
  case}. }
  \label{fig: overly pessimistic worst case}
\end{figure}
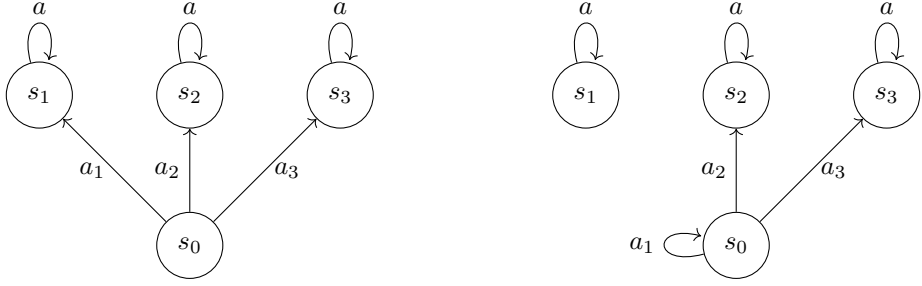

We focus on the problem instance exemplified in Fig. \ref{fig: overly
pessimistic worst case}.
Assume that the expert demonstrates policy $\pi_E$ with $\pi_E(a_1|s_0) = 1$ in
the MDP without reward $\cM$ depicted on the left of Fig. \ref{fig: overly
pessimistic worst case}, where $s_0$ is the initial state and $\gamma$ is any
scalar in$(0,1)$, and we want to ``generalize $\pi_E$'' to the MDP without
reward $\cM'$ on the right of Fig. \ref{fig: overly pessimistic worst case},
where $s_0$ is again the initial state and $\gamma'$ is any value in $[0,1)$.
Assume that there are no constraints (e.g., $c(s,a)=0$ and $k=1$) and that the
model of behavior is $m=$ OPT.
Then, the robust approach of \cite{lazzati2025rel} without restricting the
feasible set $\cR^{\text{OPT}}_{\cM,\pi_E}$ to the hypercube $\fR$ involves the
optimization of:
\begin{align*}
  \widehat{\pi}_{\text{W}}\in\argmin\limits_{\pi'\in\Pi} 
  \max\limits_{r\in\cR^{\text{OPT}}_{\cM,\pi_E}}
  \Bigr{V^*(s_0';p',r)
  - V^{\pi'}(s_0';p',r)}.
\end{align*}
Now, we show that, for any policy $\pi\in\Pi=\Delta^3$, there exists a reward
function $\overline{r}\in\cR^{\text{OPT}}_{\cM,\pi_E}$ that makes the objective
diverge:
\begin{align*}
  \max\limits_{\pi\in\Pi} V^{\pi}(s_0';p',\overline{r})
  - V^{\pi'}(s_0';p',\overline{r})\to+\infty.
\end{align*}
We begin with the deterministic policy $\pi$ such that $\pi(a_1|s_0)=1$. Observe
that the reward $r_1$ such that:
\begin{align*}
  r_1(s,a)=\begin{cases}
    0&\text{if }(s,a)=(s_0,a_1)\vee (s_0,a_2)\vee (s_0,a_3),\\
    +k'&\text{otherwise},
  \end{cases}
\end{align*}
for any $k'\ge 0$ belongs to the feasible set $\cR^{\text{OPT}}_{\cM,\pi_E}$
(because it makes all policies to be optimal), and provides a suboptimality
value for $\pi$ in $\cM'$ of:
\begin{align*}
  V^*(s_0';p',r_1) - V^{\pi}(s_0';p',r_1)=0+\gamma\frac{k'}{1-\gamma}-\frac{0}{1-\gamma}=\frac{\gamma}{1-\gamma}k'
  \to+\infty,
\end{align*}
for $k'\to\infty$.
If $\pi$ plays deterministically $a_2$, i.e., $\pi(a_2|s_0)=1$, then we find
reward $r_2$ such that:
\begin{align*}
  r_2(s,a)=\begin{cases}
    +k'&\text{if }(s,a)=(s_0,a_1)\vee (s_1,a),\\
    0&\text{otherwise},
  \end{cases}
\end{align*}
for any $k'\ge 0$ belongs to the feasible set $\cR^{\text{OPT}}_{\cM,\pi_E}$, and provides a suboptimality
value for $\pi$ in $\cM'$ of:
\begin{align*}
  V^*(s_0';p',r_2) - V^{\pi}(s_0';p',r_2)=\frac{k'}{1-\gamma}-\frac{0}{1-\gamma}=\frac{k'}{1-\gamma}
  \to+\infty,
\end{align*}
for $k'\to\infty$.
A similar derivation can be carried out also for the policy that plays $a_3$
deterministically.
Note that, using $r_2$, we can show the same result also for any stochastic
policy. Let $\pi$ be any stochastic policy in $\Delta^3$. Then, its
suboptimality w.r.t. $r_2$ is:
\begin{align*}
  V^*(s_0';p',r_2) - V^{\pi}(s_0';p',r_2)=\frac{k'}{1-\gamma}-\pi(a_1|s_0)\frac{k'}{1-\gamma}=\frac{1-\pi(a_1|s_0)}{1-\gamma}k'
  \to+\infty,
\end{align*}
for $k'\to\infty$, as $\pi(a_1|s_0)<1$ since $\pi$ is stochastic.

\subsection{The Hypercube is \emph{Biased} in the Space of Policies}
\label{apx: no same vol}

\begin{figure}[t!]
  \centering
  \begin{tikzpicture}[node distance=3.5cm]
    \node[state] at (0,0) (s1) {$s_1$};
    \node[state] at (3.5,0) (s2) {$s_2$};
    \draw (s1) edge[->, solid, loop above] node{$a_1$} (s1);
    \draw (s1) edge[->, solid, above] node{$a_2$} (s2);
    \draw (s2) edge[->, solid, loop above] node{$a_1,a_2$} (s2);
  \end{tikzpicture}
  \caption{\MDPr used in the proof of Proposition \ref{prop: counterexample
  OPT}.}
  \label{fig:  counterexample OPT}
\end{figure}

\counterexampleOPT*
\begin{proof}
  Consider the \MDPr $\cM$ depicted in Fig. \ref{fig: counterexample OPT}, and
  let $\pi_1$ be the policy that plays action $a_1$ in both states $s_1,s_2$.
  Note that there are four deterministic policies in this \MDPr, thus, we can
  prove the result by showing that:
  \begin{align*}
    \frac{\text{vol}\bigr{\fR\cap\cR^{\text{OPT},\cS}_{\cM,\pi_1}}}{\text{vol}\bigr{\fR}}
    \neq \frac{1}{4}.
  \end{align*}
  Since $\fR=[-1,+1]^{SA}=[-1,+1]^{4}$, then $\text{vol}\bigr{\fR}=2^4=16$, thus
  the result follows by showing that:
  \begin{align*}
    \text{vol}\bigr{\fR\cap\cR^{\text{OPT},\cS}_{\cM,\pi_1}}\neq 4.
  \end{align*}
  Let us now compute the integration region $\fR\cap\cR^{\text{OPT},\cS}_{\cM,\pi_1}$.
  For any $r\in\RR^{SA}$, let us abbreviate $r_{ij}\coloneqq r(s_i,a_j)$. We can
  compute the value and $Q$-functions of policy $\pi_1$ under any $r$ as:
  \begin{align*}
    &V^{\pi_1}(s_1;p,r)=\frac{r_{11}}{1-\gamma},\\
    &V^{\pi_1}(s_2;p,r)=\frac{r_{21}}{1-\gamma},\\
    &Q^{\pi_1}(s_1,a_2;p,r)=r_{12}+\gamma V^{\pi_1}(s_2;p,r),\\
    &Q^{\pi_1}(s_2,a_2;p,r)=r_{22}+\gamma V^{\pi_1}(s_2;p,r).
  \end{align*}
  Using the policy improvement theorem \cite{puterman1994markov}, we have that a
  reward $r$ belongs to $\cR^{\text{OPT},\cS}_{\cM,\pi_1}$ if and only if:
  \begin{align*}
    \begin{cases}
      &Q^{\pi_1}(s_1,a_2;p,r)\le V^{\pi_1}(s_1;p,r),\\
      &Q^{\pi_1}(s_2,a_2;p,r)\le V^{\pi_1}(s_2;p,r),
    \end{cases}
    \qquad\iff\qquad
    \begin{cases}
      &r_{12}+\gamma \frac{r_{21}}{1-\gamma}\le \frac{r_{11}}{1-\gamma},\\
      &r_{22}+\gamma \frac{r_{21}}{1-\gamma}\le \frac{r_{21}}{1-\gamma},
    \end{cases}
  \end{align*}
  which can be rearranged to:
  \begin{align*}
    \begin{cases}
      &r_{11}\ge (1-\gamma)r_{12}+\gamma r_{21},\\
      &r_{21}\ge r_{22}.
    \end{cases}    
  \end{align*}
  To simplify the calculations, let $\gamma\to1$, so that the integration region
  reduces to:
  \begin{align*}
    \fR\cap\cR^{\text{OPT},\cS}_{\cM,\pi_1}=\Bigc{
      r\in [-1,+1]^{4}\,|\,
      r_{11}\ge r_{21}\ge r_{22}
    }.
  \end{align*}
  Thus, we can write:
  \begin{align*}
    \int_{\fR\cap\cR^{\text{OPT},\cS}_{\cM,\pi_1}}
    dr_{11}dr_{12}dr_{21}dr_{22}&=
    \int_{-1}^{+1}dr_{12}
    \int_{-1}^{+1}\int_{r_{22}}^{+1}
    \int_{r_{21}}^{+1}
    dr_{11}dr_{21}dr_{22}\\
    &=2\cdot \int_{-1}^{+1}\int_{r_{22}}^{+1}
    \int_{r_{21}}^{+1}
    dr_{11}dr_{21}dr_{22}\\
    &=2\cdot \int_{-1}^{+1}\int_{r_{22}}^{+1}
    (1-r_{21})dr_{21}dr_{22}\\
    &=2\cdot \int_{-1}^{+1}\frac{1}{2}
    (1-r_{22})^2 dr_{22}\\
    &=2\cdot \frac{4}{3}=\frac{8}{3}.
  \end{align*}
  Since $\frac{8}{3}\neq 4$, this concludes the proof.
\end{proof}

\begin{figure}[t!]
  \centering
  \begin{tikzpicture}[node distance=3.5cm]
    \node[state] at (0,0) (s1) {$s$};
    \draw (s1) edge[->, solid, loop above] node{$a_1,a_2$} (s1);
  \end{tikzpicture}
  \caption{\MDPr used in the proof of Proposition \ref{prop: counterexample
  MCE}.}
  \label{fig: counterexample MCE}
\end{figure}

\counterexampleMCE*
\begin{proof}
  We begin with MCE. Take $\lambda=1$ for simplicity.
  Consider the \MDPr with one state and two actions reported in Fig. \ref{fig:
  counterexample MCE}. By applying Proposition \ref{prop: fs MCE explicit}, we
  have that, for any policy $\pi$ and for any reward $r\in
  \cR^{\text{MCE},\cS}_{\cM,\pi}$, there exists a scalar $V$ such that:
  \begin{align*}
      &r(s,a_1) = (1-\gamma)V +\log\pi(a_1|s),\\
      &r(s,a_2) = (1-\gamma)V +\log\pi(a_2|s).
  \end{align*}
  By rearranging the second equation, we have:
  \begin{align*}
    V =\frac{r(s,a_2)-\log\pi(a_2|s)}{1-\gamma},
  \end{align*}
  and replacing this value of $V$ into the first one, we get:
  \begin{align*}
    r(s,a_1) = r(s,a_2)+\log\frac{\pi(a_1|s)}{1-\pi(a_1|s)}.
  \end{align*}
  Therefore, by denoting $x \coloneqq r(s,a_2)$, we
  realize that the integration region $\fR\cap \cR^{\text{MCE},\cS}_{\cM,\pi}$
  is:
  \begin{align*}
    \fR\cap \cR^{\text{MCE},\cS}_{\cM,\pi}=
    \Bigc{x\in[-1,+1]\,|\, -1-\log\frac{\pi(a_1|s)}{1-\pi(a_1|s)}\le 
    x\le +1-\log\frac{\pi(a_1|s)}{1-\pi(a_1|s)}}.
  \end{align*}
  Clearly, this is a 1-dimensional region embedded in $\RR^2$, whose
  1-dimensional volume is:
  \begin{align*}
    \text{vol}(\fR\cap \cR^{\text{MCE},\cS}_{\cM,\pi})&=\int_{
      \max\{-1,-1-\log\frac{\pi(a_1|s)}{1-\pi(a_1|s)} \}
    }^{\min\{+1,+1-\log\frac{\pi(a_1|s)}{1-\pi(a_1|s)}\}}dx\\
    &=\min\Bigc{+1,+1-\log\frac{\pi(a_1|s)}{1-\pi(a_1|s)}}
    -\max\Bigc{-1,-1-\log\frac{\pi(a_1|s)}{1-\pi(a_1|s)}}.
  \end{align*}
  If we take policies $\pi_1$ and $\pi_2$ such that $\pi_1(a_1|s)=1/2$ and
  $\pi_2(a_1|s)=1/3$, we obtain the volumes:
  \begin{align*}
    &\text{vol}(\fR\cap \cR^{\text{MCE},\cS}_{\cM,\pi_1})=2,\\
    &\text{vol}(\fR\cap \cR^{\text{MCE},\cS}_{\cM,\pi_2})=1-(-1+\log 2)=2-\log 2,
  \end{align*}
  which are different.
  
  Regarding BIRL, the proof is analogous to that for MCE. Specifically, set
  $\beta=1$ and consider the same \MDPr in Fig. \ref{fig: counterexample MCE}.
  Thanks to Proposition \ref{prop: fs BIRL explicit}, we see that, for any
  policy $\pi$ and for any reward $r\in \cR^{\text{BIRL},\cS}_{\cM,\pi}$, there
  exists a scalar $V$ such that (take $\pi(a_2|s)\ge \pi(a_1|s)$ w.l.o.g.):
  \begin{align*}
      &r(s,a_1) = (1-\gamma)V +\log\frac{\pi(a_1|s)}{1-\pi(a_1|s)},\\
      &r(s,a_2) = (1-\gamma)V.
  \end{align*}
  By isolating $V$ in the second expression, and replacing in the first one, we
  get:
  \begin{align*}
    r(s,a_1)=r(s,a_2)+\log\frac{\pi(a_1|s)}{1-\pi(a_1|s)},
  \end{align*}
  as in MCE. Therefore, for this simple example, the integration regions
  $\fR\cap \cR^{\text{MCE},\cS}_{\cM,\pi}= \fR\cap
  \cR^{\text{BIRL},\cS}_{\cM,\pi}$ coincide. Thus, the same choices of
  $\pi_1$, $\pi_2$ as in MCE allow to prove the result.
\end{proof}

\subsection{Uniform Prior for State-only Rewards}
\label{apx: uniform prior state-only rewards}

We present here an analogous result to Proposition \ref{prop: counterexample
OPT} for state-only rewards.

\begin{figure}[t!]
  \centering
  \begin{tikzpicture}[node distance=3.5cm]
    \node[state] at (0,0) (s1) {$s_1$};
    \node[state] at (3.5,0) (s2) {$s_2$};
    \draw (s1) edge[->, solid, loop above] node{$a_1$} (s1);
    \draw (s1) edge[->, solid, above, bend left = 30] node{$a_2$} (s2);
    \draw (s2) edge[->, solid, loop above] node{$a_1$} (s2);
    \draw (s2) edge[->, solid, below, bend left=30] node{$a_2$} (s1);
  \end{tikzpicture}
  \caption{\MDPr used in the proof of Proposition \ref{prop: counterexampleOPT
  state only}.}
  \label{fig: counterexample OPT state-only}
\end{figure}

\begin{prop}\label{prop: counterexampleOPT state only}
  Let $R_{\max}>0$ be arbitrary, and consider state-only rewards. There exists
  an \MDPr $\cM$ and two deterministic policies $\pi_1,\pi_2$ such that:
  \begin{align*}
      \text{vol}\bigr{[-1,+1]^S\cap\cR^{\text{OPT},\cS}_{\cM,\pi_1}} \neq
      \text{vol}\bigr{[-1,+1]^S\cap\cR^{\text{OPT},\cS}_{\cM,\pi_2}}.
  \end{align*}
\end{prop}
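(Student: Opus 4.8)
The plan is to exhibit the two-state \MDPr $\cM$ drawn in Fig.~\ref{fig: counterexample OPT state-only}, in which from each state action $a_1$ is a self-loop and action $a_2$ moves to the other state, and to contrast two deterministic policies: the policy $\pi_1$ that plays $a_1$ (i.e., stays) in both $s_1,s_2$, and the policy $\pi_2$ that plays $a_1$ in $s_1$ and $a_2$ in $s_2$. Writing a state-only reward as $r=(r_1,r_2)$ with $r_i\coloneqq r(s_i)$, and noting $S=2$, the object of interest is the square $[-1,+1]^2$ intersected with each feasible set. I will show that $[-1,+1]^2\cap\cR^{\text{OPT},\cS}_{\cM,\pi_1}$ is a measure-zero segment while $[-1,+1]^2\cap\cR^{\text{OPT},\cS}_{\cM,\pi_2}$ is a full half-square, which immediately gives the claimed inequality of $2$-dimensional volumes.

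First I would characterize each feasible set. Since the transitions are deterministic and each $\pi_i$ is deterministic, I can compute $V^{\pi_i}(\cdot;p,r)$ and $Q^{\pi_i}(\cdot,\cdot;p,r)$ in closed form from the geometric series along the induced trajectory, using that the instantaneous reward is $r(s)$ irrespective of the chosen action (state-only rewards). By the policy improvement theorem \cite{puterman1994markov}, a deterministic policy is optimal if and only if it is greedy with respect to its own $Q$-function, so $r\in\cR^{\text{OPT},\cS}_{\cM,\pi_i}$ exactly when $Q^{\pi_i}(s,\pi_i(s);p,r)\ge Q^{\pi_i}(s,a;p,r)$ for every $s$ and every $a$. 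I would then reduce each such inequality to a linear condition on $(r_1,r_2)$.

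Carrying out the algebra, the two optimality constraints for $\pi_1$ — staying at $s_1$ requires $r_1\ge r_2$ and staying at $s_2$ requires $r_2\ge r_1$ — together force $r_1=r_2$, so $[-1,+1]^2\cap\cR^{\text{OPT},\cS}_{\cM,\pi_1}$ is the diagonal segment $\{(r_1,r_2)\in[-1,+1]^2\,|\,r_1=r_2\}$, whose $2$-dimensional volume is $0$. For $\pi_2$, both constraints collapse to the single inequality $r_1\ge r_2$, so $[-1,+1]^2\cap\cR^{\text{OPT},\cS}_{\cM,\pi_2}=\{(r_1,r_2)\in[-1,+1]^2\,|\,r_1\ge r_2\}$ is a triangle of volume $2$. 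Hence the two volumes differ, proving the proposition. The construction is moreover scale-invariant: replacing $[-1,+1]^2$ by $[-R_{\max},+R_{\max}]^2$ leaves $\pi_1$'s region a measure-zero segment while $\pi_2$'s region remains a half-square of positive volume, so the conclusion holds for the arbitrary $R_{\max}$.

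The step I expect to require the most care is the choice of gadget rather than the algebra. The difficulty specific to state-only rewards is that they are far more constrained than the general rewards of Proposition~\ref{prop: counterexample OPT}, so one must build an \MDPr in which \emph{some} deterministic policy is optimal only on a lower-dimensional set while another is optimal on a full-dimensional set. The self-loop/swap structure achieves exactly this: demanding that the agent stay put in both states pins the two state-rewards to be equal, whereas demanding a preference to stay in $s_1$ and to move into $s_1$ from $s_2$ both reduce to the same half-space condition. I would also double-check the boundary case $r_1=r_2$, where the \MDPr is reward-flat and all policies are simultaneously optimal, to confirm it does not inflate the dimension of $\pi_1$'s feasible set beyond the diagonal.
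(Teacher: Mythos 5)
Your proof is correct and uses essentially the same construction as the paper: the same two-state self-loop/swap \MDPr, with the policy-improvement theorem reducing each optimality condition to a linear constraint on $(r_1,r_2)$, so that one policy's region collapses to the measure-zero diagonal $r_1=r_2$. The only (immaterial) difference is that you explicitly compute a second policy's region as the half-square $\{r_1\ge r_2\}$ of volume $2$, whereas the paper computes only the volume-zero region for the swap-swap policy and concludes via a counting argument over the four deterministic policies.
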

\begin{proof}
  The proof is analogous to that of Proposition \ref{prop: counterexample OPT},
  but we make use of the \MDPr in Fig. \ref{fig: counterexample OPT
  state-only}.
  Let $\pi_1$ be the policy that plays action $a_2$ in both states $s_1,s_2$.
  Note that there are four deterministic policies in this \MDPr, thus, we can
  prove the result by showing that:
  \begin{align*}
    \frac{\text{vol}\bigr{[-1,+1]^S\cap\cR^{\text{OPT},\cS}_{\cM,\pi_1}}}{\text{vol}\bigr{[-1,+1]^S}}
    \neq \frac{1}{4}.
  \end{align*}
  Since $[-1,+1]^S=[-1,+1]^{2}$, then $\text{vol}\bigr{[-1,+1]^S}=2^2=4$, thus
  the result follows by showing that:
  \begin{align*}
    \text{vol}\bigr{[-1,+1]^S\cap\cR^{\text{OPT},\cS}_{\cM,\pi_1}}\neq 1.
  \end{align*}
  Let us now compute the integration region
  $[-1,+1]^S\cap\cR^{\text{OPT},\cS}_{\cM,\pi_1}$. For any $r\in\RR^{SA}$, let
  us abbreviate $r_{i}\coloneqq r(s_i)$ (the reward is state-only). We can
  compute the value and $Q$-functions of policy $\pi_1$ under any state-only reward $r$
  as:
  \begin{align*}
    &V^{\pi_1}(s_1;p,r)=r_1+\gamma V^{\pi_1}(s_2;p,r),\\
    &V^{\pi_1}(s_2;p,r)=r_2+\gamma V^{\pi_1}(s_1;p,r),\\
    &Q^{\pi_1}(s_1,a_1;p,r)=r_1+\gamma V^{\pi_1}(s_1;p,r),\\
    &Q^{\pi_1}(s_2,a_1;p,r)=r_2+\gamma V^{\pi_1}(s_2;p,r),
  \end{align*}
  thus, using the policy improvement theorem \cite{puterman1994markov}, $r$
  belongs to $\cR^{\text{OPT},\cS}_{\cM,\pi_1}$ if and only if:
  \begin{align*}
    \begin{cases}
      &Q^{\pi_1}(s_1,a_1;p,r)\le V^{\pi_1}(s_1;p,r),\\
      &Q^{\pi_1}(s_2,a_1;p,r)\le V^{\pi_1}(s_2;p,r),
    \end{cases}
    \quad\iff\quad
    \begin{cases}
      &r_1+\gamma V^{\pi_1}(s_1;p,r)\le r_1+\gamma V^{\pi_1}(s_2;p,r),\\
      &r_2+\gamma V^{\pi_1}(s_2;p,r)\le r_2+\gamma V^{\pi_1}(s_1;p,r),
    \end{cases}
  \end{align*}
  which can be simplified to:
  \begin{align*}
    \begin{cases}
      &V^{\pi_1}(s_1;p,r)\le V^{\pi_1}(s_2;p,r),\\
      &V^{\pi_1}(s_2;p,r)\le V^{\pi_1}(s_1;p,r).
    \end{cases}\qquad\iff\qquad
    V^{\pi_1}(s_1;p,r)= V^{\pi_1}(s_2;p,r).
  \end{align*}
  Since the value functions can be computed as:
  \begin{align*}
    &V^{\pi_1}(s_1;p,r)=\frac{r_1+\gamma r_2}{1-\gamma^2},\\
    &V^{\pi_1}(s_2;p,r)=\frac{r_2+\gamma r_1}{1-\gamma^2},
  \end{align*}
  the constraint imposes that:
  \begin{align*}
    r_1+\gamma r_2=r_2+\gamma r_1\qquad\iff\qquad
    r_1=r_2.
  \end{align*}
  Therefore, the integration region:
  \begin{align*}
    [-1,+1]^S\cap\cR^{\text{OPT},\cS}_{\cM,\pi_1}=
    \Bigc{r\in [-1,+1]^S\,|\, r_1=r_2},
  \end{align*}
  has zero volume, which concludes the proof.
\end{proof}

\subsection{A Uniform Prior over Transition Models Too}
\label{apx: uniform transition models}

In this appendix we consider the setting in which the \MDPr $\cM$ is not fixed,
but its transition model $p$ is sampled from a uniform distribution over
$\Delta_{\SA}^\cS$. Interestingly, it turns out that the uniform prior over the
hypercube (see Eq. \ref{eq: uniform prior}) assigns equal weight to all the deterministic
policies on the \emph{average} of the transition models. However, for MCE and
BIRL, this does not hold.

\begin{restatable}{prop}{hypercubefineallenvsOPT}
  Let $\cM=(\cS,\cA,s_0,p,\gamma)$ be an \MDPr whose transition model $p$ is a
  random variable uniformly distributed over $\Delta_{\SA}^\cS$.
  Then, for any pair of deterministic policies $\pi_1,\pi_2$, it holds that:
  \begin{align*}
      \E_p[\text{vol}\bigr{\fR\cap\cR^{\text{OPT},\cS}_{\cM,\pi_1}}]= 
      \E_p[\text{vol}\bigr{\fR\cap\cR^{\text{OPT},\cS}_{\cM,\pi_2}}],
  \end{align*}
  where $\E_p$ denotes the expectation w.r.t. $p$.
\end{restatable}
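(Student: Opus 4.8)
The plan is to exploit an \emph{action-relabeling symmetry} of the problem. The point is that, although for a \emph{fixed} $p$ the two volumes may genuinely differ (this is exactly Proposition \ref{prop: counterexample OPT}), permuting the action labels of the transition model is a measure-preserving operation under the uniform prior, and it converts the feasible set of $\pi_1$ into that of $\pi_2$. The averaging over $p$ is therefore what restores the symmetry that is broken for any individual $p$.

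Concretely, for each state $s$ I would fix a permutation $\sigma_s:\cA\to\cA$ with $\sigma_s(\pi_1(s))=\pi_2(s)$ (for instance the transposition of $\pi_1(s)$ and $\pi_2(s)$, or the identity when they coincide). This induces two linear maps that simply permute, within each state-block, the action-indexed coordinates: a relabeling of transition models $\Phi$, given by $(\Phi p)(s'|s,a)\coloneqq p(s'|s,\sigma_s^{-1}(a))$, and a relabeling of rewards $\Psi$, given by $(\Psi r)(s,a)\coloneqq r(s,\sigma_s^{-1}(a))$. Writing $\cM_{\Phi p}$ for the \MDPr $\tuple{\cS,\cA,s_0,\Phi p,\gamma}$, the crucial step is to establish the equivariance
\[
\Psi\bigr{\cR^{\text{OPT},\cS}_{\cM,\pi_1}}=\cR^{\text{OPT},\cS}_{\cM_{\Phi p},\pi_2}.
\]

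To prove this I would track how optimality transforms under the renaming. Since $\cM_{\Phi p}$ with reward $\Psi r$ is exactly $\cM$ with reward $r$ after renaming each action $a$ of the new MDP to the action $\sigma_s^{-1}(a)$ of the old one, the Bellman optimality equation gives, by a short induction on the value-iteration fixed point, $V^*(s;\Phi p,\Psi r)=V^*(s;p,r)$ and $Q^*(s,a;\Phi p,\Psi r)=Q^*(s,\sigma_s^{-1}(a);p,r)$ for every $s,a$ (the $\max$ over $a$ is unchanged because $\sigma_s$ is a bijection). Hence $a$ lies in $\argmax_a Q^*(s,a;\Phi p,\Psi r)$ iff $\sigma_s^{-1}(a)$ lies in $\argmax_{a'} Q^*(s,a';p,r)$. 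Taking $a=\pi_2(s)$ and using $\sigma_s^{-1}(\pi_2(s))=\pi_1(s)$, we conclude that $\pi_2$ is optimal for $\Psi r$ in $\cM_{\Phi p}$ exactly when $\pi_1$ is optimal for $r$ in $\cM$, which is the claimed set equality.

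Finally I would assemble the pieces. Because $\fR=[-1,+1]^{SA}$ is symmetric under any permutation of coordinates we have $\Psi(\fR)=\fR$, and since $\Psi$ is a coordinate permutation it preserves $SA$-dimensional volume; combined with the equivariance this yields $\text{vol}\bigr{\fR\cap\cR^{\text{OPT},\cS}_{\cM,\pi_1}}=\text{vol}\bigr{\fR\cap\cR^{\text{OPT},\cS}_{\cM_{\Phi p},\pi_2}}$ for \emph{every} $p$. Taking expectations and changing variables $p'=\Phi p$ then gives the result, because $\Phi$ merely permutes the independent simplex factors of $\Delta_{\SA}^\cS$ and hence preserves the uniform product measure, turning the right-hand side into $\E_{p'}[\text{vol}\bigr{\fR\cap\cR^{\text{OPT},\cS}_{\cM_{p'},\pi_2}}]$. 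I expect the main obstacle to be the bookkeeping in the equivariance step: one must verify that the renaming commutes with the $\argmax$ defining optimality at \emph{all} states simultaneously, using a distinct permutation $\sigma_s$ per state rather than a single global relabeling, and confirm that the value-function identities hold at the fixed point and not merely under a finite number of iterations.
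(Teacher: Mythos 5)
Your proposal is correct and follows essentially the same route as the paper's proof, which also swaps the transition kernels of actions $\pi_1(s)$ and $\pi_2(s)$ at each state and invokes the resulting symmetry under the uniform prior. Your write-up is in fact more explicit than the paper's: the reward-relabeling map $\Psi$ and the equivariance $\Psi\bigr{\cR^{\text{OPT},\cS}_{\cM,\pi_1}}=\cR^{\text{OPT},\cS}_{\cM_{\Phi p},\pi_2}$ are exactly the steps the paper compresses into ``clearly, $p''$ guarantees that the role of $\pi_2$ is the same as that of $\pi_1$ in $p'$.''
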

\begin{proof}
  The result can be proved by showing that, for any possible pair of policies
  $\pi_1,\pi_2$ and any transition model $p'$, there always exists another
  transition model $p''$ for which:
  \begin{align*}
    \text{vol}\bigr{\fR\cap\cR^{\text{OPT},\cS}_{p',\pi_1}}
    =\text{vol}\bigr{\fR\cap\cR^{\text{OPT},\cS}_{p'',\pi_2}},
  \end{align*}
  and:
  \begin{align*}
    \text{vol}\bigr{\fR\cap\cR^{\text{OPT},\cS}_{p'',\pi_1}}
    =\text{vol}\bigr{\fR\cap\cR^{\text{OPT},\cS}_{p',\pi_2}}.
  \end{align*}
  where we used $p',p''$ in place of $(\cS,\cA,s_0,p',\gamma)$ and
  $(\cS,\cA,s_0,p'',\gamma)$. To this aim, fix two policies $\pi_1,\pi_2$ and a
  transition model $p'$, and construct the transition model $p''$ such that, for
  every $s,s'\in\cS$:
  \begin{align*}
    &p''(s'|s,\pi_1(s))=p'(s'|s,\pi_2(s)),\\
    &p''(s'|s,\pi_2(s))=p'(s'|s,\pi_1(s)),\\
    &p''(s'|s,a)=p'(s'|s,a)\qquad\text{if }a\neq \pi_1(s)\wedge\pi_2(s).
  \end{align*}
  Clearly, $p''$ guarantees that the role of $\pi_2$ is the same as that of
  $\pi_1$ in $p'$, and that the role of $\pi_1$ is the same as that of $\pi_2$
  in $p'$. Therefore:
  \begin{align*}
    \text{vol}\bigr{\fR\cap\cR^{\text{OPT},\cS}_{p',\pi_1}}
    =\text{vol}\bigr{\fR\cap\cR^{\text{OPT},\cS}_{p'',\pi_2}},
  \end{align*}
  and:
  \begin{align*}
    \text{vol}\bigr{\fR\cap\cR^{\text{OPT},\cS}_{p'',\pi_1}}
    =\text{vol}\bigr{\fR\cap\cR^{\text{OPT},\cS}_{p',\pi_2}}.
  \end{align*}
  This concludes the proof.
\end{proof}

\begin{restatable}{prop}{hypercubefineallenvsMCEBIRL}
  There exist a state space $\cS$, action space $\cA$, discount factor $\gamma$,
 initial state $s_0$, coefficients $\lambda,\beta\ge0$, and a pair of stochastic
 policies $\pi_1,\pi_2$ such that, if we construct $\cM=(\cS,\cA,s_0,p,\gamma)$
 as an \MDPr whose transition model $p$ is a random variable uniformly
 distributed over $\Delta_{\SA}^\cS$, then:
  \begin{align*}
      &\E_p[\text{vol}\bigr{\fR\cap\cR^{\text{MCE},\cS}_{\cM,\pi_1}}]\neq 
      \E_p[\text{vol}\bigr{\fR\cap\cR^{\text{MCE},\cS}_{\cM,\pi_2}}]\quad \wedge\\
      &\E_p[\text{vol}\bigr{\fR\cap\cR^{\text{BIRL},\cS}_{\cM,\pi_1}}]\neq 
      \E_p[\text{vol}\bigr{\fR\cap\cR^{\text{BIRL},\cS}_{\cM,\pi_2}}],
  \end{align*}
  where $\E_p$ denotes the expectation w.r.t. $p$.
\end{restatable}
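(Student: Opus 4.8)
The plan is to exploit the structural rigidity of the MCE and BIRL feasible sets, which---unlike the OPT case---prevents the measure-preserving swap of transition dynamics from equalizing volumes. By Proposition \ref{prop: fs MCE explicit} (resp. Proposition \ref{prop: fs BIRL explicit}), every reward in $\cR^{\text{MCE},\cS}_{\cM,\pi}$ (resp. $\cR^{\text{BIRL},\cS}_{\cM,\pi}$) is parametrized by a single vector $V\in\RR^S$, with the advantage component fixed to $\lambda\log\pi(a|s)$ (resp. $\beta\log(\pi(a|s)/\max_{a'}\pi(a'|s))$). Crucially, this component is pinned down entirely by $\pi$ and carries no freedom tied to the chosen actions, so it cannot be matched across two stochastic policies $\pi_1,\pi_2$ by permuting the action labels of $p$---which is exactly the bijection that made the OPT averaging argument work. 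This is precisely why the swap fails once $\pi_1,\pi_2$ are not related by an action permutation, e.g. $\pi_1(a_1|s)=1/2$ versus $\pi_2(a_1|s)=1/3$.

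To turn this observation into a rigorous existence proof, I would pick the simplest instance in which the averaging over $p$ is vacuous, namely a single-state \MDPr. Concretely, take the \MDPr of Fig. \ref{fig: counterexample MCE}, with $\cS=\{s\}$ and $\cA=\{a_1,a_2\}$, and set $\lambda=\beta=1$. When $S=1$, the simplex $\Delta^\cS$ is a single point, so the function space $\Delta_{\SA}^\cS$ is a singleton and the uniform distribution over transition models degenerates to a point mass at the unique $p$. Consequently $\E_p[\text{vol}(\fR\cap\cR^{\text{MCE},\cS}_{\cM,\pi})]=\text{vol}(\fR\cap\cR^{\text{MCE},\cS}_{\cM,\pi})$, and likewise for BIRL, so the expectations collapse to the deterministic volumes.

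It then remains only to invoke the computation already carried out in the proof of Proposition \ref{prop: counterexample MCE}: with $\pi_1(a_1|s)=1/2$ and $\pi_2(a_1|s)=1/3$ one obtains $\text{vol}(\fR\cap\cR^{\text{MCE},\cS}_{\cM,\pi_1})=2$ and $\text{vol}(\fR\cap\cR^{\text{MCE},\cS}_{\cM,\pi_2})=2-\log 2$, which differ, and the regions (hence volumes) for $\cR^{\text{BIRL},\cS}$ coincide with these in this single-state instance. Substituting into the trivial expectations yields both strict inequalities, completing the proof.

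A remark on where the difficulty would lie if one insisted on a genuinely multi-state instance in which $p$ is truly random: there the expectation no longer collapses, and one would have to compare the integrals $\int_{\Delta_{\SA}^\cS}\text{vol}(\fR\cap\cR^{\text{MCE},\cS}_{\cM,\pi_i})\,dp$ directly, tracking how the affine map $V\mapsto U_{p,\eta}(V)$ (see Eq. \eqref{eq: def U eta}), with the offset $\eta$ fixed by $\pi_i$, slices the hypercube $\fR$ as $p$ ranges over the product of simplices. This integral-geometry computation is the only genuinely hard step, and the single-state reduction is exactly what lets us bypass it for the existence claim, mirroring the strategy used for Proposition \ref{prop: counterexample MCE}.
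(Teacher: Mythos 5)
Your proposal is correct and follows essentially the same route as the paper: reduce to the single-state instance of Proposition \ref{prop: counterexample MCE}, where the uniform distribution over transition models degenerates to a point mass, so the expectations collapse to the deterministic volumes $2$ and $2-\log 2$ already computed there (for both MCE and BIRL, since the two feasible sets coincide in that instance). The only difference is expository — your opening discussion of why the action-swap argument from the OPT case fails and your closing remark on the multi-state case are context the paper omits, but the core argument is identical.
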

\begin{proof}
  Simply, observe that, for $S=1$, the expectation over transition models
  reduces to evaluate the single transition model that brings every action back
  to the current state. For this problem instance, we have already shown that
  there exists a pair of policies $\pi_1,\pi_2$ for which the volumes are
  different (see the proof of Proposition \ref{prop: counterexample MCE}).
\end{proof}

\section{Additional Results and Proofs for Section \ref{sec: new prior}}
\label{apx: proofs our approach}

In this appendix, we provide the missing proofs for Proposition \ref{prop: rel
hypercube} (Appendix \ref{apx: rel with hypercube}) and for Propositions \ref{prop:
same vol OPT}-\ref{prop: same vol BIRL} (Appendix \ref{apx: new priors
unbiased}). In addition, we show that priors $w_\cM^m$ might be unbiased in new
environments $\cM''$ (Appendix \ref{apx: no same vol new envs}).

\subsection{Relation with the Hypercube}
\label{apx: rel with hypercube}

To prove Proposition \ref{prop: rel hypercube}, we use the following three
lemmas.

\begin{restatable}{prop}{relhypercubeOPT}\label{prop: rel hypercube OPT}
  Let $\cM$ be any \MDPr. For any $C_1^{\text{OPT}},C_2^{\text{OPT}}>0$, it holds that:
  \begin{align*}
    \sR_\cM^{\text{OPT}}\supseteq\Bigs{-(1-\gamma)\min\Bigc{\frac{C_1^{\text{OPT}}}{1+\gamma},C_2^{\text{OPT}}},
      +(1-\gamma)\min\Bigc{\frac{C_1^{\text{OPT}}}{1+\gamma},C_2^{\text{OPT}}}}^{SA},
  \end{align*}
  and that:
  \begin{align*}
    \sR_\cM^{\text{OPT}}\subseteq
    \Bigs{-\frac{1+\gamma}{1-\gamma}C_1^{\text{OPT}}-C_2^{\text{OPT}},+\frac{1+\gamma}{1-\gamma}C_1^{\text{OPT}}}^{SA}.
  \end{align*}
\end{restatable}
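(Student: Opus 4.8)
The plan is to establish the two inclusions separately, using as the main bridge the explicit parametrization of the feasible set in Proposition~\ref{prop: fs OPT explicit} together with elementary sup-norm bounds on value functions, and isolating the dependence on $k_\pi$ into a single spectral estimate of $W_{p,\pi}$. First I would record the key auxiliary fact that $k_\pi\in[\frac{1}{1+\gamma},\frac{1}{1-\gamma}]$ for every policy $\pi$. Writing $W_{p,\pi}=I-\gamma P_{p,\pi}$, where $P_{p,\pi}$ is the row-stochastic state-transition matrix with entries $\sum_{a}\pi(a|s)p(s'|s,a)$, every eigenvalue $\mu$ of $P_{p,\pi}$ satisfies $|\mu|\le 1$, so every eigenvalue $1-\gamma\mu$ of $W_{p,\pi}$ has modulus in $[1-\gamma,1+\gamma]$. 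Taking the product over the $S$ eigenvalues (with multiplicity) gives $(1-\gamma)^S\le|\text{det}(W_{p,\pi})|\le(1+\gamma)^S$, hence $k_\pi=|\text{det}(W_{p,\pi})|^{-1/S}\in[\frac{1}{1+\gamma},\frac{1}{1-\gamma}]$. This is precisely where the factors $1\pm\gamma$ in the statement originate, and I expect it to be the only genuinely nonroutine ingredient.

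For the outer inclusion (the containment of $\sR^{\text{OPT}}_\cM$ in the larger, asymmetric hypercube), I would take $r\in\sR^{\text{OPT}}_\cM$ and fix a deterministic optimal policy $\pi$, which always exists, so that $r\in\cR^{\text{OPT},\cS}_{\cM,\pi}$ and Proposition~\ref{prop: fs OPT explicit} gives $r(s,a)=V^*(s)-\gamma\sum_{s'}p(s'|s,a)V^*(s')+A^*(s,a)$ with $A^*(s,a)\le 0$ and $A^*(s,\pi(s))=0$. The membership conditions yield $\|V^*\|_\infty\le C_1^{\text{OPT}}k_\pi\le C_1^{\text{OPT}}/(1-\gamma)$, using the upper bound on $k_\pi$, and $\|A^*\|_\infty\le C_2^{\text{OPT}}$. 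The value part of $r(s,a)$ contributes at most $(1+\gamma)\|V^*\|_\infty\le\frac{1+\gamma}{1-\gamma}C_1^{\text{OPT}}$ in absolute value, while the advantage part, being nonpositive, can only lower $r(s,a)$. This asymmetry produces exactly the upper endpoint $+\frac{1+\gamma}{1-\gamma}C_1^{\text{OPT}}$ and the lower endpoint $-\frac{1+\gamma}{1-\gamma}C_1^{\text{OPT}}-C_2^{\text{OPT}}$.

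For the inner inclusion (the containment of the smaller symmetric hypercube in $\sR^{\text{OPT}}_\cM$), I would take $r$ with $\|r\|_\infty$ bounded by the stated half-width and verify both defining conditions for every optimal $\pi$. The standard geometric-series estimate gives $\|V^*\|_\infty\le\|r\|_\infty/(1-\gamma)$ (and analogously for $Q^*$), whence $|V^*(s)|\le C_1^{\text{OPT}}/(1+\gamma)\le C_1^{\text{OPT}}k_\pi$, now invoking the lower bound on $k_\pi$; this settles the value condition simultaneously for all optimal policies, since it depends on $\pi$ only through $k_\pi$. The advantage condition then follows by bounding $|A^*(s,a)|=|Q^*(s,a)-V^*(s)|$ through the same sup-norm estimates and the chosen half-width, recalling that $A^*$ is determined by $r$ alone and is independent of which optimal $\pi$ is selected.

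The main obstacle is the determinant/spectral estimate of the first step: relating the per-policy, implicitly defined quantity $k_\pi$ to the explicit constants $\frac{1}{1\pm\gamma}$ is what couples the two hypercubes to the defining bounds $C_1^{\text{OPT}},C_2^{\text{OPT}}$. Once $k_\pi\in[\frac{1}{1+\gamma},\frac{1}{1-\gamma}]$ and the sign property $A^*\le 0$ are in hand, both inclusions reduce to routine triangle-inequality bookkeeping on the three terms of the explicit representation of $r$.
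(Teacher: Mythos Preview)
Your proposal is correct and follows essentially the same approach as the paper: the eigenvalue argument on $W_{p,\pi}=I-\gamma P_{p,\pi}$ to sandwich $k_\pi\in[\tfrac{1}{1+\gamma},\tfrac{1}{1-\gamma}]$, the use of Proposition~\ref{prop: fs OPT explicit} plus a triangle inequality (and the nonpositivity of $A^*$) for the outer inclusion, and the geometric-series bound $\|V^\pi\|_\infty,\|A^\pi\|_\infty\le \|r\|_\infty/(1-\gamma)$ for the inner one. The only organizational difference is that the paper first introduces per-policy sets $\sR^{\text{OPT}}_{\cM,\pi}$, proves the two inclusions with $k_\pi$ left explicit, and only then substitutes the spectral bounds on $k_\pi$, whereas you establish the $k_\pi$ range upfront and argue the inclusions for $\sR^{\text{OPT}}_\cM$ directly; the ingredients are the same.
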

\begin{proof}
  %
  For any (stochastic) policy $\pi\in\Pi$, define the set:
\begin{align*}
  \mathscr{R}^{\text{OPT}}_{\cM,\pi}&\coloneqq
  \Bigc{r\in\RR^{SA}\;\Big|\;\pi\in\Pi^*(p,r)\wedge \forall s,a:\;
  |V^\pi(s;p,r)|\le C_1^{\text{OPT}} k_\pi\wedge 
  |A^\pi(s,a;p,r)|\le C_2^{\text{OPT}}
  },
\end{align*}
as the set of all the rewards that make policy $\pi$ optimal at every possible
state $s\in\cS$ and whose value and advantage functions are bounded respectively
by $C_1^{\text{OPT}} k_\pi$ and $C_2^{\text{OPT}}$.

We show that, for any \MDPr $\cM$ and policy $\pi\in\Pi$, it holds that:
\begin{align}\label{eq1}
  \mathscr{R}^{\text{OPT}}_{\cM,\pi}\supseteq
  [-(1-\gamma)\min\{C_1^{\text{OPT}}k_\pi,C_2^{\text{OPT}}\},
  +(1-\gamma)\min\{C_1^{\text{OPT}}k_\pi,C_2^{\text{OPT}}\}]^{SA}.
\end{align}
To this aim, observe that, for any $r\in[-r_{\max},+r_{\max}]^{SA}$ (with
$r_{\max}>0$), the value and advantage functions $V^\pi(s;p,r)$ and
$A^\pi(s,a;p,r)$ induced by policy $\pi$ in $\cM_r$ are bounded by
$\frac{r_{\max}}{1-\gamma}$ \cite{puterman1994markov}. Therefore, if we set
$r_{\max}= (1-\gamma)\min\{C_1^{\text{OPT}}k_\pi,C_2^{\text{OPT}}\}$, we have
that $|V^\pi(s;p,r)|\le C_1^{\text{OPT}} k_\pi$ and $|A^\pi(s,a;p,r)|\le
C_2^{\text{OPT}}$, and therefore Eq. \eqref{eq1} holds.

Now, let us show that, for any \MDPr $\cM$ and policy $\pi\in\Pi$, it holds
that:
\begin{align}\label{eq2}
  \mathscr{R}^{\text{OPT}}_{\cM,\pi}\subseteq
  [-(1+\gamma)C_1^{\text{OPT}} k_\pi-C_2^{\text{OPT}},
  +(1+\gamma)C_1^{\text{OPT}} k_\pi]^{SA}.
\end{align}
  This can be done by first observing that any $r\in
  \mathscr{R}^{\text{OPT}}_{\cM,\pi}$, since $\pi\in\Pi^*(p,r)$ by definition,
  can be rewritten through Proposition \ref{prop: fs OPT explicit} as:
  \begin{align*}
      r(s,a)=V(s)-\gamma\sum_{s'\in\cS}p(s'|s,a)V(s')-\begin{cases}
        0&\text{if }\pi(a|s)>0,\\
        A(s,a)&\text{otherwise},
      \end{cases}
  \end{align*}
  where $V\in\RR^{S}$ and $A\in(-\infty,0]^{S\times A}$ coincide with
  $V^*(\cdot;p,r)$ and $A^*(\cdot,\cdot;p,r)$ (see Proposition \ref{prop: fs OPT
  explicit}). However, by definition of $\mathscr{R}^{\text{OPT}}_{\cM,\pi}$ we
  know that $|V^\pi(s;p,r)|\le C_1^{\text{OPT}} k_\pi$ and $|A^\pi(s,a;p,r)|\le
  C_2^{\text{OPT}}$, therefore, by applying triangle's inequality, we have that:
  \begin{align*}
    \max\limits_{(s,a)\in\SA}|r(s,a)|&\le |V(s)|+\gamma\sum_{s'\in\cS}p(s'|s,a)|V(s')|+\begin{cases}
      0&\text{if }\pi(a|s)>0,\\
      |A(s,a)|&\text{otherwise},
    \end{cases}&\\
    &\le (1+\gamma)C_1^{\text{OPT}} k_\pi+C_2^{\text{OPT}}.
  \end{align*}
  Thus, Eq. \eqref{eq2} holds. 

  To conclude the proof, given the relationship between
  $\mathscr{R}^{\text{OPT}}_{\cM}$ and all the sets
  $\mathscr{R}^{\text{OPT}}_{\cM,\pi}$, it suffices to show that, for any \MDPr
  $\cM$ and policy $\pi\in\Pi$, it holds that:
  \begin{align*}
      k_\pi\in\Bigs{\frac{1}{1+\gamma},\frac{1}{1-\gamma}}.
  \end{align*}
  To do so, we write:
  \begin{align*}
      (k_\pi)^{-S}&\markref{(1)}{=}|\text{det}(W_{p,\pi})|\\
      &\markref{(2)}{=}|\text{det}(I-\gamma P^\pi)|\\
      &\markref{(3)}{=}\Biga{\prod\limits_{i=1}^S (1-\gamma\lambda_i)},
  \end{align*}
  where at (1) we use that the determinant of the inverse of a matrix coincides
  with the inverse of the determinant of the matrix
  \cite{bernstein2011matrixmathematics}, at (2) we use the definition of
  $W_{p,\pi}$, where $I\in\RR^{S\times S}$ is the identity matrix, and
  $P^\pi\in\RR^{S\times S}$ associates, to every pair of states $s,s'\in\cS$,
  the value $\sum_{a\in\cA}\pi(a|s) p(s'|s,a)$, at (3) we use the fact that the
  characteristic polynomial  satisfies $\text{det}(xI- P^\pi)=
  \prod_{i=1}^S (x-\lambda_i)$ for any $x\in\RR$, where $\lambda_i$ are the
  eigenvalues of $P^\pi$, and then we replaced $x=1/\gamma$.
  
  By noting that $P^\pi$ is a right stochastic matrix, we have that its
  eigenvalues satisfy $|\lambda_i|\le 1$ for all $i\in\dsb{S}$, and so:
  \begin{align*}
      \Bigr{\frac{1}{k_\pi}}^S\in\Bigs{(1-\gamma)^S,(1+\gamma)^S}
      \iff k_\pi\in\Bigs{\frac{1}{1+\gamma},\frac{1}{1-\gamma}}.
  \end{align*}
  This concludes the proof.
\end{proof}

\begin{restatable}{prop}{relhypercubeMCE}\label{prop: rel hypercube MCE}
    Let $\cM$ be any \MDPr. For any $C_1^{\text{MCE}},C_2^{\text{MCE}}>0$,
    it holds that:
    \begin{align*}
      \sR_\cM^{\text{MCE}}\subseteq
      [-(1+\gamma)C_1^{\text{MCE}}-C_2^{\text{MCE}},+(1+\gamma)C_1^{\text{MCE}}]^{SA}.
    \end{align*}
    Moreover, define $k_1\coloneqq (1-\gamma)C_1^{\text{MCE}}-\lambda\log A$ and
    $k_2\coloneqq (1-\gamma)C_2^{\text{MCE}}/2-\lambda/2\log A$. Then, if
    $\min\{k_1,k_2\}>0$, we also have that:
    \begin{align*}
      \sR_\cM^{\text{MCE}}\supseteq[-\min\{k_1,k_2\},+\min\{k_1,k_2\}]^{SA}.
    \end{align*}
\end{restatable}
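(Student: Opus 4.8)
The plan is to exploit the soft Bellman structure, which lets us write \emph{any} reward via its own soft-optimal quantities as $r(s,a)=A^*_\lambda(s,a;p,r)+V^*_\lambda(s;p,r)-\gamma\sum_{s'}p(s'|s,a)V^*_\lambda(s';p,r)$, together with two standard facts: (i) the soft advantage is non-positive, $A^*_\lambda(s,a;p,r)\le 0$, since the soft Bellman optimality equation $V^*_\lambda(s;p,r)=\lambda\log\sum_{a'}e^{Q^*_\lambda(s,a';p,r)/\lambda}\ge Q^*_\lambda(s,a;p,r)$; and (ii) the soft value is squeezed between the best hard value and the hard value plus a discounted entropy bonus. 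Both inclusions then follow by substituting the defining bounds $|V^*_\lambda|\le C_1^{\text{MCE}}$, $|A^*_\lambda|\le C_2^{\text{MCE}}$ (for the outer inclusion) or the box constraint on $r$ (for the inner one) into these relations.

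For the outer inclusion I would start from any $r\in\sR^{\text{MCE}}_\cM$ and apply the decomposition coordinatewise. Using $A^*_\lambda(s,a;p,r)\le 0$ together with $|V^*_\lambda(s;p,r)|\le C_1^{\text{MCE}}$ gives $r(s,a)\le C_1^{\text{MCE}}+\gamma C_1^{\text{MCE}}=(1+\gamma)C_1^{\text{MCE}}$, while combining $A^*_\lambda(s,a;p,r)\ge -C_2^{\text{MCE}}$ with the same value bound gives $r(s,a)\ge -(1+\gamma)C_1^{\text{MCE}}-C_2^{\text{MCE}}$. This is exactly the stated hypercube, and it needs only $C_1^{\text{MCE}},C_2^{\text{MCE}}>0$.

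For the inner inclusion I would fix $r$ with $\|r\|_\infty\le r_{\max}\coloneqq\min\{k_1,k_2\}$ and verify the two defining constraints. First, for the value: evaluating any deterministic policy (whose entropy term vanishes) shows $V^*_\lambda(s;p,r)\ge -r_{\max}/(1-\gamma)$, while bounding the reward part by $r_{\max}/(1-\gamma)$ and the per-step entropy by $\lambda\log A$ shows $V^*_\lambda(s;p,r)\le (r_{\max}+\lambda\log A)/(1-\gamma)$; hence $|V^*_\lambda|\le C_1^{\text{MCE}}$ precisely when $r_{\max}\le k_1$. Second, for the advantage, I would lower-bound the decomposition by $A^*_\lambda(s,a;p,r)\ge -r_{\max}+\gamma\min_{s'}V^*_\lambda(s';p,r)-V^*_\lambda(s;p,r)$ and substitute the two-sided value bounds with opposite signs, so that the $\gamma$-dependence cancels and the estimate collapses to $A^*_\lambda(s,a;p,r)\ge -(2r_{\max}+\lambda\log A)/(1-\gamma)$; combined with $A^*_\lambda\le 0$ this yields $|A^*_\lambda|\le C_2^{\text{MCE}}$ exactly when $r_{\max}\le k_2$. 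Since $r_{\max}=\min\{k_1,k_2\}$ meets both requirements, $r\in\sR^{\text{MCE}}_\cM$, and the hypothesis $\min\{k_1,k_2\}>0$ guarantees the box is nonempty.

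The main obstacle is the advantage bound in the inner inclusion: unlike the value, the advantage mixes $V^*_\lambda$ evaluated at $s$ and at its successors, so a naive triangle inequality would leave an extra span term and fail to match $k_2$. The key is to use the \emph{lower} value bound at the successors and the \emph{upper} value bound at $s$, which makes the $\gamma$-weighted terms cancel and produces the tight constant $(2r_{\max}+\lambda\log A)/(1-\gamma)$ against which $k_2$ is calibrated. The remaining care is purely in tracking that the entropy contribution $\lambda\log A$ enters the value and advantage bounds with the right multiplicity ($1$ and $1$, respectively), which is what distinguishes the thresholds $k_1$ and $k_2$.
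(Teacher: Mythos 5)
Your proof is correct and follows essentially the same route as ours: rewrite $r$ via the soft Bellman decomposition $r(s,a)=V^*_\lambda(s;p,r)-\gamma\sum_{s'}p(s'|s,a)V^*_\lambda(s';p,r)+A^*_\lambda(s,a;p,r)$ (our Proposition \ref{prop: fs MCE explicit}), bound coordinatewise for the outer inclusion, and for the inner inclusion verify the two defining constraints for any $r$ in the box, arriving at the same thresholds $\frac{r_{\max}+\lambda\log A}{1-\gamma}$ and $\frac{2r_{\max}+\lambda\log A}{1-\gamma}$ that calibrate $k_1$ and $k_2$. The only differences are cosmetic: you obtain the lower value bound $V^*_\lambda\ge -r_{\max}/(1-\gamma)$ by comparing against a deterministic policy rather than by the log-sum-exp inequality plus unrolling the recursion on $Q^*_\lambda$, and you get the advantage bound by substituting the two value bounds with opposite signs rather than by a separate recursion; both yield identical constants. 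You are in fact slightly more explicit than our write-up on one point: the asymmetry of the outer hypercube (no $+C_2^{\text{MCE}}$ on the upper end) genuinely requires $A^*_\lambda\le 0$, which you state and justify, whereas we only invoke the triangle inequality.
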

\begin{proof}
  For the first part of the statement, for any $r\in \sR_\cM^{\text{MCE}}$ we
  can apply Proposition \ref{prop: fs MCE explicit} to rewrite it as:
  \begin{align*}
    r(s,a)&=V(s)-\gamma\sum_{s'}p(s'|s,a)V(s')+\lambda \log \pi(a|s)\\
    &=V^*_\lambda(s;p,r)-\gamma\sum_{s'}p(s'|s,a)V^*_\lambda(s';p,r)+A^*_\lambda(s,a;p,r).
  \end{align*}
  The result follows by applying triangle's inequality and noting that
  $V^*_\lambda(s;p,r)$ and $A^*_\lambda(s,a;p,r)$ are bounded since $r\in
  \sR_\cM^{\text{MCE}}$.

  For the second part of the proposition, let $r\in[-r_{\max},+r_{\max}]^{SA}$
  (for some $r_{\max}>0$). Then, we can show that, for every $s,a$:
  \begin{align*}
      &|V^*_\lambda(s;p,r)|\le \frac{r_{\max}+\lambda\log A}{1-\gamma},\\
      &|A^*_\lambda(s,a;p,r)|\le \frac{2r_{\max}+\lambda\log A}{1-\gamma}.
  \end{align*}
  To do so, for any $s\in\cS$, we can write (see also
  \cite{cao2021identifiability} and \cite{haarnoja2017rldeepenergypolicies}):
  \begin{align*}
    V^*_\lambda(s;p,r)&=\lambda \log \sum_{a\in\cA} e^{\frac{1}{\lambda}Q^*_\lambda(s,a;p,r)}\\
    &\markref{(1)}{\le} \max_{a\in\cA} Q^*_\lambda (s,a;p,r)+\lambda\log A\\
    &= \max_{a\in\cA} r(s,a)+\gamma\sum_{s'\in\cS}p(s'|s,a)V^*_\lambda(s';p,r)+\lambda\log A\\
    &\markref{(2)}{\le} r_{\max}+\gamma\max_{s'\in\cS}V^*_\lambda(s';p,r)+\lambda\log A\\
    &\markref{(3)}{\le} r_{\max}+\lambda\log A+\gamma(\popblue{r_{\max}+\lambda\log A+\gamma\max_{s'\in\cS}V^*_\lambda(s';p,r)})\\
    &\markref{(4)}{\le}\frac{r_{\max}+\lambda\log A}{1-\gamma},
  \end{align*}
  where at (1) we used the log-sum-exp inequality: $\log\sum_{i\in\dsb{n}}
  e^{x_i}\le \max_i x_i + \log n$, at (2) we use that $r$ is in the hypercube by
  hypothesis, at (3) we unfold the recursion, obtaining the expression in (4)
  since it is a geometric series.

  Regarding the advantage function, we begin by lower bounding the soft
  $Q^*_\lambda$:
  \begin{align*}
    Q^*_\lambda(s,a;p,r)&=r(s,a)+\gamma\sum_{s'\in\cS}p(s'|s,a)V^*_\lambda(s';p,r)\\
    &\markref{(5)}{\ge}-r_{\max}+\gamma\min_{s'\in\cS}V^*_\lambda(s';p,r)\\
    &=-r_{\max}+\gamma\min_{s'\in\cS}\lambda \log \sum_{a'\in\cA} e^{\frac{1}{\lambda}Q^*_\lambda(s',a';p,r)}\\
    &\markref{(6)}{\ge}-r_{\max}+\gamma\min_{s'\in\cS}\max_{a'\in\cA}Q^*_\lambda(s',a';p,r)\\
    &\ge-r_{\max}+\gamma(\popblue{-r_{\max}+\gamma\min_{s'\in\cS}\max_{a'\in\cA}Q^*_\lambda(s',a';p,r)})\\
    &\markref{(7)}{=}-\frac{r_{\max}}{1-\gamma},
  \end{align*}
  where at (5) we use that $r$ is in the hypercube and bound the expectation
  with the minimum, at (6) we used the log-sum-exp inequality: $\log\sum_i
  e^{x_i}\ge \max_i x_i$, and then we have unfolded the recursion, obtaining
  the expression in (7).

  Then, we can bound the soft advantage combining these bounds:
  \begin{align*}
    A^*_\lambda(s,a;p,r)&\coloneqq Q^*_\lambda(s,a;p,r)-V^*_\lambda(s;p,r)\\
    &\ge -\frac{r_{\max}}{1-\gamma} - \frac{r_{\max}+\lambda\log A}{1-\gamma}\\
    &=- \frac{2r_{\max}+\lambda\log A}{1-\gamma}.
  \end{align*}
  Therefore, a sufficient condition for the reward $r$ to belong to set
  $\sR_\cM^{\text{MCE}}$ is that:
  \begin{align*}
    \begin{cases}
      \frac{r_{\max}+\lambda\log A}{1-\gamma}\le C_1^{\text{MCE}},\\
      \frac{2r_{\max}+\lambda\log A}{1-\gamma}\le C_2^{\text{MCE}},
    \end{cases}\iff 
    \begin{cases}
      r_{\max}\le (1-\gamma)C_1^{\text{MCE}}-\lambda\log A,\\
      r_{\max}\le (1-\gamma)C_2^{\text{MCE}}/2-\lambda/2\log A.
    \end{cases}
  \end{align*}
  By taking the minimum and imposing that $r_{\max}\ge 0$, we get the result.
\end{proof}

\begin{restatable}{prop}{relhypercubeBIRL}\label{prop: rel hypercube BIRL}
  Let $\cM$ be any \MDPr. For any $C_1^{\text{BIRL}},C_2^{\text{BIRL}}>0$,
  it holds that:
  \begin{align*}
    \sR_\cM^{\text{BIRL}}\subseteq
    [-(1+\gamma)C_1^{\text{BIRL}}-C_2^{\text{BIRL}},+(1+\gamma)C_1^{\text{BIRL}}]^{SA},
  \end{align*}
  and also that:
  \begin{align*}
    \sR_\cM^{\text{BIRL}}\supseteq[-(1-\gamma)\min\{C_1^{\text{BIRL}},C_2^{\text{BIRL}}\},
    +(1-\gamma)\min\{C_1^{\text{BIRL}},C_2^{\text{BIRL}}\}]^{SA}.
  \end{align*}
\end{restatable}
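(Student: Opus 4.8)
The plan is to mirror the proof of Proposition~\ref{prop: rel hypercube OPT}, but working directly with the \emph{optimal} value and advantage functions $(V^*(\cdot;p,r),A^*(\cdot,\cdot;p,r))$ rather than a policy-specific pair, since $\sR_\cM^{\text{BIRL}}$ is defined precisely by the bounds $|V^*(s;p,r)|\le C_1^{\text{BIRL}}$ and $|A^*(s,a;p,r)|\le C_2^{\text{BIRL}}$. I would establish the two inclusions separately.

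For the outer inclusion, I would take any $r\in\sR_\cM^{\text{BIRL}}$ and decompose it via the Bellman optimality equation. Writing $Q^*(s,a;p,r)=r(s,a)+\gamma\sum_{s'}p(s'|s,a)V^*(s';p,r)$ together with $A^*(s,a;p,r)=Q^*(s,a;p,r)-V^*(s;p,r)$ and rearranging gives
\[
  r(s,a)=V^*(s;p,r)-\gamma\sum_{s'\in\cS}p(s'|s,a)V^*(s';p,r)+A^*(s,a;p,r).
\]
The key structural fact is that the optimal advantage is \emph{one-sided}: $A^*(s,a;p,r)\le 0$ always, while membership in $\sR_\cM^{\text{BIRL}}$ adds $A^*(s,a;p,r)\ge -C_2^{\text{BIRL}}$. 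Using $|V^*(\cdot;p,r)|\le C_1^{\text{BIRL}}$ and the triangle inequality, the advantage term can only decrease $r(s,a)$, so dropping it gives the upper endpoint $r(s,a)\le(1+\gamma)C_1^{\text{BIRL}}$, while retaining it gives the lower endpoint $r(s,a)\ge -(1+\gamma)C_1^{\text{BIRL}}-C_2^{\text{BIRL}}$. This reproduces the asymmetric interval in the statement and is the same triangle-inequality argument used for OPT and MCE.

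For the inner inclusion, I would fix any $r$ with $\|r\|_\infty\le r_{\max}:=(1-\gamma)\min\{C_1^{\text{BIRL}},C_2^{\text{BIRL}}\}$ and verify both defining constraints. The value constraint follows immediately from the standard bound $|V^*(s;p,r)|\le r_{\max}/(1-\gamma)\le\min\{C_1^{\text{BIRL}},C_2^{\text{BIRL}}\}\le C_1^{\text{BIRL}}$. I expect the advantage constraint to be the main obstacle: the crude split $|A^*(s,a;p,r)|=|Q^*-V^*|\le|Q^*|+|V^*|\le 2r_{\max}/(1-\gamma)$ loses a factor of two and would only certify $|A^*|\le 2C_2^{\text{BIRL}}$, which is too weak for the claimed radius. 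To obtain $|A^*(s,a;p,r)|\le r_{\max}/(1-\gamma)\le C_2^{\text{BIRL}}$ one must instead invoke the sharper advantage bound used in the proof of Proposition~\ref{prop: rel hypercube OPT}, namely that a reward in $[-r_{\max},+r_{\max}]^{SA}$ induces an optimal advantage bounded in magnitude by $r_{\max}/(1-\gamma)$; with this in hand, imposing $r_{\max}/(1-\gamma)\le\min\{C_1^{\text{BIRL}},C_2^{\text{BIRL}}\}$ closes both constraints at once and yields the stated hypercube.
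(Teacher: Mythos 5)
Your proposal follows the paper's proof essentially verbatim: the outer inclusion uses the same decomposition $r(s,a)=V^*(s;p,r)-\gamma\sum_{s'}p(s'|s,a)V^*(s';p,r)+A^*(s,a;p,r)$ from Proposition~\ref{prop: fs BIRL explicit} (your explicit appeal to the one-sidedness $A^*\le 0$ is exactly what makes the interval asymmetric, a point the paper's terse ``triangle inequality'' step leaves implicit), and the inner inclusion rests on the same bound $|V^*(s;p,r)|,|A^*(s,a;p,r)|\le r_{\max}/(1-\gamma)$ that the paper imports from the proof of Proposition~\ref{prop: rel hypercube OPT}. Your observation that the crude split $|A^*|\le|Q^*|+|V^*|$ would lose a factor of two and that the sharper advantage bound is the load-bearing step is accurate, and you resolve it the same way the paper does, so there is no substantive difference between the two arguments.
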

\begin{proof}
  For the first part, observe that any $r\in\sR_\cM^{\text{BIRL}}$ can be
  rewritten using Proposition \ref{prop: fs BIRL explicit} as:
  \begin{align*}
    r(s,a)&=V(s)-\gamma\sum_{s'}p(s'|s,a)V(s')+\beta 
    \log \frac{\pi(a|s)}{\max_{a'\in\cA}\pi(a'|s)}\\
    &=V^*(s;p,r)-\gamma\sum_{s'}p(s'|s,a)V^*(s';p,r)+A^*(s,a;p,r).
  \end{align*}
  Then, the result follows by applying triangle's inequality and the bounds on
  $V^*(s;p,r)$ and $A^*(s,a;p,r)$ in the definition of $\sR_\cM^{\text{BIRL}}$.

  For the second part, observe that, as in the proof of Proposition \ref{prop:
  rel hypercube OPT}, if $r\in[-r_{\max},+r_{\max}]$ (with $r_{\max}>0$), then
  its optimal value and advantage functions are bounded by
  $\frac{r_{\max}}{1-\gamma}$. By asking that this quantity is smaller than both
  $C_1^{\text{BIRL}}$ and $C_2^{\text{BIRL}}$, we get the result.
\end{proof}

We are now ready to prove Proposition \ref{prop: rel hypercube}.

\relhypercube*
\begin{proof}
  Apply Proposition \ref{prop: rel hypercube OPT},\ref{prop: rel hypercube MCE}
  and \ref{prop: rel hypercube BIRL} by choosing the values that guarantee that
  the sets in the lower inclusions contain the unit hypercube $\fR$.
\end{proof}

\subsection{The New Priors are Unbiased}
\label{apx: new priors unbiased}

Before proving the results, we need the following proposition.

\begin{prop}\label{prop: rel T W}
  Let $\cM=\tuple{\cS,\cA,s_0,p,\gamma}$ be an \MDPr and let $\pi$ be any
  deterministic policy. Then, it holds that:
  \begin{align*}
    |\text{det}(T_{p,\pi})|=|\text{det}(W_{p,\pi})|.
  \end{align*}
\end{prop}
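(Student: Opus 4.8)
The plan is to exhibit an ordering of the input and output coordinates of the linear map $T_{p,\pi}$ that renders its matrix block triangular, with $W_{p,\pi}$ as one diagonal block and an identity matrix as the other. Recall that $T_{p,\pi}$ takes an input pair $(V,A)\in\RR^S\times\RR^{S\times(A-1)}$ — the second argument having one entry $A(s,a)$ for each state $s$ and each action $a\neq\pi(s)$, hence $S(A-1)$ entries — and returns a reward $r\in\RR^{S\times A}\cong\RR^{SA}$. Since input and output spaces both have dimension $S+S(A-1)=SA$, the matrix of $T_{p,\pi}$ is square, so its determinant is well defined.

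First I would partition the $SA$ output coordinates $r(s,a)$ into two groups: the ``on-policy'' coordinates $r(s,\pi(s))$ (one per state, $S$ in total) and the ``off-policy'' coordinates $r(s,a)$ with $a\neq\pi(s)$ ($S(A-1)$ in total). Ordering the inputs as $(V,A)$ and the outputs as (on-policy, off-policy), the matrix of $T_{p,\pi}$ acquires the block-triangular form
\begin{align*}
  T_{p,\pi}=\begin{pmatrix} W_{p,\pi} & 0\\ B & I\end{pmatrix},
\end{align*}
where $B$ is some block whose value will turn out to be irrelevant. The three facts I must verify are: (i) the on-policy outputs depend only on $V$, giving the zero block in the upper right; (ii) restricted to the on-policy outputs, the dependence on $V$ is exactly $W_{p,\pi}$; and (iii) the off-policy outputs depend on $A$ through the identity, giving the bottom-right block $I$.

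Claims (i) and (iii) are immediate from the defining Eq. \eqref{eq: def T p pi}: the additive $A(s,a)$ term appears only when $a\neq\pi(s)$ (so on-policy outputs are $A$-independent), and in each off-policy output $r(s,a)$ the coefficient of $A(s,a)$ is $1$ while $A(s,a)$ enters no other coordinate (so the off-policy/$A$ block is the $S(A-1)\times S(A-1)$ identity, once both axes are ordered by $(s,a)$ identically). The crux is claim (ii): for $a=\pi(s)$, Eq. \eqref{eq: def T p pi} reads $r(s,\pi(s))=V(s)-\gamma\sum_{s'}p(s'|s,\pi(s))V(s')$, and I would check this equals $(W_{p,\pi}V)(s)$ by expanding $W_{p,\pi}(s,s')=\indic{s'=s}-\gamma\sum_a\pi(a|s)p(s'|s,a)$ and using that $\pi$ is \emph{deterministic}, so $\pi(a|s)=\indic{a=\pi(s)}$ collapses the sum over actions to the single term $a=\pi(s)$. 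This is the one place where the determinism hypothesis enters, and I expect it to be the only step requiring any care.

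Finally, the determinant of a block-triangular matrix equals the product of the determinants of its diagonal blocks, independently of $B$, so $\det(T_{p,\pi})=\det(W_{p,\pi})\cdot\det(I)=\det(W_{p,\pi})$; taking absolute values yields the claim. The remaining effort is purely bookkeeping of the coordinate ordering, with everything else read directly off Eq. \eqref{eq: def T p pi}.
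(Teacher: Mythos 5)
Your proof is correct and is essentially the paper's argument in cleaner packaging: the paper writes out $T_{p,\pi}$ explicitly and iteratively Laplace-expands along the $A$-columns (each a standard basis vector supported on an off-policy row), which is exactly the mechanism your block-triangular decomposition $\bigl(\begin{smallmatrix} W_{p,\pi} & 0\\ B & I\end{smallmatrix}\bigr)$ captures in one step. Your identification of the determinism hypothesis as the point where the on-policy block collapses to $W_{p,\pi}$, and the observation that row/column reordering only affects the sign (absorbed by the absolute value, as in the paper's $(-1)^x$ factor), are both accurate.
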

\begin{proof} 
  Assume that $\pi$ prescribes action $a_1$ in every possible state. Then,
  $T_{p,\pi}$ can be written as:
  \begin{align*}
    \setcounter{MaxMatrixCols}{14}
    T_{p,\pi} = \scalebox{0.7}{$  \displaystyle\begin{bmatrix}
      1-\gamma p(s_1|s_1,a_1) &
      -\gamma p(s_2|s_1,a_1) & \dotsc &
      -\gamma p(s_S|s_1,a_1)
      & 0 & \dotsc & 0 & 0 & \dotsc & 0 & \dotsc &
      0 & \dotsc & 0\\
      1-\gamma p(s_1|s_1,a_2) &
      -\gamma p(s_2|s_1,a_2) & \dotsc &
      -\gamma p(s_S|s_1,a_2)& 1 & \dotsc & 0 & 0 & \dotsc & 0 & \dotsc &
      0 & \dotsc & 0\\
      \dotsc & \dotsc & \dotsc & \dotsc& \dotsc & \dotsc & \dotsc & \dotsc & \dotsc & \dotsc & \dotsc &
      \dotsc & \dotsc & \dotsc\\
      1-\gamma p(s_1|s_1,a_A) &
      -\gamma p(s_2|s_1,a_A) & \dotsc &
      -\gamma p(s_S|s_1,a_A)& 0 & \dotsc & 1 & 0 & \dotsc & 0 & \dotsc &
      0 & \dotsc & 0\\
      -\gamma p(s_1|s_2,a_1) &
      1-\gamma p(s_2|s_2,a_1) & \dotsc &
      -\gamma p(s_S|s_2,a_1)& 0 & \dotsc & 0 & 0 & \dotsc & 0 & \dotsc &
      0 & \dotsc & 0\\
      -\gamma p(s_1|s_2,a_2) &
      1-\gamma p(s_2|s_2,a_2) & \dotsc &
      -\gamma p(s_S|s_2,a_2)& 0 & \dotsc & 0 & 1 & \dotsc & 0 & \dotsc &
      0 & \dotsc & 0\\
      \dotsc & \dotsc & \dotsc & \dotsc& \dotsc & \dotsc & \dotsc & \dotsc & \dotsc & \dotsc & \dotsc &
      \dotsc & \dotsc & \dotsc\\
      -\gamma p(s_1|s_2,a_A) &
      1-\gamma p(s_2|s_2,a_A) & \dotsc &
      -\gamma p(s_S|s_2,a_A)& 0 & \dotsc & 0 & 0 & \dotsc & 1 & \dotsc &
      0 & \dotsc & 0\\
      -\gamma p(s_1|s_S,a_1) &
      -\gamma p(s_2|s_S,a_1) & \dotsc &
      1-\gamma p(s_S|s_S,a_1)& 0 & \dotsc & 0 & 0 & \dotsc & 0 & \dotsc &
      0 & \dotsc & 0\\
      -\gamma p(s_1|s_S,a_2) &
      -\gamma p(s_2|s_S,a_2) & \dotsc &
      1-\gamma p(s_S|s_S,a_2)& 0 & \dotsc & 0 & 0 & \dotsc & 0 & \dotsc &
      1 & \dotsc & 0\\
      \dotsc & \dotsc & \dotsc & \dotsc& \dotsc & \dotsc & \dotsc & \dotsc & \dotsc & \dotsc & \dotsc &
      \dotsc & \dotsc & \dotsc\\
      -\gamma p(s_1|s_S,a_A) &
      -\gamma p(s_2|s_S,a_A) & \dotsc &
      1-\gamma p(s_S|s_S,a_A)& 0 & \dotsc & 0 & 0 & \dotsc & 0 & \dotsc &
      0 & \dotsc & 1\\
      \end{bmatrix}$},
  \end{align*}
  where on the right part of the matrix we do not have any 1 in all and only the
  rows corresponding to the action $a_1$, played by $\pi$.
  To compute $\text{det}(T_{p,\pi})$, let us apply the Laplace's formula (see
  Theorem 1.5 of \cite{hefferon2009linearalgebra}) by iteratively expanding by
  cofactors from the last column $SA$ up to column $S+1$ included. Clearly,
  since these columns are all zeros with exactly one 1, then after these
  iterations we obtain that, for some $x\in\{0,1\}$:
  \begin{align*}
    \text{det}(T_{p,\pi})=(-1)^x\text{det}\begin{bmatrix}
      1-\gamma p(s_1|s_1,a_1) &
      -\gamma p(s_2|s_1,a_1) & \dotsc &
      -\gamma p(s_S|s_1,a_1)\\
      -\gamma p(s_1|s_2,a_1) &
      1-\gamma p(s_2|s_2,a_1) & \dotsc &
      -\gamma p(s_S|s_2,a_1)\\
      \dotsc & \dotsc & \dotsc & \dotsc\\
      -\gamma p(s_1|s_S,a_1) &
      -\gamma p(s_2|s_S,a_1) & \dotsc &
      1-\gamma p(s_S|s_S,a_1)\\
      \end{bmatrix},
  \end{align*}
  since all the rows corresponding to the actions non prescribed by $\pi$ are
  neglected. However, this matrix is exactly $W_{p,\pi}$, therefore, the result
  follows when $\pi(s)=a_1$ in every state.
  
  Generalizing to arbitrary deterministic policies $\pi$ is immediate once
  realizing that $W_{p,\pi}$ contains all and only the rows that, in
  $T_{p,\pi}$, are followed by only zeros.
\end{proof}

\allsamevolOPT*
\begin{proof}
For any deterministic policy $\pi$ in any \MDPr
$\cM$, we can write:
\begin{align*}
  \text{vol}(\sR^{\text{OPT}}_\cM\cap
  \cR^{\text{OPT},\cS}_{\cM,\pi})
  &\markref{(1)}{=}
  \text{vol}(\sR^{\text{OPT}}_\cM\cap
  \cR^{\text{OPT},\cS}_{\cM,\pi}\cap\popblue{\sR^{1}_p})
  +\text{vol}(\sR^{\text{OPT}}_\cM\cap
  \cR^{\text{OPT},\cS}_{\cM,\pi}\cap\popblue{\sR^{>1}_p})\\
  &\markref{(2)}{=}
  \text{vol}(\sR^{\text{OPT}}_\cM\cap
  \cR^{\text{OPT},\cS}_{\cM,\pi}\cap\sR^{1}_p)\\
  &\markref{(3)}{=}
  |\text{det}(T_{p,\pi})|\text{vol}(\cV_{\pi})\\
  &\markref{(4)}{=}
  |\popblue{\text{det}(W_{p,\pi})}|\text{vol}(\cV_{\pi})\\
  &\markref{(5)}{=}
  \popblue{k_\pi^{-S}}\text{vol}(\cV_{\pi}),
\end{align*}
where at (1) we use the additivity of the integral (see Eq. \eqref{eq: def R
greater 1} for a definition of $\sR^{1}_p$ and $\sR^{>1}_p$), at (2) we realize
that the set $\sR^{\text{OPT}}_\cM\cap
\cR^{\text{OPT},\cS}_{\cM,\pi}\cap\sR^{>1}_p$, containing all the rewards that
induce at least two optimal policies from every state in $\cM$ and with bounded
value and advantage functions, has zero $SA$-dimensional volume since the
optimality of another policy $\pi'$ beyond $\pi$ represents an additional
constraint (the considered volume is $SA$-dimensional because of Proposition
\ref{prop: fs OPT explicit}, where it is shown that the OPT feasible set has
$SA$ degrees of freedom). At (3) we use the change of variables formula in
Proposition 6.1.3 of \cite{cohn2013measure}, after having observed that
matrix\footnote{The definition of $T_{p,\pi}$ is provided in Eq. \eqref{eq: def
T p pi}.} $T_{p,\pi}$ (along with Proposition \ref{prop: fs OPT explicit})
permits to reparametrize set $\sR^{\text{OPT}}_\cM\cap
\cR^{\text{OPT},\cS}_{\cM,\pi}\cap\sR^{1}_p$ using set $\cV_\pi$ as:
  \begin{align*}
    \sR^{\text{OPT}}_\cM\cap
    \cR^{\text{OPT},\cS}_{\cM,\pi}\cap\sR^{1}_p=\Bigc{r\in\RR^{SA}\,|\,
    \exists (V,A)\in\cV_\pi:\,
    r=T_{p,\pi}(V,A)},
  \end{align*}
  where 
  \begin{align*}
    \cV_\pi\coloneqq\Bigc{V\in[-C_1^{\text{OPT}}k_\pi,+C_1^{\text{OPT}}k_\pi]^S,
    A\in [-C_2^{\text{OPT}},0)^{S\times(A-1)}}.
  \end{align*}
  At (4) we use Proposition \ref{prop: rel T W} and at (5) we use the definition
  of $k_\pi$.

  We can compute the volume of $\cV_{\pi}$ as:
    \begin{align*}
      \text{vol}(\cV_{\pi})&\coloneqq\int_{\cV^{\pi}}dVdA\\
      &=\int_{[-C_1^{\text{OPT}}k_\pi,+C_1^{\text{OPT}}k_\pi]^S}dV \cdot
      \int_{[-C_2^{\text{OPT}},0)^{S\times(A-1)}}dA\\
      &=\prod\limits_{s\in\cS}\int_{[-C_1^{\text{OPT}}k_\pi,+C_1^{\text{OPT}}k_\pi]}dV(s)
      \cdot \prod\limits_{s\in\cS,a\in\cA\setminus\{\pi(s)\}}
      \int_{[-C_2^{\text{OPT}},0)^{S\times(A-1)}}dA(s,a)\\
      &=2^Sk_\pi^S(C_1^{\text{OPT}})^S (C_2^{\text{OPT}})^{S(A-1)}.
    \end{align*}
    Thus, replacing into the previous expression, we obtain:
    \begin{align*}
      \text{vol}(\sR^{\text{OPT}}_\cM\cap
      \cR^{\text{OPT},\cS}_{\cM,\pi})&=2^S(C_1^{\text{OPT}})^S (C_2^{\text{OPT}})^{S(A-1)}.
    \end{align*}
    Since this quantity is constant for all the deterministic policies $\pi$,
    this concludes the proof.
  \end{proof}

\allsamevolMCE*
\begin{proof}
  For any \MDPr $\cM$, observe that, for any stochastic policy $\pi$, if it does
  not hold that:
  \begin{align*}
  \min_{(s,a)\in\SA}A^\pi_\lambda(s,a;p,r)\ge -C_2^{\text{MCE}},
  \end{align*}
  then the intersection $\sR^{\text{MCE}}_\cM\cap
  \cR^{\text{MCE},\cS}_{\cM,\pi}$ is empty.
  Since, by hypothesis, $\min_{(s,a)\in\SA}\pi(a|s)\ge\pi_{\min}$, and since we
  know that \cite{cao2021identifiability,haarnoja2017rldeepenergypolicies}:
  \begin{align*}
    A^*_\lambda(s,a;p,r)=\lambda\log\pi(a|s),
  \end{align*}
  then the above condition can be imposed to hold as long as:
  \begin{align*}
    C_2^{\text{MCE}} \ge \lambda \log\frac{1}{\pi_{\min}}.
  \end{align*}  
  For this reason, let $\pi$ be a
  stochastic policy satisfying that
  $C_2^{\text{MCE}}\ge\lambda\log\frac{1}{\pi_{\min}}$. Define
  $\eta(s,a)=\lambda\log\pi(a|s)$ for all $s,a$ for conveniency.
  We can write:
  \begin{align*}
    \text{vol}(\sR^{\text{MCE}}_\cM\cap
    \cR^{\text{MCE},\cS}_{\cM,\pi})
    &\markref{(1)}{=}
    \sqrt{|\text{det}(J_{p,\eta}^\intercal J_{p,\eta})|}\text{vol}_{S}(\cV_{\pi}')\\
    &\markref{(2)}{=}\sqrt{|\text{det}(J_{p,\eta}^\intercal J_{p,\eta})|}
    \popblue{2^S (C_1^{\text{MCE}})^S},
  \end{align*}
  where at (1) we use the change of variables formula in
  \cite{boothby1986manifolds,stoll2007integration}, after having observed that
  the \emph{rectangular}\footnote{The set $\sR^{\text{MCE}}_\cM\cap
  \cR^{\text{MCE},\cS}_{\cM,\pi}$ is $S$-dimensional but is embedded into
  $\RR^{SA}$, so we cannot apply Proposition 6.1.3 of \cite{cohn2013measure}.}
  matrix $U_{p,\eta}$, defined in Eq. \eqref{eq: def U eta}, along with
  Proposition \ref{prop: fs MCE explicit}, permits to reparametrize the set
  $\sR^{\text{MCE}}_\cM\cap \cR^{\text{MCE},\cS}_{\cM,\pi}$ using set
  $\cV_{\pi}'$ as:
    \begin{align*}
      \sR^{\text{MCE}}_\cM\cap
    \cR^{\text{MCE},\cS}_{\cM,\pi}=\Bigc{r\in\RR^{SA}\,|\,\exists V\in\cV_\pi':\,
    r=U_{p,\eta}(V)},
    \end{align*}
    where 
    \begin{align*}
      \cV_\pi'\coloneqq[-C_1^{\text{MCE}},+C_1^{\text{MCE}}]^S.
    \end{align*}
    Specifically, we use symbol $J_{p,\eta}(V)\in\RR^{SA\times S}$ to denote the
    Jacobian of the operator $U_{p,\eta}$ evaluated in any $V\in\RR^S$; however,
    since $J_{p,\eta}(V)$ is a constant function of $V$ (as shown below), then
    we use $J_{p,\eta}$. At (2) we compute the volume of set $\cV_\pi'$, which
    is an hypercube.

    Observe that $2^S (C_1^{\text{MCE}})^S$ does not depend on the policy $\pi$.
    The proof is concluded after having observed that the Jacobian $J_{p,\eta}$
    is a constant function. Indeed, for any $V\in\RR^S$, we have:
    \begin{align*}
      J_{p,\eta}(V)&\coloneqq\begin{bmatrix}
        \frac{\partial [U_{p,\eta}(V)]_{s_1,a_1}}{\partial V(s_1)} &
        \frac{\partial [U_{p,\eta}(V)]_{s_1,a_1}}{\partial V(s_2)} & \dotsc &
        \frac{\partial [U_{p,\eta}(V)]_{s_1,a_1}}{\partial V(s_S)}\\
        \frac{\partial [U_{p,\eta}(V)]_{s_1,a_2}}{\partial V(s_1)} &
        \frac{\partial [U_{p,\eta}(V)]_{s_1,a_2}}{\partial V(s_2)} & \dotsc &
        \frac{\partial [U_{p,\eta}(V)]_{s_1,a_2}}{\partial V(s_S)}\\
        \dotsc & \dotsc & \dotsc & \dotsc\\
        \frac{\partial [U_{p,\eta}(V)]_{s_S,a_A}}{\partial V(s_1)} &
        \frac{\partial [U_{p,\eta}(V)]_{s_S,a_A}}{\partial V(s_2)} & \dotsc &
        \frac{\partial [U_{p,\eta}(V)]_{s_S,a_A}}{\partial V(s_S)}\\
        \end{bmatrix}\\
        &=
        \begin{bmatrix}
          1-\gamma p(s_1|s_1,a_1) &
          -\gamma p(s_2|s_1,a_1) & \dotsc &
          -\gamma p(s_S|s_1,a_1)\\
          1-\gamma p(s_1|s_1,a_2) &
          -\gamma p(s_2|s_1,a_2) & \dotsc &
          -\gamma p(s_S|s_1,a_2)\\
          \dotsc & \dotsc & \dotsc & \dotsc\\
          -\gamma p(s_1|s_S,a_A) &
          -\gamma p(s_2|s_S,a_A) & \dotsc &
          1-\gamma p(s_S|s_S,a_A)\\
          \end{bmatrix}.
    \end{align*}
\end{proof}

\allsamevolBIRL*
\begin{proof}
  For any \MDPr $\cM$, observe that, for any stochastic policy $\pi$, if it does
  not hold that:
  \begin{align*}
  \min_{(s,a)\in\SA}A^\pi(s,a;p,r)\ge -C_2^{\text{BIRL}},
  \end{align*}
  then the intersection $\sR^{\text{BIRL}}_\cM\cap
  \cR^{\text{BIRL},\cS}_{\cM,\pi}$ is empty.
  Since, by hypothesis, $\min_{(s,a)\in\SA}\pi(a|s)\ge\pi_{\min}$, and since we
  know that (see \cite{skalse2023invariancepolicyoptimisationpartial} and
  Proposition \ref{prop: fs BIRL explicit}):
  \begin{align*}
    A^*(s,a;p,r)=\beta\log\frac{\pi(a|s)}{\max_{a'\in\cA}\pi(a'|s)},
  \end{align*}
  then the above condition can be imposed to hold as long as:
  \begin{align*}
    C_2^{\text{BIRL}} \ge \beta \log\frac{1}{A\pi_{\min}},
  \end{align*}
  where we also used that $\log\max_{a}\pi(a|s)\ge\log\frac{1}{A}$.
Having made this condition to hold, the proof is then analogous to that of
    Proposition \ref{prop: same vol MCE}, with the only difference that we
    define $\eta(s,a)=\beta\log(\pi(a|s)/\max_{a'\in\cA}\pi(a'|s))$ at all
    $s,a$.
\end{proof}

\subsection{The New Priors can be Biased in new Environments}
\label{apx: no same vol new envs}

We now show that the new priors $w_\cM^m$ can be biased in environments
different from $\cM$.

\begin{restatable}{prop}{nosamevolnewenvOPT}\label{prop: no same vol new env}
  There exists a pair of \MDPr $\cM=\tuple{\cS,\cA,s_0,p,\gamma}$ and
  $\cM'=\tuple{\cS,\cA,s_0,p,\gamma'}$ with two states $s_1,s_2$ and two actions
  $a_1,a_2$ in $s_1$ and one action $a_1$ in $s_2$, and a pair of deterministic
  policies $\pi_1,\pi_2$ such that, if we set $C_1^{\text{OPT}}=1$ and if we let
  $\gamma\to 1$, then for any choice of $C_2^{\text{OPT}}>0$, we have:
  \begin{align*}
    \text{vol}(\sR^{\text{OPT}}_\cM\cap \cR^{\text{OPT},\cS}_{\cM',\pi_1})\neq
    \text{vol}(\sR^{\text{OPT}}_\cM\cap \cR^{\text{OPT},\cS}_{\cM',\pi_2}).
  \end{align*}
\end{restatable}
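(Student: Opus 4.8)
The plan is to exhibit a concrete two–state instance and reduce both volumes to integrals over explicit boxes, then extract the leading asymmetry as $\gamma\to1$. I would take $\cS=\{s_1,s_2\}$ with $p(s_1|s_1,a_1)=1$, $p(s_2|s_1,a_2)=1$, and $p(s_2|s_2,a_1)=1$ (so $s_2$ is absorbing), and let $\pi_1,\pi_2$ play $a_1,a_2$ in $s_1$, respectively. A direct computation of $W_{p,\pi}=I-\gamma P^\pi$ gives $\text{det}(W_{p,\pi_1})=(1-\gamma)^2$ and $\text{det}(W_{p,\pi_2})=1-\gamma$, hence $k_{\pi_1}=(1-\gamma)^{-1}\neq(1-\gamma)^{-1/2}=k_{\pi_2}$; this inequality of the two determinants is the structural feature that ultimately prevents the volumes from matching. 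Writing $r_{ij}:=r(s_i,a_j)$, and noting that $s_2$ has a single action, the two feasible sets in $\cM'$ are the complementary half-spaces $H_1=\{r_{11}\ge(1-\gamma')r_{12}+\gamma'r_{21}\}$ and $H_2=\{r_{11}\le(1-\gamma')r_{12}+\gamma'r_{21}\}$ of $\RR^3$, whose common boundary is a hyperplane through the origin.

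Next I would decompose $\sR^{\text{OPT}}_\cM$, up to a measure-zero set, as the disjoint union of the slabs $S_{\pi_1}:=\sR^{\text{OPT}}_\cM\cap\cR^{\text{OPT},\cS}_{\cM,\pi_1}\cap\sR^{1}_p$ and $S_{\pi_2}$ defined analogously, exactly as in the proof of Proposition \ref{prop: same vol OPT}; that computation already gives $\text{vol}(S_{\pi_1})=\text{vol}(S_{\pi_2})=4C_2^{\text{OPT}}$, so $\text{vol}(\sR^{\text{OPT}}_\cM)=8C_2^{\text{OPT}}$ and it suffices to prove that the $\cM'$-hyperplane does \emph{not} bisect $\sR^{\text{OPT}}_\cM$ (when $\gamma'=\gamma$ it coincides with the $\cM$-boundary separating $S_{\pi_1}$ from $S_{\pi_2}$ and bisects exactly, so a strict tilt is essential). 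To evaluate $\text{vol}(S_{\pi_i}\cap H_1)$ I would reuse the change of variables $r=T_{p,\pi_i}(V,A)$ from Proposition \ref{prop: fs OPT explicit}, whose Jacobian satisfies $|\text{det}(T_{p,\pi_i})|=|\text{det}(W_{p,\pi_i})|=k_{\pi_i}^{-2}$ by Proposition \ref{prop: rel T W}. Under this map each $S_{\pi_i}$ becomes the box $\{\,|V_1|,|V_2|\le C_1^{\text{OPT}}k_{\pi_i}=k_{\pi_i},\ A\in[-C_2^{\text{OPT}},0]\,\}$, and substituting the explicit form of $T_{p,\pi_i}$ turns the constraint defining $H_1$ into an affine half-space in $(V_1,V_2,A)$: for $\pi_1$ it reads $(\gamma'-\gamma)(V_1-V_2)\ge(1-\gamma')A$, while for $\pi_2$ it reads $(\gamma-\gamma')(V_2-V_1)\ge-A$. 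Each intersection volume is then $k_{\pi_i}^{-2}$ times the volume of a box cut by such a half-space, which I would obtain in closed form by integrating over $A\in[-C_2^{\text{OPT}},0]$ the area the half-space carves out of the square $[-k_{\pi_i},k_{\pi_i}]^2$.

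The main obstacle is the limit $\gamma\to1$. Because every stochastic matrix $P^\pi$ has eigenvalue $1$, the bounds $k_{\pi_i}\to\infty$ and the boxes become unbounded, so the half-spaces asymptotically cut each box through its center and the naive leading term is a $50/50$ split, giving $\text{vol}(S_{\pi_i}\cap H_1)\to2C_2^{\text{OPT}}$ for \emph{both} $i$. The inequality must therefore be read for $\gamma$ sufficiently close to (but below) $1$, and extracted from the \emph{subleading} term produced by the one-sided advantage range $A\in[-C_2^{\text{OPT}},0]$, which shifts each box's center off the hyperplane and in opposite directions for the two slabs. Carrying out the box-cut integrals, this correction is of order $(1-\gamma)$ for $S_{\pi_1}$ (box scale $k_{\pi_1}=(1-\gamma)^{-1}$) but of order $\sqrt{1-\gamma}$ for $S_{\pi_2}$ (box scale $k_{\pi_2}=(1-\gamma)^{-1/2}$); concretely one finds $\text{vol}(\sR^{\text{OPT}}_\cM\cap H_1)=4C_2^{\text{OPT}}-\sqrt{1-\gamma}\,(C_2^{\text{OPT}})^2/(\gamma-\gamma')+O(1-\gamma)$. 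Since the two corrections sit at different orders in $(1-\gamma)$ they cannot cancel, so for every $C_2^{\text{OPT}}>0$ and all $\gamma$ close enough to $1$ the right-hand side is strictly below $4C_2^{\text{OPT}}=\tfrac12\text{vol}(\sR^{\text{OPT}}_\cM)$, whence $\text{vol}(\sR^{\text{OPT}}_\cM\cap\cR^{\text{OPT},\cS}_{\cM',\pi_1})\neq\text{vol}(\sR^{\text{OPT}}_\cM\cap\cR^{\text{OPT},\cS}_{\cM',\pi_2})$. Isolating this non-cancelling subleading term, and checking the box-cut formulas stay in their regime of validity as $\gamma\to1$, is the crux of the argument.
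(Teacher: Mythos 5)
Your construction is genuinely different from the paper's, and the asymptotic bookkeeping you sketch is essentially correct, but it does not prove the proposition as stated: in your instance both volumes converge to $4C_2^{\text{OPT}}$ and their difference is of order $(C_2^{\text{OPT}})^2\sqrt{1-\gamma}/(\gamma-\gamma')$, which \emph{vanishes} in the limit $\gamma\to1$ that the statement instructs you to take. The paper evaluates the volumes at the limit and exhibits a ratio bounded away from $1/2$ for every $C_2^{\text{OPT}}>0$ (namely $1/(3C_2^{\text{OPT}})$ for $C_2^{\text{OPT}}\ge2$ and $(C_2^{\text{OPT}})^2/24-C_2^{\text{OPT}}/4+1/2$ otherwise), so the bias it demonstrates is non-vanishing, whereas yours disappears exactly where the statement looks. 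At best you obtain the weaker claim ``for every $C_2^{\text{OPT}}>0$ there exists $\gamma<1$ close enough to $1$ for which the volumes differ,'' and even that requires finishing the subleading expansion rigorously, including verifying that the $O(1-\gamma)$ correction coming from the $S_{\pi_1}$ slab cannot interfere with the $\sqrt{1-\gamma}$ term and that the box-cut formula for $S_{\pi_2}$ is in its valid regime (roughly $C_2^{\text{OPT}}\le 2(\gamma-\gamma')/\sqrt{1-\gamma}$), so the admissible range of $\gamma$ depends on $C_2^{\text{OPT}}$ and $\gamma'$.

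The divergence originates in the choice of instance. You kept the same transition kernel in $\cM$ and $\cM'$ and varied only the discount factor --- a literal reading of the statement, which does appear to contain a typo ($p$ should be $p'$ in $\cM'$; compare Fig.~\ref{fig: per proof no same vol new env}). The paper instead takes $\cM$ in which \emph{every} action is absorbing, so that $k_\pi=(1-\gamma)^{-1}$ for all policies, $(1-\gamma)k_\pi=1$ exactly, and $\sR^{\text{OPT}}_\cM$ admits the clean $\gamma$-independent description $\{|x|,|z|\le1,\ y\in[x-C_2^{\text{OPT}},x]\}\cup\{|y|,|z|\le1,\ x\in[y-C_2^{\text{OPT}},y]\}$ up to null sets; the limit is then used only to reduce the optimality conditions in $\cM'$ --- whose kernel genuinely differs, with $a_2$ now moving from $s_1$ to $s_2$ --- to the half-spaces $x\ge z$ and $x\le z$. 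Because your $\cM$ contains a transient action, $k_{\pi_2}=(1-\gamma)^{-1/2}$ forces the $\pi_2$-slab to degenerate ($|z|\le\sqrt{1-\gamma}$), which is precisely what kills the asymmetry in the limit. If you rerun your box-cutting machinery on the paper's all-absorbing $\cM$ against a structurally different $\cM'$, it goes through and yields a fixed, non-vanishing discrepancy.
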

\begin{proof}
  
  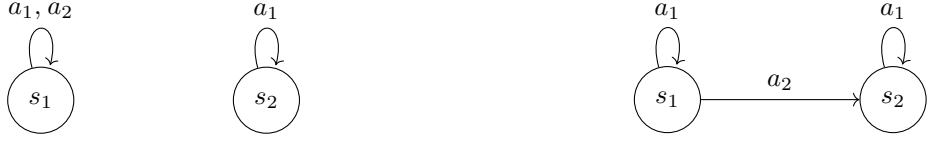
\begin{figure}[t!]
    \centering
    \begin{minipage}[t!]{0.4\textwidth}
      \centering
    \begin{tikzpicture}[node distance=3.5cm]
      \node[state] at (0,0) (s1) {$s_1$};
      \node[state] at (3,0) (s2) {$s_2$};
      \draw (s1) edge[->, solid, loop above] node{$a_1,a_2$} (s1);
      \draw (s2) edge[->, solid, loop above] node{$a_1$} (s2);
    \end{tikzpicture}
  \end{minipage}
  \hfill
  \begin{minipage}[t!]{0.4\textwidth}
    \centering
    \begin{tikzpicture}[node distance=3.5cm]
      \node[state] at (0,0) (s1) {$s_1$};
      \node[state] at (3,0) (s2) {$s_2$};
      \draw (s1) edge[->, solid, loop above] node{$a_1$} (s1);
      \draw (s1) edge[->, solid, above] node{$a_2$} (s2);
      \draw (s2) edge[->, solid, loop above] node{$a_1$} (s2);
    \end{tikzpicture}
  \end{minipage}
    \caption{\MDPr considered in the proof of Proposition \ref{prop: no same vol
    new env}. In the left there is $\cM$, while on the right there is $\cM'$.}
    \label{fig: per proof no same vol new env}
  \end{figure}

  For the proof, we consider the $\cM,\cM'$ in Fig. \ref{fig: per proof no
  same vol new env}, and we define $x\coloneqq r(s_1,a_1), y\coloneqq
  r(s_1,a_2), z\coloneqq r(s_1,a_1)$ for simplicity. Let $\pi_1,\pi_2$ be the
  deterministic policies that play actions $a_1,a_2$ in state $s_1$.
  Now, let us make explicit the sets $\sR^{\text{OPT}}_\cM$,
  $\cR^{\text{OPT},\cS}_{\cM',\pi_1}$ and $\cR^{\text{OPT},\cS}_{\cM',\pi_2}$.
  It is immediate to observe that, in $\cM'$, for $\gamma\to 1$, policy $\pi_1$
  is optimal if $x\ge z$, and viceversa $\pi_2$ is optimal if $x\le z$:
  \begin{align*}
    &\cR^{\text{OPT},\cS}_{\cM',\pi_1}=\Bigc{x,y,z\in\RR\,|\,x\ge z},\\
    &\cR^{\text{OPT},\cS}_{\cM',\pi_2}=\Bigc{x,y,z\in\RR\,|\,x\le z}.
  \end{align*}
  Regarding $\sR^{\text{OPT}}_\cM$, we need to compute $k_\pi$ for any possible
  (stochastic) policy $\pi\in\Pi$. We have:
  \begin{align*}
    (k_{\pi})^{-S}= |\text{det}(W_{p,\pi})|= |\text{det}\begin{bmatrix}
      1-\gamma & 0 \\
      0 & 1-\gamma\\
      \end{bmatrix}|=(1-\gamma)^2.
  \end{align*}
  Since we have two states, $S=2$ and so:
  \begin{align*}
    k_{\pi}=\frac{1}{1-\gamma},
  \end{align*}
  for every possible policy $\pi$.\footnote{To be precise, we have never shown
  in the paper that this choice of set gives the same volume
  $\text{vol}(\sR^{\text{OPT}}_\cM\cap \cR^{\text{OPT},\cS}_{\cM,\pi_1})\neq
  \text{vol}(\sR^{\text{OPT}}_\cM\cap \cR^{\text{OPT},\cS}_{\cM,\pi_2})$ in any
  $\cM$ for any possible pair of policies $\pi_1,\pi_2$ even though some states
  might have a number of actions different than that of other states.
  Nevertheless, this can be proved rather easily analogously to Proposition
  \ref{prop: same vol OPT}. Of course, an analogous of Proposition \ref{prop: fs
  OPT explicit} will be needed.}
  To represent set $\sR^{\text{OPT}}_\cM$ explicitly as a function of $x,y,z$,
  observe that it can be rewritten as:
  \begin{align*}
    \sR^{\text{OPT}}_\cM = (\sR^{\text{OPT}}_\cM\cap \cR^{\text{OPT},\cS}_{\cM,\pi_1}\cap\sR^1_p)
    \cup (\sR^{\text{OPT}}_\cM\cap \cR^{\text{OPT},\cS}_{\cM,\pi_2}\cap\sR^1_p)
    \cup (\sR^{\text{OPT}}_\cM\cap \sR^{>1}_p),
  \end{align*}
  where set $\sR^{\text{OPT}}_\cM\cap \sR^{>1}_p$ has zero $3$-dimensional
  volume and so is uninteresting. By using Proposition \ref{prop: fs OPT
  explicit} and also that $(1-\gamma)k_\pi=1$ for any $\pi$ and $\gamma$, it can
  be shown that:
  \begin{align*}
    &\sR^{\text{OPT}}_\cM\cap \cR^{\text{OPT},\cS}_{\cM,\pi_1}\cap\sR^1_p
    =\Bigc{x,y,z\in\RR\,|\,x,z\in[-C_1^{\text{OPT}},+C_1^{\text{OPT}}]
    \wedge y\in[x-C_2^{\text{OPT}},x]},\\
    &\sR^{\text{OPT}}_\cM\cap \cR^{\text{OPT},\cS}_{\cM,\pi_2}\cap\sR^1_p
    =\Bigc{x,y,z\in\RR\,|\,y,z\in[-C_1^{\text{OPT}},+C_1^{\text{OPT}}]
    \wedge x\in[y-C_2^{\text{OPT}},y]}.
  \end{align*}

  Since $\sR^{\text{OPT}}_\cM\cap \sR^{>1}_p$ has zero $3$-dimensional
  volume, we can write:
  \begin{align*}
    \text{vol}(\sR^{\text{OPT}}_\cM\cap \cR^{\text{OPT},\cS}_{\cM',\pi_1})&=
    \text{vol}(\sR^{\text{OPT}}_\cM\cap \cR^{\text{OPT},\cS}_{\cM,\pi_1}\cap
    \sR^1_p\cap \cR^{\text{OPT},\cS}_{\cM',\pi_1})\\
    &\qquad\qquad+
    \text{vol}(\sR^{\text{OPT}}_\cM\cap \cR^{\text{OPT},\cS}_{\cM,\pi_2}\cap
    \sR^1_p\cap \cR^{\text{OPT},\cS}_{\cM',\pi_1})\\
    &=\text{vol}(\sR^{\text{OPT}}_\cM\cap \cR^{\text{OPT},\cS}_{\cM,\pi_1}\cap
    \sR^1_p\cap \popblue{\cR^{\text{OPT},\cS}_{\cM',\pi_2}})\\
    &\qquad\qquad+
    \text{vol}(\sR^{\text{OPT}}_\cM\cap \cR^{\text{OPT},\cS}_{\cM,\pi_2}\cap
    \sR^1_p\cap \cR^{\text{OPT},\cS}_{\cM',\pi_1}),
  \end{align*}
  where in the second passage we have observed that $\sR^{\text{OPT}}_\cM\cap
    \cR^{\text{OPT},\cS}_{\cM,\pi_1}\cap \sR^1_p$ allows $x,z$ to vary freely in
    the same interval, therefore, using the formulas computed earlier for
    $\cR^{\text{OPT},\cS}_{\cM',\pi_1}$ and $\cR^{\text{OPT},\cS}_{\cM',\pi_2}$,
    we realize that by symmetry $\text{vol}(\sR^{\text{OPT}}_\cM\cap \cR^{\text{OPT},\cS}_{\cM,\pi_1}\cap
    \sR^1_p\cap \cR^{\text{OPT},\cS}_{\cM',\pi_1}) = 
    \text{vol}(\sR^{\text{OPT}}_\cM\cap \cR^{\text{OPT},\cS}_{\cM,\pi_1}\cap
    \sR^1_p\cap \cR^{\text{OPT},\cS}_{\cM',\pi_2})$.

    For this reason, the claim in the proposition can be proved by showing that:
    \begin{align*}
      \text{vol}(\sR^{\text{OPT}}_\cM\cap \cR^{\text{OPT},\cS}_{\cM,\pi_2}\cap
    \sR^1_p\cap \cR^{\text{OPT},\cS}_{\cM',\pi_1})\neq 
    \text{vol}(\sR^{\text{OPT}}_\cM\cap \cR^{\text{OPT},\cS}_{\cM,\pi_2}\cap
    \sR^1_p\cap \cR^{\text{OPT},\cS}_{\cM',\pi_2}).
    \end{align*}
 To make things even easier, we will prove it by first computing
    $\text{vol}(\sR^{\text{OPT}}_\cM\cap \cR^{\text{OPT},\cS}_{\cM,\pi_2}\cap
    \sR^1_p)$, then computing $\text{vol}(\sR^{\text{OPT}}_\cM\cap
    \cR^{\text{OPT},\cS}_{\cM,\pi_2}\cap \sR^1_p\cap
    \cR^{\text{OPT},\cS}_{\cM',\pi_1})$, and finally showing that their ratio is
    different from $1/2$:
    \begin{align*}
      \frac{\text{vol}(\sR^{\text{OPT}}_\cM\cap
      \cR^{\text{OPT},\cS}_{\cM,\pi_2}\cap \sR^1_p\cap
      \cR^{\text{OPT},\cS}_{\cM',\pi_1})}{\text{vol}(\sR^{\text{OPT}}_\cM
      \cap \cR^{\text{OPT},\cS}_{\cM,\pi_2}\cap
      \sR^1_p)}\neq \frac{1}{2}.
    \end{align*}

  We begin with the volume of $\sR^{\text{OPT}}_\cM \cap
  \cR^{\text{OPT},\cS}_{\cM,\pi_2}\cap \sR^1_p$. We can write:
  \begin{align*}
    \text{vol}(\cR_{p,\pi_2})&=\int\limits_{-C_1^{\text{OPT}}}^{+C_1^{\text{OPT}}}
    \int\limits_{-C_1^{\text{OPT}}}^{+C_1^{\text{OPT}}}\int\limits_{y-C_2^{\text{OPT}}}^{y}dxdydz\\
    &=\int\limits_{-C_1^{\text{OPT}}}^{+C_1^{\text{OPT}}}
    \int\limits_{-C_1^{\text{OPT}}}^{+C_1^{\text{OPT}}}C_2^{\text{OPT}} dydz\\
    &=4C_1^{\text{OPT}}C_2^{\text{OPT}}=4C_2^{\text{OPT}},
  \end{align*}
  by recalling that $C_1^{\text{OPT}}=1$ by hypothesis.

  Now, concerning the volume of $\sR^{\text{OPT}}_\cM\cap
  \cR^{\text{OPT},\cS}_{\cM,\pi_2}\cap \sR^1_p\cap
  \cR^{\text{OPT},\cS}_{\cM',\pi_1}$, note that we can write this region more
  explicitly (modulo some zero-volume sets) as:
  \begin{align*}
    \sR^{\text{OPT}}_\cM\cap
      \cR^{\text{OPT},\cS}_{\cM,\pi_2}\cap \sR^1_p\cap
      \cR^{\text{OPT},\cS}_{\cM',\pi_1}=\Big\{x,y,z\in\RR\,|\,
    -1\le &y\le +1\;\wedge\\
    -1\le &z\le +1\;\wedge\\
    y-C_2^{\text{OPT}}\le &x\le y\;\wedge\\
    &x\ge z\Big\}.
  \end{align*}
  We partition it into 4 regions $\cR\coloneqq \sR^{\text{OPT}}_\cM\cap
      \cR^{\text{OPT},\cS}_{\cM,\pi_2}\cap \sR^1_p\cap
      \cR^{\text{OPT},\cS}_{\cM',\pi_1}=I_1\cup I_2\cup I_3\cup I_4$, so that
      the resulting volume will be the sum of the volumes of these regions:
  \begin{align*}
    &I_1\coloneqq \cR\cap\{z\ge y-C_2^{\text{OPT}}\;\wedge\; z\ge 1-C_2^{\text{OPT}}\},\\
    &I_2\coloneqq \cR\cap\{z\ge y-C_2^{\text{OPT}}\;\wedge\; z\le 1-C_2^{\text{OPT}}\},\\
    &I_3\coloneqq \cR\cap\{z\le y-C_2^{\text{OPT}}\;\wedge\; z\le -1-C_2^{\text{OPT}}\},\\
    &I_4\coloneqq \cR\cap\{z\le y-C_2^{\text{OPT}}\;\wedge\; z\ge -1-C_2^{\text{OPT}}\}.
  \end{align*}
  We remark that, for $I_1,I_2$, it holds that $z\le y$ (otherwise they are
  empty).

  Now, if we denote $m\coloneqq \max\{-1,1-C_2^{\text{OPT}}\}$, the volume of
  $I_1$ is (we must enforce $z\le y$ otherwise this becomes the signed volume):
  \begin{align*}
    I_1&=\int\limits_{m}^{+1}
    \int\limits_{z}^{+1}\int\limits_{z}^{y}dxdydz\\
    &=\int\limits_{m}^{+1}
    \int\limits_{z}^{+1}(y-z)dydz\\
    &=\int\limits_{m}^{+1}
    (\frac{z^2}{2}-z+\frac{1}{2})dydz\\
    &=\frac{1}{6}-\frac{m^3}{6}+\frac{m^2}{2}-\frac{m}{2}.
  \end{align*}
  Depending on $C_2^{\text{OPT}}$, we get different values:
  \begin{align*}
    I_1=\begin{cases}
      \frac{(C_2^{\text{OPT}})^3}{6}&\text{if }C_2^{\text{OPT}}\le 2,\\
      \frac{4}{3}&\text{otherwise},
    \end{cases}
  \end{align*}
  where we used that, for $C_2^{\text{OPT}}\ge 2$, we have $m=-1$, and replaced,
  and similarly for the other term:
  \begin{align*}
    \frac{1}{6}-\frac{(1-C)^3}{6}+\frac{(1-C)^2}{2}-\frac{(1-C)}{2}&=
    \frac{(C_2^{\text{OPT}})^3}{6}.
  \end{align*}

  W.r.t. $I_2$, we can write:
  \begin{align*}
    I_2&=\int\limits_{-1}^{1-C_2^{\text{OPT}}}
    \int\limits_{z}^{z+C_2^{\text{OPT}}}\int\limits_{z}^{y}dxdydz\\
    &=\int\limits_{-1}^{1-C_2^{\text{OPT}}}
    \int\limits_{z}^{z+C_2^{\text{OPT}}}(y-z)dydz\\
    &=\int\limits_{-1}^{1-C_2^{\text{OPT}}}(\frac{y^2}{2}\bigg|_z^{z+C_2^{\text{OPT}}} -z C_2^{\text{OPT}})dz\\
    &=\frac{(C_2^{\text{OPT}})^2}{2}\int\limits_{-1}^{1-C_2^{\text{OPT}}}dz\\
    &=\frac{(C_2^{\text{OPT}})^2}{2}(2-C_2^{\text{OPT}}),
  \end{align*}
  where we are assuming that $C_2^{\text{OPT}}\le 2$. Otherwise, $I_2=0$.

  Regarding $I_3$, simply note that it is empty, so the volume is 0, because we
  are asking that $z\le-1-C_2^{\text{OPT}}\le -1$.

  Thus, w.r.t. $I_4$, we can write (the upper bound $1-C_2^{\text{OPT}}$ on $z$ arises because
  we ask $z+C_2^{\text{OPT}}\le 1$):
  \begin{align*}
    I_4&=\int\limits_{-1}^{1-C_2^{\text{OPT}}}
    \int\limits_{z+C_2^{\text{OPT}}}^{1}\int\limits_{y-C_2^{\text{OPT}}}^{y}dxdydz\\
    &=\int\limits_{-1}^{1-C_2^{\text{OPT}}}
    \int\limits_{z+C_2^{\text{OPT}}}^{1}Cdydz\\
    &=\int\limits_{-1}^{1-C_2^{\text{OPT}}}
    (1-z-C_2^{\text{OPT}})Cdz\\
    &=\frac{C_2^{\text{OPT}}}{2}(C_2^{\text{OPT}}-2)^2,
  \end{align*}
  which holds assuming $C_2^{\text{OPT}}\le 2$, otherwise it is 0.

  Finally, we get that the volume of $\cR= \sR^{\text{OPT}}_\cM\cap
  \cR^{\text{OPT},\cS}_{\cM,\pi_2}\cap \sR^1_p\cap
  \cR^{\text{OPT},\cS}_{\cM',\pi_1}$ is (we sum all contributions as a function
  of $C_2^{\text{OPT}}$):
  \begin{align*}
    \text{vol}(\cR)&=I_1+I_2+I_3+I_4\\
    &=\begin{cases}
      \frac{(C_2^{\text{OPT}})^3}{6}+\frac{(C_2^{\text{OPT}})^2}{2}(2-C_2^{\text{OPT}})+0+
      \frac{C_2^{\text{OPT}}}{2}(C_2^{\text{OPT}}-2)^2&\text{if }C_2^{\text{OPT}}\le 2,\\
      \frac{4}{3}+0+0+0&\text{otherwise},
    \end{cases}\\
    &=\begin{cases}
      \frac{(C_2^{\text{OPT}})^3}{6}-(C_2^{\text{OPT}})^2
      +2C_2^{\text{OPT}}
      &\text{if }C_2^{\text{OPT}}\le 2,\\
      \frac{4}{3}&\text{otherwise}.
    \end{cases}
  \end{align*}
  By taking the ratio, we get:
  \begin{align*}
    \frac{\text{vol}(\sR^{\text{OPT}}_\cM\cap
      \cR^{\text{OPT},\cS}_{\cM,\pi_2}\cap \sR^1_p\cap
      \cR^{\text{OPT},\cS}_{\cM',\pi_1})}{\text{vol}(\sR^{\text{OPT}}_\cM
      \cap \cR^{\text{OPT},\cS}_{\cM,\pi_2}\cap
      \sR^1_p)}&=\begin{cases}
        \frac{1}{3C_2^{\text{OPT}}}
      &\text{if }C_2^{\text{OPT}}\ge 2,\\
      \frac{(C_2^{\text{OPT}})^2}{24}-\frac{C_2^{\text{OPT}}}{4}
      +\frac{1}{2}&\text{if }0<C_2^{\text{OPT}}<2,
      \end{cases}
  \end{align*}
  which is different from $\frac{1}{2}$ for every $C_2^{\text{OPT}}>0$.

  This concludes the proof.
\end{proof}

\section{Additional Results and Proofs for Section \ref{sec: centroids}}
\label{apx: centroids}

In Appendix \ref{apx: rewrite avg case}, we prove Proposition \ref{prop: avg
case rewritten}. In Appendix \ref{apx: centroids our priors}, we provide the
missing proofs for Theorems \ref{thr: centroid OPT}, \ref{thr: centroid MCE} and
\ref{thr: centroid BIRL}, while in Appendix \ref{apx: proofs IL} we prove
Proposition \ref{prop: int IL} and Proposition \ref{prop: birl opt tx}. Finally,
Appendix \ref{apx: additional centroids OPT} contains additional centroid
calculations for OPT.

\subsection{Derivation of Eq. \eqref{eq: avg case rewritten}}
\label{apx: rewrite avg case}

\avgcaserewritten*
\begin{proof}
The derivation is analogous to the proof of Theorem 3 of
\cite{Ramachandran2007birl}. Starting from Eq. \eqref{eq: avg case}, we can
write:
\begin{align*}
  \widehat{\pi}_w&\in \argmin\limits_{\pi'\in\Pi_{c,k}} 
  \int_{\cR_{\cM,\pi_E}^m} w(r)\Bigr{\max\limits_{\pi\in\Pi_{c,k}} V^{\pi}(s_0';p',r)
  - V^{\pi'}(s_0';p',r)}dr\\
  &\markref{(1)}{=}\argmin\limits_{\pi'\in\Pi_{c,k}}\biggr{
  \int_{\cR_{\cM,\pi_E}^m} w(r)\max\limits_{\pi\in\Pi_{c,k}} V^{\pi}(s_0';p',r)dr
  -\int_{\cR_{\cM,\pi_E}^m} w(r)V^{\pi'}(s_0';p',r)dr}\\
  &\markref{(2)}{=}
    \argmin\limits_{\pi'\in\Pi_{c,k}}-\int_{\cR_{\cM,\pi_E}^m} w(r)V^{\pi'}(s_0';p',r)dr\\
  &\markref{(3)}{=}
  \argmax\limits_{\pi'\in\Pi_{c,k}}\int_{\cR_{\cM,\pi_E}^m} w(r)V^{\pi'}(s_0';p',r)dr\\
  &\markref{(4)}{=}
  \argmax\limits_{\pi'\in\Pi_{c,k}}V^{\pi'}(s_0';p',r_{w,\pi_E}),
\end{align*}
where at (1) we use the linearity of the integral operator, at (2) we realize
that the first term does not depend on $\pi'$, and, as such, it can be removed,
at (3) we remove the minus by replacing argmin with argmax, finally, at (4) we
realize that $V^{\pi'}(s_0';p',r)$ is a linear function of $r$ (it can be seen
as the dot product between the occupancy measure of $\pi'$ and $r$
\cite{puterman1994markov}), thus, by defining $r_{w,\pi_E}\coloneqq
\int_{\cR_{\cM,\pi_E}^m} w(r)dr$, and recalling that $\cS$ and $\cA$ are finite,
we get the result.
\end{proof}

\subsection{The Centroids of the Feasible Sets w.r.t. our New Priors}
\label{apx: centroids our priors}

\centroidOPT*
\begin{proof}
  We can write:
    \begin{align*}
      r^{\text{OPT}}_{\cM,\pi_E}
       &\markref{(1)}{\coloneqq}\int_{\sR^{\text{OPT}}_\cM\cap \cR_{\cM,\pi_E}^{\text{OPT}}}
       rdr\\
       &\markref{(2)}{=}\int_{\sR^{\text{OPT}}_\cM\cap \cR_{\cM,\pi_E}^{\text{OPT}}\cap\sR^{1}_p}
       rdr+\int_{\sR^{\text{OPT}}_\cM\cap \cR_{\cM,\pi_E}^{\text{OPT}}\cap\sR^{>1}_p}
       rdr\\
       &\markref{(3)}{=}\int_{\sR^{\text{OPT}}_\cM\cap \cR_{\cM,\pi_E}^{\text{OPT}}\cap\sR^{1}_p}
       rdr\\
       &\markref{(4)}{=}\popblue{\sum\limits_{\pi\in[\pi_E]_{\cS_{\cM,\pi_E}}}}
       \int_{\sR^{\text{OPT}}_\cM\cap \popblue{\cR_{\cM,\pi}^{\text{OPT},\cS}}\cap\sR^{1}_p}
       rdr\\
       &\markref{(5)}{=}\sum\limits_{\pi\in[\pi_E]_{\cS_{\cM,\pi_E}}}
       \popblue{\int_{\cV_\pi}
       |\text{det}(T_{p,\pi})|
       T_{p,\pi}(V,A)dVdA}\\
       &\markref{(6)}{=}\sum\limits_{\pi\in[\pi_E]_{\cS_{\cM,\pi_E}}}
       \popblue{|\text{det}(W_{p,\pi})|}\int_{\cV_\pi}
       T_{p,\pi}(V,A)dVdA\\
       &\markref{(7)}{=}\sum\limits_{\pi\in[\pi_E]_{\cS_{\cM,\pi_E}}}
       |\text{det}(W_{p,\pi})|\popblue{T_{p,\pi}(\overline{V}_\pi,\overline{A}_\pi)},
    \end{align*}
    where at (1) we use the definition, at (2) we use the additivity property of
    the integral for partitioning in rewards that induce a single optimal policy
    $\sR^{1}_p$ and those that do not (see Eq. \ref{eq: def R1}), at (3) we
    recognize that set $\sR^{\text{OPT}}_\cM\cap
    \cR_{\cM,\pi_E}^{\text{OPT}}\cap\sR^{>1}_p$ has zero volume as $\sR^{>1}_p$
    imposes an additional constraint on the rewards in this set (using
    Proposition \ref{prop: fs OPT explicit} it can be seen that this set is
    described by at most $SA-1<SA$ parameters), at (4) we apply Corollary D.1 in
    \cite{lazzati2024offline} to observe that\footnote{See Appendix \ref{apx:
    additional notation} for a definition of $[\cdot]_{\overline{\cS}}$ for any
    set $\overline{\cS}\subseteq\cS$.} $\cR_{\cM,\pi_E}^{\text{OPT}}=
    \bigcup_{\pi\in[\pi_E]_{\cS_{\cM,\pi_E}}}\cR_{\cM,\pi}^{\text{OPT},\cS}$,
    and then we use the distributive property of intersection over union and the
    additivity of the integral,\footnote{To be precise, the sets
    $\sR^{\text{OPT}}_\cM\cap \cR_{\cM,\pi_E}^{\text{OPT}}\cap\sR^{1}_p$ are
    open, so this passage cannot be done as-is, but additional formalism should
    be adopted. Nevertheless, we avoid introducing it here to keep the
    exposition from becoming unnecessarily cumbersome.} at (5), similarly to the
    proof of Proposition \ref{prop: same vol OPT}, we use the change of
    variables formula in Theorem 6.1.7 of \cite{cohn2013measure}, after having
    observed that matrix\footnote{The definition of $T_{p,\pi}$ is provided in
    Eq. \eqref{eq: def T p pi}.} $T_{p,\pi}$ (along with Proposition \ref{prop:
    fs OPT explicit}) permits to reparametrize set $\sR^{\text{OPT}}_\cM\cap
    \cR^{\text{OPT},\cS}_{\cM,\pi}\cap\sR^{1}_p$ using set $\cV_\pi$ as:
      \begin{align*}
        \sR^{\text{OPT}}_\cM\cap
        \cR^{\text{OPT},\cS}_{\cM,\pi}\cap\sR^{1}_p=\Bigc{r\in\RR^{SA}\,|\,
        \exists (V,A)\in\cV_\pi:\,
        r=T_{p,\pi}(V,A)},
      \end{align*}
      where 
      \begin{align*}
        \cV_\pi\coloneqq\Bigc{V\in[-C_1^{\text{OPT}}k_\pi,+C_1^{\text{OPT}}k_\pi]^S,
        A\in [-C_2^{\text{OPT}},0)^{S\times(A-1)}}.
      \end{align*}
      At (6) we apply Proposition \ref{prop: rel T W}, and note that
      $|\text{det}(W_{p,\pi})|$ does not depend on $V,A$, at (7) we applied the
      linearity of operator $T_{p,\pi}$, and defined $\overline{V}_\pi\coloneqq
      \int_{\cV_\pi} VdVdA$ and $\overline{A}_\pi\coloneqq \int_{\cV_\pi}
      AdVdA$.
      
      Now, we turn to the computation of $\overline{V}_\pi$ and
      $\overline{A}_\pi$. For any state $\overline{s}\in\cS$, we have:
      \begin{align*}
        \overline{V}_\pi(\overline{s})&\coloneqq\int_{\cV^{\pi}}V(\overline{s})dVdA\\
        &=\int_{[-C_1^{\text{OPT}}k_\pi,+C_1^{\text{OPT}}k_\pi]^S}
        V(\overline{s})dV \cdot
      \int_{[-C_2^{\text{OPT}},0)^{S\times(A-1)}}dA\\
        &=\prod\limits_{s\in\cS\setminus\{\overline{s}\}}
        \int_{[-C_1^{\text{OPT}}k_\pi,+C_1^{\text{OPT}}k_\pi]}dV(s)
        \cdot\int_{[-C_1^{\text{OPT}}k_\pi,+C_1^{\text{OPT}}k_\pi]}
        V(\overline{s})dV(\overline{s})\\
        &\qquad\qquad
        \cdot \prod\limits_{s\in\cS,a\in\cA\setminus\{\pi(s)\}}
        \int_{[-C_2^{\text{OPT}},0)}dA(s,a)\\
        &=(2 C_1^{\text{OPT}}k_\pi)^{S-1}\cdot (C_2^{\text{OPT}})^{S(A-1)}
        \cdot \biggs{\frac{V(\overline{s})^2}{2}}_{-C_1^{\text{OPT}}k_\pi}^{+C_1^{\text{OPT}}k_\pi}\\
        &=0.
      \end{align*}
      In a similar manner, for any $\overline{s}\in\cS$,
      $\overline{a}\in\cA\setminus\{\pi(\overline{s})\}$, we have:
      \begin{align*}
        \overline{A}_\pi(\overline{s},\overline{a})&\coloneqq
        \int_{\cV^{\pi}}A(\overline{s},\overline{a})dVdA\\
        &=\int_{[-C_1^{\text{OPT}}k_\pi,+C_1^{\text{OPT}}k_\pi]^S}
        dV \cdot
      \int_{[-C_2^{\text{OPT}},0)^{S\times(A-1)}}A(\overline{s},\overline{a})dA\\
        &=\prod\limits_{s\in\cS}\int_{[-C_1^{\text{OPT}}k_\pi,+C_1^{\text{OPT}}k_\pi]}dV(s)
        \cdot \prod\limits_{s\in\cS,a\in\cA\setminus\{\pi(s)\}\wedge (s,a)\neq
        (\overline{s},\overline{a})}
        \int_{[-C_2^{\text{OPT}},0)}dA(s,a)\\
        &\qquad\qquad\cdot 
        \int_{[-C_2^{\text{OPT}},0)}A(\overline{s},\overline{a})dA(\overline{s},\overline{a})\\
        &=(2 C_1^{\text{OPT}}k_\pi)^{S} \cdot (C_2^{\text{OPT}})^{S(A-1)-1}\cdot
        \biggs{\frac{A(\overline{s},\overline{a})^2}{2}}_{-C_2^{\text{OPT}}}^{0}\\
        &=\popblue{-}2^{\popblue{S-1}}\cdot(C_1^{\text{OPT}}k_\pi)^{S}\cdot
        (C_2^{\text{OPT}})^{\popblue{S(A-1)+1}}.
      \end{align*}
      Replacing the values of $\overline{V}_\pi$ and $\overline{A}_\pi$ into the
      previous expression, we get that:
      \begin{align*}
        r^{\text{OPT}}_{\cM,\pi_E}(s,a)&\markref{(8)}{=}\begin{cases}
          0&\text{ if }s\in\cS_{\cM,\pi_E},a=\pi_E(s),\\
          -\sum\limits_{\pi\in[\pi_E]_{\cS_{\cM,\pi_E}}}
        |\text{det}(W_{p,\pi})|x_\pi
        &\text{ if }s\in\cS_{\cM,\pi_E},a\neq\pi_E(s),\\
        -\sum\limits_{\pi\in[\pi_E]_{\cS_{\cM,\pi_E}}:\pi(s)\neq a}
        |\text{det}(W_{p,\pi})|x_\pi
        &\text{ otherwise}
        \end{cases}\\
      &\markref{(9)}{\propto}\begin{cases}
        0&\text{ if }s\in\cS_{\cM,\pi_E},a=\pi_E(s),\\
          -\sum\limits_{\pi\in[\pi_E]_{\cS_{\cM,\pi_E}}}
          1
        &\text{ if }s\in\cS_{\cM,\pi_E},a\neq\pi_E(s),\\
        -\sum\limits_{\pi\in[\pi_E]_{\cS_{\cM,\pi_E}}:\pi(s)\neq a}
        1
        &\text{ otherwise}
      \end{cases}\\
      &\markref{(10)}{=}\begin{cases}
        0&\text{ if }s\in\cS_{\cM,\pi_E},a=\pi_E(s),\\
        -A^{S-|\cS_{\cM,\pi_E}|}
        &\text{ if }s\in\cS_{\cM,\pi_E},a\neq\pi_E(s),\\
        -A^{S-|\cS_{\cM,\pi_E}|-1}(A-1)
        &\text{ otherwise}
      \end{cases}\\
      &\markref{(11)}{\propto}\begin{cases}
        1&\text{ if }s\in\cS_{\cM,\pi_E},a=\pi_E(s),\\
        0
        &\text{ if }s\in\cS_{\cM,\pi_E},a\neq\pi_E(s),\\
        \frac{1}{A}
        &\text{ otherwise},
      \end{cases}
      \end{align*}
      where at (8) we used that
      $T_{p,\pi}(\overline{V}_\pi,\overline{A}_\pi)(s,a)=0$ at all
      $s\notin\cS_{\cM,\pi_E}$, for the policies
      $\pi\in[\pi_E]_{\cS_{\cM,\pi_E}}$ that prescribe $\pi(s)= a$, and we also
      defined $x_\pi\coloneqq 2^{S-1}\cdot(C_1^{\text{OPT}}k_\pi)^{S}\cdot
      (C_2^{\text{OPT}})^{S(A-1)+1}$ for necessity of space, at (9) we use that
      $(k_\pi)^S=1/|\text{det}(W_{p,\pi})|$, and we divided everything by
      $2^{S-1}\cdot(C_1^{\text{OPT}})^{S}\cdot (C_2^{\text{OPT}})^{S(A-1)+1}$
      (recall that we provide a rescaled result), at (10) we use that
      $[\pi_E]_{\cS_{\cM,\pi_E}}$ contains exactly $A^{S-|\cS_{\cM,\pi_E}|}$
      policies, while set $\{\pi\in[\pi_E]_{\cS_{\cM,\pi_E}}:\pi(s)\neq a\}$
      contains exactly $A^{S-|\cS_{\cM,\pi_E}|-1}(A-1)$ policies, and at (11) we
      rescale everything by first summing $A^{S-|\cS_{\cM,\pi_E}|}$, and then
      dividing by it.

      This concludes the proof.
\end{proof}

\centroidMCE*
\begin{proof}
  The condition on $C_2^{\text{MCE}}$ is made for the same reason as in the
  proof Proposition \ref{prop: same vol MCE}.

  Define $\eta(s,a)=\lambda\log{\pi_E}(a|s)$ for all $s,a$.
  First, we remark that, since set $\sR^{\text{MCE}}_\cM\cap
  \cR_{\cM,\pi_E}^{\text{MCE}}$ is $S$-dimensional (immediate from Proposition
  \ref{prop: same vol MCE}), then the volume elements below are $S$-dimensional.
  We can write:
  \begin{align*}
    r^{\text{MCE}}_{\cM,\pi_E}
     &\markref{(1)}{\coloneqq}\int_{\sR^{\text{MCE}}_\cM\cap \cR_{\cM,\pi_E}^{\text{MCE}}}
     rdr\\
     &\markref{(2)}{=}\int_{\sR^{\text{MCE}}_\cM\cap \popblue{\cR_{\cM,\pi_E}^{\text{MCE},\cS}}}
     rdr\\
     &\markref{(3)}{=}
     \int_{\cV_{\pi_E}'}
     \sqrt{|\text{det}(J_{p,\eta}^\intercal J_{p,\eta})|}
     U_{p,\eta}(V)dV\\
     &\markref{(4)}{=}
     \popblue{\sqrt{|\text{det}(J_{p,\eta}^\intercal J_{p,\eta})|}}
     \int_{\cV_{\pi_E}'}
     U_{p,\eta}(V)dV\\
     &\markref{(5)}{\propto}\;
     \int_{\cV_{\pi_E}'}
     U_{p,\eta}(V)dV\\
     &\markref{(6)}{=}
     \popblue{U_{p,\eta}}\biggr{\int_{\cV_{\pi_E}'}
     VdV}\\
     &\markref{(7)}{=}
     U_{p,\eta}(\popblue{0})\\
     &\markref{(8)}{=}\eta,
  \end{align*}
  where at (1) we use the definition of $r^{\text{MCE}}_{\cM,\pi_E}$, at (2) we exploit the fact that, by hypothesis, $\cS_{\cM,\pi_E}=\cS$,
  at (3) we use the change of variables formula in
  \cite{boothby1986manifolds,stoll2007integration}, after having observed that
  the \emph{rectangular}\footnote{The set $\sR^{\text{MCE}}_\cM\cap
  \cR^{\text{MCE},\cS}_{\cM,{\pi_E}}$ is $S$-dimensional but is embedded into
  $\RR^{SA}$, so we cannot apply Proposition 6.1.3 of \cite{cohn2013measure}.}
  matrix $U_{p,\eta}$, defined in Eq. \eqref{eq: def U eta}, along with
  Proposition \ref{prop: fs MCE explicit}, permits to reparametrize the set
  $\sR^{\text{MCE}}_\cM\cap \cR^{\text{MCE},\cS}_{\cM,{\pi_E}}$ using set
  $\cV_{{\pi_E}}'$ as:
    \begin{align*}
      \sR^{\text{MCE}}_\cM\cap
    \cR^{\text{MCE},\cS}_{\cM,{\pi_E}}=\Bigc{r\in\RR^{SA}\,|\,\exists V\in\cV_{\pi_E}':\,
    r=U_{p,\eta}(V)},
    \end{align*}
    where 
    \begin{align*}
      \cV_{\pi_E}'\coloneqq[-C_1^{\text{MCE}},+C_1^{\text{MCE}}]^S.
    \end{align*}
    Specifically, we use symbol $J_{p,\eta}\in\RR^{SA\times S}$ to denote the
    Jacobian of the operator $U_{p,\eta}$, and we omit the dependence on $V$
    since, as observed in the proof of Proposition \ref{prop: same vol MCE}, it
    is constant in $V$. At (4), we bring it outside because it does not depend
    on $V$, at (5) we drop it as we will rescale, at (6) we use the linearity of
    $U_{p,\eta}$, at (7) we use that $\int_{\cV_{\pi_E}'} VdV=0\in\RR^S$, as the
    domain is symmetric, and at (8) we use the defintion of operator $U_{p,\eta}$.

    The result can be obtained by rescaling the computed reward by $\lambda$.
\end{proof}

\centroidBIRL*
\begin{proof}
  The condition on $C_2^{\text{BIRL}}$ is made for the same reason as in the
  proof Proposition \ref{prop: same vol BIRL}.

The proof is analogous to that of Theorem \ref{thr: centroid MCE}, with the only
    difference that we define
    $\eta(s,a)=\beta\log(\pi_E(a|s)/\max_{a'\in\cA}\pi_E(a'|s))$ at all $s,a$, and
    that we use Proposition \ref{prop: fs BIRL explicit} in place of Proposition
    \ref{prop: fs MCE explicit}.
\end{proof}

\subsection{Proofs for Additional Results on the Centroids}
\label{apx: proofs IL}

\intIL*
\begin{proof}
  Regarding MCE and BIRL, thanks to Propositions \ref{prop: fs MCE explicit} and
  \ref{prop: fs BIRL explicit}, it is immediate to notice that
  $r^{\text{MCE}}_{\cM,\pi_E}$ and $r^{\text{BIRL}}_{\cM,\pi_E}$ are obtained
  from $r_E$ through reward shaping \cite{ng1999shaping} (the state-only
  function is the value function). Therefore, they induce the same optimal
  policies.

  Regarding OPT, observe that
  $V^{\pi_E}(s_0;p,r^{\text{OPT}}_{\cM,\pi_E})=\frac{1}{1-\gamma}$, which is the
  maximum possible, thus $\pi^*=\pi_E$. Moreover, observe also that $\pi_E$ is
  the unique optimal policy in $\cM_{r^{\text{OPT}}_{\cM,\pi_E}}$ as playing a
  non-expert's action gives a reward smaller than 1. For these reasons, since
  $\pi_E$ is optimal for $r_E$ (by hypothesis of OPT), then the result follows. 
\end{proof}

%
The next result shows that, in the absence of additional constraints, reward
$r^{\text{BIRL}}_{\cM,\pi_E}$ transfers to new $\cM'$ the optimal policy w.r.t.
$r_E$ in $\cM$. Moreover, $r^{\text{OPT}}_{\cM,\pi_E}$ does so as well when
$\cS_{\cM,\pi_E}=\cS$.
\begin{restatable}{prop}{birlopttx}\label{prop: birl opt tx}
  Assume that $\pi_E$ and $r_E$ are related by BIRL in an \MDPr
  $\cM=\tuple{\cS,\cA,s_0,p,\gamma}$, and that $\cS_{\cM,\pi_E}=\cS$.
  Then, for any \MDPr $\cM'=\tuple{\cS,\cA,\gamma',s_0',p'}$, it holds that:
  \begin{align*}
    \argmax_{\pi\in\Pi} V^\pi(s_0;p',r^{\text{BIRL}}_{\cM,\pi_E})=
    \argmax_{\pi\in\Pi} V^\pi(s_0;p,r_E).
  \end{align*}
  Moreover, if $\pi_E$ is deterministic and, still, $\cS_{\cM,\pi_E}=\cS$, then
  the same result holds also for OPT.
\end{restatable}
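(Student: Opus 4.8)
The plan is to use the explicit, transition-free forms of the two centroids and to describe the policies they make optimal purely through their greedy actions. I read the two $\argmax$ sets as the sets of policies optimal from \emph{every} state, i.e. as $\Pi^*(p',r^{\text{BIRL}}_{\cM,\pi_E})$ and $\Pi^*(p,r_E)$; reconciling this with optimality from $s_0$ alone is the delicate point I return to at the end. By Theorem \ref{thr: centroid BIRL}, whose hypotheses $\cS_{\cM,\pi_E}=\cS$ and $C_2^{\text{BIRL}}\ge\beta\log\frac{1}{A\pi_{\min}}$ are in force, after rescaling $r^{\text{BIRL}}_{\cM,\pi_E}(s,a)=\log\frac{\pi_E(a|s)}{\max_{a'}\pi_E(a'|s)}$. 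Two features drive the proof: this reward is independent of any transition model, and $r^{\text{BIRL}}_{\cM,\pi_E}(s,a)\le 0$ with equality exactly when $a\in\argmax_{a'}\pi_E(a'|s)$.

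First I would compute the optimal policies of $r^{\text{BIRL}}_{\cM,\pi_E}$ in an arbitrary $\cM'$. Any policy $\pi_g$ with $\mathrm{supp}(\pi_g(\cdot|s))\subseteq\argmax_a\pi_E(a|s)$ for all $s$ collects reward $0$ at every step, so $V^{\pi_g}(s;p',r^{\text{BIRL}}_{\cM,\pi_E})=0$ for every $s$; since all rewards are $\le 0$, no policy can do better, hence $V^*(\cdot;p',r^{\text{BIRL}}_{\cM,\pi_E})\equiv 0$ regardless of $p'$ and $\gamma'$. The Bellman optimality equation then gives $Q^*(s,a;p',r^{\text{BIRL}}_{\cM,\pi_E})=r^{\text{BIRL}}_{\cM,\pi_E}(s,a)$, so $\argmax_a Q^*(s,a;p',r^{\text{BIRL}}_{\cM,\pi_E})=\argmax_a\pi_E(a|s)$ in every state. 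Because optimality from every state forces greediness at every state (a non-greedy action at any $s$ makes $V^\pi(s)<0=V^*(s)$), this shows $\Pi^*(p',r^{\text{BIRL}}_{\cM,\pi_E})=\{\pi:\mathrm{supp}(\pi(\cdot|s))\subseteq\argmax_a\pi_E(a|s)\ \forall s\}$.

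Next I would match this with the optimal policies of $r_E$ in $\cM$. The BIRL relation $\pi_E(a|s)\propto e^{Q^*(s,a;p,r_E)/\beta}$ with $\beta>0$ is a strictly monotone transformation of the $Q^*$-values in each state, so $\argmax_a\pi_E(a|s)=\argmax_a Q^*(s,a;p,r_E)$ for every $s$. The standard characterization of $\Pi^*$ then yields $\Pi^*(p,r_E)=\{\pi:\mathrm{supp}(\pi(\cdot|s))\subseteq\argmax_a Q^*(s,a;p,r_E)\ \forall s\}$, which is exactly the set found above; this proves the BIRL claim. For the OPT addendum, Theorem \ref{thr: centroid OPT} with $\cS_{\cM,\pi_E}=\cS$ gives $r^{\text{OPT}}_{\cM,\pi_E}(s,a)=\indic{a=\pi_E(s)}$. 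The same value argument shows $\pi_E$ is the unique policy attaining $V^*\equiv\frac{1}{1-\gamma'}$ in any $\cM'$, so $\Pi^*(p',r^{\text{OPT}}_{\cM,\pi_E})=\{\pi_E\}$; and $\pi_E\in\Pi^*(p,r_E)$ because the OPT assumption together with $\cS_{\cM,\pi_E}=\cS$ forces $\pi_E(s)\in\argmax_a Q^*(s,a;p,r_E)$ at every state.

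I expect the precise identification of the two sets, rather than the value computation, to be the main obstacle. The greedy characterization of $\Pi^*$ is transition-independent on the support of each $\pi(\cdot|s)$, which is what makes the BIRL sets \emph{literally} equal once we work with $\Pi^*$; but $\argmax_\pi V^\pi(s_0;\cdot)$ read strictly from $s_0$ constrains a policy only on the states reachable from $s_0$, and these reachable sets differ between $p$ and $p'$, so the literal-from-$s_0$ sets are only equal up to behavior off the reachable region. I would therefore state and prove the equality at the level of $\Pi^*$. The OPT case carries an additional wrinkle: $r^{\text{OPT}}_{\cM,\pi_E}$ pins down the single policy $\pi_E$, whereas $r_E$ may admit ties in $Q^*(\cdot;p,r_E)$ and hence a strictly larger optimal set, so there I would either restrict to the canonical optimal policy $\pi_E$ (obtaining $\Pi^*(p',r^{\text{OPT}}_{\cM,\pi_E})\subseteq\Pi^*(p,r_E)$) or add the mild assumption that $r_E$ induces a unique optimal policy to recover full equality.
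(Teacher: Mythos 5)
Your proof is correct and follows essentially the same route as the paper's: both rest on the observation that $r^{\text{BIRL}}_{\cM,\pi_E}$ is transition-independent, nonpositive, and vanishes exactly on $\argmax_a\pi_E(a|s)$, so any policy supported on those actions attains $V^*\equiv 0$ in every $\cM'$, with the analogous indicator-reward argument for OPT (the paper merely routes the ``optimal for $r_E$'' step through Proposition \ref{prop: int IL} rather than through the monotonicity of the softmax, which is an equivalent one-liner). The two caveats you raise are well taken and, if anything, make your treatment more complete than the original: the paper's proof likewise only makes sense at the level of policies optimal from every state, and for OPT it too establishes only the inclusion $\argmax_\pi V^\pi(\cdot\,;p',r^{\text{OPT}}_{\cM,\pi_E})\subseteq\argmax_\pi V^\pi(\cdot\,;p,r_E)$, the reverse inclusion genuinely failing whenever $Q^*(\cdot\,;p,r_E)$ admits ties.
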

\begin{proof}
  Regarding the first statement, observe that reward
  $r^{\text{BIRL}}_{\cM,\pi_E}$, assuming that $\cS_{\cM,\pi_E}=\cS$, associates
  to at least one action in every possible state the reward 0. Indeed, if we
  take $\overline{a}\in\argmax_{a'\in\cA} \pi_E(a'|s)$, we have:
  \begin{align*}
    \log\frac{\pi_E(\overline{a}|s)}{\max_{a'\in\cA}\pi_E(a'|s)}=\log 1 =0.
  \end{align*}
  Therefore, any greedy policy $\pi^*$ that plays such action/s $\overline{a}$
  achieves, for any possible $p'$ and $\gamma'$:
  \begin{align*}
    V^{\pi^*}(s_0;p,r^{\text{BIRL}}_{\cM,\pi_E})=0.
  \end{align*}
  Since playing at least once an action different from $\overline{a}$ gives a
  smaller expected return (since its corresponding reward is $\le0$), then we
  have that such policy $\pi^*$ is optimal in any $\cM'$.
  In particular, $\pi^*$ is optimal also in $\cM$, and thanks to Proposition
  \ref{prop: int IL} this means that it is optimal also w.r.t. $r_E$. This
  concludes the proof of the first claim.

  For the second claim, simply observe that, if $\cS_{\cM,\pi_E}=\cS$, then:
  \begin{align*}
    r^{\text{OPT}}_{\cM,\pi_E}(s,a)=\indic{a=\pi_E(s)},
  \end{align*}
  that makes $\pi_E$ the optimal policy in any $\cM'$. Since, by hypothesis of
  OPT, $\pi_E$ is optimal also w.r.t. $r_E$, then the result follows.
\end{proof}

\subsection{Additional Centroid Calculations for OPT}
\label{apx: additional centroids OPT}

We now focus on OPT and we show that the centroid of the prior
$w_\cM^{\text{OPT}}$, after rescaling is 0, which is an additional interesting
property of this prior.

\begin{prop}\label{prop: centroid prior OPT}
  For any \MDPr $\cM$, define:
  \begin{align*}
    r^{\text{OPT}}_\cM\coloneqq \int_{\RR^{SA}} w_\cM^{\text{OPT}}(r)rdr.
  \end{align*}
  Then, after rescaling, it holds that: $r^{\text{OPT}}_\cM=0$.
\end{prop}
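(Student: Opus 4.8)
The plan is to reuse almost verbatim the change-of-variables machinery developed for Theorem~\ref{thr: centroid OPT}, the only difference being that here $r$ is integrated over the \emph{entire} set $\sR^{\text{OPT}}_\cM$ rather than over its intersection with a single feasible set. First I would unfold the definition of the uniform prior $w^{\text{OPT}}_\cM$ to write $r^{\text{OPT}}_\cM = \int_{\sR^{\text{OPT}}_\cM} r\,dr$. Then, exactly as in the early steps of that proof, I would discard the measure-zero set $\sR^{\text{OPT}}_\cM\cap\sR^{>1}_p$ of rewards admitting more than one optimal policy, and decompose the remainder as an essentially disjoint union over \emph{all} $A^S$ deterministic policies $\pi$ (each reward in $\sR^1_p$ lies in exactly one $\cR^{\text{OPT},\cS}_{\cM,\pi}$):
\begin{align*}
  r^{\text{OPT}}_\cM=\sum_{\pi\text{ det.}}\int_{\sR^{\text{OPT}}_\cM\cap\cR^{\text{OPT},\cS}_{\cM,\pi}\cap\sR^1_p} r\,dr.
\end{align*}

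Next I would invoke the per-policy evaluation already established there, namely that this integral equals $|\text{det}(W_{p,\pi})|\,T_{p,\pi}(\overline{V}_\pi,\overline{A}_\pi)$, where $\overline{V}_\pi=\int_{\cV_\pi}V\,dVdA=0$ by symmetry of the cube $[-C_1^{\text{OPT}}k_\pi,+C_1^{\text{OPT}}k_\pi]^S$, and each nonzero-advantage coordinate of $\overline{A}_\pi$ equals the constant $-2^{S-1}(C_1^{\text{OPT}}k_\pi)^S(C_2^{\text{OPT}})^{S(A-1)+1}$. Substituting $\overline{V}_\pi=0$ into $T_{p,\pi}$ annihilates the transition-dependent part, leaving $T_{p,\pi}(0,\overline{A}_\pi)(s,a)=\overline{A}_\pi(s,a)\indic{a\neq\pi(s)}$. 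The crucial simplification is that $|\text{det}(W_{p,\pi})|=k_\pi^{-S}$ cancels exactly the factor $k_\pi^S$ inside $\overline{A}_\pi$, so that each deterministic policy contributes a quantity that depends neither on $p$ nor on $\pi$ through $k_\pi$:
\begin{align*}
  |\text{det}(W_{p,\pi})|\,T_{p,\pi}(0,\overline{A}_\pi)(s,a)=-c\,\indic{a\neq\pi(s)},\qquad c\coloneqq 2^{S-1}(C_1^{\text{OPT}})^S(C_2^{\text{OPT}})^{S(A-1)+1}.
\end{align*}

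It then remains to sum over all deterministic policies. For a fixed pair $(s,a)\in\SA$, the number of deterministic policies with $\pi(s)\neq a$ equals $A^{S}-A^{S-1}=A^{S-1}(A-1)$, \emph{independently of $(s,a)$}. Hence $r^{\text{OPT}}_\cM(s,a)=-c\,A^{S-1}(A-1)$ is the \emph{same constant} in every coordinate, i.e.\ $r^{\text{OPT}}_\cM$ is a scalar multiple of the all-ones vector; subtracting this constant (a valid rescaling $\alpha x+\beta$ with $\alpha=1$) yields $r^{\text{OPT}}_\cM=0$, which is the claim. The bulk of the work is discharged by Theorem~\ref{thr: centroid OPT}, so I expect no serious obstacle; the one point requiring care is verifying that the $k_\pi$ factors cancel so that each policy's contribution is policy- and transition-independent, since it is precisely this cancellation—together with the coordinate-symmetric count $A^{S-1}(A-1)$—that forces every coordinate of the centroid to coincide and hence makes the rescaled centroid vanish.
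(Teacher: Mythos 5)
Your proposal is correct and follows essentially the same route as the paper's own proof: decompose over all $A^S$ deterministic policies after discarding the measure-zero set $\sR^{>1}_p$, reuse the per-policy integral from Theorem~\ref{thr: centroid OPT} with $\overline{V}_\pi=0$ and the constant $\overline{A}_\pi$, exploit the cancellation $|\text{det}(W_{p,\pi})|k_\pi^S=1$, and observe that the count $A^{S-1}(A-1)$ of policies with $\pi(s)\neq a$ is independent of $(s,a)$, so the centroid is a constant vector that rescales to zero. No gaps.
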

\begin{proof}
  The proof is completely analogous to that of Theorem \ref{thr: centroid OPT},
  with the difference that, instead of summing over the policies in
  $[\pi_E]_{\cS_{\cM,\pi_E}}$, we sum over all the possible deterministic
  policies $\Pi^{\text{D}}\coloneqq \cA^S$. Then, all the state-action pairs
  $s,a$ are treated equally, and we obtain:
  \begin{align*}
    r^{\text{OPT}}_\cM(s,a)&=\sum\limits_{\pi\in\Pi^{\text{D}}}
    |\text{det}(W_{p,\pi})|T_{p,\pi}(\overline{V}_\pi,\overline{A}_\pi)(s,a)\\
    &=-\sum\limits_{\pi\in\Pi^{\text{D}}:\pi(s)\neq a}
    |\text{det}(W_{p,\pi})|2^{S-1}\cdot(C_1^{\text{OPT}}k_\pi)^{S}\cdot
    (C_2^{\text{OPT}})^{S(A-1)+1}\\
    &=-\sum\limits_{\pi\in\Pi^{\text{D}}:\pi(s)\neq a}
    2^{S-1}\cdot(C_1^{\text{OPT}})^{S}\cdot
    (C_2^{\text{OPT}})^{S(A-1)+1}\\
    &=2^{S-1}\cdot(C_1^{\text{OPT}})^{S}\cdot
    (C_2^{\text{OPT}})^{S(A-1)+1}\cdot A^{S-1}(A-1),
  \end{align*}
  which is the same for every $s,a$. Thus, by subtracting
    $2^{S-1}\cdot(C_1^{\text{OPT}})^{S}\cdot (C_2^{\text{OPT}})^{S(A-1)+1}\cdot
    A^{S-1}(A-1)$ by $r^{\text{OPT}}_\cM$ elementwise, we get the zero vector:
    \begin{align*}
      r^{\text{OPT}}_\cM(s,a)=0\qquad\forall (s,a)\in\SA.
    \end{align*}
    This concludes the proof.
\end{proof}

We now show what happens if we replace prior $w_\cM^{\text{OPT}}$ with a new
prior $w_{\cM,q}$ that is not uniform on $\sR^{\text{OPT}}_\cM$ anymore, but
depends on a vector $q\in\Delta^{\Pi^{\text{D}}}$ in the same way:
\begin{align*}
  w_{\cM,q}(r)\coloneqq \begin{cases}
    q(\pi)/\text{vol}(\sR^{\text{OPT}}_\cM)&\text{if }r\in \sR_\cM^{\text{OPT}}\wedge
    \pi\in\Pi^*(p,r)\wedge |\Pi^*(p,r)|,\\
    0&\text{otherwise}.
  \end{cases}
\end{align*}
Simply put, we are assigning different weights on the rewards in
$\sR^{\text{OPT}}_\cM$ depending on the induced optimal policy (we neglect
rewards that induce more than a single optimal policy because such set has zero
volume). It can be shown that:
\begin{restatable}[OPT expert]{thr}{centroidOPT2}
  For any \MDPr $\cM$ and deterministic expert's policy $\pi_E$, define:
  \begin{align*}
    r^{\text{OPT}}_{\cM,\pi_E,q}\coloneqq \int_{\cR_{\cM,\pi_E}^{\text{OPT}}}
    w_{\cM,q}(r)rdr.
  \end{align*}
  Then, after rescaling, it holds that:
  \begin{align*}
    r^{\text{OPT}}_{\cM,\pi_E,q}(s,a)=\begin{cases}
      1&\text{if }s\in\cS_{\cM,\pi_E},\pi_E(s)=a,\\
      0&\text{if }s\in\cS_{\cM,\pi_E},\pi_E(s)\neq a,\\
      \frac{\sum_{\pi\in[\pi_E]_{\cS_{\cM,\pi_E}}:\pi(s)\neq a} q(\pi)}{\sum_{\pi\in[\pi_E]_{\cS_{\cM,\pi_E}}}
      q(\pi)}&\text{if }s\notin\cS_{\cM,\pi_E}.
    \end{cases}
  \end{align*}
\end{restatable}
\begin{proof}
  Again, the proof is completely analogous to that of Theorem \ref{thr: centroid
  OPT}. The only difference is that, after the first passages, we end up with:
  \begin{align*}
    \sum\limits_{\pi\in[\pi_E]_{\cS_{\cM,\pi_E}}}
    r^{\text{OPT}}_{\cM,\pi_E,q}=\sum\limits_{\pi\in[\pi_E]_{\cS_{\cM,\pi_E}}}
    \popblue{q(\pi)}
    |\text{det}(W_{p,\pi})|T_{p,\pi}(\overline{V}_\pi,\overline{A}_\pi),
  \end{align*}
  as $w_{\cM,q}$ is not ``uniform'' over the policies. For this reason, later on
  in the proof, we obtain that, for all $(s,a)\in\SA$:
  \begin{align*}
    r^{\text{OPT}}_{\cM,\pi_E,q}(s,a)\propto \begin{cases}
      0&\text{ if }s\in\cS_{\cM,\pi_E},a=\pi_E(s),\\
        -\sum\limits_{\pi\in[\pi_E]_{\cS_{\cM,\pi_E}}}
        \popblue{q(\pi)}
      &\text{ if }s\in\cS_{\cM,\pi_E},a\neq\pi_E(s),\\
      -\sum\limits_{\pi\in[\pi_E]_{\cS_{\cM,\pi_E}}:\pi(s)\neq a}
      \popblue{q(\pi)}
      &\text{ otherwise},
    \end{cases}
  \end{align*}
  from which, after rescaling, we get the result.
\end{proof}

\section{Sample Complexity}\label{apx: sample complexity}

This appendix is organized into four parts. In Appendix \ref{apx: proof sample
OPT}, \ref{apx: proof sample MCE} and \ref{apx: proof sample BIRL} we provide
the proofs of, respectively, Theorems \ref{thr: sample OPT}, \ref{thr: sample
MCE} and \ref{thr: sample BIRL}. Finally, in Appendix \ref{apx: proof bound diff
r} we provide the proof of Proposition \ref{prop: bound diff r}.

\subsection{Proof of Theorem \ref{thr: sample OPT}}\label{apx: proof sample OPT}

Estimating $r^{\text{OPT}}_{\cM,\pi_E}$ correctly is equivalent to correctly
estimating the support $\cS_{\cM,\pi_E}$ and the set of state-action pairs
$\cZ_{\cM,\pi_E}$ visited by $\pi_E$ in $\cM$, which can be written as:
\begin{align*}
  &\cS_{\cM,\pi_E}=\Bigc{s\in\cS\;\Big|\;\exists t\in\Nat:\, (p^{\pi_E})^t(s|s_0)>0},\\
  &\cZ_{\cM,\pi_E}\coloneqq\Bigc{(s,a)\in\SA\;\Big|\;s\in\cS_{\cM,\pi_E}\wedge \pi_E(s)=a},
\end{align*}
where $p^{\pi_E}\in\RR^{S\times S}$ is the right stochastic matrix that, to each
pair of states $s,s'$, associates $p(s'|s,\pi_E(s))$, thus
$(p^{\pi_E})^t(s|s_0)$ represents the probability of being in state $s$ at stage
$t$ when starting from $s_0$ \cite{puterman1994markov}.
As reported in Algorithm \ref{alg: opt}, our estimators for sets
$\cS_{\cM,\pi_E}$ and $\cZ_{\cM,\pi_E}$ are:
\begin{align*}
  &\widehat{\cS}\coloneqq\Bigc{s\in\cS\;\Big|\;\exists i\in\dsb{N},
  \exists h\in\dsb{H}:\, s_h^i=s},\\
  &\widehat{\cZ}\coloneqq\Bigc{(s,a)\in\SA\;\Big|\;\exists i\in\dsb{N},
  \exists h\in\dsb{H}:\, s_h^i=s\wedge a_h^i=a}.
\end{align*}
We want to find a minimum number of samples $NH$ that guarantees that, w.h.p.,
$\widehat{\cS}=\cS_{\cM,\pi_E}$ and $\widehat{\cZ}=\cZ_{\cM,\pi_E}$. To this
aim, we must first show that taking $H\ge S$ is sufficient for guaranteeing that
the dataset $\cD_E$, made of trajectories long $H$-stages, can contain all the
states and state-action pairs in the support, i.e., $p^{\min,H}_{\cM,\pi_E}>0$:
\begin{lemma}\label{lemma: positive pmin}
  If $H\ge S$, then $p^{\min,H}_{\cM,\pi_E}>0$.
\end{lemma}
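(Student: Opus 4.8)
The plan is to prove that each state in the support $\cS_{\cM,\pi_E}$ is reached with positive probability within the first $S$ stages of a trajectory, so that—because $H \ge S$ and $\cS_{\cM,\pi_E}$ is finite—the minimum in Eq. \eqref{eq: def p min} is a minimum of finitely many strictly positive quantities, hence strictly positive. First I would recall that, by definition, $s \in \cS_{\cM,\pi_E}$ exactly when $(p^{\pi_E})^t(s|s_0) > 0$ for some $t \in \Nat$; expanding the matrix power, this is equivalent to the existence of a sequence $s_0 = \sigma_0, \sigma_1, \dots, \sigma_t = s$ along which every one-step transition is positive, i.e. $p^{\pi_E}(\sigma_{j+1}|\sigma_j) > 0$ for all $j$. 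In graph terms, $s$ is reachable from $s_0$ in the directed graph on $\cS$ that places an edge $\sigma \to \sigma'$ whenever $p^{\pi_E}(\sigma'|\sigma) > 0$.

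The core step is a pigeonhole argument bounding the reachability time by $S-1$. Fixing any positive-probability path $\sigma_0, \dots, \sigma_t$ witnessing reachability of $s$, I would argue that if $t \ge S$ then the path lists $t + 1 \ge S + 1$ states, so two of them must coincide, say $\sigma_i = \sigma_j$ with $i < j$; excising the intermediate loop yields the strictly shorter path $\sigma_0, \dots, \sigma_i, \sigma_{j+1}, \dots, \sigma_t$, whose transitions are all among the original (positive) ones and which still ends at $s$. Iterating this excision until no state repeats produces a simple path, which has at most $S$ vertices and therefore at most $S-1$ edges. Hence there exists $t' \le S-1$ with $(p^{\pi_E})^{t'}(s|s_0) > 0$.

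It remains to translate this into the stage indexing of Eq. \eqref{eq: def p min}. Since trajectories begin at $s_1 = s_0$, being at $s$ after $t'$ transitions means $s_{t'+1} = s$, with $t' + 1 \le S \le H$, so the index $t'+1$ lies in $\dsb{H}$. Consequently $\P_{\cM,\pi_E}(\exists t \in \dsb{H}: s_t = s) \ge (p^{\pi_E})^{t'}(s|s_0) > 0$ for every $s \in \cS_{\cM,\pi_E}$, and minimizing over the finite, nonempty set $\cS_{\cM,\pi_E}$ preserves strict positivity, giving $p^{\min,H}_{\cM,\pi_E} > 0$. I expect the only delicate point to be making the loop-excision rigorous—specifically, verifying that each excised path remains positive-probability—while the remaining indexing bookkeeping is routine.
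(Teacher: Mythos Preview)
Your proposal is correct and follows essentially the same approach as the paper: both arguments reduce to showing that every $s\in\cS_{\cM,\pi_E}$ is reachable from $s_0$ along a positive-probability path of length at most $S$, via the same pigeonhole/loop-excision idea, and then conclude by taking a minimum over the finite support. Your version is arguably a bit more careful about the stage-indexing bookkeeping (noting $s_1=s_0$ and that $t'+1\le S\le H$), whereas the paper factors the path-shortening step into a separate lemma.
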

\begin{proof}
  Observe that, since $H\ge S$, then:
  \begin{align*}
    p^{\min,H}_{\cM,\pi_E}&\coloneqq \min\limits_{s\in\cS_{\cM,\pi_E}} \P_{\cM,\pi_E}\bigr{
    \exists t\in\dsb{H}:\, s_t=s}\\
    &\ge  \min\limits_{s\in\cS_{\cM,\pi_E}} \P_{\cM,\pi_E}\bigr{
    \popblue{\exists t\in\dsb{S}}:\, s_t=s}\\
    &\ge \min\limits_{s\in\cS_{\cM,\pi_E}}
    \popblue{\min\limits_{t\in\dsb{S}}
    (p^{\pi_E})^t(s|s_0)}.
  \end{align*}
  Therefore, the result can be proved by showing that, for every state $s\in
  \cS_{\cM,\pi_E}$, there exists a stage $t\in\dsb{S}$ for which the probability
  $(p^{\pi_E})^t(s|s_0)$ of being in $s$ at $t$ is strictly greater than 0. But
  this is exactly the claim of Lemma \ref{lemma: H at least S}. This concludes
  the proof.
\end{proof}
\begin{lemma}\label{lemma: H at least S}
  For every state $s\in\cS_{\cM,\pi_E}$, there exists a $t\in\dsb{S}$ such that
  $(p^{\pi_E})^t(s|s_0)>0$.
\end{lemma}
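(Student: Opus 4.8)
The plan is to reduce the statement to an elementary reachability fact about the directed graph induced by $p^{\pi_E}$. First I would recall the characterization of the support stated at the beginning of Appendix \ref{apx: proof sample OPT}: $s\in\cS_{\cM,\pi_E}$ exactly when $(p^{\pi_E})^t(s|s_0)>0$ for some $t\in\Nat$, i.e.\ when $s$ is reachable from $s_0$ along a directed walk in the graph $G$ on vertex set $\cS$ that puts an edge $x\to y$ whenever $p^{\pi_E}(y|x)=p(y|x,\pi_E(x))>0$. The observation linking walks to matrix powers is that $(p^{\pi_E})^t(s|s_0)=\sum_{x_1,\dots,x_{t-1}\in\cS}p^{\pi_E}(x_1|s_0)p^{\pi_E}(x_2|x_1)\cdots p^{\pi_E}(s|x_{t-1})$ is a sum of \emph{nonnegative} terms, hence strictly positive if and only if there exists at least one length-$t$ walk $s_0\to\dots\to s$ all of whose edges carry positive probability.

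Next I would invoke the standard walk-shortening argument. Fix $s\in\cS_{\cM,\pi_E}$; by the characterization above there is a positive-probability walk from $s_0$ to $s$. Whenever such a walk repeats a vertex, excise the loop lying strictly between the two occurrences of that vertex; iterating this finitely many times produces a walk with no repeated vertices, i.e.\ a simple path from $s_0$ to $s$. A simple path in a graph on $S=|\cS|$ vertices visits at most $S$ distinct vertices and therefore uses at most $S-1$ edges. Crucially, pruning only deletes edges, so every edge on the resulting path still carries positive probability; by the product formula above, the associated matrix power is positive. This yields a $t$ with $(p^{\pi_E})^t(s|s_0)>0$ and $t\le S-1$.

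Finally I would record the correct range for $t$. For $s\neq s_0$ the pruned path is nonempty, so its length satisfies $1\le t\le S-1\le S$, giving $t\in\dsb{S}$ as required. The only bookkeeping point is $s=s_0$, which is reached at the initial stage (formally $(p^{\pi_E})^0(s_0|s_0)=1>0$, and in the stage indexing of Lemma \ref{lemma: positive pmin}, where the trajectory's first stage is the initial state, this corresponds to a visit within the first $S$ stages). I expect the main substance of the proof to be the walk-to-simple-path reduction together with the nonnegativity argument that converts ``a positive-probability path of length $t$ exists'' into ``the $(s_0,s)$ entry of $(p^{\pi_E})^t$ is positive''; the endpoint case $s=s_0$ is the only mild subtlety and is dispatched by noting it is visited at the starting stage.
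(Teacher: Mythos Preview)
Your proposal is correct and follows essentially the same approach as the paper: expand $(p^{\pi_E})^t(s|s_0)$ as a nonnegative sum over walks, pick a positive-probability walk, and repeatedly excise loops to obtain a simple path of length at most $S$. You are slightly more careful than the paper in flagging the $s=s_0$ edge case and in noting that the pruned length is in fact at most $S-1$, but the core argument is identical.
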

\begin{proof}
  By hypothesis, since $s\in\cS_{\cM,\pi_E}$, then there always exists a
  $t\in\Nat$ such that: $(p^{\pi_E})^t(s|s_0)>0$. If $t\le S$, then the result
  follows immediately, so we restrict to the case where $t > S$. By
  computing matrix $(p^{\pi_E})^t$, we can write:
  \begin{align*}
    (p^{\pi_E})^t(s|s_0)=\sum\limits_{s^{(2)},s^{(3)},\dotsc,s^{(t-1)}\in\cS}
    p^{\pi_E}(s^{(2)}|s_0)p^{\pi_E}(s^{(3)}|s^{(2)})\cdot\dotsc\cdot p^{\pi_E}(s|s^{(t-1)})>0,
  \end{align*}
  which is strictly positive as long as at least one specific path
  $s_0,\overline{s}^{(2)},\dotsc, \overline{s}^{(t-1)},s$ has strictly positive
  probability, which is the hypothesis made by contradiction:
  \begin{align*}
    (p^{\pi_E})^t(s|s_0)>0 &\implies\\ &\exists
    \overline{s}^{(2)},\overline{s}^{(3)},\dotsc, \overline{s}^{(t-1)}:\;
    p^{\pi_E}(\overline{s}^{(2)}|s_0)p^{\pi_E}(\overline{s}^{(3)}|\overline{s}^{(2)})
    \cdot\dotsc\cdot p^{\pi_E}(s|\overline{s}^{(t-1)})>0.
  \end{align*}
  But this means that all these probabilities are strictly positive:
  \begin{align*}
    p^{\pi_E}(\overline{s}^{(i)}|\overline{s}^{(i-1)})>0\;\forall i \in\{2,t\},
  \end{align*}
  where $\overline{s}^{(1)}=s_0, \overline{s}^{(t)}=s$.

  Now, if all the states $\overline{s}^{(i)}$ are distinct, then $t\le S$,
  because there are at most $S$ states. Otherwise, there exist $i\neq i'$ such
  that $\overline{s}^{(i)}=\overline{s}^{(i')}$, and so we can get rid of all
  the terms $p^{\pi_E}(\overline{s}^{(j)}| \overline{s}^{(j-1)})$ for
  $j\in\{i+1, i'\}$. Then, iterating this reasoning, we will end up with all
  distinct states, representing a path long $t'\le S$ for which
  $(p^{\pi_E})^{t'}(s|s_0)>0$.

  This concludes the proof.
\end{proof}
We are now ready to upper bound the sample complexity necessity for estimating
$\cS_{\cM,\pi_E}$ and $\cZ_{\cM,\pi_E}$:
\begin{lemma}\label{lemma: bound support opt expert}
  Let $\delta\in(0,1)$ and $H\ge S$. If $\pi_E$ is deterministic, then it holds that:
  \begin{align*}
    \P_{\cM,\pi_E}\Bigr{\widehat{\cS}=\cS_{\cM,\pi_E} \wedge
    \widehat{\cZ}=\cZ_{\cM,\pi_E}}\ge 1-\delta,
  \end{align*}
  with a number of trajectories:
  \begin{align*}
    N\le \frac{\log\frac{|\cS_{\cM,\pi_E}|}{\delta}}{p^{\min,H}_{\cM,\pi_E}}.
  \end{align*}
\end{lemma}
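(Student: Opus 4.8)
The plan is to reduce the joint recovery event $\{\widehat{\cS}=\cS_{\cM,\pi_E}\wedge\widehat{\cZ}=\cZ_{\cM,\pi_E}\}$ to the single event $\{\widehat{\cS}=\cS_{\cM,\pi_E}\}$, and then control the latter via a union bound over the support. First I would exploit that $\pi_E$ is deterministic: whenever a trajectory visits a state $s$, the recorded action is necessarily $\pi_E(s)$, so $\widehat{\cZ}=\{(s,\pi_E(s))\,|\,s\in\widehat{\cS}\}$. Hence, on the event $\widehat{\cS}=\cS_{\cM,\pi_E}$ we automatically obtain $\widehat{\cZ}=\{(s,\pi_E(s))\,|\,s\in\cS_{\cM,\pi_E}\}=\cZ_{\cM,\pi_E}$, and it suffices to lower bound $\P_{\cM,\pi_E}(\widehat{\cS}=\cS_{\cM,\pi_E})$. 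Moreover, every state appearing in a rollout of $\pi_E$ from $s_0$ is reachable and thus lies in $\cS_{\cM,\pi_E}$, so $\widehat{\cS}\subseteq\cS_{\cM,\pi_E}$ holds deterministically. Consequently the only way recovery can fail is if some state $s\in\cS_{\cM,\pi_E}$ is never visited across the $N$ trajectories.

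Next I would bound the probability of missing a fixed support state. By the definition of $p^{\min,H}_{\cM,\pi_E}$ in Eq. \eqref{eq: def p min}, a single length-$H$ trajectory visits any fixed $s\in\cS_{\cM,\pi_E}$ within its first $H$ steps with probability at least $p^{\min,H}_{\cM,\pi_E}$, which is strictly positive because $H\ge S$ (Lemma \ref{lemma: positive pmin}). Therefore a single trajectory fails to visit $s$ with probability at most $1-p^{\min,H}_{\cM,\pi_E}$, and since the $N$ trajectories are i.i.d.\ rollouts of $\pi_E$, the probability that none of them visits $s$ is at most $(1-p^{\min,H}_{\cM,\pi_E})^N$.

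Finally, I would apply a union bound over the $|\cS_{\cM,\pi_E}|$ states of the support and use $(1-x)^N\le e^{-Nx}$:
\begin{align*}
  \P_{\cM,\pi_E}(\widehat{\cS}\neq \cS_{\cM,\pi_E})\le |\cS_{\cM,\pi_E}|(1-p^{\min,H}_{\cM,\pi_E})^N\le |\cS_{\cM,\pi_E}|e^{-N p^{\min,H}_{\cM,\pi_E}}.
\end{align*}
Imposing that the right-hand side be at most $\delta$ and solving for $N$ gives that $N\ge \log(|\cS_{\cM,\pi_E}|/\delta)/p^{\min,H}_{\cM,\pi_E}$ trajectories suffice, which is the stated bound. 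The argument is elementary; the only points requiring care are the determinism-based reduction from $\widehat{\cZ}$ to $\widehat{\cS}$ (which fails for stochastic $\pi_E$, matching why MCE/BIRL need the stronger estimators of Algorithms \ref{alg: mce}--\ref{alg: birl}) and invoking Lemma \ref{lemma: positive pmin} to guarantee $p^{\min,H}_{\cM,\pi_E}>0$, without which the per-trajectory miss bound would be vacuous.
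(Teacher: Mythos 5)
Your proposal is correct and follows essentially the same route as the paper's proof: reduce the joint event to $\widehat{\cS}=\cS_{\cM,\pi_E}$ via determinism of $\pi_E$, union bound over the support, use independence of trajectories and the per-trajectory visit probability $p^{\min,H}_{\cM,\pi_E}$ (positive by Lemma \ref{lemma: positive pmin}) to get $|\cS_{\cM,\pi_E}|(1-p^{\min,H}_{\cM,\pi_E})^N$, and solve for $N$. Your use of $(1-x)^N\le e^{-Nx}$ is algebraically equivalent to the paper's $\log\frac{1}{1-x}\ge x$, so the arguments coincide.
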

\begin{proof}
  If $\pi_E$ is deterministic, then observe that $\widehat{\cS}=\cS_{\cM,\pi_E}$
  implies condition $\widehat{\cZ}=\cZ_{\cM,\pi_E}$. Thus, we will focus on the
  former equality. We bound the opposite event:
\begin{align*}
  \P_{\cM,\pi_E}\Bigr{\widehat{\cS}\neq \cS_{\cM,\pi_E}}&=
  \P_{\cM,\pi_E}\Bigr{\exists s\in\cS_{\cM,\pi_E},\forall i\in\dsb{N},\not \exists h\in\dsb{H}: s_h^i=s}\\
  &\markref{(1)}{\le}\sum\limits_{s\in\cS_{\cM,\pi_E}}
  \P_{\cM,\pi_E}\Bigr{\forall i\in\dsb{N},\not \exists h\in\dsb{H}: s_h^i=s}\\
  &\markref{(2)}{=}\sum\limits_{s\in\cS_{\cM,\pi_E}}
  \prod\limits_{i=1}^N \P_{\cM,\pi_E}\Bigr{\not \exists h\in\dsb{H}: s_h^i=s}\\
  &=\sum\limits_{s\in\cS_{\cM,\pi_E}}
  \prod\limits_{i=1}^N \biggr{1-\P_{\cM,\pi_E}\Bigr{\exists h\in\dsb{H}: s_h^i=s}}\\
  &\markref{(3)}{\le}\sum\limits_{s\in\cS_{\cM,\pi_E}}
  \prod\limits_{i=1}^N (1-p^{\min,H}_{\cM,\pi_E})\\
  &=|\cS_{\cM,\pi_E}| (1-p^{\min,H}_{\cM,\pi_E})^N,
\end{align*}
where at (1) we apply a union bound, at (2) we use the fact that different
trajectories are independent of each other, at (3) we lower bound
$\P_{\cM,\pi_E}\Bigr{\exists h\in\dsb{H}: s_h^i=s}$ using
$p^{\min,H}_{\cM,\pi_E}$, which is $>0$ because of $H\ge S$ and Lemma
\ref{lemma: positive pmin}.

Forcing this probability to be at most $\delta$, and solving w.r.t. $N$, we get:
\begin{align*}
  |\cS_{\cM,\pi_E}| (1-p^{\min,H}_{\cM,\pi_E})^N\le\delta&\iff
  N\log(1-p^{\min,H}_{\cM,\pi_E})\le -\log\frac{|\cS_{\cM,\pi_E}|}{\delta}\\
  &\iff N\ge \frac{\log\frac{|\cS_{\cM,\pi_E}|}{\delta}}{\log\frac{1}{1-p^{\min,H}_{\cM,\pi_E}}}.
\end{align*}
By using that, for any $x\in(0,1)$, we have: $\log\frac{1}{1-x}\ge x$, we get
the result.
\end{proof}
There are some interesting observations that can be done on this sample
complexity result. Specifically, note that there is a trade-off in the choice of
$H$. If $H$ is large, then $p^{\min,H}_{\cM,\pi_E}$ is also large, and so $N$
gets small. However, at the same time, increasing $H$ we are using more samples
since the trajectories are longer. The number of samples is:
\begin{align*}
  NH \le \frac{H}{p^{\min,H}_{\cM,\pi_E}}\log\frac{|\cS_{\cM,\pi_E}|}{\delta}.
\end{align*}
This expression can be optimized by $H$ depending on the problem at stake, by
solving:
\begin{align*}
  \argmin\limits_{H\in\Nat: H\ge S}\frac{H}{p^{\min,H}_{\cM,\pi_E}}\log\frac{|\cS_{\cM,\pi_E}|}{\delta}
  = \argmin\limits_{H\in\Nat: H\ge S} \frac{H}{p^{\min,H}_{\cM,\pi_E}}.
\end{align*}
Finally, we are ready to prove the main theorem.
\sampleOPT*
\begin{proof}
    Simply, apply Lemma \ref{lemma: bound support opt expert}, and realize
    from the definition of $r^{\text{OPT}}_{\cM,\pi_E}$ that
    $\widehat{\cS}=\cS_{\cM,\pi_E} \wedge \widehat{\cZ}=\cZ_{\cM,\pi_E}$
    suffices for having $\widehat{r}^{\text{OPT}}=r^{\text{OPT}}_{\cM,\pi_E}$.
\end{proof}

\subsection{Proof of Theorem \ref{thr: sample MCE}}\label{apx: proof sample MCE}

We begin by proving how the error in the estimation of the policy propagates to
the estimation of the centroid:
  \begin{lemma}\label{lemma: upper bound rewards mce}
    Assume that $\min_{(s,a)\in\SA}\pi_E(a|s)\ge\pi_{\min}'$. Let $\widehat{\pi}$
    be the estimate of policy computed in Algorithm \ref{alg: mce}.
Then, it holds that:
    \begin{align*}
      \|\widehat{r}^{\text{MCE}}-r^{\text{MCE}}_{\cM,\pi_E}\|_\infty
      \le \frac{1}{\pi_{\min}}\|\widehat{\pi}-\pi_E\|_\infty.
    \end{align*}
  \end{lemma}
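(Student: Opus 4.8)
The plan is to reduce the reward error to a pointwise comparison of logarithms and then exploit the local Lipschitz behaviour of $\log$. By Theorem~\ref{thr: centroid MCE} we have $r^{\text{MCE}}_{\cM,\pi_E}(s,a)=\log\pi_E(a|s)$, and by Line~\ref{line: r mce} of Algorithm~\ref{alg: mce} we have $\widehat{r}^{\text{MCE}}(s,a)=\log\widehat{\pi}(a|s)$. Hence $\|\widehat{r}^{\text{MCE}}-r^{\text{MCE}}_{\cM,\pi_E}\|_\infty=\max_{(s,a)\in\SA}\bigl|\log\widehat{\pi}(a|s)-\log\pi_E(a|s)\bigr|$, so it suffices to bound each term $\bigl|\log\widehat{\pi}(a|s)-\log\pi_E(a|s)\bigr|$ by $\frac{1}{\pi_{\min}}\bigl|\widehat{\pi}(a|s)-\pi_E(a|s)\bigr|$ and then take the maximum over $(s,a)$, which is dominated by $\frac{1}{\pi_{\min}}\|\widehat{\pi}-\pi_E\|_\infty$.

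The key tool is the elementary inequality $|\log x-\log y|\le |x-y|/\min\{x,y\}$ for $x,y>0$, which follows either from the mean value theorem applied to $\log$ (whose derivative $1/t$ is maximised at the smaller endpoint) or from writing $\log x-\log y=\int_y^x t^{-1}\,dt$ and bounding the integrand. Applying this with $x=\widehat{\pi}(a|s)$ and $y=\pi_E(a|s)$ reduces the whole claim to producing a uniform lower bound $\min\{\widehat{\pi}(a|s),\pi_E(a|s)\}\ge \pi_{\min}$ for every $(s,a)$.

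It is here that the assumptions enter, and this is the step I expect to be the crux. The factor $\pi_E(a|s)$ is controlled immediately: by the definition $\pi_{\min}=\min_{(s,a)}\pi_E(a|s)$ we have $\pi_E(a|s)\ge\pi_{\min}$. The estimate $\widehat{\pi}(a|s)$ is controlled through the clipping in Algorithm~\ref{alg: mce}, which forces $\widehat{\pi}(a|s)\ge\pi_{\min}'$, together with the hypothesis $\pi_{\min}\ge\pi_{\min}'$. I would split into two cases: when $\widehat{\pi}(a|s)\ge\pi_E(a|s)$ the minimum equals $\pi_E(a|s)\ge\pi_{\min}$ and the bound is immediate; the delicate case is $\widehat{\pi}(a|s)<\pi_E(a|s)$, where the minimum is $\widehat{\pi}(a|s)$ and one must argue that the clipped estimate does not dip below $\pi_{\min}$ on the relevant range. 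This is precisely where the interaction between the clipping level $\pi_{\min}'$ and the true floor $\pi_{\min}$ decides whether the denominator is genuinely $\pi_{\min}$, as claimed, rather than the weaker $\pi_{\min}'$, so I would scrutinise this step most carefully before concluding by taking the maximum over $(s,a)$.
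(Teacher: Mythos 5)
Your argument follows the paper's proof essentially step for step: both reduce the reward error to $\max_{(s,a)}\bigl|\log\widehat{\pi}(a|s)-\log\pi_E(a|s)\bigr|$, both apply the local Lipschitz bound $|\log x-\log y|\le|x-y|/\min\{x,y\}$ (the paper phrases it as $\log(1+u)\le u$), and both lower-bound the denominator using the hypothesis on $\pi_E$ together with the clipping in Algorithm \ref{alg: mce}. The concern you flag in your last paragraph is well founded, and you are right not to wave it away: the clipping only guarantees $\widehat{\pi}(a|s)\ge\pi_{\min}'$, and when the unclipped empirical frequency lands in $[\pi_{\min}',\pi_{\min})$ the minimum in the denominator is genuinely only $\pi_{\min}'$, so this argument yields the constant $1/\pi_{\min}'$ and not $1/\pi_{\min}$. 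This is exactly what the paper's own proof concludes, and it is the constant that propagates into Theorem \ref{thr: sample MCE} (whose sample complexity scales with $1/\pi_{\min}'$); the $1/\pi_{\min}$ appearing in the lemma statement is evidently a typo for $1/\pi_{\min}'$. With that substitution your proof is complete and matches the paper's.
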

  \begin{proof}
    We can write:
    \begin{align*}
      \|\widehat{r}^{\text{MCE}}-r^{\text{MCE}}_{\cM,\pi_E}\|_\infty&\coloneqq
      \max\limits_{(s,a)\in\SA}\Biga{
        \widehat{r}^{\text{MCE}}(s,a)-r^{\text{MCE}}_{\cM,\pi_E}(s,a)
      }\\
      &=\max\limits_{(s,a)\in\SA}\Biga{
        \log\widehat{\pi}(a|s)
        -\log\pi_E(a|s)}\\
      &= \max\limits_{(s,a)\in\SA}\Biga{
        \log\frac{\widehat{\pi}(a|s)}{\pi_E(a|s)}}
        \\
      &\markref{(1)}{\le} \max\limits_{(s,a)\in\SA}
      \frac{\biga{\widehat{\pi}(a|s)-\pi_E(a|s)}}{\min\{\widehat{\pi}(a|s),\pi_E(a|s)\}}\\
      &\markref{(2)}{\le} \frac{1}{\pi_{\min}'}\max\limits_{(s,a)\in\SA}
      \biga{\widehat{\pi}(a|s)-\pi_E(a|s)},
    \end{align*}
    where at (1) we use that $\log (1+x)\le x$ for $x\ge 0$. Indeed, when
    $\widehat{\pi}(a|s)\ge \pi_E(a|s)$, then $\widehat{\pi}(a|s)/ \pi_E(a|s)\ge
    1$, and so: $\log\frac{\widehat{\pi}(a|s)}{\pi_E(a|s)} =
    \log\Bigr{1+\frac{\widehat{\pi}(a|s)}{\pi_E(a|s)}-1} \le
    (\widehat{\pi}(a|s)-\pi_E(a|s))/\pi_E(a|s)$. Doing similarly in the opposite
    case, we get the minimum at denominator. At (2) we use that
    $\min_{(s,a)\in\SA}\pi_E(a|s)\ge\pi_{\min}'$ by hypothesis and the definition
    of $\widehat{\pi}$.
  \end{proof}
  The next step consists in upper bounding the estimation error of $\pi_E$: 
  \begin{lemma}\label{lemma: upper bound policies mce}
    Let $\epsilon,\delta\in(0,1)$ and assume that
    $\min_{(s,a)\in\SA}\pi_E(a|s)\ge\pi_{\min}'$. Then, it holds that:
    \begin{align*}
      \P_{\cM,\pi_E}\Bigr{\|\widehat{\pi}-\pi_E\|_\infty\le\epsilon}\ge 1-\delta,
    \end{align*}
    with a number of trajectories:
    \begin{align*}
      N\le \frac{16\log\frac{4SA}{\delta}\log\frac{2SA}{\delta}}{\epsilon^2 p^{\min,H}_{\cM,\pi_E}} .
    \end{align*}
  \end{lemma}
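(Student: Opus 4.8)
The plan is to decompose the estimation error of $\widehat{\pi}$ into two independent sources of randomness — the per-state visitation counts $N(s)$ and the first-visit actions that generate $N(s,a)$ — and to control each with a separate concentration inequality. The conceptual starting point is to observe that, for a trajectory that visits $s$ at all, the \emph{first} visit time is a stopping time, so by the Markov property of $\pi_E$ the action played there is distributed as $\pi_E(\cdot|s)$ regardless of the path taken to reach $s$; moreover this action does not influence \emph{whether} the trajectory visits $s$ (that event is already realized), and distinct trajectories are independent. Consequently, conditioned on $N(s)=n$, the count $N(s,a)$ is $\mathrm{Bin}(n,\pi_E(a|s))$, so $N(s,a)/N(s)$ is an empirical mean of $n$ i.i.d. Bernoulli draws of $\pi_E(a|s)$.

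Next I would introduce the threshold $n_0 := \frac{1}{2\epsilon^2}\log\frac{4SA}{\delta}$ and establish two bounds. \textbf{(i) Frequency concentration:} for fixed $(s,a)$ and any $n\ge n_0$, Hoeffding's inequality gives $\P_{\cM,\pi_E}\bigl(|N(s,a)/n-\pi_E(a|s)|>\epsilon \mid N(s)=n\bigr)\le 2\exp(-2n\epsilon^2)\le \delta/(2SA)$. \textbf{(ii) Visitation:} since each trajectory visits $s\in\cS_{\cM,\pi_E}$ with probability $p_s\ge p^{\min,H}_{\cM,\pi_E}>0$ (positivity from $H\ge S$ via Lemma~\ref{lemma: positive pmin}), we have $N(s)\sim\mathrm{Bin}(N,p_s)$, and the multiplicative Chernoff bound yields $\P_{\cM,\pi_E}\bigl(N(s)<\tfrac12 N p^{\min,H}_{\cM,\pi_E}\bigr)\le \exp(-N p^{\min,H}_{\cM,\pi_E}/8)$.

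I would then combine these through nested union bounds. Define the good visitation event $G_1=\{N(s)\ge \tfrac12 N p^{\min,H}_{\cM,\pi_E}\ \forall s\in\cS_{\cM,\pi_E}\}$; a union bound over the $\le S$ states gives $\P_{\cM,\pi_E}(G_1^c)\le \delta/2$ once $N\ge \frac{8}{p^{\min,H}_{\cM,\pi_E}}\log\frac{2SA}{\delta}$. Imposing in addition $\tfrac12 N p^{\min,H}_{\cM,\pi_E}\ge n_0$, i.e. $N\ge \frac{1}{\epsilon^2 p^{\min,H}_{\cM,\pi_E}}\log\frac{4SA}{\delta}$, guarantees $N(s)\ge n_0$ on $G_1$; marginalizing bound (i) over $N(s)\ge n_0$ and summing over the $SA$ pairs gives $\P_{\cM,\pi_E}\bigl(\exists (s,a):|N(s,a)/N(s)-\pi_E(a|s)|>\epsilon,\ G_1\bigr)\le \delta/2$. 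The clipping at $\pi_{\min}'$ is harmless: since $\pi_E(a|s)\ge \pi_{\min}'$ by hypothesis, the map $x\mapsto \max\{\pi_{\min}',x\}$ is $1$-Lipschitz and fixes $\pi_E(a|s)$, so $|\widehat{\pi}(a|s)-\pi_E(a|s)|\le |N(s,a)/N(s)-\pi_E(a|s)|$. Hence off an event of probability $\le \delta$ we obtain $\|\widehat{\pi}-\pi_E\|_\infty\le\epsilon$.

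Finally, I would note that both sample-size requirements are dominated by $N^\star := \frac{16\log\frac{4SA}{\delta}\log\frac{2SA}{\delta}}{\epsilon^2 p^{\min,H}_{\cM,\pi_E}}$: indeed $N^\star\ge \frac{1}{\epsilon^2 p^{\min,H}_{\cM,\pi_E}}\log\frac{4SA}{\delta}$ because $16\log\frac{2SA}{\delta}\ge 1$, and $N^\star\ge \frac{8}{p^{\min,H}_{\cM,\pi_E}}\log\frac{2SA}{\delta}$ because $2\log\frac{4SA}{\delta}\ge \epsilon^2$ for $\epsilon\in(0,1)$; this crude ``max $\le$ product'' step is precisely what produces the product of two logarithms in the stated bound. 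I expect the main obstacle to be the conditional-independence argument of the first paragraph — rigorously arguing that, given $N(s)=n$, the first-visit actions are i.i.d. $\pi_E(\cdot|s)$ and independent of which trajectories achieved the visit — since everything afterward is a routine two-stage Chernoff/Hoeffding plus union-bound computation.
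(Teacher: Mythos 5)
Your proof is correct and reaches the stated bound, but it routes the concentration argument differently from the paper. Both proofs share the same skeleton: the clipping $x\mapsto\max\{\pi_{\min}',x\}$ is $1$-Lipschitz and fixes $\pi_E(a|s)$, so the error reduces to that of the raw frequency $N(s,a)/\max\{1,N(s)\}$; one then needs a concentration bound for this frequency together with a high-probability lower bound on the visit counts $N(s)$. The paper handles the randomness of $N(s)$ by applying a double-sided Azuma--Hoeffding inequality valid for a random number of samples, obtaining $|N(s,a)/\max\{1,N(s)\}-\pi_E(a|s)|\le\sqrt{2\log(4/\delta)/\max\{1,N(s)\}}$, and then substitutes a high-probability lower bound $1/\max\{1,N(s)\}\le 8\log(2/\delta)/(Np^{\min,H}_{\cM,\pi_E})$ from Lemma A.1 of Xie et al.; the product of the two logarithms in the final bound arises intrinsically from this multiplicative combination. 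You instead condition on $N(s)=n$, exploit the exact conditional structure $N(s,a)\sim\mathrm{Bin}(n,\pi_E(a|s))$ (justified by the strong Markov property at the first-visit stopping time, which is exactly why the algorithm counts first visits only), apply plain Hoeffding at a deterministic threshold $n_0$, and control the visitation counts by a multiplicative Chernoff bound; your two sample-size requirements are then each dominated by the stated $N^\star$, with the $\log\cdot\log$ product appearing only as a generous majorant of their maximum. Your route is more elementary and self-contained (no martingale Hoeffding, no external lemma), and it in fact exhibits a slightly sharper sufficient condition (a maximum of two terms rather than their product); the cost is that you must argue the conditional i.i.d. structure of the first-visit actions carefully, which you correctly identify as the delicate point and sketch adequately. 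The union-bound accounting ($\delta/2$ for visitation, $\delta/(2SA)$ per pair for the frequencies) and the final domination inequalities ($16\log\tfrac{2SA}{\delta}\ge 1$ and $2\log\tfrac{4SA}{\delta}\ge\epsilon^2$) all check out.
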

  \begin{proof}
    For all $(s,a)\in\SA$, we can write:
  \begin{align*}
    \biga{\widehat{\pi}(a|s)-\pi_E(a|s)}&=
    \bigga{\max\Bigc{\pi_{\min}', \frac{N(s,a)}{\max\{1,N(s)\}}}-\pi_E(a|s)}\\
    &\markref{(1)}{=}
    \bigga{\max\Bigc{\pi_{\min}', \frac{N(s,a)}{\max\{1,N(s)\}}}-\max\Bigc{\pi_{\min}', \pi_E(a|s)}}\\
    &\markref{(2)}{\le}
    \max\biggc{\biga{\pi_{\min}'-\pi_{\min}'}, \Biga{\frac{N(s,a)}{\max\{1,N(s)\}}-\pi_E(a|s)}}\\
    &=\Biga{\frac{N(s,a)}{\max\{1,N(s)\}}-\pi_E(a|s)},\\
    &\markref{(3)}{\le}
    \sqrt{\frac{2\log\frac{4}{\delta}}{\max\{1,N(s)\}}}\\
    &\markref{(4)}{\le}
    \sqrt{\frac{16\log\frac{4}{\delta}\log\frac{2}{\delta}}{N p^{\min,H}_{\cM,\pi_E}}},
  \end{align*}
  where at (1) we use that $\min_{(s,a)\in\SA}\pi_E(a|s)\ge\pi_{\min}'$ by
  hypothesis, at (2) we use that the max operator is 1-Lipschitz, at (3) we use
  that, when $N(s)=0$, then $|\pi_E(a|s)|\le 1\le \sqrt{2\log\frac{4}{\delta}}$,
  while when $N(s)\ge 1$, we apply the double-sided Azuma-Hoeffding's inequality
  (that must be applied because $N(s)$ is a random variable itself), which holds
  w.p. $1-\delta/2$. At (4) we note that each $N(s)\sim\text{Bin}(N,
  \P_{\cM,\pi_E}(\bigcup_{t=0}^H \{s_t=s\}))$ is Binomially distributed, so we
  can apply Lemma A.1 in \cite{xie2021policyfinetuning} w.p. $\delta/2$ to get
  that: $\frac{1}{\max\{1,N(s)\}}\le \frac{8\log\frac{2}{\delta}}{N
  \P_{\cM,\pi_E}(\bigcup_{t=0}^H \{s_t=s\})} \le \frac{8\log\frac{2}{\delta}}{N
  p^{\min,H}_{\cM,\pi_E}}$, where we used the definition of
  $p^{\min,H}_{\cM,\pi_E}$ in Eq. \eqref{eq: def p min}.
  
  By applying a union bound, we have that the above bound for $s,a$ holds w.p.
  $1-\delta$. In addition, the error above is bounded by $\epsilon$ as long as:
  \begin{align*}
    \sqrt{\frac{16\log\frac{4}{\delta}\log\frac{2}{\delta}}{N p^{\min,H}_{\cM,\pi_E}}}\le\epsilon\iff
    N\ge \frac{16\log\frac{4}{\delta}\log\frac{2}{\delta}}{\epsilon^2 p^{\min,H}_{\cM,\pi_E}}.
  \end{align*}  
  The proof is concluded through the application of a union bound over all the
  $(s,a)\in\SA$.
  \end{proof}
  
  We conclude with the main result:
  \sampleMCE*
  \begin{proof}
    Combining Lemma \ref{lemma: upper bound rewards mce} with Lemma \ref{lemma:
    upper bound policies mce}, we get the result:
    \begin{align*}
      N\le \frac{16\log\frac{4SA}{\delta}\log\frac{2SA}{\delta}
      }{\epsilon^2 p^{\min,H}_{\cM,\pi_E}\pi_{\min}'}.
    \end{align*}
  \end{proof}

\subsection{Proof of Theorem \ref{thr: sample BIRL}}\label{apx: proof sample BIRL}

The proof of Theorem \ref{thr: sample BIRL} is very similar to that of Theorem
\ref{thr: sample MCE}. We begin by proving how the error in the estimation of
the policy propagates to the estimation of the centroid:
\begin{lemma}\label{lemma: upper bound rewards birl}
  Assume that $\min_{(s,a)\in\SA}\pi_E(a|s)\ge\pi_{\min}'$. Let $\widehat{\pi}$
  be the estimate of policy computed in Algorithm \ref{alg: birl}.
Then, it holds that, for all $(s,a)\in\SA$:
  \begin{align*}
    |\widehat{r}^{\text{BIRL}}(s,a)-r^{\text{BIRL}}_{\cM,\pi_E}(s,a)|
    \le \frac{2}{\pi_{\min}'}\|\widehat{\pi}-\pi_E\|_\infty+\indic{N(s)=0}\log\frac{1}{\pi_{\min}'}.
  \end{align*}
\end{lemma}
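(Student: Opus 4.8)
The plan is to argue pointwise in $(s,a)$ and to split along the two branches that define $\widehat{r}^{\text{BIRL}}(s,a)$ in Algorithm \ref{alg: birl}, namely $N(s)>0$ and $N(s)=0$. Throughout I would use two lower bounds that hold by construction and by hypothesis: $\widehat{\pi}(a|s)\ge\pi_{\min}'$ for every $(s,a)$ (the estimator is clipped at $\pi_{\min}'$), and $\pi_E(a|s)\ge\pi_{\min}'$ since $\min_{(s,a)\in\SA}\pi_E(a|s)\ge\pi_{\min}'$ is assumed. Consequently every probability appearing inside a logarithm is bounded away from $0$ by $\pi_{\min}'$. I would also recall from Theorem \ref{thr: centroid BIRL} that, since $\cS_{\cM,\pi_E}=\cS$, the target centroid is $r^{\text{BIRL}}_{\cM,\pi_E}(s,a)=\log\bigr{\pi_E(a|s)/\max_{a'}\pi_E(a'|s)}$.

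For the branch $N(s)>0$, where the indicator vanishes, I would write the difference as $\bigr{\log\widehat{\pi}(a|s)-\log\pi_E(a|s)}-\bigr{\log\max_{a'}\widehat{\pi}(a'|s)-\log\max_{a'}\pi_E(a'|s)}$ and apply the triangle inequality to split it into two log-differences. The first is controlled exactly as in Lemma \ref{lemma: upper bound rewards mce}: using $|\log x-\log y|\le|x-y|/\min\{x,y\}$ together with the $\pi_{\min}'$ lower bounds yields $|\log\widehat{\pi}(a|s)-\log\pi_E(a|s)|\le\|\widehat{\pi}-\pi_E\|_\infty/\pi_{\min}'$. For the second, I would additionally use that the maximum is $1$-Lipschitz, so $|\max_{a'}\widehat{\pi}(a'|s)-\max_{a'}\pi_E(a'|s)|\le\|\widehat{\pi}-\pi_E\|_\infty$, and since both maxima are $\ge\pi_{\min}'$ the same logarithmic bound gives $\|\widehat{\pi}-\pi_E\|_\infty/\pi_{\min}'$ for this term as well. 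Summing the two contributions produces $2\|\widehat{\pi}-\pi_E\|_\infty/\pi_{\min}'$, which matches the claimed bound because the indicator term is $0$ here.

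For the branch $N(s)=0$ the estimator is the constant $\widehat{r}^{\text{BIRL}}(s,a)=\log\pi_{\min}'$, so the policy-estimation term plays no role and the entire content sits in the indicator. Here I would observe that $r^{\text{BIRL}}_{\cM,\pi_E}(s,a)=\log\pi_E(a|s)-\log\max_{a'}\pi_E(a'|s)$ lies in the interval $[\log\pi_{\min}',0]$: it is $\le 0$ since $\pi_E(a|s)\le\max_{a'}\pi_E(a'|s)$, and it is $\ge\log\pi_{\min}'$ since $\pi_E(a|s)\ge\pi_{\min}'$ while $\max_{a'}\pi_E(a'|s)\le 1$. As $\log\pi_{\min}'$ also lies in this same interval, whose length is $\log\frac{1}{\pi_{\min}'}$, the two points differ by at most $\log\frac{1}{\pi_{\min}'}$, which is exactly the indicator contribution (the nonnegative term $\frac{2}{\pi_{\min}'}\|\widehat{\pi}-\pi_E\|_\infty$ only helps).

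The argument is essentially routine; the only mild subtlety, and the step I would flag as the crux, is recognizing in the $N(s)=0$ case that $\log\frac{1}{\pi_{\min}'}$ is precisely the diameter of the interval $[\log\pi_{\min}',0]$ containing both the clipped constant $\log\pi_{\min}'$ and the true centroid value, which is why no estimation-error term is needed on that branch. Beyond bookkeeping the two logarithmic differences in the first branch, I do not anticipate any real obstacle.
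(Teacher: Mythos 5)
Your proof is correct. On the $N(s)>0$ branch you follow essentially the same route as the paper: triangle inequality splitting the difference into $|\log\widehat{\pi}(a|s)-\log\pi_E(a|s)|$ plus $|\log\max_{a'}\widehat{\pi}(a'|s)-\log\max_{a'}\pi_E(a'|s)|$, each controlled by $|\log x-\log y|\le |x-y|/\min\{x,y\}$ together with the $\pi_{\min}'$ lower bounds; the only cosmetic difference is that the paper applies the max-difference inequality at the level of the logarithms (getting $2\max_{a'}|\log(\widehat{\pi}(a'|s)/\pi_E(a'|s))|$ directly), whereas you apply $1$-Lipschitzness of the max to the probabilities first and then take logs — both yield the same $\tfrac{2}{\pi_{\min}'}\|\widehat{\pi}-\pi_E\|_\infty$. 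On the $N(s)=0$ branch your argument genuinely differs and is cleaner: you note that both $\log\pi_{\min}'$ and $r^{\text{BIRL}}_{\cM,\pi_E}(s,a)$ lie in $[\log\pi_{\min}',0]$ (the latter because $\pi_E(a|s)\ge\pi_{\min}'$ and $\max_{a'}\pi_E(a'|s)\le 1$), so their gap is at most the interval diameter $\log\tfrac{1}{\pi_{\min}'}$; the paper instead uses that $\widehat{\pi}(\cdot|s)\equiv\pi_{\min}'$ forces $\log(\widehat{\pi}(a|s)/\max_{a'}\widehat{\pi}(a'|s))=0$ and recycles the $N(s)>0$ computation, which is why its bound on that branch carries the extra (nonnegative) policy-error term. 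Your version gives a marginally tighter bound on the $N(s)=0$ branch and still implies the stated inequality, so nothing is lost.
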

\begin{proof}
  For all the states $s\in\cS$ with $N(s)>0$, for all $a\in\cA$, we can write:
  \begin{align*}
    |\widehat{r}^{\text{BIRL}}(s,a)-r^{\text{BIRL}}_{\cM,\pi_E}(s,a)|
    &=\Biga{
      \log\frac{\widehat{\pi}(a|s)}{\max_{a'}\widehat{\pi}(a'|s)}
      -\log\frac{\pi_E(a|s)}{\max_{a'}\pi_E(a'|s)}}\\
    &\le\Biga{
        \log\frac{\widehat{\pi}(a|s)}{\pi_E(a|s)}}+\Biga{
          \log\frac{\max_{a'}\widehat{\pi}(a'|s)}{\max_{a'}\pi_E(a'|s)}}\\
    &\markref{(1)}{\le}2 \max\limits_{a'\in\cA}\Biga{
      \log\frac{\widehat{\pi}(a'|s)}{\pi_E(a'|s)}}
      \\
    &\markref{(2)}{\le} 2\max\limits_{a'\in\cA}
    \frac{\biga{\widehat{\pi}(a'|s)-\pi_E(a'|s)}}{\min\{\widehat{\pi}(a'|s),\pi_E(a'|s)\}}\\
    &\markref{(3)}{\le} \frac{2}{\pi_{\min}'}\max\limits_{a'\in\cA}
    \biga{\widehat{\pi}(a'|s)-\pi_E(a'|s)},
  \end{align*}
  where at (1) we use that $|\max_x f(x)-\max_x g(x)|\le\max_x|f(x)-g(x)|$, at
  (2) we use that $\log (1+x)\le x$ for $x\ge 0$. Indeed, when
  $\widehat{\pi}(a|s)\ge \pi_E(a|s)$, then $\widehat{\pi}(a|s)/ \pi_E(a|s)\ge
  1$, and so: $\log\frac{\widehat{\pi}(a|s)}{\pi_E(a|s)} =
  \log\Bigr{1+\frac{\widehat{\pi}(a|s)}{\pi_E(a|s)}-1} \le
  (\widehat{\pi}(a|s)-\pi_E(a|s))/\pi_E(a|s)$. Doing similarly in the opposite
  case, we get the minimum at denominator. At (2) we use that
  $\min_{(s,a)\in\SA}\pi_E(a|s)\ge\pi_{\min}'$ by hypothesis and the definition
  of $\widehat{\pi}$.

  Similarly, for all the remaining states $s\in\cS$ satisfying $N(s)=0$, for all
  $a\in\cA$, we can write:
  \begin{align*}
    |\widehat{r}^{\text{BIRL}}(s,a)-r^{\text{BIRL}}_{\cM,\pi_E}(s,a)|
    &=\Biga{
      \log\pi_{\min}'
      -\log\frac{\pi_E(a|s)}{\max_{a'}\pi_E(a'|s)}}\\
    &\le\log\frac{1}{\pi_{\min}'}+
    \Biga{\log\frac{\pi_E(a|s)}{\max_{a'}\pi_E(a'|s)}}\\
    &\markref{(4)}{=}\log\frac{1}{\pi_{\min}'}+
    \Biga{\popblue{\log\frac{\widehat{\pi}(a|s)}{\max_{a'}\widehat{\pi}(a'|s)}}
    -\log\frac{\pi_E(a|s)}{\max_{a'}\pi_E(a'|s)}},
  \end{align*}
  where at (4) we use that
  $\log(\widehat{\pi}(a|s)/\max_{a'}\widehat{\pi}(a'|s))= \log
  (\pi_{\min}'/\pi_{\min}') = 0$ when $N(s)=0$.

  Then, carrying out the same passages as for $N(s)>1$, we get the result.
\end{proof}
Now, we can upper bound $\|\widehat{\pi}^E-\pi_E\|_\infty$.
\begin{lemma}\label{lemma: upper bound policies birl}
Let $\epsilon,\delta\in(0,1)$ and assume that
$\min_{(s,a)\in\SA}\pi_E(a|s)\ge\pi_{\min}'$. Then, it holds that:
\begin{align*}
  \P_{\cM,\pi_E}\Bigr{\|\widehat{\pi}-\pi_E\|_\infty\le\epsilon}\ge 1-\delta,
\end{align*}
with a number of trajectories:
\begin{align*}
  N\le \frac{16\log\frac{4SA}{\delta}\log\frac{2SA}{\delta}}{\epsilon^2 p^{\min,H}_{\cM,\pi_E}} .
\end{align*}
\end{lemma}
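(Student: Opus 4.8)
The plan is to observe that the estimator $\widehat{\pi}$ appearing in Algorithm \ref{alg: birl} is defined \emph{identically} to the one in Algorithm \ref{alg: mce}: in both cases $\widehat{\pi}(a|s)=\max\{\pi_{\min}',N(s,a)/\max\{1,N(s)\}\}$ with $N(s,a)$ given by Eq. \eqref{eq: def N sa} and $N(s)=\sum_{a}N(s,a)$. Since the quantity $\|\widehat{\pi}-\pi_E\|_\infty$ depends only on $\widehat{\pi}$ and $\pi_E$ (and not on how the rewards are subsequently constructed from $\widehat{\pi}$), the entire argument of Lemma \ref{lemma: upper bound policies mce} carries over verbatim. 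Thus the cleanest proof simply states this and refers to the proof of Lemma \ref{lemma: upper bound policies mce}.

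For completeness I would re-trace the four key steps. First, fix $(s,a)\in\SA$ and use the hypothesis $\min_{(s,a)}\pi_E(a|s)\ge\pi_{\min}'$ to rewrite $\pi_E(a|s)=\max\{\pi_{\min}',\pi_E(a|s)\}$; then, since the map $x\mapsto\max\{\pi_{\min}',x\}$ is $1$-Lipschitz, deduce
\begin{align*}
  |\widehat{\pi}(a|s)-\pi_E(a|s)|\le\Bigl|\tfrac{N(s,a)}{\max\{1,N(s)\}}-\pi_E(a|s)\Bigr|.
\end{align*}
Second, split on the value of $N(s)$: when $N(s)=0$ the right-hand side is at most $1\le\sqrt{2\log(4/\delta)}$, while when $N(s)\ge1$ apply a double-sided Azuma--Hoeffding bound (valid conditionally on the random value of $N(s)$), giving with probability $1-\delta/2$ the bound $\sqrt{2\log(4/\delta)/\max\{1,N(s)\}}$. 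Third, control the random denominator: since $N(s)\sim\mathrm{Bin}(N,\P_{\cM,\pi_E}(\cup_{t=0}^{H}\{s_t=s\}))$, Lemma A.1 of \cite{xie2021policyfinetuning} yields, with probability $1-\delta/2$, $1/\max\{1,N(s)\}\le 8\log(2/\delta)/(N\,p^{\min,H}_{\cM,\pi_E})$, where I invoke the definition of $p^{\min,H}_{\cM,\pi_E}$ in Eq. \eqref{eq: def p min} and the fact that $H\ge S$ guarantees $p^{\min,H}_{\cM,\pi_E}>0$ by Lemma \ref{lemma: positive pmin}. Combining these gives the per-$(s,a)$ bound $\sqrt{16\log(4/\delta)\log(2/\delta)/(N\,p^{\min,H}_{\cM,\pi_E})}$.

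Finally, I would take a union bound over the $SA$ pairs (replacing each $\delta$ inside the logarithms by $\delta/(SA)$ in the usual way) to obtain the uniform guarantee on $\|\widehat{\pi}-\pi_E\|_\infty$, and then solve the inequality $\sqrt{16\log(4SA/\delta)\log(2SA/\delta)/(N\,p^{\min,H}_{\cM,\pi_E})}\le\epsilon$ for $N$ to recover the stated sample-complexity bound. The main obstacle — already resolved in the MCE proof — is the careful handling of the \emph{random} sample count $N(s)$ per state: one cannot directly apply Hoeffding as if $N(s)$ were deterministic, so the concentration must be applied conditionally on $N(s)$ and then coupled with the separate Binomial lower-tail bound on $N(s)$ itself. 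Everything else is routine, and since the estimator is literally the same as in MCE, no new ideas are required beyond citing Lemma \ref{lemma: upper bound policies mce}.
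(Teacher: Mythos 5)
Your proposal is correct and matches the paper's proof exactly: the paper likewise observes that Algorithms \ref{alg: birl} and \ref{alg: mce} compute $\widehat{\pi}$ identically and simply refers back to the proof of Lemma \ref{lemma: upper bound policies mce}, whose steps you re-trace faithfully. No further comment is needed.
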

\begin{proof}
  Observe that Algorithms \ref{alg: birl} and \ref{alg: mce} estimate $\pi_E$ in
  the same manner. Thus, the proof of this lemma is the same as that of Lemma
  \ref{lemma: upper bound policies mce}.
\end{proof}
Finally, we can prove the main result:
\sampleBIRL*
\begin{proof}
  First, observe that, by applying the same reasoning adopted for proving Lemma
  \ref{lemma: bound support opt expert}, it can be shown that, with probability
  at least $1-\delta$, $\indic{N(s)=0}=0$ for all $s\in\cS$, using a number of
  trajectories:
  \begin{align*}
    N\le \frac{\log\frac{S}{\delta}}{p^{\min,H}_{\cM,\pi_E}},
  \end{align*}
  where we also used that $\cS_{\cM,\pi_E}=\cS$ by hypothesis.
  
  Then, combining this result with Lemma \ref{lemma: upper bound rewards birl}
  and Lemma \ref{lemma: upper bound policies birl} through a union bound, we get
  the result with a number of trajectories bounded by:
  \begin{align*}
    N\le \frac{32\log\frac{8SA}{\delta}\log\frac{4SA}{\delta}
    }{\epsilon^2 p^{\min,H}_{\cM,\pi_E}\pi_{\min}'}
    + \frac{\log\frac{2S}{\delta}}{p^{\min,H}_{\cM,\pi_E}}
    \le \frac{\popblue{33}\log\frac{8SA}{\delta}\log\frac{4SA}{\delta}
    }{\epsilon^2 p^{\min,H}_{\cM,\pi_E}\pi_{\min}'}.
  \end{align*}
\end{proof}

\subsection{Proof of Proposition \ref{prop: bound diff r}}\label{apx: proof
bound diff r}

\bounddiffr*
\begin{proof}
  We can write:
  \begin{align*}
    \Big|\max\limits_{\pi\in\Pi_{c,k}}V^\pi(s_0';p',\widehat{r}^m)
    -\max\limits_{\pi\in\Pi_{c,k}}V^\pi(s_0';p',r_{\cM,\pi_E}^m)\Big|&\markref{(1)}{\le}
    \popblue{\max\limits_{\pi\in\Pi_{c,k}}}\Big|V^\pi(s_0';p',\widehat{r}^m)
    -V^\pi(s_0';p',r_{\cM,\pi_E}^m)\Big|\\
    &\markref{(2)}{\le}\frac{1}{1-\gamma}
    \|\widehat{r}^m-r_{\cM,\pi_E}^m\|_\infty,
  \end{align*}
where at (1) we use the property of the maximum operator that $|\max_x
f(x)-\max_x g(x)|\le \max_x|f(x)-g(x)|$, at (2) we use Holder's inequality after
having realized that $V^\pi(s_0';p',\widehat{r}^m)$ and
$V^\pi(s_0';p',r_{\cM,\pi_E}^m)$ can be written as the dot product of some
vector $d\in\RR^{SA}$ and $\widehat{r}^m$ or $r_{\cM,\pi_E}^m$, where $d$
represents the occupancy measure of $\pi$ in $\cM'$ \cite{puterman1994markov},
and, in particular, it sums to $1/(1-\gamma)$.
\end{proof}

\section{Experimental Details}\label{apx: experimental details}

In this appendix, we provide additional experimental details and simulations
that have been dropped from the main paper due to reasons of space.
In Appendix \ref{apx: match occupancy measure}, we provide a formal explanation
of how we defined and computed $\pi^{\text{MIMIC}}_{\cM,\pi_E,\cM'}$ in the
experiments. In Appendix \ref{apx: compare with MCE}, we report the results of
the experiments conducted for MCE analogous to those provided in Figs. \ref{fig:
experiments OPT} and \ref{fig: experiments BIRL} in Section \ref{sec:
experiments}.
%
%
Next, in Appendix \ref{apx: exps more difference}, we provide some additional
considerations on the difference between the policies $\pi^m_{\cM}$
found through our methods and $\pi^{\text{MIMIC}}_{\cM,\pi_E,\cM'}$.
Finally, we provide a comparison with directly transferring the expert's policy
$\pi_E$ to the new environment $\cM'$ and with a ``best-case'' approach
(Appendix \ref{apx: BC policy}), and we conclude with some additional
simulations in Appendix \ref{apx: other simulations}.

Before diving into the presentation of the numerous sections of this appendix,
we mention that each experiment has been conducted in just a few seconds on a
laptop with processor ``AMD Ryzen 5 5500U with Radeon Graphics 2.10 GHz'' with
8GB of RAM. Moreover, we remark that each policy $\pi^m_{\cM}$ is
defined as any policy in:
\begin{align*}
  \pi^m_{\cM,\pi_E,\cM'}\in\argmax\limits_{\pi'\in\Pi_{c,k}} 
  V^{\pi'}(s_0';p',r^m_{\cM,\pi_E}),
\end{align*}
and we mention that, for the simulations involving MCE and BIRL, we set
$\pi_{\min}= 10^{-6}$.

\subsection{Definition of $\pi^{\text{MIMIC}}_{\cM,\pi_E,\cM'}$}\label{apx: match occupancy measure}

We computed the policy $\pi^{\text{MIMIC}}_{\cM,\pi_E,\cM'}$ as the policy with
state-action occupancy measure $d_{\cM',\pi^{\text{MIMIC}}_{\cM,\pi_E,\cM'}}$ in
$\cM'$ closest to that of the expert $d_{\cM,\pi_E}$ in $\cM$. To measure the
distance between the occupancy measures, we adopted the $\ell_1$-norm. Formally,
we computed $\pi^{\text{MIMIC}}_{\cM,\pi_E,\cM'}$ by first solving the
mathematical program:
\begin{align*}
  \min\limits_{d\in\RR^{SA}}&\|d-d_{\cM,\pi_E}\|_1\\
  \text{s.t.: }&
  \sum\limits_{a\in\cA}d(s,a)=(1-\gamma')\indic{s=s_0'}+\gamma'
  \sum\limits_{s'\in\cS}\sum\limits_{a'\in\cA}
  p'(s|s',a')d(s',a')\qquad\forall s\in\cS,\\
  & \sum\limits_{(s,a)\in\SA}d(s,a)c(s,a)\le k,\\
  &d(s,a)\ge 0\qquad\forall (s,a)\in\SA,\\
  &\sum\limits_{(s,a)\in\SA}d(s,a)=1,
\end{align*}
using a Linear Programming (LP) solver,\footnote{The $\ell_1$-norm minimization
problem can be recast as a LP problem \cite{boyd2004convex}.} and then we found
$\pi^{\text{MIMIC}}_{\cM,\pi_E,\cM'}$ explicitly from its state-action occupancy
measure $d$ as (see \cite{puterman1994markov}):
\begin{align*}
  \pi^{\text{MIMIC}}_{\cM,\pi_E,\cM'}(a|s)=\frac{d(s,a)}{\sum\limits_{a'\in\cA}d(s,a')}.
\end{align*}

\subsection{Results for MCE}\label{apx: compare with MCE}

\begin{figure}[!t]
  \centering
  \begin{minipage}[t!]{0.195\textwidth}
    \centering
    \includegraphics[width=0.98\linewidth]{MCE_BIRL_9_samep__piE.pdf}
\end{minipage}
\begin{minipage}[t!]{0.195\textwidth}
  \centering
  \includegraphics[width=0.98\linewidth]{MCE_BIRL_9_7_eq_C_match_sa.pdf}
\end{minipage}
\begin{minipage}[t!]{0.195\textwidth}
    \centering
    \includegraphics[width=0.98\linewidth]{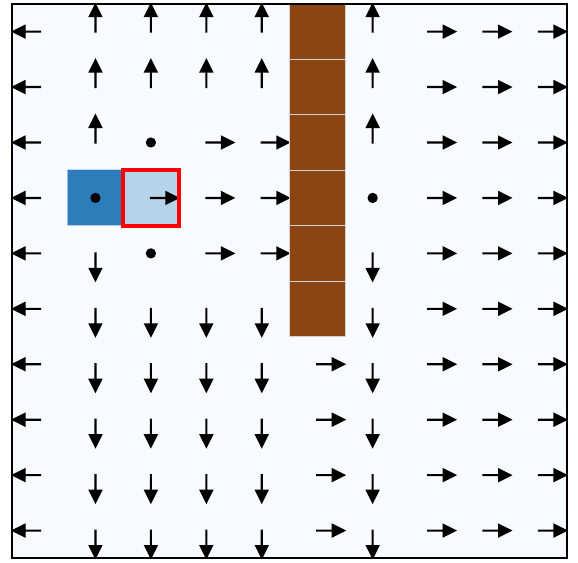}
\end{minipage}
\begin{minipage}[t!]{0.195\textwidth}
  \centering
  \includegraphics[width=0.98\linewidth]{MCE_BIRL_9_99_eq_C_match_sa.pdf}
\end{minipage}
\begin{minipage}[t!]{0.195\textwidth}
    \centering
    \includegraphics[width=0.98\linewidth]{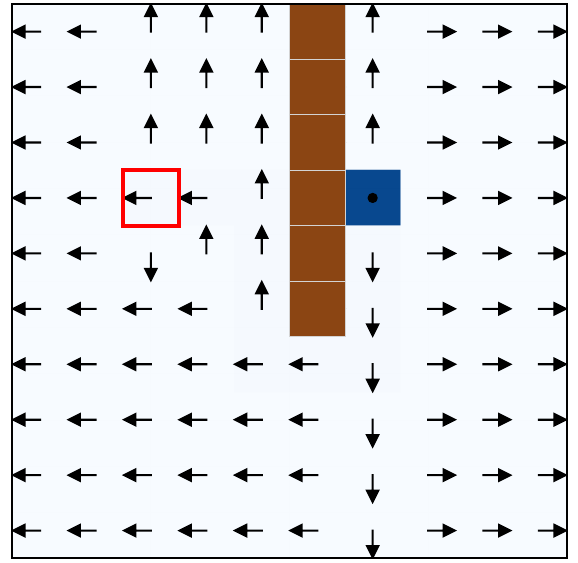}
\end{minipage}
\caption{From left to right: (a) $\pi_E$ in $\cM$ with $\gamma=0.9$, (b)-(c)
$\pi^{\text{MIMIC}}_{\cM,\pi_E,\cM'}$ and $\pi^{\text{MCE}}_{\cM}$
in $\cM'$ with $\gamma'=0.7$, (d)-(e)
$\pi^{\text{MIMIC}}_{\cM,\pi_E,\cM'}$ and $\pi^{\text{MCE}}_{\cM}$
in $\cM'$ with $\gamma'=0.99$. }
\label{fig: experiments MCE}
 \end{figure}

 In Fig. \ref{fig: experiments MCE}, we provide the analogous of the experiments
 reported in Fig. \ref{fig: experiments OPT} and \ref{fig: experiments BIRL} for
 MCE. We remark that the expert's policy $\pi_E$ in Fig. \ref{fig: experiments
 MCE} (a) is the same as the policy used for BIRL, plotted in Fig. \ref{fig:
 experiments BIRL} (a).
 Looking at the images, we realize that $\pi^{\text{MCE}}_{\cM}$ is
 quite similar to $\pi^{\text{BIRL}}_{\cM}$.

\subsection{Additional Considerations on the Difference between $\pi^m_{\cM,\pi_E,\cM'}$ and $\pi^{\text{MIMIC}}_{\cM,\pi_E,\cM'}$}
\label{apx: exps more difference}

\begin{figure}[!t]
  \centering
  \begin{minipage}[t!]{0.195\textwidth}
      \centering
      \includegraphics[width=0.98\linewidth]{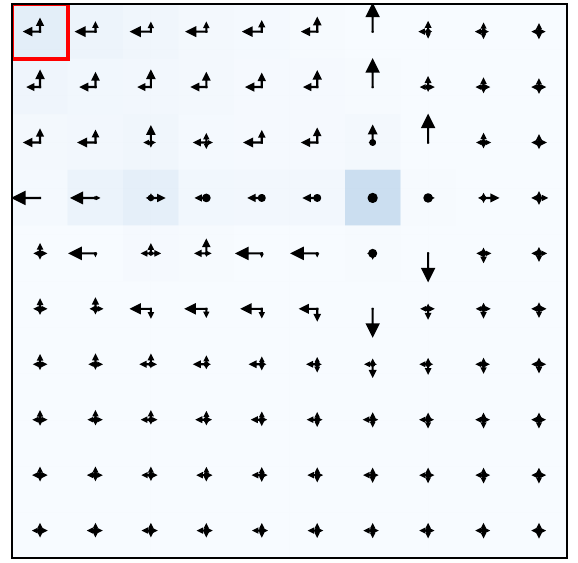}
  \end{minipage}
  \begin{minipage}[t!]{0.195\textwidth}
    \centering
    \includegraphics[width=0.98\linewidth]{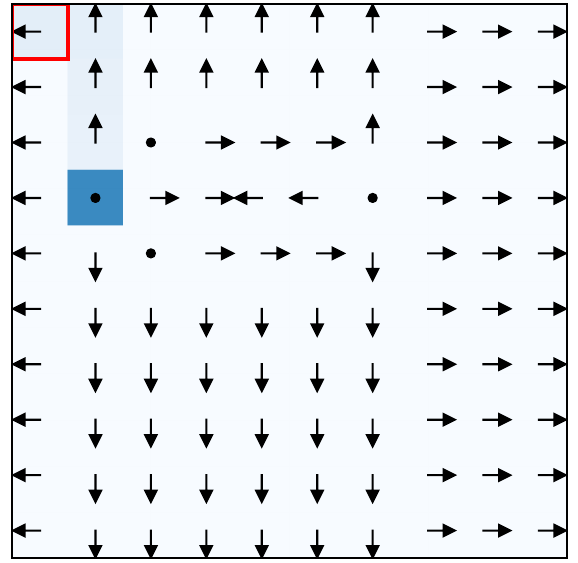}
  \end{minipage}
  \begin{minipage}[t!]{0.195\textwidth}
    \centering
    \includegraphics[width=0.98\linewidth]{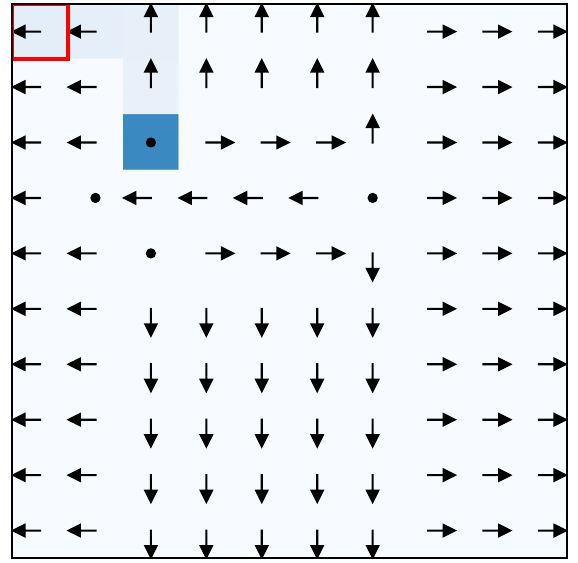}
\end{minipage}
\begin{minipage}[t!]{0.195\textwidth}
  \centering
  \includegraphics[width=0.98\linewidth]{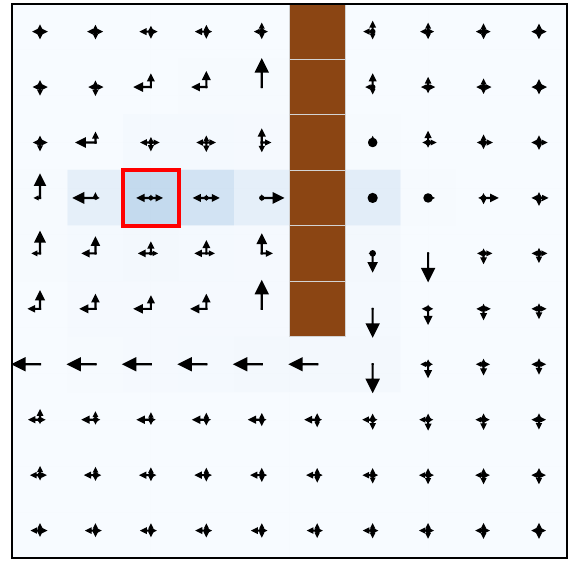}
\end{minipage}
\begin{minipage}[t!]{0.195\textwidth}
    \centering
    \includegraphics[width=0.98\linewidth]{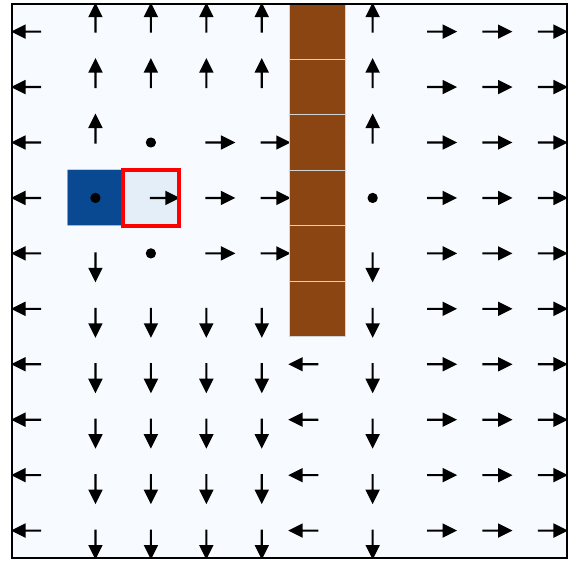}
\end{minipage}
\caption{In this series of images, we consider the $\pi_E$ in Fig. \ref{fig:
experiments BIRL} (a), where $\gamma=0.9$. 
From left to right: (a) $\pi^{\text{MIMIC}}_{\cM,\pi_E,\cM'}$ with
$\gamma'=0.9$, (b) $\pi^{\text{BIRL}}_{\cM}$ with $\gamma'=0.9$, (c)
$\pi^{\text{MCE}}_{\cM}$ with $\gamma'=0.9$, (d)
$\pi^{\text{MIMIC}}_{\cM,\pi_E,\cM'}$ with constraints and $\gamma'=0.9$, (e)
$\pi^{\text{MCE}}_{\cM}$ with constraints and $\gamma'=0.9$.
}
\label{fig: on differences}
 \end{figure}

The difference between our policies $\pi^m_{\cM,\pi_E,\cM'}$ and the policy
$\pi^{\text{MIMIC}}_{\cM,\pi_E,\cM'}$ that tries to match the occupancy measure
of the expert can be better visualized by looking at Fig. \ref{fig: on
differences}.

Let $\pi_E$ be the expert's policy in Fig. \ref{fig: experiments BIRL} (a), with a
support $\cS_{\cM,\pi_E}\subset \cS$ strictly. If we consider the problem of
generalizing to an \MDPr $\cM'$ with an initial state $s_0'$ falling outside
$\cS_{\cM,\pi_E}$, then both the policy $\pi^{\text{MIMIC}}_{\cM,\pi_E,\cM'}$
(Fig. \ref{fig: on differences} (a)) and our policies
$\pi^{\text{MCE}}_{\cM}$ (Fig. \ref{fig: on differences} (b)) and
$\pi^{\text{BIRL}}_{\cM}$ (Fig. \ref{fig: on differences} (c)) try to
get back to the support $\cS_{\cM,\pi_E}$. However, they do so differently. As
shown in Fig. \ref{fig: on differences} (a),
$\pi^{\text{MIMIC}}_{\cM,\pi_E,\cM'}$ cares mostly on getting back to the states
in the support that have been visited the most by the expert, i.e., those with
more intense blue in Fig. \ref{fig: experiments BIRL} (a). Instead, both
$\pi^{\text{MCE}}_{\cM}$ and $\pi^{\text{BIRL}}_{\cM}$ are
satisfied by an \emph{arbitrary} state in $\cS_{\cM,\pi_E}$, as shown by Fig.
\ref{fig: on differences} (b)-(c).
This behavior can be justified mathematically by observing that Algorithms
\ref{alg: mce} and \ref{alg: birl} assign to all the state-action pairs outside
$\cS_{\cM,\pi_E}$ the minimum possible reward values, namely:
\begin{align*}
  &r^{\text{MCE}}_{\cM,\pi_E}(s,a)=r^{\text{BIRL}}_{\cM,\pi_E}(s,a) = \log \pi_{\min}\approx -14,
\end{align*}
where we replaced the value $\pi_{\min}=10^{-6}$ used in the simulations.
As a consequence, $\pi^{\text{MCE}}_{\cM}$ and
$\pi^{\text{BIRL}}_{\cM}$ always try to get back to the states in the
support $\cS_{\cM,\pi_E}$, whose reward values is necessarily larger than $-14$.
However, the rewards constructed by Algorithms \ref{alg: mce} and \ref{alg:
birl} do not depend by the probability with which the expert visited the various
states in its support, i.e., $d_{\cM,\pi_E}$, and so, for
$\pi^{\text{MCE}}_{\cM}$ and $\pi^{\text{BIRL}}_{\cM}$, it
is irrelevant the state $s\in\cS_{\cM,\pi_E}$ in which they re-play the action/s
demonstrated by the expert.

As another example, consider now the presence of constraints in $\cM'$ and look
at Fig. \ref{fig: on differences} (d)-(e). Here, even though $\gamma'=0.9$ is
relatively small, $\pi^{\text{MIMIC}}_{\cM,\pi_E,\cM'}$ still tries to go around
the obstacle, because its objective is to match the expert's behavior over the
whole support $\cS_{\cM,\pi_E}$ (Fig. \ref{fig: on differences} (d)). Instead,
as shown in Fig. \ref{fig: on differences} (e),
$\pi^{\text{MCE}}_{\cM}$ is not interested to the states beyond the
obstacle if they are too distant (in terms of $\gamma'$), but it can be
satisfied with states closer to the initial state $s_0'$. Observe that, if we
increase $\gamma'=0.99$, then also $\pi^{\text{MCE}}_{\cM}$ goes
around the obstacle, as shown in Fig. \ref{fig: experiments MCE} (e).

\subsection{Comparison with Policy Transfer and a ``Best-case'' Approach}\label{apx: BC policy}

\begin{figure}[!t]
  \centering
\begin{minipage}[t!]{0.195\textwidth}
  \centering
  \includegraphics[width=0.98\linewidth]{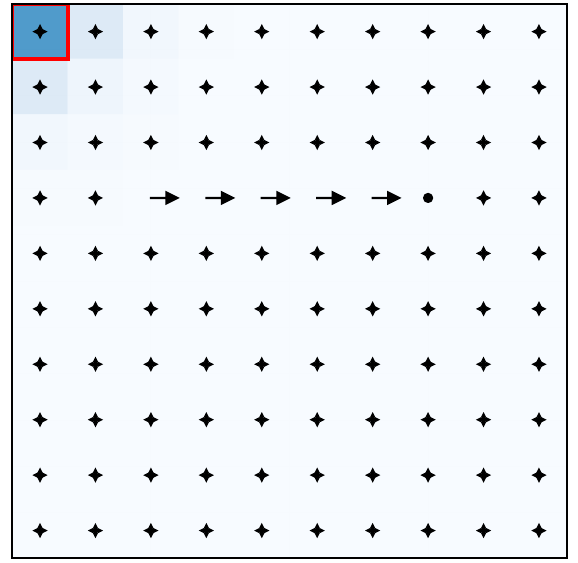}
\end{minipage}
\begin{minipage}[t!]{0.195\textwidth}
    \centering
    \includegraphics[width=0.98\linewidth]{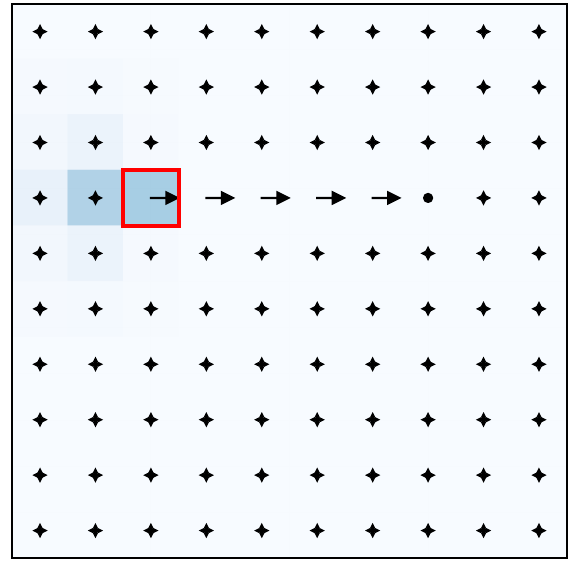}
\end{minipage}
\begin{minipage}[t!]{0.195\textwidth}
  \centering
  \includegraphics[width=0.98\linewidth]{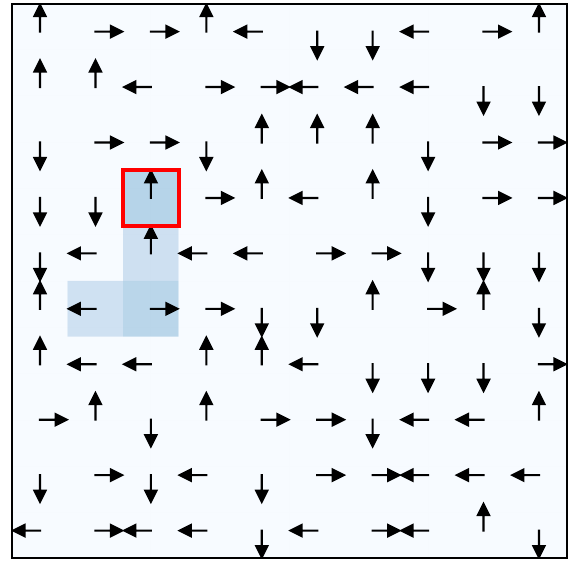}
\end{minipage}
\begin{minipage}[t!]{0.195\textwidth}
  \centering
  \includegraphics[width=0.98\linewidth]{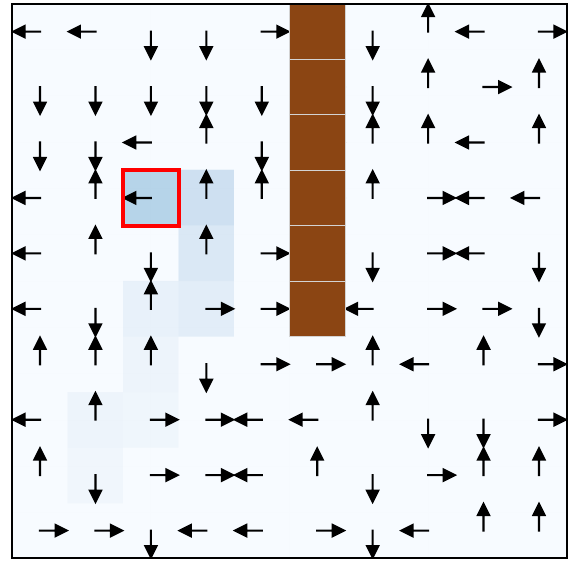}
\end{minipage}
\begin{minipage}[t!]{0.195\textwidth}
  \centering
  \includegraphics[width=0.98\linewidth]{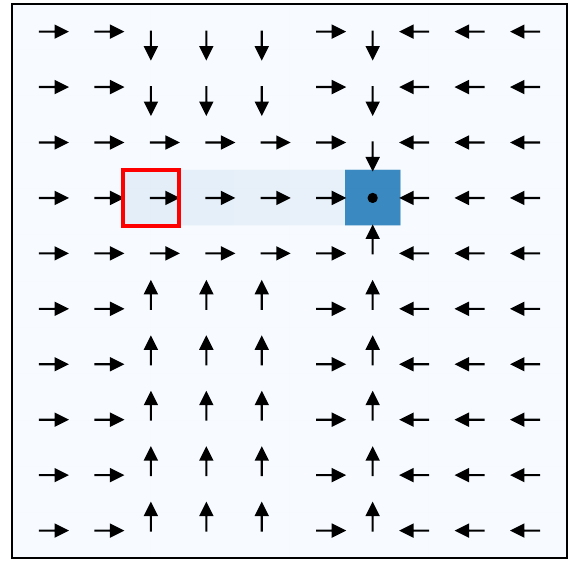}
\end{minipage}
\caption{From left to right: (a) $\pi^{\text{BC}}_{\cM,\pi_E}$ for the expert's
policy $\pi_E$ defined in Fig. \ref{fig: experiments OPT} (a). We consider
$\cM'$ with $\gamma'=0.7$ and $s_0'\neq s_0$. (b) $\pi^{\text{BC}}_{\cM,\pi_E}$ for the expert's
policy $\pi_E$ defined in Fig. \ref{fig: experiments OPT} (a). We consider
$\cM'$ with $\gamma'=0.7$ and $s_0'= s_0$. (c)-(d) Policy $\pi'$ obtained
through the ``best-case'' approach described in Appendix \ref{apx: BC policy},
respectively in absence and presence of constraints. (e) Illustration of
Proposition \ref{prop: int IL}.}
\label{fig: comp BC}
 \end{figure}

We now show how generalization to new environments and constraints is carried
out by other approaches in this specific problem setting.

\paragraph{Policy transfer.}

The first ``baseline'' that we consider here is $\pi^{\text{BC}}_{\cM,\pi_E}$
\cite{arora2020survey}, which is defined as:
\begin{align*}
  \pi^{\text{BC}}_{\cM,\pi_E}(a|s)\coloneqq\begin{cases}
    \pi_E(a|s)&\text{if }s\in\cS_{\cM,\pi_E},\\
    \frac{1}{A}&\text{otherwise}.
  \end{cases}
\end{align*}
This policy is called ``BC'' because it represents the result of Behavioral
Cloning \cite{pomerlau1988alvinn,osa2018IL} on $\pi_E$.
As shown in Fig. \ref{fig: comp BC} (a)-(b), directly transferring this policy
to the new environment $\cM'$ might induce quite meaningless behaviors in $\cM'$
regardless of whether $s_0'\neq s_0$ (Fig. \ref{fig: comp BC} (a)) or not (Fig.
\ref{fig: comp BC} (b)).
Moreover, note that this policy cannot even be applied in general when there are
constraints in $\cM'$, because it might not satisfy them.

\paragraph{A ``best-case'' approach.}

As a second ``baseline'', we consider the policy $\pi'$ obtained after carrying
out (constrained) planning in $\cM'$ with an \emph{arbitrary} reward $r$ in the
feasible set $\cR^{\text{OPT}}_{\cM,\pi_E}$:
\begin{align*}
  \pi'\in\argmax\limits_{\pi''\in\Pi_{c,k}} 
  V^{\pi''}(s_0';p',r).
\end{align*}
We have constructed this reward starting from the knowledge of the policy
$\pi_E$ (see Fig. \ref{fig: experiments OPT} (a)) in its support
$\cS_{\cM,\pi_E}$ as follows. First, we have sampled at random a deterministic
policy $\overline{\pi}$ from $[\pi_E]_{\cS_{\cM,\pi_E}}$. Next, based on Proposition \ref{prop:
fs OPT explicit}, we have sampled uniformly at random the values of
$V\in[-1,+1]^S$ and $A\in[-0.5,+0.5]^{S(A-1)}$ (the numbers 1 and 0.5 are
meaningless), and constructed $r$ as:
\begin{align*}
    r(s,a)=V(s)-\gamma\sum\limits_{s'\in\cS}p(s'|s,a)V(s')-\begin{cases}
      0&\text{if }a=\overline{\pi}(s),\\
      A(s,a)&\text{otherwise}.
    \end{cases}
\end{align*}
Note that this approach is representative of all the ``best-case'' approaches
described in Section \ref{sec: existing methods}.
Fig. \ref{fig: comp BC} (c)-(d) show the resulting policy $\pi'$ in a new
environment $\cM'$ respectively in absence and presence of constraints. As clear
from the images, the policies $\pi'$ and $\pi_E$ are rather uncorrelated.

\subsection{Other Simulations}\label{apx: other simulations}

In this appendix, we present the result of a simulation aimed to exemplify
Proposition \ref{prop: int IL}. Specifically, we focused on BIRL and we computed
the policy $\pi^{\text{BIRL}}_{\cM}$ using $\cM'=\cM$, i.e., as the policy
obtained through our approach in the same environment $\cM$ where the expert
demonstrated $\pi_E$ (see Fig. \ref{fig: experiments BIRL} (a)). The result is
plotted in Fig. \ref{fig: comp BC} (e).
In this setting, $\pi^{\text{BIRL}}_{\cM}$ can be interpreted as a
``more aggressive'' $\pi_E$, in the sense that it aims to achieve the same
objective as the stochastic expert's policy $\pi_E$, but deterministically.

\end{document}